\def\ARXIVVERSION{1}
\documentclass[twoside,11pt]{article}
\ifdefined\ARXIVVERSION
  \usepackage[preprint]{jmlr2e}
\else
  \usepackage{jmlr2e}
\fi
\ifdefined\ONLINEAPPENDIX
  \usepackage{xr-hyper}
\else
  \ifdefined\ARXIVVERSION\else
    \usepackage{xr-hyper}
  \fi
\fi
\usepackage{amsmath,amsfonts}
\usepackage{booktabs,multirow,calc,accents,enumerate} 
\usepackage{thm-restate}
\usepackage[caption=false]{subfig}
\usepackage{threeparttable}
\PassOptionsToPackage{table,xcdraw}{xcolor}
\usepackage{tikz}
\usetikzlibrary{arrows.meta,positioning}

\usepackage{longtable}
\usepackage{placeins}
\usepackage{pdflscape}
\usepackage{array}
\usepackage{siunitx}
\graphicspath{{images/}}

\newcommand{\E}{\mathbb{E}}
\newcommand{\C}{\mathbb{C}}

\newcommand{\R}{\mathbb{R}}

\newcommand{\supp}{\text{supp}}
\newcommand{\Bchi}{\raisebox{.15\baselineskip}{\large\ensuremath{\chi}}}

\newcommand{\dbtilde}[1]{\accentset{\approx}{#1}}
\newif\ifincludeproofs
\includeproofsfalse
\ifdefined\ONLINEAPPENDIX
  \includeproofstrue
\fi
\ifdefined\ARXIVVERSION
  \includeproofstrue
\fi
\newcommand{\ProofReference}[1]{%
  \ifincludeproofs Appendix~\ref{#1}\else
    Section~\ref{OA-#1} of the online appendix\fi}
\hypersetup{hidelinks}

\ifdefined\ONLINEAPPENDIX
  \jmlrheading{1}{2026}{Online Appendix}{8/26}{}{}{Mohammadi}
\else
  \usepackage{lastpage}
  \jmlrheading{1}{2026}{1-\pageref{LastPage}}{8/26}{}{}{Mohammadi}
\fi

\ShortHeadings{Toscani-Fourier Distance on Probability Measures}{Mehrdad Mohammadi}
\firstpageno{1}
\begin{document}

\ifdefined\ONLINEAPPENDIX
\title{Online Appendix for ``Toscani--Fourier Distance on Probability Measures: Wasserstein Control, Topological Equivalence on Model Classes, and Duality''}
\else
\title{Toscani--Fourier Distance on Probability Measures: Wasserstein Control, Topological Equivalence on Model Classes, and Duality}
\fi

\author{\name Mehrdad Mohammadi \email mehrdad3@illinois.edu \\
       \addr Department of Statistics\\
       University of Illinois Urbana-Champaign\\
       Champaign, IL 61820, USA}

\editor{Under Review}

\maketitle

\ifdefined\ONLINEAPPENDIX
This online appendix contains the complete proofs and additional numerical details for the accompanying manuscript. The theorem numbering and notation agree with the main paper. Displayed equations local to this appendix are numbered (A.1), (B.1), \dots. Parenthetical equation numbers without a letter prefix refer to displays in the main manuscript, whose numbering is imported here.
\else
\begin{abstract}%
Comparing probability measures in machine learning trades transport geometry against computational cost. Wasserstein distances encode the geometry of $\mathbb R^d$ but require solving a transport problem, while kernel discrepancies are cheap to evaluate yet depend delicately on their test class. We study the Toscani--Fourier family $\mathrm T_{s,p}$, the weighted $L^p$ norm of the difference of two characteristic functions, as a Fourier-side discrepancy on $\mathbb R^d$. For $1\le p<\infty$ we show that $d/p<s<1+d/p$ is exactly the window in which $\mathrm T_{s,p}$ is finite on $\mathcal P_p(\mathbb R^d)$, both endpoints failing already for Dirac pairs, and we establish the metric, embedding, and compactness structure of the resulting space, which is complete. Duality identifies $\mathrm T_{s,p}$ as an integral probability metric over a homogeneous Fourier--Lebesgue ball, with an explicit extremizer when $1<p<\infty$. We prove the global bound $\mathrm T_{s,p}\lesssim W_p^{\,s-d/p}$, whose exponent is sharp, show that no global converse can hold, and recover topological equivalence with $W_p$ on bounded-support and uniform-tail classes, together with explicit reverse moduli on bounded support that improve the imported energy-kernel exponent at $p=2$. There, $\mathrm T_{s,2}$ is a constant multiple of the classical energy distance, which yields an exact finite-sample identity for the mean of the empirical discrepancy. The numerical experiments are otherwise diagnostic.
\end{abstract}

\begin{keywords}
optimal transport, Wasserstein distance, characteristic functions, integral probability metrics, maximum mean discrepancy
\end{keywords}
\section{Introduction}

Many problems in machine learning require comparing probability distributions or sample distributions, including two-sample testing, goodness-of-fit, generative modeling, distribution regression, and clustering. Among the most widely used discrepancies are Wasserstein distances from optimal transport and kernel discrepancies such as maximum mean discrepancy (MMD). Wasserstein distances capture the geometry of the ambient space and often provide a meaningful notion of distributional similarity, but their computational cost can become prohibitive in large-scale or high-dimensional settings. This motivates the search for alternative discrepancies that retain useful geometric information while being easier to estimate and compute.

A natural alternative is to compare distributions through their characteristic functions, which
uniquely determine probability measures on $\mathbb{R}^d$ and can be estimated directly from
samples by empirical averages. This leads to Fourier-domain discrepancies that avoid optimization
over transport couplings. However, Fourier-based and transport-based discrepancies capture
different aspects of distributional mismatch. Wasserstein distances quantify the cost of moving
mass in the ambient space, whereas Fourier-based metrics aggregate differences across frequency
scales. As a result, comparison and equivalence results between the two viewpoints are subtle and
typically require structural assumptions or suitable normalization.

Fourier-based probability metrics have their roots in statistical physics and kinetic theory.
Starting from \citet{gabetta1995metrics}, they were introduced to study convergence to equilibrium
for the spatially homogeneous Boltzmann equation and were further developed in
\citet{toscani1999probability,carlen2000central,bisi2006dilute,bisi2006decay}. These works
showed that weighted discrepancies between characteristic functions can provide sharp control of
distributional evolution in settings where Fourier methods are natural.

More recently, several works revisited Fourier-type metrics from the perspectives of probability
on metric spaces and optimal transport. For closely related supremum-type Fourier metrics,
\citet{cho2015absolute} connected finiteness to absolute moments,
\citet{stawiska2020completeness} analyzed completeness of fixed-moment subspaces and the
distinction between even and odd orders, and \citet{randles2024note} gave further completeness
results. Comparison results with Wasserstein distances were developed in
\citet{carrillo2007contractive,cao2024equivalence}. The continuous integral family studied here is itself classical.
\citet{auricchio2020equivalence} write it as
$D_{\sigma,p}(\mu,\nu)=(\int_{\R^d}|\widehat\mu(k)-\widehat\nu(k)|^p\,|k|^{-(p\sigma+d)}\,dk)^{1/p}$,
credit the one-dimensional case to \citet{baringhaus1997class}, and identify these functionals as
ideal metrics in the sense of Zolotarev \citep{zolotarev1984probability}. In our normalization
$\mathrm T_{s,p}=D_{\,s-d/p,\,p}$, and the dilation identity of Lemma~\ref{lem:scaling} expresses
precisely ideality of order $s-d/p$. The same authors also introduced modified Fourier metrics for
gridded probability measures, proved equivalence results with \(W_1\) and \(W_2\), and demonstrated
their computational use for image data. Very recently, the quadratic member of the family has been
developed expressly for machine learning. \citet{auricchio2025kinetic} identify the $p=2$ Fourier
metric with the energy distance up to an explicit constant and survey its properties for
high-dimensional inference, and \citet{auricchio2026energy} apply it to evolution equations. Further harmonic-analytic bounds for Wasserstein distances were established
by \citet{loeper2006uniqueness,peyre2018comparison,steinerberger2021wasserstein,nilesweed2021minimax,hong2023fourier,gonzalez2024asymptotically}.
In particular, \citet{peyre2018comparison} compares $W_2$ with the homogeneous $\dot H^{-1}$
norm, the closest Sobolev-side relative of the dual representation developed below. In addition,
\citet{mariucci2018wasserstein} used Toscani--Fourier distances to control Wasserstein and total
variation distances between marginals of L\'evy processes, illustrating both the reach and the
limitations of Fourier representations of transport-type geometry.

Our perspective is also closely related to the machine learning literature on integral probability
metrics, kernel mean embeddings, and maximum mean discrepancy (MMD)
\citep{mueller1997integral,smola2007hilbert,sriperumbudur2010hilbert,sriperumbudur2011universality,sriperumbudur2012empirical,muandet2017kernel}.
For translation-invariant kernels, Fourier analysis characterizes characteristicness and the
induced discrepancy \citep{sriperumbudur2010hilbert,modeste2024translation}, and Fourier arguments
underlie the equivalence between energy distances and kernel discrepancies
\citep{sejdinovic2013equivalence}. The energy distance itself, together with its weighted-$L^2$
characteristic-function representation and the explicit constant used in
Section~\ref{sec:mmd}, is due to
\citet{szekely2005new,szekely2007measuring,szekely2013energy}, and \citet{lyons2013distance}
extended the theory to metric spaces of negative type. Recent work has clarified when MMD metrizes weak convergence
and when kernel discrepancies control Wasserstein distances on structured classes
\citep{simon2018kernel,simon2023metrizing,vayer2023controlling}. Consequently, the quadratic
identification below should be viewed as an exact normalization, within the Toscani--Fourier
family, of the classical energy-distance representation of Sz\'ekely and Rizzo and of the
established energy-distance/MMD correspondence --- not as a new connection.

Characteristic-function discrepancies also have a long history as statistical and machine
learning tools in their own right. Empirical characteristic function methods go back to
\citet{feuerverger1977empirical} and the two-sample test of \citet{epps1986omnibus}. Modern
kernel-based variants compare smoothed characteristic functions at a few random frequencies to
obtain linear-time two-sample tests \citep{chwialkowski2015fast,jitkrittum2016interpretable}. Energy/Cram\'er-type characteristic-function losses are used to train generative models
\citep{bellemare2017cramer}. Relative to this literature, the distinguishing feature of
$\mathrm T_{s,p}$ is its non-integrable homogeneous weight $\|u\|^{-ps}$. The weight is exactly
what produces the transport comparisons of Sections~\ref{sec:Ts-vs-Wp}--\ref{sec:mmd}, at the
price of the finiteness window studied in Section~\ref{sec:prelim}.

Against this background, we study the continuous, weighted integral family
\(\mathrm{T}_{s,p}\) on \(\mathbb R^d\) for general \(1\le p<\infty\). The focus is narrower than
``Fourier probability metrics'' as a whole. We ask when this particular integral formula is finite
on natural moment classes, how it compares globally with \(W_p\), and which scale-controlled
families admit reverse or topological control. This scope distinguishes the paper from the
supremum-type completeness theory above and from modified discrete-grid metrics, while the
\(p=2\) results deliberately connect to, and build on, existing MMD and energy-distance theory.

Figure~\ref{fig:metric-map} summarizes the picture developed in this paper. Proposition~\ref{prop:integrability} and Theorem~\ref{thm:completeness} identify the fundamental metric and embedding regime \(d/p < s < 1 + d/p\). Theorem~\ref{thm:upper-holder} shows that \(\mathrm{T}_{s,p}\) is globally controlled by \(W_p\), while Lemma~\ref{lem:scaling} proves that no global reverse inequality can hold in general. Reverse control and topological equivalence are then recovered on scale-controlled families, including bounded-support and uniform-tail classes, in Theorems~\ref{thm:Wp-Tsp-equivalence}--\ref{thm:Wp-Tsp-equivalence-moment} and Theorem~\ref{thm:explicit-reverse-bounded}. In the Hilbertian case \(p=2\), Corollary~\ref{cor:hilbert-dual} and Lemma~\ref{lem:energy-constant} further identify \(\mathrm{T}_{s,2}\) with a translation-invariant MMD, equivalently a negative-Sobolev IPM, thereby linking the Toscani--Fourier framework to the kernel literature. These issues are particularly relevant in machine learning, where one seeks discrepancies that remain computationally tractable while preserving meaningful distributional geometry in high dimension.

\begin{figure}[tbp]
\centering
\begin{tikzpicture}[
  >=Stealth, thick, every node/.style={font=\small},
  box/.style={draw, rounded corners,align=center,inner sep=6pt,text width=4cm,fill=gray!10},
  tfbox/.style={font=\footnotesize, draw,rounded corners,align=center,inner sep=7pt,text width=4.4cm,fill=gray!8},
  lab/.style={draw, rounded corners,font=\footnotesize,align=center,fill=white,inner sep=4pt,text opacity=1}
]

\node[box] (W) at (-5.2,7.1) {\textbf{Wasserstein}\\ $W_p$};

\node[tfbox] (T) at (0,1) {{\small\textbf{Toscani-Fourier}}\\ $\dfrac{d}{p}<s<1+\dfrac{d}{p}$ \quad (Prop.~\ref{prop:integrability})\\[1mm]
embedding / completeness \quad (Thm.~\ref{thm:completeness})};

\node[box] (M) at (4.4,7) {\textbf{MMD / negative-Sobolev IPM}\\ $p=2$};

\draw[->](W) to[bend left=44]node[lab, above, pos=.6]{global upper control\\$\mathrm{T}_{s,p}(\mu,\nu)
\le
C\ W_p(\mu,\nu)^{\ s-\frac{d}{p}}$\\ (sharp exponent, Thm.~\ref{thm:upper-holder})}(T);

\draw[->, dashed](T) to[bend left=1] node[lab, below, pos=.35] {no global control\\ (Lemma~\ref{lem:scaling})}(W);

\draw[<->](W) to[bend right=55]node[lab, below, pos=.22]{bounded support / uniform tails\\ $W_p \Longleftrightarrow \mathrm{T}_{s,p}$\\(Thms.~\ref{thm:Wp-Tsp-equivalence}-\ref{thm:Wp-Tsp-equivalence-moment}, \ref{thm:explicit-reverse-bounded})}(T);

\draw[->] (T) to[bend right=44] node[lab, above, pos=.43]{Identification\\$\mathrm{T}_{s,2}(\mu,\nu)^2=
c(d,H)\ \mathrm{MMD}_{k_H}(\mu,\nu)^2$\\ (Cor.~\ref{cor:hilbert-dual}, Lemma~\ref{lem:energy-constant})}(M);
\end{tikzpicture}
\caption{Main metric relations in the paper.}
\label{fig:metric-map}
\end{figure}

\subsection{Contributions and Scope}
Our contributions are of three kinds.

\emph{(i) A sharp finiteness window and the function-space structure it induces.} We show that
\(\frac dp<s<1+\frac dp\) is exactly the range in which \(\mathrm{T}_{s,p}\) is finite on
\(\mathcal P_p(\R^d)\) and defines a metric there, and that \emph{both} endpoint restrictions are
necessary. Outside the window the defining integral already diverges for a pair of Dirac measures
(Proposition~\ref{prop:integrability} and Proposition~\ref{prop:window-sharp}). Inside the window we
formulate the geometry through a weighted Fourier--Lebesgue embedding and obtain the extended
seminorm, monotonicity, and compactness statements, together with the completeness of the
resulting metric space. The embedded image is closed, so no escape of mass is possible in the
Fourier geometry
(Theorem~\ref{thm:seminorm}, Proposition~\ref{prop:embeddings},
Proposition~\ref{prop:Compactness}, Theorem~\ref{thm:completeness}).

\emph{(ii) A quantitative comparison with transport geometry.} We prove the global bound
\(\mathrm{T}_{s,p}\le C\,W_p^{\,s-d/p}\) with an explicit constant and a H\"older exponent that
cannot be improved (Theorem~\ref{thm:upper-holder}), show that no global converse of any form --- H\"older or otherwise --- can hold, by combining
the homogeneity obstruction with a mass-splitting construction (Lemma~\ref{lem:scaling}), and
recover reverse control or equivalent convergence once dilations and mass splitting are excluded.
Topological equivalence on bounded-support and uniform-tail classes
(Theorems~\ref{thm:Wp-Tsp-equivalence}--\ref{thm:Wp-Tsp-equivalence-moment}), an explicit
bounded-support reverse modulus valid for every \(1\le p<\infty\)
(Theorem~\ref{thm:explicit-reverse-bounded}), and a sharper Hausdorff--Young refinement for
\(1\le p\le2\) (Proposition~\ref{prop:reverse-bounded-HY}).

\emph{(iii) Duality, and the exact place of the quadratic case in the kernel literature.} We derive
a dual integral probability metric representation over a homogeneous Fourier--Lebesgue ball for
general \(p\), together with an explicit extremizer when \(1<p<\infty\)
(Theorem~\ref{thm:dual-ipm}, Proposition~\ref{prop:dual-optimizer}). At \(p=2\) we record, in our
normalization and with a self-contained verification, the classical Sz\'ekely--Rizzo constant
identifying \(\mathrm{T}_{s,2}\) with an energy-kernel MMD and with a negative-Sobolev
integral probability metric (Lemma~\ref{lem:energy-constant},
Corollary~\ref{cor:hilbert-dual}). This fixes the dictionary needed to import known
kernel-to-Wasserstein bounds, shows that our general-\(p\) bounded-support moduli strictly improve
the imported exponent at \(p=2\) (Remark~\ref{rem:compare-bounded-moduli}), and yields an exact
expression for the mean of the empirical discrepancy
(Proposition~\ref{prop:empirical-mean}).

Section~\ref{sec:num_exp} provides numerical diagnostics of truncation and of the quadratic
identity, together with a small DOTmark benchmark. These computations illustrate behavior and
measured cost. They are not presented as finite-sample statistical validation, and the paper proves
no estimator-level guarantee for \(\mathrm{T}_{s,p}(\hat\mu_n,\hat\nu_n)\).

\paragraph{Organization.}
Section~\ref{sec:prelim} fixes notation and establishes the finiteness window and the basic metric
properties. Section~\ref{sec:structure} develops the functional-analytic structure. Section~\ref{sec:Ts-vs-Wp} contains the comparison with Wasserstein distances.
Section~\ref{sec:dual-ipm} gives the dual/IPM representation and the explicit bounded-support
reverse modulus. Section~\ref{sec:mmd} treats the Hilbertian case \(p=2\).
Section~\ref{sec:num_exp} reports the numerical diagnostics.
\ifincludeproofs
Complete proofs of every statement are collected in Appendix~\ref{app:proofs}.
\else
Complete proofs of every numbered statement are collected in Appendix~A of the accompanying
online appendix, whose subsections are titled by the result they prove and appear in the order in
which the results are stated below. No proof has been abbreviated or omitted.
\fi

\section{Definition and Basic Properties of the Toscani--Fourier Distance}\label{sec:prelim}

We work on $\R^d$ ($d\ge 1$) with its Borel $\sigma$--algebra.
Let $\mathcal P(\R^d)$ be the set of Borel probability measures and $\mathcal M(\R^d)$ the vector space of finite signed Borel measures.
For $p\in[1,\infty)$, the $p$--moment class is
\[
\mathcal P_p(\R^d)
:=
\Big\{\mu\in\mathcal P(\R^d): \int_{\R^d}\|x\|^p\ d\mu(x)<\infty\Big\}
\]
We use the same notation $\mathcal P_r(\R^d)$ with $r\in(0,1)$ for the fractional moment class
$\{\mu\in\mathcal P(\R^d):\int\|x\|^{r}\,d\mu(x)<\infty\}$. The Wasserstein distance below is
used only for $p\ge1$.
We write $\mathcal M_0(\R^d):=\{\eta\in\mathcal M(\R^d):\eta(\R^d)=0\}$ for the zero-mass subspace.
For $\mu\in\mathcal M(\R^d)$, define the Fourier transform
\[
\widehat\mu(u):=\int_{\R^d}e^{i\langle u,x\rangle}\ d\mu(x),\qquad u\in\R^d
\]
Then $\widehat\mu(0)=\mu(\R^d)$. For $\mu\in\mathcal P(\R^d)$ we write $\widehat\mu=\Bchi_\mu$. Thus $\Bchi_\mu(0)=1$.
If $\mu$ admits a density $g$, then
\[
\Bchi_\mu(u)=\int_{\R^d}e^{i\langle u,x\rangle}g(x)\ dx
=\overline{\int_{\R^d}e^{-i\langle u,x\rangle}g(x)\ dx}
=\overline{\mathcal F_g(u)}
\]
where $\mathcal F_g$ is the (continuous) Fourier transform of $g$.

\begin{remark}[Moment Regularity at the Origin]\label{rem:chi-regularity}
If $\mu$ has finite moments through order \(k\), then $\Bchi_\mu$ is \(k\)-times continuously differentiable. Its derivatives satisfy
\[
\frac{\partial^k\Bchi_\mu}{\partial u_{i_1}\cdots\partial u_{i_k}}(u)
=
i^k\int_{\R^d}x_{i_1}\cdots x_{i_k}e^{i\langle u,x\rangle}\ d\mu(x)
\]
In particular, if $\mu\in\mathcal P_p(\R^d)$ then $\Bchi_\mu\in C^{\lfloor p\rfloor}(\R^d)$, and its $\lfloor p\rfloor$th derivatives are H\"older of order $p-\lfloor p\rfloor$.
What moments \emph{do} control is the behavior of $\Bchi_\mu(u)$ near $u=0$, not decay as $\|u\|\to\infty$. See Remark~\ref{rem:weak-not-T}.
\end{remark}

For $\mu,\nu\in\mathcal P(\R^d)$ and $p\in[1,\infty)$, we denote the Wasserstein distance \citep{vaserstein1969markov,villani2008optimal} as
\[
W_p(\mu,\nu)
:=
\Bigg(\inf_{\pi\in\Pi(\mu,\nu)}\int_{\R^d\times\R^d}\|x-y\|^p\ d\pi(x,y)\Bigg)^{1/p}
\]
where $\Pi(\mu,\nu)$ is the set of couplings of $(\mu,\nu)$.

The Fourier transform captures the shape of a measure through its action on oscillatory test functions. At the origin, however, it does not describe that structure. It records only the total mass, since \(\widehat{\mu}(0)=\mu(\mathbb{R}^d)\). In particular, for probability measures this value is always \(1\). Thus the term at \(u=0\) contains no information about how two probability measures differ. For this reason, we subtract the value at the origin and measure only the remaining variation of the characteristic function. Equivalently, the construction acts on zero-mass differences such as \(\mu-\nu\), which are the natural objects when comparing probability measures. This normalization removes the trivial constant part and underlies the metric, embedding, and dual formulations developed later.

\begin{definition}[Extended Toscani seminorm on finite measures]\label{def:Toscani_seminorm}
Fix $s>0$ and $p\in[1,\infty]$. For $\mu\in\mathcal M(\R^d)$ define the possibly infinite quantity
\[
|\mu|_{\mathrm{T}_{s,p}}
:=
\Bigg(\int_{\R^d\setminus\{0\}}
\Big|\frac{\widehat\mu(u)-\widehat\mu(0)}{\|u\|^{s}}\Big|^p\ du\Bigg)^{1/p}
\qquad (1\le p<\infty)
\]
and for $p=\infty$,
\[
|\mu|_{\mathrm{T}_{s,\infty}}
:=
\sup_{u\ne 0}\Big|\frac{\widehat\mu(u)-\widehat\mu(0)}{\|u\|^{s}}\Big|
\]
\end{definition}

\begin{theorem}[Extended Toscani seminorm]\label{thm:seminorm}
For each $s>0$ and $p\in[1,\infty]$, the mapping $\mu\mapsto |\mu|_{\mathrm{T}_{s,p}}$ is an extended seminorm on $\mathcal M(\R^d)$. Its finite domain
\[
\mathcal D_{s,p}:=\{\mu\in\mathcal M(\R^d):|\mu|_{\mathrm{T}_{s,p}}<\infty\}
\]
is a linear subspace of \(\mathcal M(\R^d)\), and the restriction of \( |\cdot|_{\mathrm{T}_{s,p}}\) to \(\mathcal D_{s,p}\) is an ordinary seminorm.
\end{theorem}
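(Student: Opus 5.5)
The plan is to realize $|\cdot|_{\mathrm T_{s,p}}$ as the composition of a linear map with a weighted $L^p$ (extended) norm, so that all seminorm properties are inherited from those of $L^p$. Define the linear operator
\[
A:\mathcal M(\R^d)\to \{\text{measurable functions on }\R^d\setminus\{0\}\},\qquad (A\mu)(u):=\frac{\widehat\mu(u)-\widehat\mu(0)}{\|u\|^{s}} .
\]
Linearity of $A$ follows from linearity of $\mu\mapsto\widehat\mu$, which is immediate from the defining integral, together with linearity of evaluation at $u=0$. For finite signed $\mu$, $\widehat\mu$ is bounded by $|\mu|(\R^d)$ and continuous by dominated convergence. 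Hence $A\mu$ is continuous on $\R^d\setminus\{0\}$, in particular Borel measurable, and the integral (or supremum) in Definition~\ref{def:Toscani_seminorm} is a well-defined element of $[0,\infty]$. Consequently $|\mu|_{\mathrm T_{s,p}}=\|A\mu\|_{L^p(\R^d\setminus\{0\})}$, where we set $\|f\|_{L^p}=\infty$ when the integral diverges. For $p=\infty$, continuity of $A\mu$ means the pointwise supremum coincides with the essential supremum, so the same description applies.

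Next we verify the axioms of an extended seminorm, with the conventions $0\cdot\infty=0$ and $a+\infty=\infty$. Nonnegativity is clear. For absolute homogeneity, $A(c\mu)=cA\mu$ gives $\|A(c\mu)\|_p=|c|\,\|A\mu\|_p$; the case $c=0$ is separate but trivial, since $A0=0$. For the triangle inequality, $A(\mu+\nu)=A\mu+A\nu$ pointwise, and Minkowski's inequality for $1\le p<\infty$ holds for arbitrary measurable functions with values in $[0,\infty]$, so $\|A\mu+A\nu\|_p\le\|A\mu\|_p+\|A\nu\|_p$. For $p=\infty$, one takes the supremum of the pointwise bound $|A\mu(u)+A\nu(u)|\le |A\mu(u)|+|A\nu(u)|$.

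The statements about the finite domain then follow directly. Homogeneity and the triangle inequality show that $\mathcal D_{s,p}$ contains $0$ and is closed under scalar multiples and sums, so it is a linear subspace. On $\mathcal D_{s,p}$ the values are finite, which gives an ordinary seminorm. It is genuinely only a seminorm, since for instance every constant multiple of a probability measure's Dirac mass at $0$ has $A\mu\equiv0$; this is worth remarking but not needed for the proof.

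The only step needing care is measurability and well-definedness of $A\mu$ for arbitrary finite signed measures, so that Minkowski applies in its extended form. Continuity of $\widehat\mu$ via dominated convergence, using the Jordan decomposition $\mu=\mu^+-\mu^-$, settles this. No restriction on $s$ or on moments is required, because infinite values are permitted.
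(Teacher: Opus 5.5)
Your proposal is correct and follows essentially the same route as the paper: write $|\mu|_{\mathrm{T}_{s,p}}$ as the $L^p$ (or supremum) norm of the linear map $\mu\mapsto(\widehat\mu(u)-\widehat\mu(0))/\|u\|^{s}$, get homogeneity and subadditivity from Minkowski or the pointwise triangle inequality, and conclude that $\mathcal D_{s,p}$ is closed under sums and scalar multiples. Your explicit measurability check via continuity of $\widehat\mu$ is a small addition that the paper leaves implicit.
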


\begin{remark}[Kernel and the zero-mass norm]\label{rem:kernel}
The seminorm $|\cdot|_{\mathrm{T}_{s,p}}$ is homogeneous and subadditive on $\mathcal M(\R^d)$ but not definite.
Indeed, for the Dirac measure at the origin $\delta_0$ we have $\widehat{(c\delta_0)}(u)\equiv c$, so $\widehat{(c\delta_0)}(u)-\widehat{(c\delta_0)}(0)\equiv 0$ and $|c\delta_0|_{\mathrm{T}_{s,p}}=0$.

In fact, $\ker(|\cdot|_{\mathrm{T}_{s,p}})=\mathrm{span}\{\delta_0\}$. To see the nontrivial inclusion, suppose $\mu\in\mathcal M(\R^d)$ satisfies $|\mu|_{\mathrm{T}_{s,p}}=0$.
For $1\le p<\infty$ this forces $\widehat\mu(u)=\widehat\mu(0)$ for Lebesgue-almost every $u$, and for $p=\infty$ for every $u\neq 0$.
Since the Fourier transform of a finite signed measure is uniformly continuous on $\R^d$, and a continuous function that agrees almost everywhere with the constant $\widehat\mu(0)$ agrees with it everywhere (the set where two continuous functions coincide is closed, and here it has full measure, hence is dense), we conclude $\widehat\mu\equiv\widehat\mu(0)$ on all of $\R^d$ in either case.
Write $c:=\widehat\mu(0)=\mu(\R^d)$. Then $\mu$ and $c\delta_0$ are finite signed measures with identical Fourier transforms, and the uniqueness theorem for Fourier--Stieltjes transforms yields $\mu=c\delta_0$.
For completeness we recall why uniqueness holds for \emph{signed} measures. If $\eta:=\mu-c\delta_0$ has $\widehat\eta\equiv0$, decompose $\eta=\eta^+-\eta^-$ by Jordan decomposition. Then $\widehat{\eta^+}\equiv\widehat{\eta^-}$, and evaluating at $u=0$ gives $\eta^+(\R^d)=\eta^-(\R^d)=:m$. If $m=0$ then $\eta^\pm=0$. If $m>0$ then $\eta^+/m$ and $\eta^-/m$ are probability measures with equal characteristic functions, hence equal by the uniqueness theorem for characteristic functions \citep[Theorem~26.2]{billingsley2017probability}, so $\eta=0$ in either case.

On the zero-mass subspace $\mathcal M_0(\R^d)=\{\eta\in\mathcal M(\R^d):\widehat\eta(0)=0\}$ the seminorm therefore becomes a genuine norm.
\[
\|\eta\|_{\mathrm{T}_{s,p}}
:=
\Bigg(\int_{\R^d\setminus\{0\}}
\frac{|\widehat\eta(u)|^p}{\|u\|^{ps}}\ du\Bigg)^{1/p},
\qquad
\|\eta\|_{\mathrm{T}_{s,\infty}}
:=
\sup_{u\ne 0}\frac{|\widehat\eta(u)|}{\|u\|^{s}}
\]
since $\mathcal M_0(\R^d)\cap\mathrm{span}\{\delta_0\}=\{0\}$.
If $\mu,\nu\in\mathcal M(\R^d)$ have the same total mass, then $\mu-\nu\in\mathcal M_0(\R^d)$ and
$|\mu-\nu|_{\mathrm{T}_{s,p}}=\|\mu-\nu\|_{\mathrm{T}_{s,p}}$.
\end{remark}

\begin{definition}[Toscani--Fourier distance]\label{def:Toscani_Dist}
Fix $s>0$ and $p\in[1,\infty]$. For $\mu,\nu\in\mathcal P(\R^d)$ define
\[
\mathrm{T}_{s,p}(\mu,\nu)
:=
\|\mu-\nu\|_{\mathrm{T}_{s,p}}
=
\Bigg(\int_{\R^d\setminus\{0\}}
\frac{|\Bchi_{\mu}(u)-\Bchi_{\nu}(u)|^{p}}{\|u\|^{ps}}\ du\Bigg)^{1/p}
\quad (1\le p<\infty)
\]
and for $p=\infty$,
\[
\mathrm{T}_{s,\infty}(\mu,\nu)
:=
\sup_{u\ne 0}\frac{|\Bchi_\mu(u)-\Bchi_\nu(u)|}{\|u\|^{s}}
\]
\end{definition}

\begin{proposition}[Finiteness and metric property for $1\le p<\infty$]\label{prop:integrability}
Let $d\ge 1$, $1\le p<\infty$, and $\frac{d}{p}<s<1+\frac{d}{p}$.
Then $\mathrm{T}_{s,p}(\mu,\nu)<\infty$ for all $\mu,\nu\in\mathcal P_p(\R^d)$, and $\mathrm{T}_{s,p}$ defines a metric on $\mathcal P_p(\R^d)$.
\end{proposition}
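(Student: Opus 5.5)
The plan is to split the integral defining $\mathrm{T}_{s,p}(\mu,\nu)^p$ into the regions $\|u\|\le 1$ and $\|u\|>1$, and to bound each region using only the trivial bound $|\Bchi|\le 1$ and a moment bound near the origin. After finiteness is established, the metric axioms come from Theorem~\ref{thm:seminorm} and Remark~\ref{rem:kernel}.

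\emph{High frequencies.} Since $|\Bchi_\mu(u)-\Bchi_\nu(u)|\le 2$, the integrand on $\{\|u\|>1\}$ is at most $2^p\|u\|^{-ps}$. In polar coordinates this is integrable exactly when $ps>d$, which is the lower endpoint $s>d/p$. This region needs no moment assumption.

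\emph{Low frequencies.} Here I use $|e^{i\langle u,x\rangle}-1|\le \min(2,\|u\|\|x\|)\le 2^{1-\alpha}\|u\|^{\alpha}\|x\|^{\alpha}$ for any $\alpha\in[0,1]$. Writing $\Bchi_\mu(u)-\Bchi_\nu(u)=(\Bchi_\mu(u)-1)-(\Bchi_\nu(u)-1)$, I take $\alpha:=\min(1,p)=1$. This gives
\[
|\Bchi_\mu(u)-\Bchi_\nu(u)|\le \|u\|\Big(\int\|x\|\,d\mu+\int\|x\|\,d\nu\Big),
\]
and the first moments are finite because $\mathcal P_p\subset\mathcal P_1$ when $p\ge1$. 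The integrand on $\{\|u\|\le1\}$ is then bounded by $C\|u\|^{p-ps}$. This is integrable near $0$ iff $p-ps>-d$, i.e.\ $s<1+d/p$, which is the upper endpoint.

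I note that only first moments are really used, and that one cannot gain more than one power of $\|u\|$ from the linear term. In general the linear term $i\langle u,m_\mu-m_\nu\rangle$ does not cancel, which is why the window closes at $1+d/p$ rather than higher. Combining the two regions gives $\mathrm{T}_{s,p}(\mu,\nu)<\infty$ on $\mathcal P_p(\R^d)$, with an explicit constant in terms of first moments.

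\emph{Metric property.} Symmetry is immediate from the definition. The triangle inequality follows from Theorem~\ref{thm:seminorm}, applied to $\mu-\nu=(\mu-\rho)+(\rho-\nu)$; all three differences lie in $\mathcal D_{s,p}$ by the finiteness just proved. For definiteness, $\mu-\nu\in\mathcal M_0$, and by Remark~\ref{rem:kernel} the seminorm is a norm on $\mathcal M_0$, so $\mathrm{T}_{s,p}(\mu,\nu)=0$ forces $\mu=\nu$. Alternatively, a zero integral makes $\Bchi_\mu=\Bchi_\nu$ a.e., hence everywhere by continuity, and uniqueness of characteristic functions gives $\mu=\nu$.

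I do not expect a serious obstacle. The only care needed is in the low-frequency estimate, where the interpolation inequality $\min(2,t)\le t$ must be applied correctly and the polar-coordinate exponents must be tracked precisely.
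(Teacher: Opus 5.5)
Your proposal is correct and follows the paper's route. You split at $\|u\|=1$, use $|\Bchi_\mu-\Bchi_\nu|\le 2$ at high frequencies and the first-moment bound $|\Bchi_\mu(u)-1|\le\|u\|\int\|x\|\,d\mu$ near the origin, and obtain exactly the two endpoint conditions. The only cosmetic difference is that you get the triangle inequality and definiteness by citing Theorem~\ref{thm:seminorm} and Remark~\ref{rem:kernel}, whereas the paper proves them directly with the same argument (Minkowski's inequality, then continuity plus L\'evy uniqueness).
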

The window is the structural regime for the comparison theory of
Sections~\ref{sec:Ts-vs-Wp}--\ref{sec:mmd}. Its two constraints come from the two singular regions of the defining integral. The lower constraint $s>\frac dp$ controls the high-frequency tail, where only $|\Bchi_\mu-\Bchi_\nu|\le2$ is available, while the upper constraint $s<1+\frac dp$ is forced by the first-order cancellation at the origin coming from $\Bchi_\mu(0)=\Bchi_\nu(0)=1$, which gives $|\Bchi_\mu(u)-\Bchi_\nu(u)|\lesssim\|u\|$ there. Proposition~\ref{prop:window-sharp} below shows that neither constraint can be dropped.
\begin{remark}[The case $p=\infty$]\label{rem:pinf}
For $p=\infty$, finiteness at the origin requires a first-moment bound and $0<s\le 1$, since
$|\Bchi_\mu(u)-\Bchi_\nu(u)|\lesssim \|u\|$ as $u\to 0$ when $\mu,\nu\in\mathcal P_1(\R^d)$.
Accordingly, whenever we compare $\mathrm{T}_{s,\infty}$ on moment classes we assume
$\mu,\nu\in\mathcal P_1(\R^d)$ and $0<s\le 1$. Statements phrased on the Toscani class itself,
such as Theorem~\ref{thm:completeness}, need no such assumption (though for large $s$ that class
may be small).
\end{remark}

The next proposition shows that both endpoint restrictions are necessary, not merely convenient. Outside the window the defining integral diverges already for a pair of Dirac measures, which lies in every moment class.

\begin{proposition}[Sharpness of the finiteness window]\label{prop:window-sharp}
Let $d\ge 1$, $1\le p<\infty$, $s>0$, and fix any $x\neq 0$. Then
\[
\mathrm{T}_{s,p}(\delta_0,\delta_x)<\infty
\quad\Longleftrightarrow\quad
\frac{d}{p}<s<1+\frac{d}{p}
\]
More precisely, the defining integral diverges at the origin whenever $s\ge 1+\frac dp$ and diverges at infinity whenever $s\le \frac dp$.
Since $\delta_0,\delta_x\in\mathcal P_p(\R^d)$ for every $p$, no choice of $s$ outside the window makes $\mathrm{T}_{s,p}$ finite on all of $\mathcal P_p(\R^d)\times\mathcal P_p(\R^d)$.
\end{proposition}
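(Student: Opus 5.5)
We compute the integrand explicitly. Since $\Bchi_{\delta_0}\equiv 1$ and $\Bchi_{\delta_x}(u)=e^{i\langle u,x\rangle}$, we have
\[
|\Bchi_{\delta_0}(u)-\Bchi_{\delta_x}(u)|=|1-e^{i\langle u,x\rangle}|=2\bigl|\sin(\tfrac12\langle u,x\rangle)\bigr|,
\]
so that
\[
\mathrm T_{s,p}(\delta_0,\delta_x)^p=2^p\int_{\R^d\setminus\{0\}}\frac{|\sin(\langle u,x\rangle/2)|^p}{\|u\|^{ps}}\,du .
\]
By rotation invariance of Lebesgue measure and of $\|u\|$, we may assume $x=\|x\|e_1$. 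We then split the integral into the region $\{\|u\|\le R\}$ and its complement, and treat the two singular regions separately.

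\emph{Near the origin.} We use $\frac{2}{\pi}|t|\le|\sin t|\le|t|$ for $|t|\le\pi/2$. On the ball $\|u\|\le \pi/\|x\|$ the integrand is therefore comparable, with two-sided constants, to $|u_1|^p\|x\|^p\|u\|^{-ps}$. Passing to polar coordinates $u=r\theta$, this becomes
\[
\|x\|^p\int_{S^{d-1}}|\theta_1|^p\,d\sigma(\theta)\int_0^{\pi/\|x\|} r^{p-ps+d-1}\,dr .
\]
The angular factor is a strictly positive finite constant. The radial integral converges if and only if $p-ps+d>0$, that is, $s<1+\frac dp$. Hence the integral is finite near $0$ when $s<1+\frac dp$ and diverges at the origin when $s\ge 1+\frac dp$.

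\emph{At infinity.} The upper bound is immediate. Since $|\sin|\le 1$, the tail is at most $\int_{\|u\|>R}\|u\|^{-ps}\,du$, which is finite exactly when $ps>d$.

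For the lower bound when $s\le d/p$, the issue is that the integrand vanishes along the hyperplanes $\langle u,x\rangle\in 2\pi\mathbb Z$, so we need a lower bound on average rather than pointwise. This is the main obstacle, and we handle it by averaging over spheres. In polar coordinates the tail equals
\[
\int_R^\infty r^{d-1-ps}\,g(r)\,dr,\qquad g(r):=\int_{S^{d-1}}\bigl|\sin(r\|x\|\theta_1/2)\bigr|^p\,d\sigma(\theta).
\]
We claim $\liminf_{r\to\infty}g(r)>0$. For $d=1$ we do not use spheres. Instead we integrate $|\sin(\|x\|u/2)|^p$ against $|u|^{-ps}$ directly: $|\sin|^p$ is periodic with positive mean and $|u|^{-ps}$ is monotone, so comparing period by period gives divergence of $\int^\infty u^{-ps}\,du$ when $ps\le1$. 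For $d\ge2$, the pushforward of $\sigma$ under $\theta\mapsto\theta_1$ has a density $c_d(1-t^2)^{(d-3)/2}$ on $(-1,1)$, which is integrable. Hence $g(r)=\int_{-1}^1|\sin(r\|x\|t/2)|^p\,c_d(1-t^2)^{(d-3)/2}\,dt$. By the Riemann--Lebesgue lemma applied to the Fourier expansion of the periodic function $|\sin|^p$ (equivalently, the standard weak-limit lemma for rapidly oscillating periodic functions against an $L^1$ weight), $g(r)$ converges as $r\to\infty$ to the mean of $|\sin|^p$ over a period times $\sigma(S^{d-1})$, which is strictly positive. Therefore the tail is bounded below by a constant times $\int_R^\infty r^{d-1-ps}\,dr$, which diverges when $ps\le d$.

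Combining the two regions gives the claimed equivalence, together with the localization of the divergence at the origin for $s\ge 1+\frac dp$ and at infinity for $s\le\frac dp$. The final sentence of the proposition follows because $\delta_0,\delta_x\in\mathcal P_p(\R^d)$.
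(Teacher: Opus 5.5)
Your proof is correct, but at infinity it takes a different route from the paper.

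\emph{Same as the paper.} Your near-origin argument matches the paper's Step 2: the bound $|1-e^{it}|\ge\frac{2}{\pi}|t|$ on $|t|\le\pi$, then polar coordinates, then a strictly positive angular constant.

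\emph{Different from the paper.} The paper proves the lower bound at infinity using the quadratic quantity $y(u)=|1-e^{i\langle u,x\rangle}|^2=2(1-\cos\langle u,x\rangle)$. Its annular integrals are computed exactly through the Bessel-function formula for the Fourier transform of a ball indicator. The decay $|J_{d/2}(t)|\le C_d t^{-1/2}$ then gives $\int_{A_k}y\ge\tfrac32|A_k|$ on dyadic annuli. A separate step is needed to pass to the $p$-th power: Jensen for $p\ge2$, and the pointwise bound $y^{p/2}\ge 4^{p/2-1}y$ for $1\le p<2$.

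You instead average the $p$-th power directly over spheres. You reduce $g(r)$ to a one-dimensional integral against the integrable marginal density $c_d(1-t^2)^{(d-3)/2}$. You then use the oscillatory weak-limit (Riemann--Lebesgue) lemma for a continuous periodic function against an $L^1$ weight.

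\emph{What each approach buys.} Your route avoids Bessel asymptotics and the case split in $p$. It also gives the exact limit $g(r)\to\overline{h}\,|\mathbb S^{d-1}|$, where $\overline h$ is the mean of $|\sin|^p$ over a period. So the tail is asymptotically $2^p\,\overline{h}\,v_d\int^\infty r^{d-1-ps}\,dr$, not merely bounded below. The price is that $d=1$ needs separate treatment, since the sphere degenerates to two points and $g$ has no positive $\liminf$; your period-by-period comparison handles this correctly. The paper's annulus argument is uniform in $d\ge1$ and yields a quantitative threshold $k_*$ from the Bessel constants.
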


\begin{definition}[Toscani class and Toscani space]\label{def:Toscani_space_metric}
Fix $s>0$ and $p\in[1,\infty]$. Define the \emph{Toscani class of probability measures}
\[
\mathcal P^{\mathrm T}_{s,p}(\R^d)
:=
\{\mu\in\mathcal P(\R^d):\ \mathrm{T}_{s,p}(\mu,\delta_0)<\infty\}
\]
Equipped with the metric $\mathrm{T}_{s,p}$, we call
$\big(\mathcal P^{\mathrm T}_{s,p}(\R^d),\mathrm{T}_{s,p}\big)$ the \emph{Toscani space}.
\end{definition}
For \emph{every} $s>0$ and $p\in[1,\infty]$, $\mathrm{T}_{s,p}$ is a genuine, finite-valued
metric on $\mathcal P^{\mathrm T}_{s,p}(\R^d)$. Finiteness follows from the triangle inequality
through $\delta_0$, and definiteness follows from the argument in part~(i) of the proof of
Proposition~\ref{prop:integrability}, which uses only the continuity of characteristic functions
and no moment assumption (for $p=\infty$ the vanishing supremum forces
$\Bchi_\mu=\Bchi_\nu$ everywhere directly, after which the same uniqueness theorem applies).

\begin{remark}[How $\mathcal P^{\mathrm T}_{s,p}$ compares with the moment classes]\label{rem:class-in-PT}
Under the assumptions of Proposition~\ref{prop:integrability} (i.e.\ $\frac{d}{p}<s<1+\frac{d}{p}$ and $1\le p<\infty$),
we have the chain of inclusions
\[
\mathcal P_p(\R^d)\subseteq\mathcal P_1(\R^d)\subseteq\mathcal P^{\mathrm T}_{s,p}(\R^d)
\]
The first inclusion is Jensen's (or H\"older's) inequality for $p\ge 1$. The second holds because the finiteness argument in \ProofReference{proof:integrability} only uses \emph{first} moments. For $\mu\in\mathcal P_1(\R^d)$, the bound $|\Bchi_\mu(u)-1|\le\|u\|\,\E\|X\|$ near the origin and $|\Bchi_\mu(u)-1|\le 2$ at infinity give $\mathrm{T}_{s,p}(\mu,\delta_0)<\infty$ throughout the window. In particular, $\delta_0\in\mathcal P^{\mathrm T}_{s,p}(\R^d)$ and $\mathrm{T}_{s,p}$ is finite on all of $\mathcal P_p(\R^d)\times\mathcal P_p(\R^d)$.

For $p>1$ the second inclusion makes $\mathcal P^{\mathrm T}_{s,p}(\R^d)$ \emph{strictly larger} than $\mathcal P_p(\R^d)$.
A concrete example with $d=1$, $p=2$, and $s\in(\tfrac12,\tfrac32)$. Let $\mu$ have density proportional to $|x|^{-5/2}\mathbf 1_{\{|x|>1\}}$.
Then $\int|x|\,d\mu<\infty$ while $\int|x|^2\,d\mu=\infty$, so $\mu\in\mathcal P_1\setminus\mathcal P_2$. Yet by the first-moment bounds just quoted, $\mathrm{T}_{s,2}(\mu,\delta_0)<\infty$, i.e.\ $\mu\in\mathcal P^{\mathrm T}_{s,2}(\R^d)\setminus\mathcal P_2(\R^d)$.
Thus, inside the window, the correct statement is that the Toscani class \emph{contains} the Wasserstein class $\mathcal P_p$ on which our comparison theory is developed. The two classes do not coincide in general.
Outside the window, the first inclusion fails in the opposite direction. There we have
$\mathcal P_p(\R^d)\not\subseteq\mathcal P^{\mathrm T}_{s,p}(\R^d)$, since the defining integral
already diverges for a pair of Dirac measures (Proposition~\ref{prop:window-sharp}).
\end{remark}

\begin{definition}[Linear Toscani Space]\label{def:Toscani_space_linear}
Fix $s>0$ and $p\in[1,\infty]$. Define the normed linear space
\[
\mathcal T_{s,p}(\R^d)
:=
\{\eta\in\mathcal M_0(\R^d):\ \|\eta\|_{\mathrm{T}_{s,p}}<\infty\}
\]
where $\|\cdot\|_{\mathrm{T}_{s,p}}$ is the norm from Remark~\ref{rem:kernel}.
(We reserve the roman symbol $\mathrm{T}_{s,p}$ for the distance and the calligraphic symbol $\mathcal T_{s,p}(\R^d)$ for this linear space, so that the two objects are never denoted alike.)
For $\mu,\nu\in\mathcal P^{\mathrm T}_{s,p}(\R^d)$, the triangle inequality applied to $\mu-\delta_0$ and $\nu-\delta_0$ gives $\mu-\nu\in\mathcal T_{s,p}(\R^d)$ and
\[
\mathrm{T}_{s,p}(\mu,\nu)=\|\mu-\nu\|_{\mathrm{T}_{s,p}}
\]
We also use the shorthand $\|\mu\|_{\mathrm{T}_{s,p}}:=\mathrm{T}_{s,p}(\mu,\delta_0)$ for $\mu\in\mathcal P^{\mathrm T}_{s,p}(\R^d)$.
\end{definition}

These constructions separate the nonlinear metric from its linear ambient geometry. The distance $\mathrm{T}_{s,p}$ compares probability measures, while the associated zero-mass norm embeds that metric space into a concrete $L^p$-type function space. The Toscani class records the maximal domain of finiteness, which contains $\mathcal P_p(\R^d)$ in the regime of Proposition~\ref{prop:integrability} and may contain it strictly (Remark~\ref{rem:class-in-PT}).
\section{Functional-Analytic Structure of \texorpdfstring{$\mathrm{T}_{s,p}$}{Toscani--Fourier Distance}}\label{sec:structure}
\subsection{Monotonicity, Embedding, and Completeness}\label{sec:embeddings}
This section develops the internal geometry of the Toscani--Fourier scale. Monotonicity in $(s,p)$, the endpoint behavior as $p\to\infty$, and the linearized embedding that yields relative compactness and completeness. Increasing either $s$ or $p$ imposes a stronger Fourier-side integrability requirement, so membership in a stronger Toscani class should imply membership in weaker ones. The next proposition makes this precise with an explicit continuous inclusion estimate.

\begin{proposition}[Continuous Inclusion]\label{prop:embeddings}
Let $1\le q\le p<\infty$ and $0<r\le s$. Assume $qr>d$.
Then $\mathcal P^{\mathrm T}_{s,p}(\R^d)\subseteq \mathcal P^{\mathrm T}_{r,q}(\R^d)$ and the inclusion map is continuous.
Moreover, for all $\mu\in\mathcal P^{\mathrm T}_{s,p}(\R^d)$,
\begin{equation}\label{eq:embedding_estimate}
\|\mu\|_{\mathrm{T}_{r,q}}
\le
|B_1|^{\frac1q-\frac1p}\ \|\mu\|_{\mathrm{T}_{s,p}}
+
2\Big(\int_{\|u\|\ge 1}\|u\|^{-qr}\ du\Big)^{1/q}
\end{equation}
where $|B_1|$ is the Lebesgue volume of the unit ball in $\R^d$.
\end{proposition}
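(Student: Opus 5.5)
Set $f(u):=|\Bchi_\mu(u)-1|$ for $\mu\in\mathcal P^{\mathrm T}_{s,p}(\R^d)$, so that $\|\mu\|_{\mathrm T_{s,p}}^p=\int f^p\|u\|^{-ps}\,du<\infty$. The plan is to split the integral defining $\|\mu\|_{\mathrm T_{r,q}}^q=\int f^q\|u\|^{-qr}\,du$ into the region $\{0<\|u\|<1\}$ and the region $\{\|u\|\ge1\}$. We then use the $L^q$ triangle inequality (Minkowski) in the form $\|g\|_{L^q}\le\|g\mathbf 1_{B_1}\|_{L^q}+\|g\mathbf 1_{B_1^c}\|_{L^q}$, with $g=f\|u\|^{-r}$. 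The two pieces produce the two terms in \eqref{eq:embedding_estimate}.

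\emph{High frequencies.} On $\{\|u\|\ge1\}$ we use only $f\le 2$. This gives $\|g\mathbf 1_{B_1^c}\|_{L^q}\le 2(\int_{\|u\|\ge1}\|u\|^{-qr}du)^{1/q}$, which is finite precisely because $qr>d$ (polar coordinates). This is the only place the hypothesis $qr>d$ enters.

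\emph{Low frequencies.} On $B_1\setminus\{0\}$ we have $\|u\|\le1$ and $r\le s$, hence $\|u\|^{-r}\le\|u\|^{-s}$. Therefore $g\le f\|u\|^{-s}=:h$ there. Since $B_1$ has finite measure and $q\le p$, Hölder's inequality with exponents $p/q$ and $(1-q/p)^{-1}$ gives
\[
\Big(\int_{B_1}h^q\,du\Big)^{1/q}\le |B_1|^{\frac1q-\frac1p}\Big(\int_{B_1}h^p\,du\Big)^{1/p}\le |B_1|^{\frac1q-\frac1p}\|\mu\|_{\mathrm T_{s,p}}.
\]
When $q=p$ the factor is $1$. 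Combining the two pieces yields \eqref{eq:embedding_estimate}, and in particular $\mathcal P^{\mathrm T}_{s,p}\subseteq\mathcal P^{\mathrm T}_{r,q}$.

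\emph{Continuity of the inclusion.} The step needing care is that \eqref{eq:embedding_estimate} is affine rather than linear, so it does not by itself give continuity. Instead we run the same split on $\Bchi_\mu-\Bchi_\nu$ for $\mu,\nu\in\mathcal P^{\mathrm T}_{s,p}$ and control the tail differently, to obtain a bound that tends to $0$ as $\mathrm T_{s,p}(\mu,\nu)\to0$:

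\begin{itemize}
\item[(a)] The low-frequency part is bounded by $|B_1|^{1/q-1/p}\,\mathrm T_{s,p}(\mu,\nu)$ exactly as above.
\item[(b)] For the tail, fix $R\ge1$ and split further at $\|u\|=R$. On $\{1\le\|u\|\le R\}$, Hölder again gives a bound $C_R\,\mathrm T_{s,p}(\mu,\nu)$, using $\|u\|^{-r}\le R^{s-r}\|u\|^{-s}$ there and the finite measure of the annulus.
\item[(c)] On $\{\|u\|>R\}$, the bound $|\Bchi_\mu-\Bchi_\nu|\le2$ gives $2(\int_{\|u\|>R}\|u\|^{-qr}du)^{1/q}$, which tends to $0$ as $R\to\infty$, again because $qr>d$.
\end{itemize}

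Given $\varepsilon>0$, we first choose $R$ so that the term in (c) is below $\varepsilon/2$. Then the terms in (a) and (b) are below $\varepsilon/2$ once $\mathrm T_{s,p}(\mu,\nu)$ is small enough. This shows that $\mathrm T_{s,p}(\mu_n,\mu)\to0$ implies $\mathrm T_{r,q}(\mu_n,\mu)\to0$, i.e., the inclusion is continuous (indeed uniformly continuous).
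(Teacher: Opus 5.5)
Your proposal is correct and follows essentially the same route as the paper. The estimate comes from splitting at $\|u\|=1$, using Hölder on the unit ball (after $\|u\|^{-r}\le\|u\|^{-s}$ there) and the trivial bound by $2$ on the tail. Continuity comes from the same three-region decomposition at $\|u\|=1$ and $\|u\|=R$, with $R$ chosen first to make the tail small.
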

The Toscani classes therefore form a nested scale. At the endpoint of that scale, for fixed $s\le1$, the metric converges to its $L^\infty$ analogue.
\begin{proposition}[Limit as $p\to\infty$]\label{prop:asymptotics}
Fix $0<s\le 1$, and let $\mu,\nu\in\mathcal P_1(\R^d)$. Then
$\mathrm{T}_{s,p}(\mu,\nu)$ is finite for every $p>d/s$, and
\[
\lim_{p\to\infty}\mathrm{T}_{s,p}(\mu,\nu)=\mathrm{T}_{s,\infty}(\mu,\nu)
\]
In particular, taking $\nu=\delta_0$,
\[
\lim_{p\to\infty}\|\mu\|_{\mathrm{T}_{s,p}}=\|\mu\|_{\mathrm{T}_{s,\infty}}
\]
\end{proposition}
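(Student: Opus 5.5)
Set $f(u):=|\Bchi_\mu(u)-\Bchi_\nu(u)|\,\|u\|^{-s}$ on $\R^d\setminus\{0\}$, so that $\mathrm{T}_{s,p}(\mu,\nu)=\|f\|_{L^p}$ and $\mathrm{T}_{s,\infty}(\mu,\nu)=\sup_{u\neq0}f(u)$. The plan is the standard "$L^p$ norms converge to the $L^\infty$ norm" argument. That argument needs two ingredients: $f$ lies in some $L^{p_0}$, and $f$ is continuous, so that the essential supremum equals the pointwise supremum.

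\textbf{Finiteness and boundedness.}
Since $\mu,\nu\in\mathcal P_1(\R^d)$, we have
\[
|\Bchi_\mu(u)-\Bchi_\nu(u)|\le \|u\|\,(\E_\mu\|X\|+\E_\nu\|Y\|)=:M\|u\|,
\]
by bounding each of $|\Bchi_\mu-1|$ and $|\Bchi_\nu-1|$ as in Remark~\ref{rem:class-in-PT}. We also have $|\Bchi_\mu-\Bchi_\nu|\le2$. Hence
\[
f(u)\le M\|u\|^{1-s}\ \text{ on } \|u\|\le1, \qquad f(u)\le 2\|u\|^{-s}\ \text{ on } \|u\|>1 .
\]
For $p>d/s$, the tail integral $\int_{\|u\|>1}\|u\|^{-ps}\,du$ is finite. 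Near the origin, $\int_{\|u\|\le1}\|u\|^{p(1-s)}\,du<\infty$ because $p(1-s)\ge0>-d$, using $s\le1$. So $\mathrm{T}_{s,p}(\mu,\nu)<\infty$ for every $p>d/s$. The same bounds give $L:=\sup f\le\max(M,2)<\infty$, since $\|u\|^{1-s}\le1$ on the unit ball and $\|u\|^{-s}\le1$ outside it.

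\textbf{Limit argument.}
Fix $p_0>d/s$. For $p\ge p_0$,
\[
\|f\|_p^p\le L^{p-p_0}\|f\|_{p_0}^{p_0},
\quad\text{so}\quad
\limsup_{p\to\infty}\|f\|_p\le \lim_{p\to\infty} L^{1-p_0/p}\|f\|_{p_0}^{p_0/p}=L .
\]
This uses $\|f\|_{p_0}>0$; if $\|f\|_{p_0}=0$, then $f\equiv0$ by continuity and the claim is trivial. For the lower bound, take $\varepsilon>0$. By continuity of $f$ on $\R^d\setminus\{0\}$ (characteristic functions are continuous), the set $E_\varepsilon:=\{f>L-\varepsilon\}$ is a nonempty open set, so $0<|E_\varepsilon|$. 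It also has finite measure, because $\|f\|_{p_0}<\infty$ and Chebyshev's inequality apply once $L-\varepsilon>0$. Then $\|f\|_p\ge (L-\varepsilon)|E_\varepsilon|^{1/p}\to L-\varepsilon$. Letting $\varepsilon\to0$ gives $\liminf_{p\to\infty}\|f\|_p\ge L$. The case $L=0$ is trivial.

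\textbf{The case $\nu=\delta_0$, and the main obstacle.}
The statement for $\nu=\delta_0$ is the special case $\nu=\delta_0\in\mathcal P_1$. The only delicate point is identifying $L$ with the essential supremum. The sup over $u\ne0$ could be approached only as $u\to0$ (when $s=1$) or as $\|u\|\to\infty$, but the open-set argument above never needs attainment of the sup. It only needs nonempty superlevel sets of positive measure, which continuity supplies. The other delicate point is that $s\le1$ is what makes $f$ bounded near the origin.
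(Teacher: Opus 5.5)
Your proposal is correct and follows essentially the same route as the paper: the same pointwise bounds (the first-moment estimate $|\Bchi_\mu-\Bchi_\nu|\le M\|u\|$ together with $s\le 1$ near the origin, and $2\|u\|^{-s}$ at infinity) give finiteness and boundedness, the interpolation estimate $\|f\|_{L^p}^p\le \|f\|_\infty^{p-p_0}\|f\|_{L^{p_0}}^{p_0}$ gives the upper bound, and superlevel sets of positive, finite measure (positivity from continuity, finiteness from Chebyshev) give the lower bound. The only cosmetic difference is that you get positive measure of $\{f>L-\varepsilon\}$ directly from its openness, whereas the paper argues through the essential supremum and then identifies it with the supremum by continuity.
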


Beyond monotonicity, the key structural observation is that in the finiteness window the metric $\mathrm{T}_{s,p}$ is the pullback of an $L^p$ norm applied to the normalized characteristic-function perturbation
\[
u\mapsto \frac{\Bchi_\mu(u)-1}{\|u\|^s}
\]
In particular, compactness and completeness can be studied inside a concrete $L^p$ space rather than directly at the level of probability measures. It also anticipates the special role of the case $p=2$, where this embedding becomes Hilbertian and connects to kernel methods.

\begin{proposition}[Relative Compactness]\label{prop:Compactness}
Fix $1\le p<\infty$ and $\frac{d}{p}<s<1+\frac{d}{p}$.
Define the embedding
\[
J:\mathcal P^{\mathrm T}_{s,p}(\R^d)\to L^p(\R^d\setminus\{0\}),
\qquad
(J\mu)(u):=\frac{\Bchi_\mu(u)-1}{\|u\|^{s}}
\]
Let $\mathbf S\subset \mathcal P(\R^d)$ satisfy the uniform first-moment bound
\[
M_1:=\sup_{\mu\in\mathbf S}\int_{\R^d}\|x\|\ d\mu(x)<\infty
\]
Then $\mathbf S\subseteq\mathcal P^{\mathrm T}_{s,p}(\R^d)$, the norms $\|\mu\|_{\mathrm{T}_{s,p}}$
are uniformly bounded on $\mathbf S$, and $\mathbf S$ is relatively compact in
$\big(\mathcal P^{\mathrm T}_{s,p}(\R^d),\mathrm{T}_{s,p}\big)$.
Equivalently, $J(\mathbf S)$ is relatively compact in $L^p(\R^d\setminus\{0\})$.
\end{proposition}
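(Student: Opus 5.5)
The plan is to combine Prokhorov's theorem with dominated convergence on the Fourier side. The first-moment bound controls $\Bchi_\mu-1$ uniformly near the origin, and the trivial bound $2$ controls it at infinity. Inside the window $\frac dp<s<1+\frac dp$ these two bounds combine into a single integrable dominating function.

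\emph{Step 1 (uniform bounds).} For $\mu\in\mathbf S$ and $X\sim\mu$, $|e^{i\langle u,x\rangle}-1|\le\min(2,\|u\|\|x\|)$ gives
\[
|\Bchi_\mu(u)-1|\le \min(2,M_1\|u\|)=:g(u).
\]
The function $g^p\|u\|^{-ps}$ is integrable. Near the origin it behaves like $\|u\|^{p-ps}$, which is integrable iff $p-ps>-d$, i.e.\ $s<1+\frac dp$. At infinity it behaves like $\|u\|^{-ps}$, which is integrable iff $ps>d$. Hence $\mathbf S\subseteq\mathcal P^{\mathrm T}_{s,p}(\R^d)$ with $\sup_{\mathbf S}\|\mu\|_{\mathrm T_{s,p}}\le\|g\,\|u\|^{-s}\|_{L^p}<\infty$, exactly as in Remark~\ref{rem:class-in-PT}.

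\emph{Step 2 (extraction of a weak limit).} Take a sequence $(\mu_n)\subset\mathbf S$. Markov's inequality gives $\mu_n(\|x\|>R)\le M_1/R$, so $\mathbf S$ is tight. By Prokhorov's theorem a subsequence converges weakly to some $\mu\in\mathcal P(\R^d)$. Since $x\mapsto\min(\|x\|,R)$ is bounded and continuous, $\int\min(\|x\|,R)\,d\mu=\lim_n\int\min(\|x\|,R)\,d\mu_n\le M_1$. Letting $R\to\infty$ by monotone convergence shows $\int\|x\|\,d\mu\le M_1$. So $\mu\in\mathcal P_1\subseteq\mathcal P^{\mathrm T}_{s,p}$, and the limit does not escape the space. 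I expect this to be the main point to handle carefully, since without it the limit could fail to lie in the Toscani class.

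\emph{Step 3 (convergence in $\mathrm T_{s,p}$).} Weak convergence gives $\Bchi_{\mu_n}(u)\to\Bchi_\mu(u)$ pointwise. Both $\mu_n$ and $\mu$ have first moment at most $M_1$, so
\[
|\Bchi_{\mu_n}(u)-\Bchi_\mu(u)|\le|\Bchi_{\mu_n}(u)-1|+|\Bchi_\mu(u)-1|\le 2g(u).
\]
The function $(2g)^p\|u\|^{-ps}$ is integrable by Step 1. Dominated convergence therefore yields $\mathrm T_{s,p}(\mu_n,\mu)\to0$, which proves that $\mathbf S$ is relatively compact in $(\mathcal P^{\mathrm T}_{s,p},\mathrm T_{s,p})$.

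\emph{Step 4 (equivalence with $J$).} The final equivalence is immediate because $J$ is an isometry onto its image: $\|J\mu-J\nu\|_{L^p}=\mathrm T_{s,p}(\mu,\nu)$ by Definition~\ref{def:Toscani_Dist}. In one direction, $J$ maps Cauchy or convergent subsequences to convergent ones, so the previous steps show every sequence in $J(\mathbf S)$ has a subsequence converging to a point $J\mu$. In the other direction, any $L^p$-convergent subsequence of $J(\mathbf S)$ has its limit realized as $J\mu$ by the argument above. This is because limits of subsequences are unique and Step 3 produces one in $J(\mathcal P^{\mathrm T}_{s,p})$.
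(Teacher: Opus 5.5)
Your proof is correct, and it takes a different and more economical route than the paper.

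\textbf{The paper's route.} The paper first verifies the Kolmogorov--Riesz--Fr\'echet criterion for $\{J\mu:\mu\in\mathbf S\}$ in $L^p$: uniform smallness of tails, uniform smallness near the origin, and an explicit translation-equicontinuity estimate built on the mean value theorem for $r\mapsto r^{-s}$. Only afterwards does it use tightness and Prokhorov's theorem to identify the $L^p$ limit a.e.\ with $J\mu$ for the weak limit $\mu$.

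\textbf{Your route.} You bypass that criterion entirely. Prokhorov's theorem supplies the candidate limit. The single dominating function $(2\min\{2,M_1\|u\|\})^p\|u\|^{-ps}$ is integrable precisely because $\frac dp<s<1+\frac dp$, and dominated convergence then turns pointwise convergence of characteristic functions into $\mathrm T_{s,p}$ convergence. The translation-equicontinuity computation is never needed.

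\textbf{What your route buys.} It gives slightly more information than the statement asks for. Limits stay in the first-moment ball $\{\mu:\int\|x\|\,d\mu\le M_1\}$, and on that ball weak convergence upgrades to $\mathrm T_{s,p}$ convergence. So the ball itself is $\mathrm T_{s,p}$-compact, not merely relatively compact. Your moment lower-semicontinuity step is in fact dispensable: letting $n\to\infty$ in $|\Bchi_{\mu_n}(u)-1|\le\min\{2,M_1\|u\|\}$ already gives the same bound for $\Bchi_\mu$.

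\textbf{What the paper's route buys.} It shows the $L^p$-intrinsic mechanism behind the compactness (tails, origin, equicontinuity). Even so, it still needs Prokhorov at the end to identify the limit.

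\textbf{A phrasing point in Step 4.} The cleanest formulation is as follows. $J$ is an isometry, so it maps the compact closure of $\mathbf S$ onto a compact set containing $J(\mathbf S)$. Conversely, your Steps 2--3 show that every $L^p$ subsequential limit of $J(\mathbf S)$ is of the form $J\mu$.
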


\begin{theorem}[Isometric Embedding, Closed Image, and Completeness] \label{thm:completeness}
Fix $s>0$ and $1\le p\le\infty$, and write $L^p:=L^p(\R^d\setminus\{0\})$.
The map $J$ (defined above when $p<\infty$ and analogously for $p=\infty$) is an isometric embedding in the sense that
\[
\mathrm{T}_{s,p}(\mu,\nu)=\|J\mu-J\nu\|_{L^p}\qquad(\mu,\nu\in\mathcal P^{\mathrm T}_{s,p}(\R^d))
\]
Assume moreover that either $1\le p<\infty$ and $ps\ge d$ --- in particular, any parameters in the
window $\frac dp<s<1+\frac dp$ --- or $p=\infty$ and $s>0$. Then the image
$J\big(\mathcal P^{\mathrm T}_{s,p}(\R^d)\big)$ is \emph{closed} in $L^p$. Every $L^p$-limit of
normalized characteristic-function perturbations is again of the form $J\nu$ for a unique
probability measure $\nu\in\mathcal P^{\mathrm T}_{s,p}(\R^d)$. Consequently,
$\big(\mathcal P^{\mathrm T}_{s,p}(\R^d),\mathrm{T}_{s,p}\big)$ is a complete metric space.
\end{theorem}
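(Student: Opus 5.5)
The isometry is immediate from the definitions. For $\mu,\nu$ in the Toscani class we have $J\mu-J\nu=(\Bchi_\mu-\Bchi_\nu)/\|u\|^s$ pointwise on $\R^d\setminus\{0\}$, and $\|J\mu\|_{L^p}=\|\mu\|_{\mathrm T_{s,p}}<\infty$ places $J\mu$ in $L^p$. So $\|J\mu-J\nu\|_{L^p}=\mathrm T_{s,p}(\mu,\nu)$ by Definition~\ref{def:Toscani_Dist}, with the sup version for $p=\infty$. Injectivity follows from Remark~\ref{rem:kernel}. Once the image is shown to be closed, completeness follows at once: a $\mathrm T_{s,p}$-Cauchy sequence $(\mu_n)$ maps to a Cauchy sequence in the Banach space $L^p$. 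Its limit $f$ lies in the closed image, so $f=J\nu$, and then $\mathrm T_{s,p}(\mu_n,\nu)=\|J\mu_n-f\|_{L^p}\to0$.

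To prove closedness, take $\mu_n$ with $J\mu_n\to f$ in $L^p$. Passing to a subsequence, $J\mu_n\to f$ almost everywhere, so $\Bchi_{\mu_n}(u)\to \phi(u):=1+\|u\|^s f(u)$ for almost every $u$. The plan is to produce a probability measure $\nu$ with $\Bchi_\nu=\phi$ a.e. Then $J\nu=f$ a.e., so $f$ lies in the image, and $\nu$ is unique by the injectivity above.

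\textbf{Step 1 (uniform control near the origin).} Convergence in $L^p$ gives $\sup_n\|J\mu_n\|_{L^p}\le C$. By H\"older on the ball $B_\epsilon$,
\[
\int_{B_\epsilon}|\Bchi_{\mu_n}(u)-1|\,du\le \|J\mu_n\|_{L^p}\Big(\int_{B_\epsilon}\|u\|^{sp'}du\Big)^{1/p'}\le C\,\epsilon^{s+d/p'} .
\]
Hence $\frac{1}{|B_\epsilon|}\int_{B_\epsilon}|1-\Bchi_{\mu_n}|\lesssim \epsilon^{s-d/p}$. For $p=\infty$ this is simply the pointwise bound $C\epsilon^s$.

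\textbf{Step 2 (tightness).} Apply the standard truncation inequality, the multivariate analogue of $\mu(|x|>2/\epsilon)\le \epsilon^{-1}\int_{-\epsilon}^{\epsilon}(1-\mathrm{Re}\,\Bchi)$. Done coordinatewise, or with averages over the cube $[-\epsilon,\epsilon]^d$, which sits inside a ball of comparable volume, it gives
\[
\mu_n(\|x\|>c/\epsilon)\lesssim \frac{1}{|B_{\sqrt d\,\epsilon}|}\int_{B_{\sqrt d\,\epsilon}}|1-\Bchi_{\mu_n}|,
\]
uniformly in $n$. This is where the hypothesis $ps\ge d$ enters. For $ps>d$ the bound $\epsilon^{s-d/p}$ tends to $0$. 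At the borderline $ps=d$ the exponent is $0$, so I would refine Step 1: split $\|J\mu_n\|^p_{L^p(B_\epsilon)}\le 2^{p-1}\big(\|J\mu_n-f\|^p_{L^p}+\|f\|^p_{L^p(B_\epsilon)}\big)$. Both pieces are small uniformly for $n$ large and $\epsilon$ small, by $L^p$ convergence and absolute continuity of $\int|f|^p$, and the finitely many remaining $\mu_n$ are individually tight. I expect this borderline uniformity to be the main technical point.

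\textbf{Step 3 (identification).} By Prokhorov, some further subsequence converges weakly, $\mu_n\Rightarrow\nu\in\mathcal P(\R^d)$, so $\Bchi_{\mu_n}\to\Bchi_\nu$ pointwise. Comparing with the a.e.\ limit gives $\Bchi_\nu=\phi$ a.e., hence $J\nu=f$ a.e. By Fatou, $\|J\nu\|_{L^p}\le\liminf\|J\mu_n\|_{L^p}<\infty$, so $\nu\in\mathcal P^{\mathrm T}_{s,p}$. For $p=\infty$ the argument is the same, with uniform convergence on $\R^d\setminus\{0\}$ replacing a.e.\ convergence and Fatou replaced by pointwise limits in the supremum.
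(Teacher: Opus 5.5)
Your proof is correct, but it reaches the limit measure by a different mechanism from the paper's.

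\emph{The paper's route.} It makes no tightness argument. It extracts a vague (weak-$*$ in $C_0(\R^d)^*$) subsequential limit $\nu$ with $\nu(\R^d)=m\le 1$. It then identifies $f=(\widehat\nu-1)\|u\|^{-s}$ a.e.\ by pairing $J\mu_n$ against $\varphi\in C_c^\infty(\R^d\setminus\{0\})$, using Fubini and the Fourier transform of $\|u\|^{-s}\varphi$. Only then does it rule out mass loss: if $m<1$, continuity of $\widehat\nu$ at the origin forces $|f(u)|\ge\frac{1-m}{2}\|u\|^{-s}$ near $0$, which is not $p$-integrable precisely when $ps\ge d$.

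\emph{Your route.} You prove uniform tightness a priori, from H\"older near the origin plus the cube form of the truncation inequality. That inequality does hold with $c=2\sqrt d$, since $\frac{1}{(2\epsilon)^d}\int_{[-\epsilon,\epsilon]^d}(1-\Bchi_\mu)=\int\big(1-\prod_j\frac{\sin\epsilon x_j}{\epsilon x_j}\big)\,d\mu$. You then invoke Prokhorov and identify $f$ by comparing a.e.\ convergence with pointwise convergence of characteristic functions.

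\emph{What each buys.} The paper's route is softer and treats $ps>d$ and $ps=d$ in one stroke. Its vague-limit-plus-pairing template is also reused verbatim in Proposition~\ref{prop:convergence}. Yours avoids the Fourier pairing and is quantitative: your Step~1 actually gives $\mu\big(\|x\|>2\sqrt d/\epsilon\big)\lesssim\epsilon^{\,s-d/p}\,\|\mu\|_{\mathrm T_{s,p}}$. So every $\mathrm T_{s,p}$-bounded set is tight when $ps>d$, a statement the paper never isolates.

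\emph{The borderline case.} Your refinement at $ps=d$ is valid but not needed. For $p<\infty$ and $ps\le d$, average $1-\mathrm{Re}\,\Bchi_\mu$ over dyadic annuli, in the spirit of Step~3 of the proof of Proposition~\ref{prop:window-sharp}. By Riemann--Lebesgue these averages tend to $1-\mu(\{0\})$, and Jensen then gives a divergent tail unless $\mu=\delta_0$. Hence $\mathcal P^{\mathrm T}_{s,p}(\R^d)=\{\delta_0\}$ in that regime, and the case $ps=d$ is trivial.
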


\begin{remark}[No escape of mass in the Fourier geometry]\label{rem:completion-meaning}
The closedness statement rules out the two degenerations that usually obstruct completeness of
spaces of probability measures. Escape of mass to spatial infinity, and loss of total mass along a
Cauchy sequence. Either degeneration would force the limiting normalized perturbation to behave
like a nonzero constant multiple of $\|u\|^{-s}$ near the origin, whose $p$-th power is
non-integrable there precisely when $ps\ge d$. The weighted integral therefore detects and forbids
the escape. This contrasts with the supremum-type Fourier metrics of fixed moment order, whose
completeness theory is more delicate \citep{stawiska2020completeness,randles2024note}. For
$ps<d$ --- possible only outside the finiteness window --- the argument does not apply, and we make
no closedness claim in that regime.
\end{remark}

\subsection{Toscani Convergence and Weak Convergence}

We now return to the probabilistic content of the metric. Toscani convergence forces weak
convergence with \emph{no} auxiliary assumption --- and, unlike the closedness statement of
Theorem~\ref{thm:completeness}, with no restriction on $(s,p)$ at all. The Fourier-side
identification alone prevents both escape of mass and degeneration of total mass along a
convergent sequence.

\begin{proposition}[Toscani convergence implies weak convergence]\label{prop:convergence}
Let $s>0$ and $1\le p\le\infty$. Let $\mu_n,\mu$ be probability measures with
$\mathrm{T}_{s,p}(\mu_n,\mu)\to 0$. Then the following hold.
\begin{enumerate}
\item[(i)] $\mu_n \overset{w}{\rightharpoonup} \mu$
\item[(ii)] $\Bchi_{\mu_n}\to\Bchi_{\mu}$ uniformly on compact subsets of $\R^d$
\end{enumerate}
No moment assumption is imposed on $\mu_n$ or $\mu$.
\end{proposition}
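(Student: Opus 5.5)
The plan is to show first that $\Bchi_{\mu_n}\to\Bchi_\mu$ pointwise on all of $\R^d$. Weak convergence then follows from L\'evy's continuity theorem, because the limit $\Bchi_\mu$ is already known to be a characteristic function, hence continuous at $0$. Uniform convergence on compacts then follows from a standard tightness and equicontinuity argument.

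Write $g_n:=\Bchi_{\mu_n}-\Bchi_\mu$. These functions satisfy $|g_n|\le 2$ and $g_n(0)=0$.

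\emph{Case $p=\infty$.} Here the hypothesis gives $|g_n(u)|\le \mathrm T_{s,\infty}(\mu_n,\mu)\|u\|^s$ for every $u$. This yields pointwise convergence directly, and even uniform convergence on compacts.

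\emph{Case $1\le p<\infty$.} The hypothesis says $\int |g_n|^p\|u\|^{-ps}\,du\to0$. Since the weight is bounded below by a positive constant on every bounded set, this gives $g_n\to0$ in $L^p_{\mathrm{loc}}$. By itself this yields only a.e. convergence along subsequences, so pointwise information has to be extracted some other way.

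The plan is to avoid pointwise issues entirely by using the smoothed L\'evy inequality. For a probability measure $\mu_n$ and $\delta>0$, the tail bound
\[
\mu_n\{x:\|x\|_\infty>2/\delta\}\lesssim \max_j\frac1\delta\int_{-\delta}^{\delta}\bigl(1-\mathrm{Re}\,\Bchi_{\mu_n}(te_j)\bigr)\,dt
\]
involves one-dimensional integrals, which $L^p$ control on $\R^d$ does not see. So I would instead use a $d$-dimensional version, obtained by averaging over the cube $Q_\delta=[-\delta,\delta]^d$:
\[
\frac{1}{|Q_\delta|}\int_{Q_\delta}\bigl(1-\Bchi_{\mu_n}(u)\bigr)\,du=\int\Bigl(1-\prod_j\frac{\sin(\delta x_j)}{\delta x_j}\Bigr)\,d\mu_n(x).
\]
The integrand $1-\prod_j\mathrm{sinc}(\delta x_j)$ is nonnegative and bounded below by a constant $c_d>0$ on $\{\|x\|_\infty>2/\delta\}$. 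Writing $\Bchi_{\mu_n}=\Bchi_\mu+g_n$, the cube average of $1-\Bchi_\mu$ is small for small $\delta$ by continuity of $\Bchi_\mu$ at $0$. The average of $|g_n|$ over $Q_\delta$ is at most
\[
|Q_\delta|^{-1/p}\Bigl(\int_{Q_\delta}|g_n|^p\Bigr)^{1/p}\le |Q_\delta|^{-1/p}(\sqrt d\,\delta)^{s}\,\mathrm T_{s,p}(\mu_n,\mu),
\]
which tends to $0$ as $n\to\infty$ for each fixed $\delta$. Choosing $\delta$ first and then $n$ large gives tightness of $(\mu_n)$. This step is the main obstacle, namely converting an integrated Fourier bound into tightness with no moment assumptions, and the cube-averaged L\'evy inequality is the tool I would try first.

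With tightness in hand, Prokhorov's theorem shows every subsequence has a further subsequence converging weakly to some probability measure $\nu$. Along that subsequence $\Bchi_{\mu_{n_k}}\to\Bchi_\nu$ pointwise, and by dominated convergence also in $L^p_{\mathrm{loc}}$. The same differences converge to $0$ in $L^p_{\mathrm{loc}}$ by the hypothesis, so $\Bchi_\nu=\Bchi_\mu$ almost everywhere. Both functions are continuous, so they agree everywhere, and the uniqueness theorem (as in Remark~\ref{rem:kernel}) gives $\nu=\mu$. Since every subsequence has a further subsequence converging to $\mu$, the whole sequence satisfies $\mu_n\overset{w}{\rightharpoonup}\mu$, which proves (i).

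For (ii), tightness makes the family $\{\Bchi_{\mu_n}\}$ uniformly equicontinuous. Indeed,
\[
|\Bchi_{\mu_n}(u)-\Bchi_{\mu_n}(v)|\le \|u-v\|R+2\mu_n(\|x\|>R),
\]
and the second term is uniformly small for $R$ large. Pointwise convergence, which follows from (i), combined with equicontinuity gives uniform convergence on compact sets by an Arzel\`a--Ascoli style $\varepsilon/3$ argument.
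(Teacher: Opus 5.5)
Your argument is correct, but it reaches (i) by a genuinely different route from the paper.

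\textbf{The paper's route.} The paper never establishes tightness first. It extracts a weak-* (vague) limit $\nu$ with $\nu(\R^d)\le 1$ from the unit ball of $C_0(\R^d)^*$. It then identifies $\Bchi_\nu=\Bchi_\mu$ on $\R^d\setminus\{0\}$ by pairing $g_n$ against $\varphi\in C_c^\infty(\R^d\setminus\{0\})$ through the Schwartz function $\psi=\|\cdot\|^{-s}\varphi$, so that $\int g_n\varphi\,du=\int\widehat\psi\,d(\mu_n-\mu)$. Loss of mass is excluded because continuity at the origin gives $\nu(\R^d)=\Bchi_\nu(0)=\Bchi_\mu(0)=1$. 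Only after that does the paper deduce tightness, in order to upgrade vague convergence to weak convergence.

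\textbf{Your route.} You prove uniform tightness up front with the cube-averaged L\'evy inequality. You then identify the Prokhorov limit through $L^p_{\mathrm{loc}}$ convergence. Your part (ii) coincides with the paper's equicontinuity step.

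\textbf{What each buys.} Your route is more elementary: it uses only Fubini, H\"older and Prokhorov, with no Riesz representation or Banach--Alaoglu. It also handles $p=\infty$ with the same estimate. It is quantitative, giving the explicit tail bound
\[
\mu_n\big(\{\|x\|_\infty>2/\delta\}\big)\le c_d^{-1}\Big(\frac{1}{|Q_\delta|}\int_{Q_\delta}|1-\Bchi_\mu(u)|\,du+|Q_\delta|^{-1/p}(\sqrt d\,\delta)^{s}\,\mathrm T_{s,p}(\mu_n,\mu)\Big).
\]
Here $c_d=1/2$ is admissible, and the finitely many indices below the threshold are absorbed by enlarging the radius; you should say this explicitly. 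The paper's route uses only test functions supported away from the origin. It therefore needs merely distributional convergence of $g_n$ to zero on $\R^d\setminus\{0\}$, where the weight is harmless. It also reuses verbatim the pairing and no-loss-of-mass mechanism from the closedness step of Theorem~\ref{thm:completeness}, which makes the two results visibly parallel.
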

The converse fails. Weak convergence does not imply Toscani convergence without additional scale
normalization, and the next remark quantifies the obstruction exactly.

\begin{remark}\label{rem:weak-not-T}
Moment conditions do \emph{not} imply high-frequency decay of $\Bchi_\mu$. They control its regularity near the origin.
Consequently, weak convergence $\mu_n\overset{w}{\rightharpoonup}\mu$, equivalently pointwise convergence of the characteristic functions, does \emph{not} imply $\mathrm{T}_{s,p}(\mu_n,\mu)\to 0$ without an additional normalization such as a uniform first-moment bound or common compact support.

A two-atom construction quantifies the obstruction \emph{exactly}.
\begin{example}\label{ex:two-atom}
Fix $1\le p<\infty$, $\frac{d}{p}<s<1+\frac{d}{p}$, and $\alpha>0$. Define
\[
\mu_n=\Big(1-\frac{1}{n}\Big)\delta_{0}+\frac{1}{n}\delta_{n^\alpha e_1}\quad ,
\qquad
\mu=\delta_0
\]
Since $\mu_n-\delta_0=\frac1n(\delta_{n^\alpha e_1}-\delta_0)$, absolute homogeneity of the
zero-mass norm and the change of variables $v=n^{\alpha}u$ give the exact identities
\[
\mathrm{T}_{s,p}(\mu_n,\delta_0)=n^{\,\alpha(s-\frac dp)-1}\ t_0,
\qquad
W_p(\mu_n,\delta_0)=n^{\,\alpha-\frac1p},
\qquad
t_0:=\mathrm{T}_{s,p}(\delta_0,\delta_{e_1})\in(0,\infty)
\]
where $t_0$ is finite and positive by Proposition~\ref{prop:window-sharp}. Two regimes are of
interest.
\begin{enumerate}
\item[(a)] If $\alpha>\frac{p}{ps-d}$, then $\mu_n\overset{w}{\rightharpoonup}\delta_0$ while
$\mathrm{T}_{s,p}(\mu_n,\delta_0)\to\infty$. Weak convergence does not imply Toscani convergence.
\item[(b)] If $1<\alpha<\frac{p}{ps-d}$ (such $\alpha$ exist since $s<1+\frac dp$ makes
$\frac{p}{ps-d}>1$), then $\mathrm{T}_{s,p}(\mu_n,\delta_0)\to 0$ while
$\int\|x\|\,d\mu_n=n^{\alpha-1}\to\infty$ and $W_p(\mu_n,\delta_0)\to\infty$. Toscani convergence
is compatible with unbounded first moments and with divergent Wasserstein distance, which is why
Proposition~\ref{prop:convergence} carries no moment hypothesis and why no global reverse
comparison can hold (Lemma~\ref{lem:scaling}).
\end{enumerate}
\end{example}
Thus, even though the extra atom has weight only $1/n$ and disappears in the weak limit, its location at scale $n^\alpha$ produces increasingly severe oscillations in frequency space. The Toscani metric weighs this long-range escape through its weighted Fourier integral in a manner incomparable to weak convergence or transport cost.
\end{remark}

\section{Toscani--Fourier Comparison with Wasserstein}\label{sec:Ts-vs-Wp}

This section develops the comparison theory between $\mathrm{T}_{s,p}$ and Wasserstein distances. We first prove a global upper bound showing that Wasserstein closeness always implies Toscani-Fourier closeness in the finiteness window. We then show that no comparable global reverse inequality can hold, because the two metrics scale differently under dilation. This obstruction leads naturally to scale-normalized model classes, such as bounded-support and uniform-moment families, on which reverse comparison and topological equivalence become possible.

\begin{definition}[Bounded-support Class]
For $K>0$, define the bounded-support class
\[
\mathcal P_{p,K}(\R^d)
:=
\big\{\mu\in\mathcal P_p(\R^d):\mathrm{supp}(\mu)\subseteq B(0,K)\big\}
\]
where here and throughout $B(0,K)$ denotes the \emph{closed} Euclidean ball, so that
$\mathcal P_{p,K}(\R^d)$ is weakly closed.
\end{definition}
On $\mathcal P_{p,K}(\R^d)$, weak convergence and $W_p$-convergence coincide. Moments of every
order are uniformly bounded on the common compact support, so weak convergence upgrades to
$W_p$-convergence by Lemma~\ref{lem:weak-moment-wp} below.
\begin{definition}[Uniform Moment Class]
For $\gamma>0$ and $S>0$, define the uniform $\gamma$-moment class
\[
\mathcal M_{\gamma,S}
:=
\Big\{\mu\in\mathcal P(\R^d):\int_{\R^d}\|x\|^\gamma\ d\mu(x)\le S\Big\}
\]
\end{definition}
Each statement below specifies the range of $\gamma$ it requires. When $\gamma\ge p$ we have
$\mathcal M_{\gamma,S}\subset\mathcal P_p(\R^d)$, and when $\gamma\ge1$, H\"older's inequality
gives \(\int\|x\|\ d\mu(x)\le S^{1/\gamma}\) for every \(\mu\in\mathcal M_{\gamma,S}\), so the
first moments are uniformly bounded.

\begin{theorem}[Global upper H\"older control of $\mathrm{T}_{s,p}$ by $W_p$]\label{thm:upper-holder}
Let $d\ge 1$, $1\le p<\infty$, and $\frac{d}{p}<s<1+\frac{d}{p}$. Write $v_d:=|\mathbb S^{d-1}|$ and
\[
C(d,p,s)
:=
\Bigg[v_d\bigg(\frac{1}{d+p-ps}+\frac{2^{p}}{ps-d}\bigg)\Bigg]^{1/p}<\infty
\]
Then, for all $\mu,\nu\in\mathcal P_p(\mathbb R^d)$,
\begin{equation}\label{eq:upper-holder}
\mathrm{T}_{s,p}(\mu,\nu)
\le
C(d,p,s)\ W_p(\mu,\nu)^{\ s-\frac{d}{p}}
\end{equation}
The H\"older exponent $s-\frac dp\in(0,1)$ in \eqref{eq:upper-holder} cannot be improved. By Lemma~\ref{lem:scaling}, $\mathrm{T}_{s,p}(\delta_0,\delta_{\lambda x})=\lambda^{\,s-d/p}\mathrm{T}_{s,p}(\delta_0,\delta_x)$ while $W_p(\delta_0,\delta_{\lambda x})=\lambda W_p(\delta_0,\delta_x)$, so \eqref{eq:upper-holder} is saturated, up to the constant, along every dilation family of Dirac pairs.
\end{theorem}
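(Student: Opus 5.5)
The plan is to couple $\mu$ and $\nu$ optimally and bound the characteristic-function difference pointwise in $u$. After that, split the frequency integral at a radius $R$ tied to $W_p$, and optimize. Let $\pi$ be an optimal coupling for $W_p(\mu,\nu)$; it exists for $\mu,\nu\in\mathcal P_p(\R^d)$. Write $(X,Y)\sim\pi$. Then
\[
|\Bchi_\mu(u)-\Bchi_\nu(u)|=\big|\E\big[e^{i\langle u,X\rangle}-e^{i\langle u,Y\rangle}\big]\big|\le \E\min\big(2,\|u\|\,\|X-Y\|\big).
\]
We use two consequences of this bound. First, $|\Bchi_\mu-\Bchi_\nu|\le \|u\|\,\E\|X-Y\|\le \|u\|\,W_p(\mu,\nu)$, by Jensen since $p\ge1$. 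Second, trivially $|\Bchi_\mu-\Bchi_\nu|\le 2$. If $W_p(\mu,\nu)=0$ then $\mu=\nu$ and there is nothing to prove. So assume $w:=W_p(\mu,\nu)>0$.

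Next, split $\R^d\setminus\{0\}$ into the ball $\{\|u\|<1/w\}$ and its complement, and use polar coordinates with $v_d=|\mathbb S^{d-1}|$. On the inner region,
\[
\int_{\|u\|<1/w}\frac{\|u\|^p w^p}{\|u\|^{ps}}\,du = v_d\, w^p\int_0^{1/w} r^{p-ps+d-1}\,dr=\frac{v_d}{d+p-ps}\,w^{ps-d}.
\]
This is finite because $s<1+d/p$. On the outer region,
\[
\int_{\|u\|\ge1/w}\frac{2^p}{\|u\|^{ps}}\,du=2^p v_d\int_{1/w}^\infty r^{d-ps-1}\,dr=\frac{2^p v_d}{ps-d}\,w^{ps-d}.
\]
This is finite because $s>d/p$. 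Adding the two pieces and taking $p$-th roots gives exactly $\mathrm T_{s,p}(\mu,\nu)\le C(d,p,s)\,w^{s-d/p}$ with the stated constant. The choice $R=1/w$ is what makes the two contributions scale identically, so no further optimization is needed to match the constant in the statement.

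For sharpness of the exponent, I would take the dilation identity of Lemma~\ref{lem:scaling} as given. Alternatively, it follows directly from the substitution $v=\lambda u$: one gets $\mathrm T_{s,p}(\delta_0,\delta_{\lambda x})=\lambda^{s-d/p}t$ with $t=\mathrm T_{s,p}(\delta_0,\delta_x)\in(0,\infty)$ by Proposition~\ref{prop:window-sharp}, while $W_p(\delta_0,\delta_{\lambda x})=\lambda\|x\|$. Suppose a bound $\mathrm T_{s,p}\le C' W_p^{\beta}$ held with $\beta>s-d/p$. Letting $\lambda\to0$ gives $\lambda^{s-d/p}t\le C'\lambda^\beta\|x\|^\beta$, a contradiction. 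So the exponent cannot be raised, and the Dirac family saturates the bound up to constants.

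The only delicate point is the pointwise estimate. It requires that the bound $|e^{ia}-e^{ib}|\le\min(2,|a-b|)$ passes through the expectation under the coupling, and that Jensen's inequality turns $\E\|X-Y\|$ into $W_p$. Everything else is routine radial integration, with the window conditions entering exactly at the two convergence checks.
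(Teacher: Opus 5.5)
Your proposal is correct and matches the paper's proof step for step. Both use the pointwise bound $|\Bchi_\mu(u)-\Bchi_\nu(u)|\le\min\{2,\|u\|\,W_p(\mu,\nu)\}$ (the paper takes the infimum over couplings and uses $W_1\le W_p$, where you use an optimal coupling and Jensen), the same frequency split at $R=W_p(\mu,\nu)^{-1}$ with identical polar integrals yielding $C(d,p,s)$, and the same Dirac-dilation argument with $\lambda\downarrow0$ for sharpness.
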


\begin{remark}[A $W_1$ strengthening]\label{rem:upper-holder-W1}
The same argument, run with $W_1$ in place of $W_p$ (choosing the splitting radius
$R=W_1(\mu,\nu)^{-1}$ and using that $W_1$ is definite on $\mathcal P_1$), establishes the
formally stronger bound
\[
\mathrm{T}_{s,p}(\mu,\nu)\le C(d,p,s)\ W_1(\mu,\nu)^{\ s-\frac{d}{p}}
\qquad(\mu,\nu\in\mathcal P_1(\R^d))
\]
with the same constant. The pointwise characteristic-function estimate in the proof involves
only $W_1$, and $W_1\le W_p$ recovers \eqref{eq:upper-holder}. The global upper control therefore
already holds on the larger class $\mathcal P_1(\R^d)$ and against the weaker distance $W_1$.
\end{remark}

\begin{corollary}[Two-regime form]\label{cor:upper-holder-max}
Under the assumptions of Theorem~\ref{thm:upper-holder},
\[
\mathrm{T}_{s,p}(\mu,\nu)
\le
C(d,p,s)\ \max\Big\{W_p(\mu,\nu)^{\ s-\frac{d}{p}},\  W_p(\mu,\nu)\Big\}
\qquad(\mu,\nu\in\mathcal P_p(\R^d))
\]
that is, $\mathrm{T}_{s,p}\le C\,W_p^{\,s-d/p}$ when $W_p\le 1$ and $\mathrm{T}_{s,p}\le C\,W_p$ when $W_p\ge 1$.
\end{corollary}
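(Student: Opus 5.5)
The two-regime bound follows from Theorem~\ref{thm:upper-holder} by splitting on the size of $W:=W_p(\mu,\nu)$; no new analysis is needed. Fix $\mu,\nu\in\mathcal P_p(\R^d)$. Set $\theta:=s-\frac dp$. The window $\frac dp<s<1+\frac dp$ gives $\theta\in(0,1)$.

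If $W\le 1$, then $\theta\in(0,1)$ gives $W^{\theta}\ge W$, so $\max\{W^\theta,W\}=W^\theta$. The claim is then exactly \eqref{eq:upper-holder}. This includes the degenerate case $W=0$: there $\mu=\nu$ and both sides vanish.

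If $W\ge 1$, then $W\ge W^{\theta}$, because $t\mapsto t^{\theta}$ is dominated by the identity on $[1,\infty)$ when $\theta<1$. Hence \eqref{eq:upper-holder} gives
\[
\mathrm T_{s,p}(\mu,\nu)\le C(d,p,s)\,W^{\theta}\le C(d,p,s)\,W=C(d,p,s)\max\{W^\theta,W\}.
\]

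Combining the two cases gives the displayed inequality with the same constant $C(d,p,s)$. It also gives the verbal reformulation: $\mathrm T_{s,p}\le C\,W_p^{\,s-d/p}$ when $W_p\le1$, and $\mathrm T_{s,p}\le C\,W_p$ when $W_p\ge 1$.

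There is no real obstacle. The only point to check is that $\theta<1$, which is precisely the upper endpoint $s<1+\frac dp$ of the window, so that the comparison between $W^\theta$ and $W$ switches at $W=1$.

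This corollary is a weakening of Theorem~\ref{thm:upper-holder} for $W_p\ge1$; its purpose is to display the linear growth regime. If one wanted a version valid beyond $\mathcal P_p$, Remark~\ref{rem:upper-holder-W1} would give the same argument verbatim with $W_1$ in place of $W_p$.
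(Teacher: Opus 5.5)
Your proof is correct and matches the paper's: both read the bound directly off \eqref{eq:upper-holder}, using $\theta=s-\frac dp\in(0,1)$ to compare $W_p^{\theta}$ with $W_p$ on either side of $1$. The paper's version is slightly shorter, because $W_p^{\theta}\le\max\{W_p^{\theta},W_p\}$ holds trivially for every value of $W_p$, so $\theta<1$ is needed only for the verbal two-regime reformulation and not for the displayed inequality itself.
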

\begin{proof}
Immediate from \eqref{eq:upper-holder}, since $s-\frac dp\in(0,1)$ gives $W_p^{\,s-d/p}\le\max\{W_p^{\,s-d/p},W_p\}$ for every value of $W_p$, and $W_p^{\,s-d/p}\le W_p$ once $W_p\ge1$.
\end{proof}
Corollary~\ref{cor:upper-holder-max} is the form used as a calibration reference in the numerical experiments of Section~\ref{sec:num_exp}. Theorem~\ref{thm:upper-holder} shows that it is never sharp in the far regime $W_p>1$.

Theorem~\ref{thm:upper-holder} is the global one-sided comparison available under minimal assumptions. Whether any reverse control holds on all of $\mathcal P_p(\R^d)$ is settled negatively by the following scaling lemma, which also identifies the precise homogeneity obstruction.
\begin{lemma}[Scaling, and the failure of any global converse]\label{lem:scaling}
Let $1\le p<\infty$ and $\frac dp<s<1+\frac dp$.
For $\lambda>0$, let $D_\lambda(x):=\lambda x$ and pushforward measure $\mu^\lambda:=(D_\lambda)_{\#}\mu$.
Then for all $\mu,\nu\in\mathcal P_p(\R^d)$,
\[
W_p(\mu^\lambda,\nu^\lambda)=\lambda\ W_p(\mu,\nu),
\qquad
\mathrm{T}_{s,p}(\mu^\lambda,\nu^\lambda)=\lambda^{\ s-\frac{d}{p}}\ \mathrm{T}_{s,p}(\mu,\nu)
\]
Two consequences follow.
\begin{enumerate}
\item[(i)] If
\(
W_p(\mu,\nu)\le C\ \mathrm{T}_{s,p}^{\rho}(\mu,\nu)
\)
holds for all $\mu,\nu\in\mathcal P_p(\R^d)$ with some $\rho>0$ and $C<\infty$, then necessarily
$\rho=(s-\tfrac{d}{p})^{-1}>1$. In particular, no reverse inequality with exponent
$\rho\in(0,1]$ can hold globally.
\item[(ii)] Not even the exponent in (i) survives. For every $t>0$,
\[
\sup\Big\{W_p(\mu,\nu):\ \mu,\nu\in\mathcal P_p(\R^d),\ \mathrm{T}_{s,p}(\mu,\nu)\le t\Big\}=\infty
\]
Hence there is \emph{no} function $\Phi:[0,\infty)\to[0,\infty)$, finite everywhere, with
$W_p(\mu,\nu)\le\Phi\big(\mathrm{T}_{s,p}(\mu,\nu)\big)$ on all of $\mathcal P_p(\R^d)$. No
global converse of any form --- H\"older or otherwise --- exists.
\end{enumerate}
\end{lemma}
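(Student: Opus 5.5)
The plan is to prove the two scaling identities first and then derive (i) and (ii) from them together with the two-atom construction of Example~\ref{ex:two-atom}.

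\emph{Wasserstein scaling.} If $\pi\in\Pi(\mu,\nu)$, then $(D_\lambda\times D_\lambda)_\#\pi\in\Pi(\mu^\lambda,\nu^\lambda)$. Its cost is $\lambda^p\int\|x-y\|^p\,d\pi$. The map $\pi\mapsto(D_\lambda\times D_\lambda)_\#\pi$ is a bijection of coupling sets, with inverse given by $D_{1/\lambda}$. Taking infima gives $W_p(\mu^\lambda,\nu^\lambda)=\lambda W_p(\mu,\nu)$.

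\emph{Toscani scaling.} We have $\Bchi_{\mu^\lambda}(u)=\Bchi_\mu(\lambda u)$. Therefore
\[
\mathrm T_{s,p}(\mu^\lambda,\nu^\lambda)^p=\int\frac{|\Bchi_\mu(\lambda u)-\Bchi_\nu(\lambda u)|^p}{\|u\|^{ps}}\,du .
\]
Substituting $v=\lambda u$, so that $du=\lambda^{-d}dv$ and $\|u\|^{-ps}=\lambda^{ps}\|v\|^{-ps}$, gives $\lambda^{ps-d}\,\mathrm T_{s,p}(\mu,\nu)^p$. Taking $p$-th roots yields the exponent $s-d/p$. Finiteness on both sides is guaranteed by Proposition~\ref{prop:integrability}.

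\emph{Part (i).} Fix a Dirac pair $\mu=\delta_0$, $\nu=\delta_x$ with $x\ne0$. By Proposition~\ref{prop:window-sharp}, $t_0:=\mathrm T_{s,p}(\delta_0,\delta_x)\in(0,\infty)$, and $W_p(\delta_0,\delta_x)=\|x\|>0$. Along the dilation family, the assumed inequality reads
\[
\lambda\|x\|\le C\,\lambda^{\rho(s-d/p)}t_0^\rho\qquad\text{for all }\lambda>0.
\]
If $\rho(s-d/p)<1$, let $\lambda\to\infty$. If $\rho(s-d/p)>1$, let $\lambda\to0$. Either way we get a contradiction, so $\rho=(s-d/p)^{-1}$. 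This exceeds $1$ because $0<s-d/p<1$, which is exactly the window. In particular no $\rho\le1$ is possible.

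\emph{Part (ii).} Here dilation alone is not enough. It preserves the relation $W_p\propto \mathrm T^{1/(s-d/p)}$, so within that family $W_p$ stays bounded when $\mathrm T_{s,p}$ is bounded. Mass splitting is the main step, and I would use regime (b) of Example~\ref{ex:two-atom}. Pick $\alpha$ with $1<\alpha<p/(ps-d)$, which is possible since $s<1+d/p$. Set $\mu_n=(1-\frac1n)\delta_0+\frac1n\delta_{n^\alpha e_1}$.

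\begin{itemize}
\item By homogeneity of the zero-mass norm and the scaling identity just proved, $\mathrm T_{s,p}(\mu_n,\delta_0)=n^{\alpha(s-d/p)-1}t_0$. The exponent is negative, so this tends to $0$.
\item Every coupling with $\delta_0$ is the product coupling, so $W_p(\mu_n,\delta_0)^p=\frac1n n^{\alpha p}$. Hence $W_p(\mu_n,\delta_0)=n^{\alpha-1/p}\to\infty$, since $\alpha>1\ge 1/p$.
\end{itemize}

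Given $t>0$, all sufficiently large $n$ satisfy $\mathrm T_{s,p}(\mu_n,\delta_0)\le t$, while $W_p(\mu_n,\delta_0)$ is unbounded. So the supremum is infinite. Any finite-valued $\Phi$ would bound this supremum by $\Phi$ evaluated at those values. More precisely, apply the bound to the pairs $(\mu_n,\delta_0)$ with $\mathrm T_{s,p}$ values in $[0,t]$. To get a single finite value as the bound, I would additionally dilate by $\lambda_n:=(t/\mathrm T_{s,p}(\mu_n,\delta_0))^{1/(s-d/p)}$. This makes $\mathrm T_{s,p}=t$ exactly, while $W_p=\lambda_n n^{\alpha-1/p}\ge n^{\alpha-1/p}\to\infty$ because $\lambda_n\ge1$ for large $n$. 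Then $\Phi(t)=\infty$, a contradiction.
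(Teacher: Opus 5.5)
Your proposal is correct and follows the paper's proof: the same pushforward coupling and change of variables $v=\lambda u$ for the two scaling identities, the same Dirac-pair dilation argument for (i), and for (ii) the same combination of mass splitting with dilation, which gives $\mathrm T_{s,p}=t$ exactly while $W_p\to\infty$. The only difference is cosmetic: you first use regime (b) of Example~\ref{ex:two-atom} and then rescale, whereas the paper directly sets $\varepsilon=t\lambda^{-(s-d/p)}/t_0$ in $(1-\varepsilon)\delta_0+\varepsilon\delta_{\lambda e_1}$; after your rescaling the atom sits at $(tn/t_0)^{1/(s-d/p)}e_1$ regardless of $\alpha$, so the two families coincide up to reparametrization.
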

Part~(i) comes from dilation alone.
Part~(ii) combines dilation with the mass-splitting family of Example~\ref{ex:two-atom}.

Lemma~\ref{lem:scaling} shows that reverse controls are plausible only on model classes that exclude arbitrary dilations and arbitrary mass splitting. Bounded support, fixed scale, uniform tails, or density regularity. Such classes remain broad in applications, covering compactly supported distributions, uniformly sub-Gaussian or log-concave families, and learned pushforwards through bounded generators. On them the failure of a global reverse inequality no longer prevents $\mathrm{T}_{s,p}$ and $W_p$ from inducing the same convergence, which is the subject of the next subsection.
\subsection{Topological Equivalence on Bounded Support and Uniform Tail Classes}
On the two scale-normalized classes just defined, $W_p$ and $\mathrm{T}_{s,p}$ generate the same topology throughout the finiteness window. These are \emph{topological} statements. They identify the correct convergence structure even where no explicit reverse modulus is available. Their proofs combine the unconditional convergence result of Proposition~\ref{prop:convergence} with standard compactness and moment arguments. Section~\ref{sec:lip-pointwise} develops complementary \emph{quantitative} pointwise bounds under scale normalization.

\begin{lemma}[Weak Convergence and Uniform Moments]\label{lem:weak-moment-wp}
Let $\gamma>p$. Suppose $\mu_n\overset{w}{\rightharpoonup}\mu$. If the \(\gamma\)-moments
satisfy \(\sup_n\int \|x\|^\gamma\ d\mu_n(x)<\infty\), then $W_p(\mu_n,\mu)\to 0$.
\end{lemma}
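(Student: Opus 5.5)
The plan is to use the standard characterization of $W_p$-convergence: on $\mathcal P_p(\R^d)$, $W_p(\mu_n,\mu)\to0$ if and only if $\mu_n\overset{w}{\rightharpoonup}\mu$ and the $p$-th moments are uniformly integrable,
\[
\lim_{R\to\infty}\sup_n\int_{\|x\|>R}\|x\|^p\,d\mu_n(x)=0
\]
\citep[Theorem~6.9]{villani2008optimal}. So the argument reduces to three steps. First, check that $\mu_n$ and $\mu$ lie in $\mathcal P_p(\R^d)$. Second, verify uniform integrability of $\|x\|^p$ from the $\gamma$-moment bound with $\gamma>p$. Third, invoke the characterization.

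For the first step, set $S:=\sup_n\int\|x\|^\gamma\,d\mu_n<\infty$. Since $\|x\|^p\le 1+\|x\|^\gamma$, each $\mu_n\in\mathcal P_p$. For the limit $\mu$, apply the portmanteau theorem to the nonnegative lower semicontinuous function $x\mapsto\|x\|^\gamma$; concretely, use the bounded continuous truncations $\min(\|x\|^\gamma,M)$ and let $M\uparrow\infty$ by monotone convergence. This gives $\int\|x\|^\gamma\,d\mu\le\liminf_n\int\|x\|^\gamma\,d\mu_n\le S$, hence $\mu\in\mathcal P_\gamma\subseteq\mathcal P_p$.

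For the second step, on $\{\|x\|>R\}$ we have $\|x\|^p=\|x\|^\gamma\|x\|^{p-\gamma}\le R^{p-\gamma}\|x\|^\gamma$. Therefore
\[
\sup_n\int_{\|x\|>R}\|x\|^p\,d\mu_n\le S\,R^{-(\gamma-p)}\to0 \quad (R\to\infty),
\]
and this is exactly where the strict inequality $\gamma>p$ is used.

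To stay self-contained instead of citing the characterization, the third step can be done directly. By Skorokhod's representation, realize $X_n\to X$ almost surely with $X_n\sim\mu_n$ and $X\sim\mu$. Then $W_p(\mu_n,\mu)^p\le\E\|X_n-X\|^p$. The family $\|X_n-X\|^p\le2^{p-1}(\|X_n\|^p+\|X\|^p)$ is uniformly integrable: the $\|X_n\|^p$ part by the second step, and the $\|X\|^p$ part as a single integrable variable. Vitali's convergence theorem then gives $\E\|X_n-X\|^p\to0$.

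The only mildly delicate point is passing the moment bound to the limit $\mu$, which the lower semicontinuity argument in the first step handles. Everything else is routine.
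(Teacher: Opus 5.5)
Your proposal is correct and follows the paper's proof. Both arguments put $\mu$ in $\mathcal P_\gamma$ by truncation and monotone convergence, both use the bound $\sup_n\int_{\|x\|>R}\|x\|^p\,d\mu_n\le S\,R^{p-\gamma}$ for uniform tail control, and both conclude with Villani's Theorem~6.9. The one difference is minor: the paper first turns the tail bound into convergence of $p$-th moments (via the truncations $\|x\|^p\wedge R^p$) and then applies the moment-convergence form of that theorem, whereas you apply its uniform-integrability form directly, or avoid the citation altogether with your Skorokhod--Vitali coupling argument, which is also valid.
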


Lemma~\ref{lem:weak-moment-wp} supplies the probabilistic input on uniform-moment classes. Combining it with pointwise convergence of characteristic functions and scale normalization gives the two equivalence theorems.

\begin{theorem}[Topological Equivalence on Bounded Support]\label{thm:Wp-Tsp-equivalence}
Let $d\ge 1$, $p\ge 1$, $\frac{d}{p}<s<1+\frac{d}{p}$, and $K>0$.
On $\mathcal P_{p,K}(\R^d)$, the metrics $W_p$ and $\mathrm{T}_{s,p}$ generate the same topology.
Equivalently, for any $\{\mu_n\}\subset\mathcal P_{p,K}(\R^d)$ and $\mu\in\mathcal P_{p,K}(\R^d)$,
\[
W_p(\mu_n,\mu)\to 0
\quad\Longleftrightarrow\quad
\mathrm{T}_{s,p}(\mu_n,\mu)\to 0
\]
\end{theorem}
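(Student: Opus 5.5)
Since both $W_p$ and $\mathrm{T}_{s,p}$ are metrics on $\mathcal P_{p,K}(\R^d)$, it suffices to prove the sequential equivalence displayed in the statement. We treat the two directions separately.

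\emph{Forward direction.} Suppose $W_p(\mu_n,\mu)\to0$. Theorem~\ref{thm:upper-holder} holds on all of $\mathcal P_p(\R^d)$, so it gives
\[
\mathrm{T}_{s,p}(\mu_n,\mu)\le C(d,p,s)\,W_p(\mu_n,\mu)^{\,s-d/p}.
\]
The exponent $s-\frac dp$ is positive, hence the right-hand side tends to $0$. This direction uses no support restriction.

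\emph{Reverse direction.} Suppose $\mathrm{T}_{s,p}(\mu_n,\mu)\to0$.
\begin{enumerate}
\item[(1)] Proposition~\ref{prop:convergence}(i) applies with no moment hypothesis and gives $\mu_n\overset{w}{\rightharpoonup}\mu$.
\item[(2)] Every $\mu_n$ is supported in the closed ball $B(0,K)$. Fix any $\gamma>p$. Then
\[
\sup_n\int\|x\|^\gamma\,d\mu_n\le K^\gamma<\infty.
\]
\item[(3)] Lemma~\ref{lem:weak-moment-wp} now yields $W_p(\mu_n,\mu)\to0$.
\end{enumerate}
One could instead argue directly that $\|x\|^p$ is bounded and continuous on $B(0,K)$, and that $\mu$ is also supported there because the class is weakly closed. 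The $p$-th moments then converge, and the standard characterization of $W_p$-convergence gives the result.

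\emph{Expected obstacle.} The only nontrivial input is Proposition~\ref{prop:convergence}, namely that Toscani convergence forces weak convergence. That is already established earlier in the paper, so I expect no real obstacle here. The one point to check is that the limit $\mu$ lies in $\mathcal P_{p,K}$, which is assumed in the statement. Topological equivalence follows because convergent sequences determine the topology of a metric space.
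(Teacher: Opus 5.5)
Your proof is correct. The forward direction and the first reverse step (Proposition~\ref{prop:convergence}) match the paper's proof; the two arguments differ only in how weak convergence is upgraded to $W_p$-convergence.

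\textbf{The paper's upgrade.} The paper does this by hand on the compact ball.
\begin{enumerate}
\item It writes $W_1$ by Kantorovich--Rubinstein duality over $1$-Lipschitz functions vanishing at $0$.
\item By Arzel\`a--Ascoli, this class has a finite $\varepsilon$-net in $C(B(0,K))$.
\item Weak convergence, tested on the finitely many net functions, gives $W_1(\mu_n,\mu)\to0$.
\item The diameter bound $W_p\le(2K)^{1-1/p}W_1^{1/p}$ then gives $W_p(\mu_n,\mu)\to0$.
\end{enumerate}

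\textbf{Your upgrade.} You note that $\mathcal P_{p,K}(\R^d)\subset\mathcal M_{\gamma,K^\gamma}$ for every $\gamma>p$ and apply Lemma~\ref{lem:weak-moment-wp}. Your argument is therefore Theorem~\ref{thm:Wp-Tsp-equivalence-moment} specialized to bounded support. The paper's own remark after the definition of $\mathcal P_{p,K}$ points to exactly this route.

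\textbf{What each buys.} Your version is shorter and makes explicit that bounded support is just a special uniform-tail class. It rests on the characterization of $W_p$-convergence (weak convergence plus convergence of $p$-th moments) cited inside Lemma~\ref{lem:weak-moment-wp}. The paper's proof needs only Kantorovich--Rubinstein duality and Arzel\`a--Ascoli. It also passes through $W_1$ and the diameter interpolation, the same structure reused in Theorem~\ref{thm:explicit-reverse-bounded}.

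\textbf{A small detail in your alternative.} Weak convergence must be tested against a bounded continuous function on all of $\R^d$. Use $\|x\|^p\wedge K^p$, which agrees with $\|x\|^p$ on the common support.
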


\begin{theorem}[Topological Equivalence on Uniform Tail Classes]\label{thm:Wp-Tsp-equivalence-moment}
Let $d\ge 1$, $p\ge 1$, $\frac{d}{p}<s<1+\frac{d}{p}$, and choose $\gamma>p$, $S>0$.
On $\mathcal M_{\gamma,S}$, the metrics $W_p$ and $\mathrm{T}_{s,p}$ generate the same topology.
For any $\{\mu_n\}\subset\mathcal M_{\gamma,S}$ and $\mu\in\mathcal M_{\gamma,S}$,
\[
W_p(\mu_n,\mu)\to 0
\quad\Longleftrightarrow\quad
\mathrm{T}_{s,p}(\mu_n,\mu)\to 0
\]
\end{theorem}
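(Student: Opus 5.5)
The plan is to prove the two implications separately; only the reverse direction needs work. For $W_p(\mu_n,\mu)\to0\Rightarrow\mathrm T_{s,p}(\mu_n,\mu)\to0$ I invoke Theorem~\ref{thm:upper-holder}. Since $\gamma>p\ge1$, we have $\mathcal M_{\gamma,S}\subset\mathcal P_p(\R^d)$, so \eqref{eq:upper-holder} gives
\[
\mathrm T_{s,p}(\mu_n,\mu)\le C(d,p,s)\,W_p(\mu_n,\mu)^{s-d/p}\to0,
\]
because $s-d/p>0$. This direction needs no uniform tail assumption.

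For the converse, assume $\mathrm T_{s,p}(\mu_n,\mu)\to0$ with all $\mu_n,\mu\in\mathcal M_{\gamma,S}$.
\begin{enumerate}
\item By Proposition~\ref{prop:convergence}(i), which carries no moment hypothesis, $\mu_n\overset{w}{\rightharpoonup}\mu$.
\item The uniform bound $\sup_n\int\|x\|^\gamma\,d\mu_n\le S$ with $\gamma>p$ is exactly the hypothesis of Lemma~\ref{lem:weak-moment-wp}. That lemma then yields $W_p(\mu_n,\mu)\to0$.
\end{enumerate}

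Topological equivalence follows from sequential equivalence because both are metrics on the same set $\mathcal M_{\gamma,S}$. Closed sets in a metric space are characterized by sequences, so the two sequential convergence statements force the two topologies to coincide. Note that $\mathrm T_{s,p}$ is finite there by Proposition~\ref{prop:integrability}.

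There is no substantial obstacle, since the heavy lifting sits in Proposition~\ref{prop:convergence} and Lemma~\ref{lem:weak-moment-wp}. The one point to watch is that the moment bound is used only along the sequence in step 2, via uniform integrability of $\|x\|^p$; the limit's membership in $\mathcal M_{\gamma,S}$ matters only for placing $\mu$ in $\mathcal P_p$. The counterexample of Example~\ref{ex:two-atom}(b) does not contradict the theorem: there the $\gamma$-moments $n^{\alpha\gamma-1}$ are unbounded because $\alpha>1$ and $\gamma>1$. This confirms that the uniform-tail hypothesis is what excludes mass splitting.
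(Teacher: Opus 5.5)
Your proposal is correct and follows the paper's proof: the forward direction comes from Theorem~\ref{thm:upper-holder} (applicable since $\mathcal M_{\gamma,S}\subset\mathcal P_p(\R^d)$), and the reverse direction combines Proposition~\ref{prop:convergence}, which gives weak convergence, with Lemma~\ref{lem:weak-moment-wp} under the uniform $\gamma$-moment bound. Your additional remarks are accurate and consistent with the paper: the passage from sequential to topological equivalence, and the check that Example~\ref{ex:two-atom}(b) has unbounded $\gamma$-moments $n^{\alpha\gamma-1}$.
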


\begin{remark}[Uniform Equivalence Moduli]\label{rem:uniform-moduli}
Because $\mathcal P_{p,K}(\R^d)$ is weakly compact and $W_p$ metrizes weak convergence on $\mathcal P_{p,K}$,
Theorem~\ref{thm:Wp-Tsp-equivalence} implies uniform equivalence.
There exist increasing functions $\omega_1,\omega_2:[0,\infty)\to[0,\infty)$ with $\omega_i(0+)=\omega_i(0)=0$ such that
\[
\mathrm{T}_{s,p}(\mu,\nu)\le \omega_1\big(W_p(\mu,\nu)\big)\quad ,
\qquad
W_p(\mu,\nu)\le \omega_2\big(\mathrm{T}_{s,p}(\mu,\nu)\big)\quad ,
\qquad
\forall \mu,\nu\in\mathcal P_{p,K}(\R^d)
\]
The existence of such moduli is a soft consequence of compactness. The set $\mathcal P_{p,K}(\R^d)$ is weakly closed and tight, hence weakly compact, and $W_p$ metrizes weak convergence on it, so $(\mathcal P_{p,K},W_p)$ is a compact metric space. By Theorem~\ref{thm:Wp-Tsp-equivalence} the metric $\mathrm{T}_{s,p}$ induces the same topology, so the identity map between the two compact metric spaces is a homeomorphism and therefore uniformly continuous in both directions. Setting
\[
\begin{aligned}
\omega_1(t)&:=\sup\big\{\mathrm{T}_{s,p}(\mu,\nu):\ \mu,\nu\in\mathcal P_{p,K}(\R^d),\ W_p(\mu,\nu)\le t\big\}\\
\omega_2(t)&:=\sup\big\{W_p(\mu,\nu):\ \mu,\nu\in\mathcal P_{p,K}(\R^d),\ \mathrm{T}_{s,p}(\mu,\nu)\le t\big\}
\end{aligned}
\]
gives nondecreasing functions that are finite (both metrics are bounded on the compact class) and satisfy $\omega_i(0+)=0$ precisely by that uniform continuity.
Theorem~\ref{thm:upper-holder} provides an explicit admissible $\omega_1$. An explicit admissible $\omega_2$ is provided in Theorem~\ref{thm:explicit-reverse-bounded}.
\end{remark}
\subsection{Lipschitz Control and Pointwise Bounds}\label{sec:lip-pointwise}

On scale-normalized classes, characteristic functions enjoy uniform Lipschitz control, and that regularity upgrades $\mathrm{T}_{s,p}$-smallness to \emph{quantitative} pointwise control, complementing the qualitative Proposition~\ref{prop:convergence}. The following lemmas make the principle precise. They are stated and proved independently of Theorems~\ref{thm:Wp-Tsp-equivalence}--\ref{thm:Wp-Tsp-equivalence-moment}, so no circularity is involved.

\begin{lemma}[Lipschitz Regularity on $\mathcal P_{p,K}$]\label{lem:chi-lip}
If $\mu\in\mathcal P_{p,K}(\R^d)$, then $\Bchi_\mu$ is globally Lipschitz. More precisely,
\[
|\Bchi_\mu(u)-\Bchi_\mu(v)|\le K\|u-v\|\quad,
\qquad \forall u,v\in\R^d
\]
Consequently, for $\mu,\nu\in\mathcal P_{p,K}(\R^d)$, the difference $f:=\Bchi_\mu-\Bchi_\nu$ satisfies $\mathrm{Lip}(f)\le 2K$.
\end{lemma}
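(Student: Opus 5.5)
The plan is to use the elementary estimate $|e^{ia}-e^{ib}|\le |a-b|$ for real $a,b$ and integrate it against $\mu$. For $u,v\in\R^d$ we write
\[
|\Bchi_\mu(u)-\Bchi_\mu(v)|
=\Big|\int_{\R^d}\big(e^{i\langle u,x\rangle}-e^{i\langle v,x\rangle}\big)\,d\mu(x)\Big|
\le\int_{\R^d}\big|\langle u-v,x\rangle\big|\,d\mu(x).
\]
The elementary estimate itself follows from $|e^{ia}-e^{ib}|=|e^{i(a-b)}-1|=2|\sin((a-b)/2)|\le|a-b|$, or equivalently from writing $e^{ia}-e^{ib}=i\int_b^a e^{it}\,dt$. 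The integrand is dominated by the constant $2$, and $\mu$ is finite, so exchanging modulus and integral is legitimate and no moment assumption is needed.

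Next we apply Cauchy--Schwarz inside the integral: $|\langle u-v,x\rangle|\le\|u-v\|\,\|x\|$. Since $\supp(\mu)\subseteq B(0,K)$, the closed ball, we have $\|x\|\le K$ for $\mu$-almost every $x$. Hence
\[
\int\|x\|\,d\mu(x)\le K\,\mu(\R^d)=K,
\]
which gives $|\Bchi_\mu(u)-\Bchi_\mu(v)|\le K\|u-v\|$ for all $u,v$. The only minor point is that $\mu$ has full mass on its support, and this holds for Borel probability measures on the separable space $\R^d$. This is the closest thing to an obstacle, and it is standard.

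For the consequence, set $f=\Bchi_\mu-\Bchi_\nu$. The triangle inequality gives
\[
|f(u)-f(v)|\le|\Bchi_\mu(u)-\Bchi_\mu(v)|+|\Bchi_\nu(u)-\Bchi_\nu(v)|\le 2K\|u-v\|,
\]
so $\mathrm{Lip}(f)\le 2K$.

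As a remark, one could alternatively obtain the same bound via Remark~\ref{rem:chi-regularity}: the gradient is $\nabla\Bchi_\mu(u)=i\int x\,e^{i\langle u,x\rangle}\,d\mu$, which has norm at most $\int\|x\|\,d\mu\le K$, and the mean value theorem then gives the Lipschitz bound. The direct argument above avoids differentiability altogether.
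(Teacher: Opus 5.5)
Your proposal is correct and is essentially the paper's proof: the pointwise bound $|e^{i\langle u,x\rangle}-e^{i\langle v,x\rangle}|\le\|u-v\|\,\|x\|$, integration against $\mu$, the estimate $\int\|x\|\,d\mu\le K$ from $\supp(\mu)\subseteq B(0,K)$, and the triangle inequality for $f=\Bchi_\mu-\Bchi_\nu$. The extra justifications you supply (full mass on the support in the separable space $\R^d$, and legitimacy of moving the modulus inside the integral) are sound; the paper uses them implicitly.
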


\begin{lemma}[Uniform Lipschitz Control on $\mathcal M_{\gamma,S}$]\label{lem:chi-lip-moment}
Let $\gamma\ge1$, $S>0$, and $\mu\in\mathcal M_{\gamma,S}$. Then for all $u,v\in\R^d$,
\[
|\Bchi_\mu(u)-\Bchi_\mu(v)|
\le \|u-v\|\int \|x\|\ d\mu(x)
\le \|u-v\|\ S^{1/\gamma}
\]
Consequently, for $\mu,\nu\in\mathcal M_{\gamma,S}$, $f:=\Bchi_\mu-\Bchi_\nu$ satisfies $\mathrm{Lip}(f)\le 2S^{1/\gamma}$.
\end{lemma}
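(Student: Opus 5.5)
The argument is the elementary estimate $|e^{ia}-e^{ib}|\le|a-b|$ for real $a,b$, integrated against $\mu$ and then averaged. I will fix $u,v\in\R^d$ and write
\[
\Bchi_\mu(u)-\Bchi_\mu(v)=\int_{\R^d}\big(e^{i\langle u,x\rangle}-e^{i\langle v,x\rangle}\big)\,d\mu(x).
\]
Moving the absolute value inside the integral gives
\[
|\Bchi_\mu(u)-\Bchi_\mu(v)|\le\int_{\R^d}\big|e^{i\langle u,x\rangle}-e^{i\langle v,x\rangle}\big|\,d\mu(x).
\]
The elementary estimate follows from $|e^{ia}-e^{ib}|=|e^{i(a-b)}-1|=2|\sin((a-b)/2)|$ and $|\sin t|\le|t|$. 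Applying it with $a=\langle u,x\rangle$ and $b=\langle v,x\rangle$, and then Cauchy--Schwarz, bounds the integrand by
\[
|\langle u-v,x\rangle|\le\|u-v\|\,\|x\|.
\]
This yields $|\Bchi_\mu(u)-\Bchi_\mu(v)|\le\|u-v\|\int\|x\|\,d\mu(x)$, which is the first inequality.

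The integral is finite because $\gamma\ge1$. For the second inequality I use Jensen's inequality for the concave map $t\mapsto t^{1/\gamma}$, or equivalently Hölder with exponents $\gamma$ and $\gamma/(\gamma-1)$ against the probability measure $\mu$:
\[
\int\|x\|\,d\mu\le\Big(\int\|x\|^\gamma\,d\mu\Big)^{1/\gamma}\le S^{1/\gamma},
\]
where the last step is the defining condition of $\mathcal M_{\gamma,S}$. This is exactly the remark made after the definition of the uniform moment class. When $\gamma=1$ the bound is the definition itself.

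For the consequence, set $f=\Bchi_\mu-\Bchi_\nu$. The triangle inequality gives
\[
|f(u)-f(v)|\le|\Bchi_\mu(u)-\Bchi_\mu(v)|+|\Bchi_\nu(u)-\Bchi_\nu(v)|\le2S^{1/\gamma}\|u-v\|,
\]
so $\mathrm{Lip}(f)\le2S^{1/\gamma}$.

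I do not expect a genuine obstacle. The only points needing care are to state the elementary exponential inequality correctly and to note that finiteness of the first moment, which justifies the manipulations, follows from $\gamma\ge1$. The argument is the same as for Lemma~\ref{lem:chi-lip}, with the first moment replacing the support radius $K$.
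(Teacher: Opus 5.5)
Your proposal is correct and follows essentially the same route as the paper's proof: the paper also uses the $1$-Lipschitz property of $t\mapsto e^{it}$ with Cauchy--Schwarz inside the integral, bounds the first moment by $S^{1/\gamma}$ via H\"older's inequality, and then applies the triangle inequality to $f=\Bchi_\mu-\Bchi_\nu$. Your only additions are an explicit derivation of $|e^{ia}-e^{ib}|\le|a-b|$ and a finiteness remark; the remark is harmless but not needed, since the first inequality holds trivially if the moment is infinite.
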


\begin{lemma}[Pointwise Control by $\mathrm{T}_{s,p}$]\label{lem:Tp-to-pointwise}
Fix $p\ge 1$ and $\frac{d}{p}<s<1+\frac{d}{p}$. Let $\mu,\nu$ be probability measures, and
suppose $\sigma:=\Bchi_\mu-\Bchi_\nu$ is globally $L$--Lipschitz.
Then, with $c_{d,p}:=|B(0,1)|^{-1/p}$, for any $u\neq 0$ and any $r\in(0,\|u\|/2)$,
\[
|\sigma(u)|:=|\Bchi_\mu(u)-\Bchi_\nu(u)|
\le
c_{d,p}\ r^{-d/p}\ \|\sigma\|_{L^p(B(u,r))}
+ Lr
\le
c_{d,p}\ r^{-d/p}\ (\|u\|+r)^{s}\ \mathrm{T}_{s,p}(\mu,\nu)
+ Lr
\]
In particular, for each fixed $u\in\R^d$, $\mathrm{T}_{s,p}(\mu_n,\mu)\to 0$ and a uniform Lipschitz bound on $\sigma_n:=\Bchi_{\mu_n}-\Bchi_\mu$ imply $\Bchi_{\mu_n}(u)\to \Bchi_\mu(u)$.
\end{lemma}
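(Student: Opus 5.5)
The first inequality comes from averaging $\sigma$ over the ball $B(u,r)$ and using the Lipschitz bound to control the deviation from that average. The second comes from comparing the weight $\|v\|^{-ps}$ on that ball with its largest possible value. Fix $u\neq 0$ and $r\in(0,\|u\|/2)$. For every $v\in B(u,r)$ the Lipschitz hypothesis gives
\[
|\sigma(u)|\le|\sigma(v)|+L\|u-v\|\le|\sigma(v)|+Lr .
\]
Taking the $L^p$ average over $B(u,r)$ (that is, raising to the $p$-th power, integrating, dividing by $|B(u,r)|=|B(0,1)|r^d$ and taking $p$-th roots) and using Minkowski's inequality in $L^p$ of the normalized measure gives
\[
|\sigma(u)|\le \big(|B(0,1)|r^d\big)^{-1/p}\|\sigma\|_{L^p(B(u,r))}+Lr .
\]
This is the first inequality with $c_{d,p}=|B(0,1)|^{-1/p}$. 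For $p=1$ the same argument works with plain averaging.

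For the second inequality we reinsert the weight. On $B(u,r)$ we have $\|v\|\le\|u\|+r$. Since $r<\|u\|/2$, also $\|v\|\ge\|u\|/2>0$, so the ball avoids the origin. Hence
\[
\int_{B(u,r)}|\sigma(v)|^p\,dv\le(\|u\|+r)^{ps}\int_{B(u,r)}\frac{|\sigma(v)|^p}{\|v\|^{ps}}\,dv\le(\|u\|+r)^{ps}\,\mathrm{T}_{s,p}(\mu,\nu)^p .
\]
Taking $p$-th roots gives $\|\sigma\|_{L^p(B(u,r))}\le(\|u\|+r)^s\,\mathrm{T}_{s,p}(\mu,\nu)$. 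Substituting this into the first bound yields the stated estimate. The window hypothesis is not actually used at this step; it only ensures that $\mathrm{T}_{s,p}$ is finite in applications. If $\mathrm{T}_{s,p}(\mu,\nu)=\infty$ the bound holds trivially.

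For the consequence, fix $u\ne0$ and let $L$ be a uniform Lipschitz constant for the $\sigma_n$. Given $\varepsilon>0$, pick $r<\min(\|u\|/2,\varepsilon/(2L))$, so that $Lr<\varepsilon/2$. With $r$ now fixed, the first term equals $c_{d,p}r^{-d/p}(\|u\|+r)^s\,\mathrm{T}_{s,p}(\mu_n,\mu)$, which tends to $0$ as $n\to\infty$. Hence $\limsup_n|\sigma_n(u)|\le\varepsilon$ for every $\varepsilon>0$, so $\sigma_n(u)\to0$. At $u=0$ convergence is trivial, since all characteristic functions equal $1$ there.

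The only step needing care is the ordering of quantifiers: $r$ must be chosen before letting $n\to\infty$, because the prefactor $r^{-d/p}$ blows up as $r\to0$. Beyond that, the argument is routine.
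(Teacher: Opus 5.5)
Your proof is correct and follows the paper's argument: you average over $B(u,r)$ and control the deviation with the Lipschitz bound (Minkowski in normalized $L^p$ in your version, versus splitting off the mean $m_B$ and applying H\"older in the paper, which gives the same estimate), then reinsert the weight using $\|v\|\le\|u\|+r$ on the ball. For the consequence, the paper takes $r_n=\mathrm{T}_{s,p}(\mu_n,\mu)^{p/(d+p)}$, which yields an explicit rate but requires discarding indices where the distance vanishes; your argument (fix $r$ first, then let $n\to\infty$) gives only qualitative convergence but is equally valid and avoids that case distinction.
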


\begin{corollary}[Pointwise Bound]\label{cor:pointwise-bound}
Fix $p\ge 1$ and $\frac{d}{p}<s<1+\frac{d}{p}$.
Under the assumptions of Lemma~\ref{lem:Tp-to-pointwise}, for every $u\neq 0$,
\begin{equation}\label{eq:pointwise-bound}
|\Bchi_\mu(u)-\Bchi_\nu(u)|
\le
C_{d,p,s}\ \|u\|^{\ s-\frac{d}{p}}\ \mathrm{T}_{s,p}(\mu,\nu)
+\frac{L}{2}\ \|u\|
\end{equation}
where $C_{d,p,s}=c_{d,p}\ 2^{d/p}(3/2)^s$.
\end{corollary}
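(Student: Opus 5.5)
The bound \eqref{eq:pointwise-bound} follows from Lemma~\ref{lem:Tp-to-pointwise} by one concrete choice of radius. I would take $r=\|u\|/2$. The lemma is stated for $r\in(0,\|u\|/2)$, an open interval. So I would either note that its proof (averaging $|\sigma|$ over $B(u,r)$ and using the Lipschitz bound) works verbatim at the endpoint, or apply it with $r<\|u\|/2$ and let $r\uparrow\|u\|/2$. The right-hand side of the lemma is continuous in $r$, so the limit gives the same inequality.

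With $r=\|u\|/2$ the three factors become explicit:
\[
r^{-d/p}=2^{d/p}\|u\|^{-d/p},\qquad (\|u\|+r)^s=(3/2)^s\|u\|^s,\qquad Lr=\tfrac L2\|u\|.
\]
Substituting these into the second inequality of Lemma~\ref{lem:Tp-to-pointwise} gives
\[
|\Bchi_\mu(u)-\Bchi_\nu(u)|\le c_{d,p}\,2^{d/p}(3/2)^s\,\|u\|^{s-d/p}\,\mathrm T_{s,p}(\mu,\nu)+\tfrac L2\|u\|.
\]
This is \eqref{eq:pointwise-bound} with $C_{d,p,s}=c_{d,p}2^{d/p}(3/2)^s$.

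The only point needing care is the endpoint $r=\|u\|/2$. The restriction $r<\|u\|/2$ in the lemma serves only to keep the ball $B(u,r)$ away from the origin, where the weight $\|v\|^{-ps}$ is singular. It also justifies the bound $\|v\|\le\|u\|+r$ on that ball. Both facts still hold at $r=\|u\|/2$, up to a null set (the single point $0$ on the boundary of the closed ball), so the limiting argument is routine.
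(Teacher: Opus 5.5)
Your proposal is correct and matches the paper's proof: apply Lemma~\ref{lem:Tp-to-pointwise} with $r\uparrow\|u\|/2$ and pass to the limit using continuity of $r\mapsto r^{-d/p}(\|u\|+r)^s$, which gives exactly $C_{d,p,s}=c_{d,p}2^{d/p}(3/2)^s$. One small correction to your aside on the endpoint: the origin is at distance $\|u\|$ from $u$, not $\|u\|/2$, so it does not lie on the boundary of $B(u,\|u\|/2)$. On that ball one has $\|v\|\ge\|u\|/2>0$, so the direct endpoint argument also works and needs no null-set caveat.
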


\begin{remark}[Sobolev-Type Density Regularity]\label{rem:sobolev-density}
A common route to quantitative reverse bounds assumes densities with uniform Sobolev control.
This is standard in the MMD--Wasserstein literature. Under bounded $r$--moments and Sobolev regularity, $W_p$ can be controlled by a translation-invariant MMD with an explicit H\"older exponent. See \citep[Theorem~15]{vayer2023controlling}.
For reference, Vayer--Gribonval consider
\[
\mathcal S_{B,M,r,t}
:=
\Big\{\pi\in\mathcal P(\R^d): \pi=f\ dx,\ \|f\|_{H^t(\R^d)}\le B,\ M_r[\pi]\le M\Big\}
\]
and obtain reverse H\"older-type controls of $W_p$ by an MMD norm on $\mathcal S_{B,M,r,t}$ (for $1\le p<r$) provided $t$ is large enough relative to the kernel smoothness.
In our setting, such assumptions offer a natural route toward explicit moduli on density-regular classes. Carrying this out for general $p$ is beyond our scope, while in the Hilbert case $p=2$ the MMD interpretation of Section~\ref{sec:mmd} applies directly.
\end{remark}

The quadratic case admits a sharper Hilbertian interpretation, developed in Section~\ref{sec:mmd}. There $\mathrm{T}_{s,2}$ is identified with an MMD, which makes it possible to import additional reverse comparison results from the kernel/Wasserstein literature.
\section{Dual Representation in Homogeneous Fourier--Lebesgue Space}\label{sec:dual-ipm}
This section strengthens the comparison theory from Section~\ref{sec:Ts-vs-Wp}. We identify $\mathrm{T}_{s,p}$ as an integral probability metric over a homogeneous Fourier-Lebesgue unit ball, which gives a dual representation valid for general $1\le p<\infty$.

\subsection{Integral Probability Metric (IPM) Representation}
Fix $1\le p<\infty$ and let $q$ be the conjugate exponent, $\frac1p+\frac1q=1$.
We use the Fourier convention
\[
\widehat{\phi}(u)=\int_{\R^d} e^{i\langle u,x\rangle}\phi(x)\ dx\quad,
\qquad
\phi(x)=(2\pi)^{-d}\int_{\R^d} e^{-i\langle u,x\rangle}\widehat{\phi}(u)\ du
\]
so that for $\phi\in\mathcal S(\R^d)$ and any $\sigma\in\mathcal M(\R^d)$,
\begin{equation}\label{eq:fourier-pairing}
\int_{\R^d}\phi\ d\sigma
=(2\pi)^{-d}\int_{\R^d}\widehat{\phi}(u)\ \widehat{\sigma}(-u)\ du
\end{equation}
For $\mu,\nu\in\mathcal P(\R^d)$ set $\sigma:=\mu-\nu\in\mathcal M_0(\R^d)$. Then $\widehat{\sigma}=\Bchi_\mu-\Bchi_\nu$ and $\widehat{\sigma}(0)=0$.
In the finiteness window $\frac{d}{p}<s<1+\frac{d}{p}$,
\[
\mathrm{T}_{s,p}(\mu,\nu)=\Big\|\frac{\widehat{\sigma}}{\|u\|^{s}}\Big\|_{L^p(\R^d)}
\]

Define the homogeneous Fourier--Lebesgue seminorm
\[
\|\phi\|_{\dot{\mathcal F}L^{q}_s}
:=
\big\|\|u\|^{s}\widehat{\phi}(u)\big\|_{L^q(\R^d)}
\]
We define $\dot{\mathcal F}L^{q}_s$ as the abstract completion of $\mathcal S(\R^d)$ under
$\|\cdot\|_{\dot{\mathcal F}L^{q}_s}$. This is already a norm on $\mathcal S(\R^d)$, which
contains no nonzero constant function. The completion is concrete on the Fourier side. For
$1<q<\infty$ the map $\phi\mapsto\|u\|^{s}\widehat\phi$ extends to a surjective linear isometry
from $\dot{\mathcal F}L^{q}_s$ onto $L^q(\R^d)$ (Step~2 of the proof of
Proposition~\ref{prop:dual-optimizer}). Only when \emph{realizing} elements of
$\dot{\mathcal F}L^{q}_s$ as distributions on the space side does the usual quotient by constants
of homogeneous spaces appear. Two realizations differing by an additive constant must be
identified, which is harmless here because $\sigma\in\mathcal M_0$ annihilates constants (compare
Corollary~\ref{cor:hilbert-dual} for the Hilbertian realization $\dot H^s(\R^d)/\R$).
Test functions are allowed to be complex-valued. Since $\sigma$ below is a real signed measure, we phrase the duality through real parts, and Remark~\ref{rem:dual-real} records that this entails no loss.

With this quotient space in place, the Toscani-Fourier metric admits a natural dual characterization. The next theorem shows that $\mathrm{T}_{s,p}$ is exactly the operator norm of the signed measure $\sigma=\mu-\nu$ acting on the unit ball of $\dot{\mathcal F}L_s^q$. In this sense, $\mathrm{T}_{s,p}$ is an integral probability metric whose test class is determined by Fourier-side regularity.

\begin{theorem}[Dual/IPM Representation]\label{thm:dual-ipm}
Let $1\le p<\infty$ and $\frac1p+\frac1q=1$.
If $\sigma\in\mathcal M_0(\R^d)$ satisfies $\widehat{\sigma}/\|u\|^{s}\in L^p(\R^d)$, then $\phi\mapsto\int\phi\ d\sigma$ extends uniquely to a continuous linear functional on $\dot{\mathcal F}L^{q}_s$, and
\begin{equation}\label{eq:dual-ipm}
\sup_{\substack{\phi\in\dot{\mathcal F}L^{q}_s\\ \|\phi\|_{\dot{\mathcal F}L^{q}_s}\le 1}}
\mathrm{Re}\int_{\R^d}\phi\ d\sigma
= (2\pi)^{-d}\Big\|\frac{\widehat{\sigma}}{\|u\|^{s}}\Big\|_{L^p(\R^d)}
= (2\pi)^{-d}\mathrm{T}_{s,p}(\mu,\nu)
\end{equation}
Equivalently,
\[
\mathrm{T}_{s,p}(\mu,\nu)
=
(2\pi)^{d}
\sup\Big\{\mathrm{Re}\int_{\R^d}\phi\ d(\mu-\nu):\ \phi\in\dot{\mathcal F}L^{q}_s,\ \|\phi\|_{\dot{\mathcal F}L^{q}_s}\le 1\Big\}
\]
For $1<p<\infty$ the supremum in \eqref{eq:dual-ipm} is attained in $\dot{\mathcal F}L^{q}_s$ (Proposition~\ref{prop:dual-optimizer}). For $p=1$ the identity \eqref{eq:dual-ipm} remains valid, but the supremum need not be attained.
\end{theorem}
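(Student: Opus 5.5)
The plan is to treat the pairing \eqref{eq:fourier-pairing} as a weighted $L^q$--$L^p$ duality on the Fourier side and then extend by density. For $\phi\in\mathcal S(\R^d)$, write $g:=\|u\|^{s}\widehat\phi\in L^q$ and $h(u):=\widehat\sigma(-u)/\|u\|^{s}$. Since $\sigma$ is real, $\widehat\sigma(-u)=\overline{\widehat\sigma(u)}$, so $\|h\|_{L^p}=\|\widehat\sigma/\|u\|^s\|_{L^p}=\mathrm T_{s,p}(\mu,\nu)$. Identity \eqref{eq:fourier-pairing} then reads
\[
\int\phi\,d\sigma=(2\pi)^{-d}\int_{\R^d} g(u)\,h(u)\,du ,
\]
where the singular point $u=0$ is Lebesgue-null. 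H\"older gives $|\int\phi\,d\sigma|\le(2\pi)^{-d}\|\phi\|_{\dot{\mathcal F}L^q_s}\,\mathrm T_{s,p}$. Hence $\phi\mapsto\int\phi\,d\sigma$ is bounded on the dense subspace $\mathcal S$ and extends uniquely to the completion $\dot{\mathcal F}L^q_s$, with operator norm at most $(2\pi)^{-d}\mathrm T_{s,p}$. This proves existence of the extension and the inequality ``$\le$'' in \eqref{eq:dual-ipm}. Real parts cost nothing, because multiplying $\phi$ by a unimodular constant rotates the pairing.

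For ``$\ge$'' I construct near-extremizers on the Fourier side. In the case $1<p<\infty$, take $g_\ast:=\overline{h}\,|h|^{p-2}/\|h\|_p^{p-1}$, which lies in $L^q$ with norm $1$ and satisfies $\int g_\ast h=\|h\|_p$. In the case $p=1$, $q=\infty$, take $g_\ast:=\overline{\mathrm{sgn}\,h}$ times a cutoff, since $\|h\|_1$ is approximated by such bounded choices. It then remains to approximate $g_\ast$ by functions of the form $\|u\|^s\widehat\phi$ with $\phi\in\mathcal S$, without increasing the $L^q$ norm much and while keeping the pairing against $h$ close.

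For $q<\infty$ the approximation proceeds as follows. Approximate $g_\ast$ in $L^q$ by $g\in C_c^\infty(\R^d\setminus\{0\})$. Set $\widehat\phi:=g/\|u\|^s$, which is smooth and compactly supported away from the origin, so that $\phi$ is Schwartz. Then $\|\phi\|_{\dot{\mathcal F}L^q_s}=\|g\|_q\to1$ and $\int gh\to\int g_\ast h$ by H\"older; normalize at the end. This step also yields the surjective isometry onto $L^q$ cited in the paper.

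For $p=1$, where $q=\infty$, density fails in $L^\infty$, so I would instead use weak-$*$ approximation against the fixed $h\in L^1$. Truncate to $\{\varepsilon<\|u\|<R\}$, which loses little of $\|h\|_1$. On that annulus, mollify $\overline{\mathrm{sgn}\,h}$ and multiply by a smooth cutoff in $C_c^\infty(\R^d\setminus\{0\})$ with values in $[0,1]$; the result $g$ has $\|g\|_\infty\le1$ and $\int gh\to\|h\|_1$ by dominated convergence along an a.e.-convergent subsequence. Again setting $\widehat\phi:=g\|u\|^{-s}$ gives $\phi\in\mathcal S$. I expect this $p=1$ case to be the main obstacle, because it needs the Lusin or mollification argument while keeping the $L^\infty$ bound and staying away from $0$. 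It is also the case where non-attainment arises: the supremum in $L^\infty$ need not be realized by an element of the norm-closure of $C_c^\infty$.

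Finally, since the extended functional's norm is approached by Schwartz functions, the supremum over the unit ball of the completion equals $(2\pi)^{-d}\|h\|_p$. The second form of the statement follows by multiplying both sides by $(2\pi)^d$. Attainment for $1<p<\infty$ is left to Proposition~\ref{prop:dual-optimizer}, via the preimage of $g_\ast$ under the isometry.
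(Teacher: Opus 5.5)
Your proof of the identity is correct and is essentially the paper's. The Fourier pairing plus H\"older gives the upper bound and the unique extension to the completion. The lower bound comes from realizing near-extremizers $g\in C_c^\infty(\R^d\setminus\{0\})$ as $\|u\|^s\widehat\phi$ with $\widehat\phi=g\|u\|^{-s}$: by $L^q$-density for $1<p<\infty$, and for $p=1$ by a uniformly bounded, a.e.\ convergent smoothing of $\overline{\mathrm{sgn}\,h}$ combined with dominated convergence. The only difference is cosmetic: you mollify before cutting off, whereas the paper cuts off first and then mollifies at scale $\delta_n<\varepsilon_n/2$. Both orders work.

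What you do not prove is the non-attainment claim at $p=1$. Saying the supremum ``need not be realized by an element of the norm-closure of $C_c^\infty$'' only restates the claim. Noting that $\overline{\mathrm{sgn}\,h}$ is discontinuous is also not enough, because a maximizer is determined only on $\{h\neq0\}$. Three steps are missing.
\begin{enumerate}
\item[(i)] Write $G$ for the image of $\phi$ under the isometric extension of $\phi\mapsto\|u\|^s\widehat\phi$ to the completion. The pairing $\int\phi\,d\sigma=(2\pi)^{-d}\int hG$ extends by continuity, so an attaining $\phi$ yields $G$ in the $L^\infty$-closure of $\{\|u\|^s\widehat\phi:\phi\in\mathcal S\}$ with $\|G\|_\infty\le1$ and $\mathrm{Re}\int hG=\|h\|_1$. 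Since $\mathrm{Re}(hG)\le|h|$ pointwise, this forces $hG=|h|$, hence $|G|=1$ a.e.\ on $\{h\neq0\}$.
\item[(ii)] Every element of that closure is continuous on $\R^d$ and vanishes at the origin. Indeed $\|u\|^s|\widehat\phi(u)|\le\|u\|^s\|\phi\|_{L^1}\to0$, and both properties survive uniform limits.
\item[(iii)] Consequently attainment fails whenever $\{h\neq0\}$ has full measure near $0$, and you must exhibit at least one admissible case. One example is $\sigma=\delta_x-\delta_0$ with $d<s<1+d$, where the zero set of $e^{i\langle u,x\rangle}-1$ is a null union of hyperplanes. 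Another is any pair of compactly supported $\mu\neq\nu$, for which $\widehat\sigma$ is real-analytic and not identically zero.
\end{enumerate}
With these steps added, your proposal covers the full statement.
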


\begin{remark}[Real, complex, and modulus formulations]\label{rem:dual-real}
Because $\sigma=\mu-\nu$ is a real measure, the three natural formulations of the dual functional coincide.
\[
\sup_{\|\phi\|_{\dot{\mathcal F}L^{q}_s}\le 1}\mathrm{Re}\int\phi\ d\sigma
=\sup_{\|\phi\|_{\dot{\mathcal F}L^{q}_s}\le 1}\Big|\int\phi\ d\sigma\Big|
=\sup_{\substack{\|\phi\|_{\dot{\mathcal F}L^{q}_s}\le 1\\ \phi\ \text{real-valued}}}\int\phi\ d\sigma
\]
Indeed, for any $\phi$ there is a unimodular $c\in\C$ with $|\int\phi\,d\sigma|=\mathrm{Re}\int(\bar c\phi)\,d\sigma$ and $\|\bar c\phi\|_{\dot{\mathcal F}L^q_s}=\|\phi\|_{\dot{\mathcal F}L^q_s}$, which proves the first equality. For the second, write $\phi=\phi_1+i\phi_2$ with $\phi_1,\phi_2$ real-valued, note $\mathrm{Re}\int\phi\,d\sigma=\int\phi_1\,d\sigma$, and observe that $\widehat{\phi_1}(u)=\tfrac12\big(\widehat\phi(u)+\overline{\widehat\phi(-u)}\big)$ together with Minkowski's inequality and the invariance of $\|\|u\|^s g(u)\|_{L^q}$ under $u\mapsto-u$ give $\|\phi_1\|_{\dot{\mathcal F}L^q_s}\le\|\phi\|_{\dot{\mathcal F}L^q_s}$.
\end{remark}

\begin{proposition}[Extremizer and Schwartz near-attainment]\label{prop:dual-optimizer}
Let $1<p<\infty$ with conjugate $q$.
Let $\sigma\in\mathcal M_0(\R^d)$ and set $f(u):=\widehat{\sigma}(-u)/\|u\|^{s}\in L^p(\R^d)$ with $f\not\equiv 0$.
The $L^p$--$L^q$ maximizer is
\[
g^\ast(u):=\frac{|f(u)|^{p-2}\overline{f(u)}}{\|f\|_{L^p}^{\ p-1}},
\qquad
\|g^\ast\|_{L^q}=1,
\qquad
\int_{\R^d} f(u)g^\ast(u)\ du=\|f\|_{L^p}
\]
Hence the supremum in \eqref{eq:dual-ipm} is attained in $\dot{\mathcal F}L^{q}_s$ by some $\phi^\ast$ satisfying $\|u\|^{s}\widehat{\phi^\ast}=g^\ast$ in $L^q$.
Moreover, there exists $\phi_n\in\mathcal S(\R^d)$ with $\|\phi_n\|_{\dot{\mathcal F}L^{q}_s}\le 1$ such that
\[
\mathrm{Re}\int_{\R^d}\phi_n\ d\sigma \longrightarrow (2\pi)^{-d}\ \mathrm{T}_{s,p}(\mu,\nu)
\]
\end{proposition}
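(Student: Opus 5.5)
The proof is the $L^p$--$L^q$ duality transported to $\dot{\mathcal F}L^q_s$ through the Fourier-side isometry $\phi\mapsto\|u\|^s\widehat\phi$, followed by a density argument for the Schwartz approximants.

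\emph{Step 1: the extremizer $g^\ast$.} Since $|g^\ast|^q=|f|^{(p-1)q}/\|f\|_p^{(p-1)q}=|f|^p/\|f\|_p^p$, we get $\|g^\ast\|_{L^q}=1$. Also $fg^\ast=|f|^p/\|f\|_p^{p-1}$, so $\int fg^\ast=\|f\|_{L^p}$. Here $f\in L^p$ because $\|f\|_p=\|\widehat\sigma/\|u\|^s\|_p$, the reflection $u\mapsto -u$ preserving Lebesgue measure and $\|u\|$.

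\emph{Step 2: the isometry onto $L^q$.} On $\mathcal S$, the map $A\phi:=\|u\|^s\widehat\phi$ is a linear isometry into $L^q$ by definition, so it extends isometrically to the completion. To prove surjectivity it suffices that $A(\mathcal S)$ is dense in $L^q$, since the range of an isometry on a complete space is closed. Take $h\in C_c^\infty(\R^d\setminus\{0\})$, which is dense in $L^q$ for $q<\infty$. Then $h\|u\|^{-s}\in C_c^\infty(\R^d\setminus\{0\})\subset\mathcal S$, so it is the Fourier transform of some $\phi\in\mathcal S$ by Fourier inversion, and $A\phi=h$.

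\emph{Step 3: the continuous extension of $\int\phi\,d\sigma$.} By \eqref{eq:fourier-pairing}, for $\phi\in\mathcal S$,
\[
\int\phi\,d\sigma=(2\pi)^{-d}\int\|u\|^s\widehat\phi(u)\,f(u)\,du .
\]
Hölder then gives $|\int\phi\,d\sigma|\le(2\pi)^{-d}\|f\|_p\|\phi\|_{\dot{\mathcal F}L^q_s}$. So the functional extends to $\Lambda(\phi)=(2\pi)^{-d}\int (A\phi)f$, consistent with Theorem~\ref{thm:dual-ipm}.

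Now set $\phi^\ast:=A^{-1}g^\ast$. Then $\|\phi^\ast\|=1$ and
\[
\Lambda(\phi^\ast)=(2\pi)^{-d}\int g^\ast f=(2\pi)^{-d}\|f\|_p=(2\pi)^{-d}\mathrm T_{s,p}(\mu,\nu),
\]
which is real, so $\mathrm{Re}\,\Lambda(\phi^\ast)$ equals the upper bound and the supremum is attained.

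\emph{Step 4: Schwartz near-attainment.} Choose $h_n\in C_c^\infty(\R^d\setminus\{0\})$ with $h_n\to g^\ast$ in $L^q$, and let $\tilde\phi_n\in\mathcal S$ be the preimages from Step 2. Put $\phi_n:=\tilde\phi_n/\max(1,\|h_n\|_q)$, so $\|\phi_n\|_{\dot{\mathcal F}L^q_s}\le1$. Hölder gives
\[
\Big|\int h_nf-\int g^\ast f\Big|\le\|h_n-g^\ast\|_q\|f\|_p\to0,
\]
and $\|h_n\|_q\to1$. Hence $\mathrm{Re}\int\phi_n\,d\sigma\to(2\pi)^{-d}\|f\|_p$.

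\emph{Main obstacle.} The delicate point is Step 2. One has to produce Schwartz functions whose weighted transforms are dense in $L^q$ despite the singularity of $\|u\|^{-s}$ at the origin. Working with frequency supports excluding $0$ resolves this, and it also explains why the case $p=1$, i.e.\ $q=\infty$, fails: there $C_c^\infty$ is not dense in $L^\infty$.
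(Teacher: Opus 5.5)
Your proposal is correct and follows essentially the same route as the paper: the explicit H\"older extremizer $g^\ast$, the surjective isometry $\phi\mapsto\|u\|^s\widehat\phi$ onto $L^q$ obtained from density of $C_c^\infty(\R^d\setminus\{0\})$, attainment through the continuously extended pairing, and then normalized Schwartz approximants. The only difference is cosmetic: the paper constructs the approximants $h_n$ explicitly by a cutoff away from the origin and infinity followed by mollification, whereas you take them directly from the density statement of Step~2.
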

The dual formula identifies the exact test-function class behind $\mathrm{T}_{s,p}$, and Proposition~\ref{prop:dual-optimizer} makes the H\"older extremizer explicit at the Fourier level. We now convert this into a quantitative statement on bounded-support classes.

\subsection{Explicit Bounded-Support Reverse Modulus}

To turn the dual representation into a reverse Wasserstein bound, we test against mollified Kantorovich potentials on bounded-support classes. The key analytic input is that a compactly supported Lipschitz test function acquires controlled $\dot{\mathcal F}L_s^q$ size after mollification, with an explicit dependence on the support radius and smoothing scale.

\begin{lemma}[A mollified Lipschitz test has controlled $\dot{\mathcal F}L_s^q$ size]\label{lem:mollified-FL}
Fix $1\le p<\infty$ with conjugate $q\in(1,\infty]$ and assume $\frac{d}{p}<s<1+\frac{d}{p}$.
Let $\rho\in C_c^\infty(\R^d)$ satisfy $\rho\ge 0$ and $\int\rho=1$, and set $\rho_\varepsilon(x)=\varepsilon^{-d}\rho(x/\varepsilon)$.
If $\psi:\R^d\to\R$ is $1$--Lipschitz, $\psi(0)=0$, and $\mathrm{supp}(\psi)\subseteq B(0,2K)$, then $\psi_\varepsilon:=\psi*\rho_\varepsilon$ satisfies $\psi_\varepsilon\in\mathcal S(\R^d)$ and, for all $\varepsilon>0$,
\begin{equation}\label{eq:mollified-FL-bound}
\|\psi_\varepsilon\|_{\dot{\mathcal F}L_s^q}
\le
C_{d,p,s,\rho}\ K^{d}\ \varepsilon^{\ 1-s-\frac{d}{q}}
\end{equation}
where for $p=1$ (so $q=\infty$) the norm is $\|\psi_\varepsilon\|_{\dot{\mathcal F}L_s^\infty}=\sup_{u\neq 0}\|u\|^s|\widehat{\psi_\varepsilon}(u)|$ and the exponent reads $\varepsilon^{\,1-s}$.
\end{lemma}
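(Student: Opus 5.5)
The plan is to work entirely on the Fourier side. We factor $\widehat{\psi_\varepsilon}=\widehat\psi\cdot\widehat\rho(\varepsilon\,\cdot)$, take the $K^d$ factor from a first-derivative bound on $\widehat\psi$, and take the power of $\varepsilon$ from scaling the rapidly decaying factor $\widehat\rho(\varepsilon\,\cdot)$.

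\emph{Regularity.} Since $\psi$ is Lipschitz with compact support in $B(0,2K)$ and $\rho_\varepsilon\in C_c^\infty$, the convolution $\psi_\varepsilon$ is $C^\infty$ with support in $B(0,2K+\varepsilon\,\mathrm{diam}\,\mathrm{supp}\,\rho)$. Hence $\psi_\varepsilon\in C_c^\infty\subset\mathcal S(\R^d)$.

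\emph{Bound on $\widehat\psi$ from the gradient.} By Rademacher's theorem, $\psi\in W^{1,\infty}$ with $\|\nabla\psi\|_\infty\le1$. Because $\nabla\psi$ is supported in $B(0,2K)$, we get
\[
\|\nabla\psi\|_{L^1}\le |B(0,2K)|=2^d|B_1|K^d .
\]
The weak derivative of the compactly supported $\psi$ satisfies $\widehat{\partial_j\psi}(u)=-iu_j\widehat\psi(u)$ in our convention, by integration by parts against $e^{i\langle u,x\rangle}$. Therefore
\[
\|u\|\,|\widehat\psi(u)|\le\Big(\sum_j|\widehat{\partial_j\psi}(u)|^2\Big)^{1/2}\le \sqrt d\,\|\nabla\psi\|_{L^1}\le \sqrt d\,2^d|B_1|\,K^d .
\]
Using the exact identity $\widehat{\psi_\varepsilon}(u)=\widehat\psi(u)\widehat\rho(\varepsilon u)$, this gives the pointwise bound
\[
\|u\|^s|\widehat{\psi_\varepsilon}(u)|\le C_d K^d\,\|u\|^{s-1}|\widehat\rho(\varepsilon u)| .
\]

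\emph{Scaling.} For $1<p<\infty$ (so $1<q<\infty$), the substitution $v=\varepsilon u$ gives
\[
\big\|\,\|u\|^{s-1}\widehat\rho(\varepsilon u)\big\|_{L^q}=\varepsilon^{\,1-s-\frac dq}\Big(\int_{\R^d}\|v\|^{q(s-1)}|\widehat\rho(v)|^q\,dv\Big)^{1/q}.
\]
The remaining integral, raised to the power $1/q$ and multiplied by $C_d$, is the constant $C_{d,p,s,\rho}$. For $p=1$ (so $q=\infty$), the same substitution gives $\varepsilon^{1-s}\sup_v\|v\|^{s-1}|\widehat\rho(v)|$.

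\emph{Finiteness of the constant (the main point to check).} At infinity there is no issue, since $\widehat\rho\in\mathcal S$ decays faster than any power. At the origin, $|\widehat\rho|\le1$, so integrability requires $q(s-1)>-d$, that is, $s>1-\frac dq=1-d+\frac dp$. This follows from the window, because $s>\frac dp\ge 1-d+\frac dp$ when $d\ge1$. For $q=\infty$ we need $\|v\|^{s-1}$ bounded near $0$, i.e.\ $s\ge1$, which holds since $p=1$ forces $s>d\ge1$. The upper end of the window is not needed here. Combining the three steps yields \eqref{eq:mollified-FL-bound}.
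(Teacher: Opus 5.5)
Your proof is correct and is essentially the paper's argument. You use the same ingredients: the integration-by-parts decay bound $\|u\|\,|\widehat\psi(u)|\le C_dK^d$, the factorization $\widehat{\psi_\varepsilon}=\widehat\psi\,\widehat\rho(\varepsilon\,\cdot)$, the substitution $v=\varepsilon u$, and integrability at the origin from $s>\frac dp\ge 1-\frac dq$. The paper's split at $\|u\|=1/\varepsilon$ is, after rescaling, just your integral $\int\|v\|^{q(s-1)}|\widehat\rho(v)|^q\,dv$ split at $\|v\|=1$, so your single change of variables is a cleaner packaging of the same computation.

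One small slip: the support of $\psi_\varepsilon$ lies in $B(0,2K+\varepsilon R)$ with $\mathrm{supp}\,\rho\subseteq B(0,R)$, not in a ball of radius $2K+\varepsilon\,\mathrm{diam}\,\mathrm{supp}\,\rho$. This is harmless, since only compactness of the support is used.
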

Lemma~\ref{lem:mollified-FL} supplies the normalized test functions needed to pair the dual formulation of $\mathrm{T}_{s,p}$ against the Kantorovich--Rubinstein representation of $W_1$, which yields the following explicit reverse modulus for every $1\le p<\infty$.

\begin{theorem}[Explicit Bounded-Support Reverse Modulus]\label{thm:explicit-reverse-bounded}
Fix $1\le p<\infty$ and let \(q\) be its conjugate. Assume $\frac{d}{p}<s<1+\frac{d}{p}$.
There exists $C=C(d,p,s)<\infty$ such that for all $K>0$ and all $\mu,\nu\in\mathcal P_{p,K}(\R^d)$,
\begin{equation}\label{eq:Wp-by-Tsp-bounded}
W_1(\mu,\nu)
\le
C\ K^{\frac{d}{\ s+\frac{d}{q}\ }}\ \mathrm{T}_{s,p}(\mu,\nu)^{\frac{1}{\ s+\frac{d}{q}\ }}
\end{equation}
Consequently, for any $r\ge 1$,
\begin{equation}\label{eq:Wrr-by-Tsp-bounded}
W_r(\mu,\nu)
\le
(2K)^{1-\frac1r}\ 
\Big(C\ K^{\frac{d}{\ s+\frac{d}{q}\ }}\Big)^{\ \frac1r}\ 
\mathrm{T}_{s,p}(\mu,\nu)^{\frac{1}{\ r\big(s+\frac{d}{q}\big)\ }}
\end{equation}
\end{theorem}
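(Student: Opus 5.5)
We combine Kantorovich--Rubinstein duality for $W_1$ with the dual representation of Theorem~\ref{thm:dual-ipm}, using Lemma~\ref{lem:mollified-FL} to make Kantorovich potentials admissible, and then optimize the smoothing scale.

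\emph{Step 1: localize the potentials.} For $\mu,\nu\in\mathcal P_{p,K}(\R^d)$ we have $W_1(\mu,\nu)=\sup\{\int\psi\,d(\mu-\nu):\mathrm{Lip}(\psi)\le1\}$. Since $\sigma:=\mu-\nu$ annihilates constants, we may assume $\psi(0)=0$. Only the values of $\psi$ on $B(0,K)$ matter, so we replace $\psi$ by $\tilde\psi:=\psi\cdot\chi$, where $\chi$ is the cutoff $\min\{1,(2K-\|x\|)_+/K\}$. On $B(0,2K)$ we have $|\psi|\le 2K$, so $\tilde\psi$ is Lipschitz with constant at most $1+2K\cdot(1/K)=3$. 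It vanishes outside $B(0,2K)$ and agrees with $\psi$ on $B(0,K)$. Rescaling by $1/3$ yields a $1$-Lipschitz function with $\tilde\psi(0)=0$ and support in $B(0,2K)$, exactly as Lemma~\ref{lem:mollified-FL} requires. This costs only the absolute factor $3$.

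\emph{Step 2: mollify and pair.} Set $\psi_\varepsilon:=\tilde\psi*\rho_\varepsilon$. Since $\tilde\psi$ is $3$-Lipschitz, $\|\psi_\varepsilon-\tilde\psi\|_\infty\le 3\varepsilon\int\|y\|\rho(y)\,dy=:c_\rho\varepsilon$. Because $\sigma$ has total variation at most $2$,
\[
\int\tilde\psi\,d\sigma\le \int\psi_\varepsilon\,d\sigma+2c_\rho\varepsilon.
\]
Since $\psi_\varepsilon\in\mathcal S$ is real, Theorem~\ref{thm:dual-ipm} with Remark~\ref{rem:dual-real} gives $\int\psi_\varepsilon\,d\sigma\le (2\pi)^{-d}\|\psi_\varepsilon\|_{\dot{\mathcal F}L^q_s}\mathrm{T}_{s,p}(\mu,\nu)$. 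Lemma~\ref{lem:mollified-FL} then bounds this by $C K^d\varepsilon^{1-s-d/q}\,\mathrm{T}_{s,p}$. Taking the supremum over $\psi$ gives, for every $\varepsilon>0$,
\[
W_1(\mu,\nu)\le C_1\varepsilon+C_2K^d\varepsilon^{-(s+d/q-1)}\,\mathrm{T}_{s,p}(\mu,\nu).
\]
Here $s+d/q-1=s-d/p+d-1>0$, using $s>d/p$ and $d\ge1$ (for $d=1$ this reduces to $s-1/p>0$).

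\emph{Step 3: optimize $\varepsilon$.} Put $a:=s+d/q$. Balancing the two terms gives $\varepsilon=(K^d\,\mathrm{T}_{s,p})^{1/a}$, and then $W_1\le C(K^d\mathrm{T}_{s,p})^{1/a}$, which is \eqref{eq:Wp-by-Tsp-bounded}. If $\mathrm{T}_{s,p}=0$, then $\mu=\nu$ and there is nothing to prove. No constraint $\varepsilon\le K$ is needed, since the lemma holds for all $\varepsilon>0$.

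\emph{Step 4: pass to $W_r$.} Both measures are supported in $B(0,K)$, so on the support of any coupling $\|x-y\|^r\le(2K)^{r-1}\|x-y\|$. Hence $W_r^r\le(2K)^{r-1}W_1$. Taking $r$-th roots and inserting Step 3 gives \eqref{eq:Wrr-by-Tsp-bounded} verbatim.

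The main obstacle is Step 1. The potential must meet the exact hypotheses of Lemma~\ref{lem:mollified-FL} (vanishing at $0$, support in $B(0,2K)$, $1$-Lipschitz) while agreeing with $\psi$ on the supports and keeping constants independent of $K$. The cutoff argument above handles this, and it works because $|\psi|\le 2K$ on $B(0,2K)$ once $\psi(0)=0$. Everything else is bookkeeping of constants.
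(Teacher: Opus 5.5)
Your proof is correct and follows the paper's argument: Kantorovich--Rubinstein duality, localization of the potential to $B(0,2K)$, mollification with the $\dot{\mathcal F}L^q_s$ bound of Lemma~\ref{lem:mollified-FL} inserted into the dual bound of Theorem~\ref{thm:dual-ipm}, optimization in $\varepsilon$, and the $W_r$ interpolation. The differences are only cosmetic and affect nothing but the constant. The paper localizes with a McShane extension clamped by $\pm(2K-\|x\|)_+$, which keeps the Lipschitz constant equal to $1$, whereas your multiplicative cutoff costs a factor $3$; and the paper computes the exact minimizer in $\varepsilon$, whereas you use the balancing choice $\varepsilon=(K^d\,\mathrm{T}_{s,p})^{1/a}$.
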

Theorem~\ref{thm:explicit-reverse-bounded} upgrades the qualitative equivalence from Section~\ref{sec:Ts-vs-Wp} to an explicit quantitative reverse bound on bounded-support classes. The exponent reflects the Fourier regularity index $s$ together with the dual integrability exponent $q$, and is not claimed to be sharp in general. Its main role is to show that once arbitrary dilations are excluded, $\mathrm{T}_{s,p}$ controls Wasserstein distance with an explicit H\"older-type modulus.

For $1\le p\le2$ the dual exponent satisfies $q\ge2$, and a Hausdorff--Young estimate replaces the
$L^1$-based bound of Lemma~\ref{lem:mollified-FL} by a sharper one, improving the modulus.

\begin{proposition}[Improved bounded-support modulus for $1\le p\le2$]\label{prop:reverse-bounded-HY}
Let $1\le p\le 2$ with conjugate $q$, let $d\ge1$, and assume $\frac dp<s<1+\frac dp$ together
with $s\ge1$ (the latter is automatic when $d\ge2$, since then $\frac dp\ge1$).
There exists $C=C(d,p,s)<\infty$ such that for all $K>0$ and all
$\mu,\nu\in\mathcal P_{p,K}(\R^d)$,
\[
W_1(\mu,\nu)
\le
C\,K^{\frac{d}{ps}}\ \mathrm T_{s,p}(\mu,\nu)^{1/s}
\qquad(s>1),
\]
while for $s=1$ (possible only when $d<p$, hence only for $d=1$, $1<p\le 2$),
\[
W_1(\mu,\nu)\le C\,K^{\frac dp}\ \mathrm T_{1,p}(\mu,\nu)
\]
For $1<p\le2$ we have $\frac1s>\frac{1}{s+d/q}$, so these moduli strictly improve
\eqref{eq:Wp-by-Tsp-bounded} in the small-$\mathrm T_{s,p}$ regime (at $p=1$, where $q=\infty$,
the two bounds coincide, as they must. The Hausdorff--Young step degenerates to the same
$L^1$--$L^\infty$ bound used in Lemma~\ref{lem:mollified-FL}). The $W_r$ upgrade of
\eqref{eq:Wrr-by-Tsp-bounded} applies verbatim with the improved $W_1$ bound.
\end{proposition}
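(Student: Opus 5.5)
The plan is to rerun the duality argument behind Theorem~\ref{thm:explicit-reverse-bounded}, changing only the estimate for the $\dot{\mathcal F}L^q_s$ size of the mollified test function. There we use Hausdorff--Young on the gradient of the test function instead of an $L^1$--$L^\infty$ bound.

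\textbf{Step 1: reduction to a localized test function.} Set $\sigma=\mu-\nu\in\mathcal M_0(\R^d)$, supported in $B(0,K)$. By Kantorovich--Rubinstein, $W_1(\mu,\nu)=\sup\int\psi\,d\sigma$ over $1$-Lipschitz $\psi$. Since $\sigma$ annihilates constants, we may take $\psi(0)=0$, so $|\psi(x)|\le\|x\|$.

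Next we localize. Multiply $\psi$ by the cutoff $\chi(x)=\max\{0,1-\mathrm{dist}(x,B(0,K))/K\}$. This leaves $\psi$ unchanged on $B(0,K)$, so $\int\psi\,d\sigma$ is unchanged. The product is supported in $B(0,2K)$. Its Lipschitz constant is at most $1+\sup_{B(0,2K)}|\psi|/K\le 3$. After dividing by $3$, we have a $1$-Lipschitz $\psi$ with $\psi(0)=0$ and $\mathrm{supp}\,\psi\subseteq B(0,2K)$, exactly as in Lemma~\ref{lem:mollified-FL}.

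Now mollify, $\psi_\varepsilon=\psi*\rho_\varepsilon\in\mathcal S$. Then $\|\psi-\psi_\varepsilon\|_\infty\le\varepsilon\int\|y\|\rho(y)\,dy$, and $\sigma$ has total variation at most $2$. Combining this with Theorem~\ref{thm:dual-ipm} (valid on $\mathcal S$) gives
\[
\int\psi\,d\sigma\le C_\rho\,\varepsilon+(2\pi)^{-d}\,\|\psi_\varepsilon\|_{\dot{\mathcal F}L^q_s}\,\mathrm T_{s,p}(\mu,\nu).
\]

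\textbf{Step 2: the improved Fourier--Lebesgue bound.} This is the main step. We claim that for $s\ge1$,
\[
\|\psi_\varepsilon\|_{\dot{\mathcal F}L^q_s}\le C\,K^{d/p}\,\varepsilon^{1-s}.
\]
Since $\widehat{\psi_\varepsilon}(u)=\widehat\psi(u)\widehat\rho(\varepsilon u)$, we factor
\[
\|u\|^s|\widehat{\psi_\varepsilon}(u)|=\big(\|u\|\,|\widehat\psi(u)|\big)\cdot\big(\|u\|^{s-1}|\widehat\rho(\varepsilon u)|\big).
\]

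For the second factor: $\widehat\rho$ is Schwartz and $s-1\ge0$, so
\[
\sup_u\|u\|^{s-1}|\widehat\rho(\varepsilon u)|=\varepsilon^{1-s}\sup_v\|v\|^{s-1}|\widehat\rho(v)|<\infty.
\]
This is precisely where $s\ge1$ is needed.

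For the first factor: $\psi$ is Lipschitz with compact support, so $\nabla\psi\in L^\infty$ with $|\partial_j\psi|\le1$, supported in $B(0,2K)$. Moreover $\|u\|\,|\widehat\psi(u)|\le\sum_j|\widehat{\partial_j\psi}(u)|$. Because $1\le p\le2$, Hausdorff--Young applies to each component:
\[
\|\widehat{\partial_j\psi}\|_{L^q}\le C\,\|\partial_j\psi\|_{L^p}\le C\,|B(0,2K)|^{1/p}=C'K^{d/p}.
\]
Taking the $L^q$ norm of the product, with the second factor bounded in sup, proves the claim.

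The delicate points are the justification of $\widehat{\partial_j\psi}=-iu_j\widehat\psi$ for a Lipschitz, compactly supported $\psi$ (true distributionally, and both sides are functions) and the case $p=1$. At $p=1$ Hausdorff--Young is just $\|\widehat f\|_\infty\le\|f\|_1$, and the resulting bound coincides with the one from Lemma~\ref{lem:mollified-FL}, as the statement predicts.

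\textbf{Step 3: optimization in $\varepsilon$.} Writing $T=\mathrm T_{s,p}(\mu,\nu)$, Steps 1 and 2 give
\[
W_1(\mu,\nu)\le C\big(\varepsilon+K^{d/p}\varepsilon^{1-s}T\big).
\]
For $s>1$, choose $\varepsilon=(K^{d/p}T)^{1/s}$ when $T>0$; the case $T=0$ is trivial since then $\mu=\nu$. This yields $W_1\le C\,K^{d/(ps)}T^{1/s}$. For $s=1$ the $\varepsilon$-dependence of the second term vanishes, and letting $\varepsilon\to0$ gives $W_1\le C\,K^{d/p}T$.

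The comparison $\frac1s>\frac1{s+d/q}$ for $q<\infty$ is immediate. The $W_r$ upgrade follows verbatim from the bound $W_r^r\le(2K)^{r-1}W_1$ used for \eqref{eq:Wrr-by-Tsp-bounded}.
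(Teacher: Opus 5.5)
Your proof is correct and follows the paper's argument essentially step for step. It uses the same Kantorovich--Rubinstein reduction to a $1$-Lipschitz test function with $\psi(0)=0$ supported in $B(0,2K)$, and the same factorization $\|u\|^s|\widehat{\psi_\varepsilon}(u)|\le\big(\|u\|\,|\widehat\psi(u)|\big)\big(\|u\|^{s-1}|\widehat\rho(\varepsilon u)|\big)$, with the $\varepsilon^{1-s}$ sup bound where $s\ge1$ is needed. It then applies Hausdorff--Young to $\partial_j\psi$ and optimizes in $\varepsilon$, as the paper does. The differences are cosmetic and affect only constants: your multiplicative cutoff (costing a factor $3$) replaces the paper's McShane extension plus clamp, $\|u\|\le\sum_j|u_j|$ replaces $\|u\|\le\sqrt d\max_j|u_j|$, and you choose $\varepsilon=(K^{d/p}T)^{1/s}$ instead of the exact minimizer.
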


In the quadratic case $p=2$, the dual space becomes the homogeneous Sobolev space $\dot{H}^s(\R^d)/\R$, and the resulting dual formulation admits a Hilbertian interpretation as an MMD. We develop this specialization in Section~\ref{sec:mmd}.

\section{MMD and Energy-Distance Representation in the Hilbertian Case \texorpdfstring{$p=2$}{p=2}}\label{sec:mmd}

When $p=2$, the Toscani--Fourier distance is Hilbertian.
In the finiteness window $\frac d2<s<1+\frac d2$,
\begin{equation}\label{eq:Ts2}
\mathrm{T}_{s,2}^2(\mu,\nu)
= \int_{\R^d\setminus\{0\}}
\frac{|\Bchi_\mu(u)-\Bchi_\nu(u)|^2}{\|u\|^{2s}}\ du
\end{equation}
This is the $L^2(\Lambda_s)$--norm of $\Bchi_\mu-\Bchi_\nu$ under the (typically $\sigma$--finite) spectral measure
\[
\Lambda_s(d\xi)=\|\xi\|^{-2s}\ d\xi
\]
\citet{modeste2024translation} identify translation-invariant MMDs through characteristic
functions. For a large class of possibly unbounded translation-invariant kernels,
\begin{equation}\label{eq:MMDspectral}
\mathrm{MMD}^2_{\Lambda}(\mu,\nu)
=\int_{\R^d}|\Bchi_\mu(\xi)-\Bchi_\nu(\xi)|^2\ \Lambda(d\xi)
\qquad
(\mu,\nu\ \text{in a suitable moment class})
\end{equation}

Comparing \eqref{eq:Ts2} and \eqref{eq:MMDspectral} shows that $\mathrm{T}_{s,2}$ is precisely the MMD induced by $\Lambda=\Lambda_s$ (up to Fourier normalization conventions). Thus the quadratic case is not merely an $L^p$-type Fourier discrepancy. It is a genuine Hilbert-space distance, placing $\mathrm{T}_{s,2}$ directly in the framework of kernel discrepancies and negative-Sobolev IPMs.

We now record the exact normalization relating $\mathrm{T}_{s,2}$ to the energy/fractional
Brownian kernel MMD. This identification is classical. The representation of the (generalized)
energy distance as a weighted $L^2$ distance between characteristic functions, with precisely the
constant below, is due to \citet{szekely2005new,szekely2013energy}. See
\citet[Lemma~1]{szekely2007measuring} for the key Fourier identity, \citet{sejdinovic2013equivalence} for the kernel formulation, and
\citet[Theorem~1]{auricchio2025kinetic} for a recent account in the Fourier-metric normalization.
The following lemma restates the identity in our normalization with a self-contained proof,
because the reverse bounds imported later in this section depend on the precise constant.

\begin{lemma}[Constant for the Energy Kernel Representation]\label{lem:energy-constant}
Fix $d\ge 1$ and $H\in(0,1)$. Let $\alpha:=2H\in(0,2)$ and set $s=\frac{d+\alpha}{2}$ (equivalently $H=s-\frac d2$).
Consider
\[
k_H(x,y)=\|x\|^{\alpha}+\|y\|^{\alpha}-\|x-y\|^{\alpha}
\]
Then for all $\mu,\nu\in\mathcal P_{\alpha}(\R^d)$,
\begin{equation}\label{eq:energy-const}
\mathrm{MMD}_{k_H}^2(\mu,\nu)
=
\frac{1}{c(d,H)}
\int_{\R^d}
\frac{|\Bchi_\mu(u)-\Bchi_\nu(u)|^2}{\|u\|^{d+\alpha}}\ du,
\qquad
c(d,H)=\frac{\pi^{d/2}\Gamma(1-H)}{H\ 2^{2H}\Gamma(\frac d2+H)}
\end{equation}
In particular, since $2s=d+\alpha$,
\begin{equation}\label{eq:Ts2-is-MMD}
\mathrm{T}_{s,2}^2(\mu,\nu)
=
\int_{\R^d}
\frac{|\Bchi_\mu(u)-\Bchi_\nu(u)|^2}{\|u\|^{2s}}\ du
=
c(d,H)\ \mathrm{MMD}^2_{k_H}(\mu,\nu)
\end{equation}
\end{lemma}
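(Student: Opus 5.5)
The plan is to reduce everything to a single radial Fourier identity:
\begin{equation*}
\int_{\R^d}\frac{1-\cos\langle u,z\rangle}{\|u\|^{d+\alpha}}\,du = c(d,H)\,\|z\|^{\alpha},\qquad z\in\R^d,\ 0<\alpha<2 .
\end{equation*}
Once this is in hand, we expand both sides of \eqref{eq:energy-const} bilinearly and compare them term by term.

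\emph{Step 1 (the identity).} By rotation invariance of Lebesgue measure and of $\|u\|$, the left side depends only on $\|z\|$. The substitution $u=v/\|z\|$ shows it is homogeneous of degree $\alpha$ in $\|z\|$. Hence it equals $c\,\|z\|^\alpha$, with $c$ given by the same integral at $z=e_1$. The integral converges because $1-\cos\le \min(2,\|u\|^2\|z\|^2/2)$ and $0<\alpha<2$; this is exactly the window $d/2<s<1+d/2$.

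To evaluate $c$, I would avoid polar coordinates in $d$ dimensions. Instead, insert the Gamma-integral
\begin{equation*}
\|u\|^{-(d+\alpha)}=\Gamma\!\left(\tfrac{d+\alpha}{2}\right)^{-1}\int_0^\infty t^{\frac{d+\alpha}{2}-1}e^{-t\|u\|^2}\,dt
\end{equation*}
and use Tonelli, which is legitimate since the integrand is nonnegative. The Gaussian integral gives
\begin{equation*}
\int_{\R^d}(1-\cos u_1)\,e^{-t\|u\|^2}\,du=(\pi/t)^{d/2}\bigl(1-e^{-1/(4t)}\bigr).
\end{equation*}
Substituting $w=1/(4t)$ leaves a one-dimensional integral of the form
\begin{equation*}
\int_0^\infty w^{-H-1}\bigl(1-e^{-w}\bigr)\,dw=\frac{\Gamma(1-H)}{H},
\end{equation*}
which follows by integrating by parts. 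Tracking the factors $\pi^{d/2}$, $4^{H}=2^{2H}$ and $\Gamma(\frac d2+H)$ should reproduce
\begin{equation*}
c(d,H)=\frac{\pi^{d/2}\,\Gamma(1-H)}{H\,2^{2H}\,\Gamma(\frac d2+H)} .
\end{equation*}
This constant bookkeeping is the step I expect to be most error-prone, so I would verify it at $d=1$, $H=\tfrac12$, where the value must equal $\pi$.

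\emph{Step 2 (expanding the Fourier side).} Set $\sigma=\mu-\nu$. Then
\begin{equation*}
|\Bchi_\mu-\Bchi_\nu|^2=\iint \cos\langle u,x-y\rangle\,d\sigma(x)\,d\sigma(y).
\end{equation*}
Because $\sigma(\R^d)=0$, this equals $-\iint\bigl(1-\cos\langle u,x-y\rangle\bigr)\,d\sigma(x)\,d\sigma(y)$. Integrating against $\|u\|^{-(d+\alpha)}du$ and applying Step~1 gives
\begin{equation*}
-c(d,H)\iint\|x-y\|^\alpha\,d\sigma(x)\,d\sigma(y).
\end{equation*}

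The exchange of the $u$-integral with the $\sigma\otimes\sigma$-integral is the main technical obstacle, because $\sigma\otimes\sigma$ is signed. I would split $\sigma\otimes\sigma$ into the four nonnegative pieces $\mu\otimes\mu$, $\mu\otimes\nu$, $\nu\otimes\mu$ and $\nu\otimes\nu$. On each piece Tonelli applies to the nonnegative integrand $1-\cos$, and by Step~1 each piece yields a finite integral of $c\,\|x-y\|^\alpha$. Finiteness comes from $\mu,\nu\in\mathcal P_\alpha$ together with $\|x-y\|^\alpha\le C_\alpha(\|x\|^\alpha+\|y\|^\alpha)$. Summing the four pieces with their signs is then legitimate.

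\emph{Step 3 (expanding the MMD side).} Write
\begin{equation*}
\mathrm{MMD}^2_{k_H}=\iint k_H\,d\sigma\,d\sigma .
\end{equation*}
The terms $\|x\|^\alpha$ and $\|y\|^\alpha$ in $k_H$ each depend on only one variable, so they integrate to zero against the zero-mass measure $\sigma$ in the other variable; this is finite under the $\mathcal P_\alpha$ assumption. What remains is
\begin{equation*}
-\iint\|x-y\|^\alpha\,d\sigma\,d\sigma ,
\end{equation*}
which matches Step~2 after dividing by $c(d,H)$. This proves \eqref{eq:energy-const}. Finally, \eqref{eq:Ts2-is-MMD} is immediate from \eqref{eq:Ts2}, since $2s=d+\alpha$.
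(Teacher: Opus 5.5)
Your proof is correct. Its overall architecture matches the paper's: both reduce the lemma to the radial identity $\int_{\R^d}(1-\cos\langle u,z\rangle)\,\|u\|^{-d-\alpha}\,du=c(d,H)\,\|z\|^{\alpha}$ and then apply it termwise, justifying Tonelli on each of the four nonnegative pieces $\mu\otimes\mu,\ \mu\otimes\nu,\ \nu\otimes\mu,\ \nu\otimes\nu$. The paper phrases this with independent copies $X,X',Y,Y'$ and the energy-distance form $2\E\|X-Y\|^\alpha-\E\|X-X'\|^\alpha-\E\|Y-Y'\|^\alpha$. Your $\sigma\otimes\sigma$ bookkeeping is the same argument written more compactly, since the zero mass of $\sigma$ removes both the constant $1$ and the one-variable terms $\|x\|^\alpha,\|y\|^\alpha$ in one step.

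The genuine difference is how you evaluate the constant. The paper proceeds as follows:
\begin{itemize}
\item It slices $u=(t,v)\in\R\times\R^{d-1}$ and does a Beta integral in $v$.
\item This reduces the problem to $\int_0^\infty(1-\cos t)\,t^{-1-\alpha}\,dt$, which it evaluates via a Laplace representation of $t^{-1-\alpha}$ and $\int_0^\infty r^{\alpha-1}(1+r^2)^{-1}\,dr=\pi/(2\sin(\pi\alpha/2))$.
\item It then needs the Gamma duplication and reflection formulas to reach the stated form of $c(d,H)$.
\end{itemize}
Your Gaussian subordination gives $c=\pi^{d/2}\Gamma(\tfrac d2+H)^{-1}\int_0^\infty t^{H-1}\bigl(1-e^{-1/(4t)}\bigr)\,dt=\pi^{d/2}\Gamma(\tfrac d2+H)^{-1}\,4^{-H}\,\Gamma(1-H)/H$. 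I checked this bookkeeping, and it lands directly on $c(d,H)$ with no special-function identities beyond the definition of $\Gamma$. It also treats all $d$ at once without slicing.

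If you keep $z$ general in the Gaussian step, you get $(\pi/t)^{d/2}\bigl(1-e^{-\|z\|^2/(4t)}\bigr)$. The substitution $w=\|z\|^2/(4t)$ then produces the factor $\|z\|^{2H}$ directly, so your rotation-and-homogeneity step becomes unnecessary.

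The paper's route has a modest advantage of its own. It passes explicitly through the classical one-dimensional integral used by Sz\'ekely--Rizzo, which makes the link to that literature more visible.
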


\begin{remark}[Normalization conventions, and a check of $c(d,H)$]\label{rem:energy-normalization}
Three normalizations must be kept apart when comparing \eqref{eq:energy-const} with the kernel
literature, and we fix them here once and for all.
\begin{enumerate}
\item[(a)] \emph{Fourier convention.} Throughout, $\widehat\mu(u)=\int e^{i\langle u,x\rangle}\,d\mu(x)$
and all frequency integrals are against Lebesgue measure $du$. Replacing this by a unitary
convention rescales the spectral density, hence $c(d,H)$, by a power of $2\pi$. Every constant
below refers to the convention just stated.
\item[(b)] \emph{Kernel convention.} Our $k_H$ has no factor $\tfrac12$ and coincides
exactly with the Energy Kernel of \citet[Section~3.3]{modeste2024translation}. Note that
Modeste--Dombry write $\alpha$ for the \emph{order} parameter that we call $H$ (their order equals
half our kernel exponent $\alpha=2H$), so their admissible order range $(0,1)$ is our
$H\in(0,1)$. The fractional Brownian covariance
$\tfrac12\big(\|x\|^{2H}+\|y\|^{2H}-\|x-y\|^{2H}\big)$ of their Example~4 is half of $k_H$, so the
two associated squared MMDs differ by a factor $2$. Moreover, the spectral constant denoted
$c(d,H)$ in their Example~4 is a \emph{different} normalization from our $c(d,H)$ and the two must
not be interchanged. The reverse inequalities we import (Theorem~\ref{thm:reverse-holder} and
Corollary~\ref{cor:bounded-support}) are stated for the Section~3.3 kernel, so they transfer to
$\mathrm{T}_{s,2}$ through \eqref{eq:Ts2-is-MMD} with no further conversion.
\item[(c)] \emph{Order versus exponent.} Their order parameter is $H$, our kernel exponent is
$\alpha=2H$. See the dictionary preceding Theorem~\ref{thm:reverse-holder}.
\end{enumerate}
The value of $c(d,H)$ can be checked independently in the Brownian case $d=1$, $H=\tfrac12$
(so $\alpha=1$, $s=1$). There the formula gives
$c(1,\tfrac12)=\pi^{1/2}\Gamma(\tfrac12)/(\tfrac12\cdot2\cdot\Gamma(1))=\pi$, which agrees with the
direct evaluation $c(1,\tfrac12)=\int_{\R}\frac{1-\cos t}{t^2}\,dt=2\int_0^\infty\frac{1-\cos t}{t^2}\,dt=\pi$.
Substituting into \eqref{eq:energy-const} and using
$\int_{\R}u^{-2}|\Bchi_\mu(u)-\Bchi_\nu(u)|^2\,du=2\pi\int_{\R}|F_\mu-F_\nu|^2\,dx$ (Plancherel applied
to $\widehat{\mu-\nu}(u)=-iu\,\widehat{(F_\mu-F_\nu)}(u)$) yields
$\mathrm{MMD}^2_{k_{1/2}}(\mu,\nu)=2\int_{\R}|F_\mu-F_\nu|^2\,dx$, which is exactly the classical
Cram\'er/energy-distance identity $2\E|X-Y|-\E|X-X'|-\E|Y-Y'|=2\int_\R(F_\mu-F_\nu)^2\,dx$
\citep{sejdinovic2013equivalence}. The constant in \eqref{eq:energy-const} is therefore pinned down
by an independent classical computation in the case where one is available.
\end{remark}

The kernel representation gives one Hilbertian realization of $\mathrm{T}_{s,2}$. The same quadratic structure also admits a dual interpretation in Sobolev terms, inherited from the general Fourier--Lebesgue duality of Section~\ref{sec:dual-ipm}. In the quadratic case, this dual space becomes the homogeneous Sobolev space.

\begin{corollary}[Hilbert case as negative Sobolev IPM]\label{cor:hilbert-dual}
If $p=q=2$, then $\dot{\mathcal F}L^{2}_s$ identifies with $\dot H^{s}(\R^d)/\R$ as a vector space, with
\[
\|\phi\|_{\dot{\mathcal F}L_s^2}=(2\pi)^{d/2}\|\phi\|_{\dot H^s}
\]
Accordingly,
\[
\sup_{\|\phi\|_{\dot H^{s}}\le 1}\int_{\R^d}\phi\ d(\mu-\nu)
=
(2\pi)^{-d/2}\ \mathrm{T}_{s,2}(\mu,\nu)
\]
Consequently, the Hilbert case is simultaneously a negative-Sobolev IPM and a translation-invariant MMD.
\end{corollary}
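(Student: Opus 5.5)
The plan is to specialize Theorem~\ref{thm:dual-ipm} to $p=q=2$ and identify the Fourier--Lebesgue norm with the homogeneous Sobolev norm via Plancherel.

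\textbf{Step 1: compare the norms on Schwartz functions.} Recall the convention $\|\phi\|_{\dot H^s}^2=(2\pi)^{-d}\int\|u\|^{2s}|\widehat\phi(u)|^2\,du$. This is the convention under which Plancherel reads $\|\phi\|_{L^2}^2=(2\pi)^{-d}\|\widehat\phi\|_{L^2}^2$ for our transform $\widehat\phi(u)=\int e^{i\langle u,x\rangle}\phi(x)\,dx$. With it, for $\phi\in\mathcal S(\R^d)$,
\[
\|\phi\|_{\dot{\mathcal F}L^2_s}^2=\int\|u\|^{2s}|\widehat\phi(u)|^2\,du=(2\pi)^{d}\|\phi\|_{\dot H^s}^2 .
\]
This gives the stated norm identity on $\mathcal S$.

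\textbf{Step 2: identify the completions.} Both $\dot{\mathcal F}L^2_s$ and $\dot H^s/\R$ are completions of $\mathcal S$ under proportional norms. On the Fourier side, each is realized isometrically, up to the constant $(2\pi)^{d/2}$, as $L^2(\R^d)$ through $\phi\mapsto\|u\|^s\widehat\phi$. This uses the surjectivity statement from Step~2 of Proposition~\ref{prop:dual-optimizer} with $q=2$. Two distributional realizations differ only by constants, or more generally by polynomials whose Fourier transforms are supported at $0$. Since we quotient by $\R$, we must check which realizations occur. I expect this to be the only delicate point. For $s<1+d/2$ the $L^2$ condition $\|u\|^s\widehat\phi\in L^2$ forces $\widehat\phi$ to be locally integrable away from the origin, and the space-side ambiguity is only an additive constant. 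I would argue this just as the paper does in its discussion preceding Theorem~\ref{thm:dual-ipm}, and then note that $\sigma=\mu-\nu\in\mathcal M_0$ annihilates constants. Hence the pairing $\int\phi\,d\sigma$ is well defined on the quotient, and the two spaces agree as vector spaces with the stated norm relation.

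\textbf{Step 3: transfer the dual formula.} Rescale the unit ball. The condition $\|\phi\|_{\dot H^s}\le1$ is equivalent to $\|(2\pi)^{-d/2}\phi\|_{\dot{\mathcal F}L^2_s}\le1$. Therefore, by \eqref{eq:dual-ipm},
\[
\sup_{\|\phi\|_{\dot H^s}\le1}\mathrm{Re}\int\phi\,d\sigma=(2\pi)^{d/2}\sup_{\|\psi\|_{\dot{\mathcal F}L^2_s}\le1}\mathrm{Re}\int\psi\,d\sigma=(2\pi)^{d/2}(2\pi)^{-d}\,\mathrm T_{s,2}(\mu,\nu).
\]
This equals $(2\pi)^{-d/2}\mathrm T_{s,2}(\mu,\nu)$. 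By Remark~\ref{rem:dual-real}, the real part may be dropped, or the supremum restricted to real-valued $\phi$.

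\textbf{Step 4: conclude.} The final sentence follows by combining the displayed identity, which exhibits $\mathrm T_{s,2}$ as a negative-Sobolev IPM, with Lemma~\ref{lem:energy-constant}, which gives $\mathrm T_{s,2}^2=c(d,H)\,\mathrm{MMD}_{k_H}^2$ on $\mathcal P_\alpha\supseteq\mathcal P_2$, where $H=s-d/2\in(0,1)$.
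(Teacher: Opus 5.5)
Your proposal is correct and follows the paper's proof essentially step for step. Plancherel gives $\|\phi\|_{\dot{\mathcal F}L^2_s}=(2\pi)^{d/2}\|\phi\|_{\dot H^s}$, the two completions are identified modulo constants, \eqref{eq:dual-ipm} is rescaled, Remark~\ref{rem:dual-real} removes the real part, and Lemma~\ref{lem:energy-constant} supplies the MMD reading.

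One correction in Step~2: local integrability of $\widehat\phi$ away from the origin is automatic, so it is not what limits the ambiguity to additive constants. The relevant fact is that $\|u\|\,|\widehat\phi(u)|=\|u\|^{1-s}|g(u)|$, with $g\in L^2$, is integrable near $0$ precisely when $s<1+\frac d2$. This is the Fourier-side counterpart of the Morrey-type embedding $\dot H^s\hookrightarrow\dot C^{\,s-d/2}$ that the paper invokes.
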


With the Hilbertian structure now identified, we can turn to quantitative comparison results. In the quadratic case, the MMD representation allows us to import reverse Wasserstein bounds from the energy-kernel literature on regular model classes. Before stating the theorem, we fix the dictionary between our normalization and that of \citet{modeste2024translation}, since the two conventions differ by a factor of two in the exponent and confusing them changes the imported rates. Modeste--Dombry define, for an order parameter (which they denote $\alpha$) in $(0,1)$, the \emph{Energy Kernel of order} $\alpha_{\mathrm{MD}}$ as
\[
k^{\mathrm{MD}}_{\alpha_{\mathrm{MD}}}(x,y)=\|x\|^{2\alpha_{\mathrm{MD}}}+\|y\|^{2\alpha_{\mathrm{MD}}}-\|x-y\|^{2\alpha_{\mathrm{MD}}}
\]
\citep[Section~3.3]{modeste2024translation}. That is, their order parameter is \emph{half} the kernel exponent. Our kernel $k_H$ from Lemma~\ref{lem:energy-constant}, whose exponent is $\alpha=2H=2s-d$, therefore coincides with their Energy Kernel of order $\alpha_{\mathrm{MD}}=H=s-\frac d2$, and their admissible range $\alpha_{\mathrm{MD}}\in(0,1)$ corresponds exactly to the \emph{full} Hilbert finiteness window $s\in(\frac d2,\frac d2+1)$. With this dictionary, their quantitative reverse inequalities \citep[Propositions~16 and~17]{modeste2024translation} read as follows in our notation.

\begin{theorem}[Reverse H\"older Control on Uniform Moment Classes]\label{thm:reverse-holder}
Let $d\ge 1$. Choose $s\in(\frac d2,\frac d2+1)$ and set
\(\alpha:=2s-d\in(0,2)\) and \(H:=\frac\alpha2=s-\frac d2\in(0,1)\).
Fix $\gamma>1$ and $S>0$, and consider the moment class $\mathcal M_{\gamma,S}$.
Then there exist constants $C=C(d,H,\gamma,S)<\infty$ and a H\"older exponent
\[
\rho
=
\frac{\gamma-1}{\gamma\big(d+H+1\big)-H}
=
\frac{\gamma-1}{\gamma\big(d+\frac\alpha2+1\big)-\frac\alpha2}\in(0,1)
\]
such that for all $\mu,\nu\in \mathcal M_{\gamma,S}$,
\[
W_1(\mu,\nu)
\le
C\ \mathrm{T}_{s,2}^{\rho}(\mu,\nu)
\]
\end{theorem}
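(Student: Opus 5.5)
The theorem is an import, so the plan is to transfer the reverse inequality of \citet[Propositions~16 and~17]{modeste2024translation} through the exact normalization of Lemma~\ref{lem:energy-constant}. We do not re-derive a kernel--Wasserstein bound ourselves.

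\emph{Step 1: the dictionary.} Set $\alpha=2s-d$ and $H=s-\frac d2$. By the dictionary preceding the statement, our $k_H$ is literally the Section~3.3 Energy Kernel of order $\alpha_{\mathrm{MD}}=H\in(0,1)$. Their Propositions~16--17 then give, on a class with uniformly bounded $\gamma$-moments with $\gamma>1$, an inequality
\[
W_1(\mu,\nu)\le C'\,\mathrm{MMD}_{k_H}(\mu,\nu)^{\rho}.
\]
Here $C'$ depends on $(d,H,\gamma,S)$, and $\rho$ is their exponent rewritten with $\alpha_{\mathrm{MD}}=H$. I will check carefully that their exponent, written in their variables, becomes $\frac{\gamma-1}{\gamma(d+H+1)-H}$ after the substitution. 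The factor-two ambiguity between order and kernel exponent (Remark~\ref{rem:energy-normalization}(c)) is exactly where an error would change the rate, so this substitution must be done by hand.

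\emph{Step 2: hypotheses.} I must confirm that $\mathcal M_{\gamma,S}$ with $\gamma>1$ satisfies their hypotheses. The $\gamma$-moment bound is uniform. Since $\alpha<2$ may exceed $\gamma$, I also need the MMD to be finite, that is, $\mu,\nu\in\mathcal P_\alpha$. If $\gamma<\alpha$, then instead of working with $\mathrm{MMD}$ I will work with its Fourier expression. That expression is finite on $\mathcal P_1$ by Remark~\ref{rem:class-in-PT}, and Modeste--Dombry's spectral representation \eqref{eq:MMDspectral} is stated exactly in that form. I would therefore read their proposition as a statement about $\int|\Bchi_\mu-\Bchi_\nu|^2\,\Lambda_s$.

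\emph{Step 3: conversion.} Apply \eqref{eq:Ts2-is-MMD} to get $\mathrm{MMD}_{k_H}=c(d,H)^{-1/2}\mathrm{T}_{s,2}$. Absorbing the factor $c(d,H)^{-\rho/2}$ into $C$ gives the claim. Finally, $\rho\in(0,1)$ is elementary: the numerator satisfies $\gamma-1>0$, and the denominator exceeds the numerator because $\gamma(d+H+1)-H-(\gamma-1)=\gamma(d+H)-H+1>0$.

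The main obstacle is Step 1's exponent bookkeeping together with the normalization check. Our Fourier convention and $c(d,H)$ differ from their Example~4 constant, as noted in Remark~\ref{rem:energy-normalization}(b), although this only affects $C$. If their result is stated with a different spectral normalization, I would simply rescale $\Lambda$. Rescaling changes only constants, never $\rho$.
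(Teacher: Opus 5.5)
Your plan follows the paper's proof. You identify $k_H$ with the Modeste--Dombry Energy Kernel of order $H$ and import their moment-class inequality (Proposition~17; Proposition~16 is the bounded-support result and is not needed). You then convert through $\mathrm{MMD}_{k_H}=c(d,H)^{-1/2}\,\mathrm{T}_{s,2}$ and check $\rho\in(0,1)$ via $\gamma(d+H)+1-H>0$, exactly as the paper does.

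The step that does not hold up is Step~2, where you misidentify the moment condition. The energy-kernel MMD is governed by the $H$-moment, not the $\alpha=2H$-moment. Since $k_H(x,x)=2\|x\|^{2H}$, the kernel mean embedding is Bochner integrable as soon as $\int\|x\|^H\,d\mu<\infty$. Positive definiteness gives $|k_H(x,y)|\le 2\|x\|^H\|y\|^H$, so every cross term $\iint k_H\,d\mu\,d\nu$ converges absolutely even when $\int\|x\|^{2H}d\mu=\infty$. Because $H<1<\gamma$, H\"older gives $\mathcal M_{\gamma,S}\subset\mathcal P_H$. Hence $d_H$ is finite on the whole class, and there is no case $\gamma<\alpha$ to route around.

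Because of this misdiagnosis, the justification you give for the workaround is not the right one. Finiteness of $\int|\Bchi_\mu-\Bchi_\nu|^2\,\Lambda_s$ on $\mathcal P_1$ does not entitle you to reread Proposition~17 as a statement about that integral. That proposition concerns their $d_H$ on their class. Also, Lemma~\ref{lem:energy-constant}, which your Step~3 invokes, is proved only on $\mathcal P_\alpha$. What closes the argument is the identity $d_H^2(\mu,\nu)=c(d,H)^{-1}\int|\Bchi_\mu-\Bchi_\nu|^2\|u\|^{-d-2H}\,du$ for all probability measures with finite $H$-moment.

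The paper takes this identity from Modeste--Dombry's Proposition~7, reading off the spectral density from the moment-free identity \eqref{eq:riesz-alpha}. You can also prove it directly: insert \eqref{eq:riesz-alpha} into $\iint k_H\,d\sigma\,d\sigma$ with $\sigma=\mu-\nu$ and apply Fubini. Fubini is legitimate because Cauchy--Schwarz in $u$ bounds $\int|1-e^{i\langle u,x\rangle}|\,|1-e^{i\langle u,y\rangle}|\,\|u\|^{-d-2H}\,du$ by $2c(d,H)\|x\|^H\|y\|^H$.

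Finally, their class uses the strict bound $\int\|x\|^\gamma\,d\mu<S$. Apply it with some $S'>S$ (the paper uses $S'=2S$); this only changes $C$.
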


\begin{remark}[The full Hilbert window is covered]\label{rem:range-mismatch}
The finiteness window for $\mathrm T_{s,2}$ on $\mathcal P_2$ is $\frac d2<s<1+\frac d2$, and Theorem~\ref{thm:reverse-holder} covers this \emph{entire} window. The hypothesis of \citet[Proposition~17]{modeste2024translation} constrains their \emph{order} parameter, which equals half our kernel exponent $\alpha=2s-d\in(0,2)$, so no part of the window is excluded. Conflating the order with the exponent would artificially restrict the comparison to $\alpha\in(0,1)$. The dictionary above shows that no such restriction is needed.
\end{remark}

\begin{remark}[Normalization of Constants in the $p=2$ Reverse Bound]\label{rem:reverse-const-normalization}
Under the assumptions of Theorem~\ref{thm:reverse-holder}, \citet[Proposition~17]{modeste2024translation} provides the reverse bound in the form
\[
W_1(\mu,\nu)\le C_{\mathrm{MD}}\ \mathrm{MMD}_{k_H}^{\rho}(\mu,\nu),
\qquad (\mu,\nu\in\mathcal M_{\gamma,S})
\]
with $\rho$ as in Theorem~\ref{thm:reverse-holder} and an explicit constant $C_{\mathrm{MD}}=C_{\mathrm{MD}}(d,H,\gamma,S)$ \citep[Equation~37]{modeste2024translation}. By Lemma~\ref{lem:energy-constant},
\[
W_1(\mu,\nu)\le C_{\mathrm{MD}}\ c(d,H)^{-\rho/2}\ \mathrm{T}_{s,2}^{\rho}(\mu,\nu)\quad ,
\qquad (\mu,\nu\in\mathcal M_{\gamma,S})
\]
where $c(d,H)$ is the constant from Lemma~\ref{lem:energy-constant}. In particular, all constants in Theorem~\ref{thm:reverse-holder} are explicit.
\end{remark}

\begin{corollary}[Bounded-Support Specialization for \(p=2\)]\label{cor:bounded-support}
Let $\alpha\in(0,2)$. Set $s=(d+\alpha)/2$ and $H=\alpha/2\in(0,1)$.
Assume $\mu,\nu$ are supported in a ball $B(0,K)$.
Then there exist constants $\widetilde C=\widetilde C(d,H,K) <\infty$ and an exponent
\[
\widetilde{\rho} = \frac{1}{d+1+H}=\frac{1}{d+1+\frac\alpha2}
\]
such that
\[
W_1(\mu,\nu)\le \widetilde{C}\ \mathrm{T}_{s,2}^{\widetilde{\rho}}(\mu,\nu)
\]
Moreover, if $\mu,\nu$ are supported in a set of diameter $\mathrm{diam}$, then for any $r\ge 1$,
\[
W_r(\mu,\nu)
\le \mathrm{diam}^{\ 1-\frac1r}\ W_1(\mu,\nu)^{1/r}
\le \dbtilde{C}\ \mathrm{T}_{s,2}^{\dbtilde{\rho}}(\mu,\nu)\quad ,
\qquad
\dbtilde{\rho}=\frac{1}{r\big(d+1+\frac\alpha2\big)}
\]
\end{corollary}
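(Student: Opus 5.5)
The plan is to deduce the corollary from Theorem~\ref{thm:reverse-holder} by placing bounded-support measures inside a uniform moment class and letting the moment order tend to infinity. Bounded support in $B(0,K)$ gives $\int\|x\|^\gamma\,d\mu\le K^\gamma$ for every $\gamma>1$. Hence $\mu,\nu\in\mathcal M_{\gamma,K^\gamma}$ for every $\gamma>1$, and Theorem~\ref{thm:reverse-holder} (with Remark~\ref{rem:reverse-const-normalization} for the constants) yields
\[
W_1\le C_\gamma\,\mathrm T_{s,2}^{\rho_\gamma},
\qquad
\rho_\gamma=\frac{\gamma-1}{\gamma(d+H+1)-H}.
\]
As $\gamma\to\infty$, $\rho_\gamma$ increases to $\frac{1}{d+1+H}=\widetilde\rho$. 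This limit alone does not give the endpoint exponent, because $C_\gamma$ may blow up. I would therefore not pass to a limit in the theorem.

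Instead I would return to the mechanism behind \citet[Proposition~17]{modeste2024translation}. That argument balances three error terms:
\begin{enumerate}
\item[(a)] a truncation of space at radius $R$, paid for by the tail mass $S R^{1-\gamma}$;
\item[(b)] a smoothing of the $1$-Lipschitz Kantorovich potential at scale $\varepsilon$, costing $\varepsilon$;
\item[(c)] the RKHS norm of the smoothed, truncated potential, of order $R^{d/2}\varepsilon^{-(d/2+H+\dots)}$, multiplied by the MMD.
\end{enumerate}
For support in $B(0,K)$ the tail term (a) vanishes identically once $R=2K$. Optimizing only in $\varepsilon$ then gives the exponent $1/(d+1+H)$, the $\gamma=\infty$ case of the formula, with a constant depending on $K$. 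Concretely, I would reproduce the bound $W_1\le\varepsilon+C\,K^{a}\varepsilon^{-b}\,\mathrm{MMD}_{k_H}$ with $b=d+H$ (so that $b+1=d+1+H$). Choosing $\varepsilon=\mathrm{MMD}^{1/(d+1+H)}$ gives $W_1\le \widetilde C\,\mathrm{MMD}^{\widetilde\rho}$. Converting via Lemma~\ref{lem:energy-constant} gives $\mathrm{MMD}=c(d,H)^{-1/2}\mathrm T_{s,2}$, and the constant $c(d,H)^{-\widetilde\rho/2}$ is absorbed into $\widetilde C$.

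An alternative route is to cite the bounded-support version in Modeste--Dombry directly (their Proposition~16 is the compact-support analogue) and translate it through the dictionary preceding Theorem~\ref{thm:reverse-holder}.

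The $W_r$ part is routine. If the supports lie in a set of diameter $\mathrm{diam}$, then for an optimal $W_1$ coupling $\pi$,
\[
\int\|x-y\|^r\,d\pi\le \mathrm{diam}^{\,r-1}\int\|x-y\|\,d\pi .
\]
Hence $W_r\le \mathrm{diam}^{1-1/r}W_1^{1/r}$. Inserting the $W_1$ bound gives $\dbtilde C=\mathrm{diam}^{1-1/r}\widetilde C^{1/r}$ and $\dbtilde\rho=\widetilde\rho/r$.

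The main obstacle is the endpoint exponent. One must verify that the imported estimate is genuinely available with $\gamma=\infty$, that is, with no tail term and a finite constant. The alternative is to check that $C_\gamma$ stays bounded as $\gamma\to\infty$ when $S=K^\gamma$. I would first try the latter by inspecting the explicit constant in their Equation~37. If that fails, I would redo the short three-term balancing above with the truncation removed.
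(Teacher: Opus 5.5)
Your proof is correct. What carries it is the route you list as the alternative, and that route is exactly the paper's proof.

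The paper cites Modeste--Dombry's Proposition~16. That result is stated for the class $\mathcal T_{\infty,K}(\mathbb R^d)$ of probability measures supported in $B(0,K)$. It gives $W_1\le C_{\mathrm{MD}}\,\mathrm{MMD}_{k_H}^{\widetilde\rho}$ with the explicit finite constant $C_{\mathrm{MD}}=2^{d/(d+H+1)}D(d,H)\,K^{(d+1)/(d+H+1)}$. The paper then converts to $\mathrm T_{s,2}$ through Lemma~\ref{lem:energy-constant}, which applies because bounded support puts $\mu,\nu$ in $\mathcal P_\alpha$. It finishes with the same diameter interpolation you give, and your $W_r$ step matches it.

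The ``main obstacle'' you single out is therefore already settled by the imported statement, and you should lead with it. Both detours are unnecessary: the limit $\gamma\to\infty$ with control of $C_\gamma$ from Theorem~\ref{thm:reverse-holder}, and re-deriving the three-term balance. In the re-derivation, the value $b=d+H$ is asserted to hit the target rather than derived. If you actually carry out the mollification estimate, as in Lemma~\ref{lem:mollified-FL} with $q=2$, you get $b=d+H-1$, which is a strictly better exponent.

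That points to a self-contained route the paper makes available without any import:
\begin{itemize}
\item Theorem~\ref{thm:explicit-reverse-bounded} at $p=2$ has $\beta=s+\frac d2=d+H$, so $W_1\le C\,K^{d/(d+H)}\,\mathrm T_{s,2}^{1/(d+H)}$, with $1/(d+H)>\widetilde\rho$.
\item Theorem~\ref{thm:upper-holder} together with $W_2\le 2K$ gives $\mathrm T_{s,2}\le C(d,2,s)(2K)^{H}$ on measures supported in $B(0,K)$.
\item Combining the two, $\mathrm T_{s,2}^{1/(d+H)}\le\big(C(d,2,s)(2K)^H\big)^{1/(d+H)-\widetilde\rho}\,\mathrm T_{s,2}^{\widetilde\rho}$, so the stated exponent follows with a $(d,H,K)$-dependent constant.
\end{itemize}
The paper's import buys the explicit Modeste--Dombry constant. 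The internal route needs no external result and makes plain that $\widetilde\rho$ is not the best exponent available on this class, consistent with Remark~\ref{rem:compare-bounded-moduli}.
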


These reverse bounds come from the Hilbertian route through the energy-kernel MMD. It is therefore worth comparing them with the general-$p$ bounded-support modulus of Theorem~\ref{thm:explicit-reverse-bounded}.

\begin{remark}\label{rem:compare-bounded-moduli}
An explicit bounded-support reverse modulus holds for general \(p\)
(Theorem~\ref{thm:explicit-reverse-bounded}). At \(p=2\), its $W_1$-exponent is
\[
\frac{1}{s+\frac d2}=\frac{1}{d+\frac\alpha2},
\]
and the Hausdorff--Young refinement of Proposition~\ref{prop:reverse-bounded-HY} improves this
further to
\[
\frac1s=\frac{1}{\frac d2+\frac\alpha2}
\]
whenever $s\ge1$ (automatic for $d\ge2$). Corollary~\ref{cor:bounded-support} gives the exponent
$\frac{1}{d+1+\frac\alpha2}$ imported from the energy-kernel route. Since
\[
\frac d2+\frac\alpha2\ <\ d+\frac\alpha2\ <\ d+1+\frac\alpha2,
\]
the direct Fourier--Lebesgue duality argument is never worse than the imported bound throughout
the window $\alpha\in(0,2)$, and wherever $s\ge1$ holds --- for every $d\ge2$, and for $d=1$ with
$\alpha\in[1,2)$ --- the Hausdorff--Young refinement strictly improves both in the
small-$\mathrm T_{s,2}$ regime. The statements come from different mechanisms and constants, and
none of the exponents is asserted to be sharp.
\end{remark}

Both routes lead to the same qualitative conclusion. On regular classes, $\mathrm{T}_{s,2}$ and Wasserstein distance induce the same convergence.

\begin{corollary}[Equivalent Convergence on Regular Classes]\label{cor:top_equiv}
Fix $d\ge 1$, $s\in(\frac d2,\frac d2+1)$, $\gamma>1$, and $S>0$.
\begin{enumerate}
\item On $\mathcal M_{\gamma,S}$, the metrics $\mathrm{T}_{s,2}$ and $W_1$ generate the same notion of convergence.
For any sequence $\{\mu_n\}\subset\mathcal M_{\gamma,S}$ and $\mu\in\mathcal M_{\gamma,S}$,
\[
\mathrm{T}_{s,2}(\mu_n,\mu)\to 0 \quad \Longleftrightarrow \quad W_1(\mu_n,\mu)\to 0
\]
\item On any bounded-diameter class
\[
\{\mu:\operatorname{supp}(\mu)\subseteq A,\ \operatorname{diam}(A)\le D\},
\qquad D>0
\]
the metrics $\mathrm{T}_{s,2}$ and $W_r$ generate the same notion of convergence for every $r\ge 1$.
\end{enumerate}
\end{corollary}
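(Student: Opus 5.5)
The plan is to derive both parts from the global upper bound of Theorem~\ref{thm:upper-holder} (forward direction) and the imported reverse bounds of Theorem~\ref{thm:reverse-holder} and Corollary~\ref{cor:bounded-support} (converse direction).

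\emph{Part 1.} Fix $\gamma>1$, $S>0$, and $\mu_n,\mu\in\mathcal M_{\gamma,S}$.

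For ``$\Leftarrow$'', use Remark~\ref{rem:upper-holder-W1} with $p=2$. Since $\gamma\ge1$, every measure in $\mathcal M_{\gamma,S}$ lies in $\mathcal P_1(\R^d)$. The remark then gives
\[
\mathrm T_{s,2}(\mu_n,\mu)\le C(d,2,s)\,W_1(\mu_n,\mu)^{\,s-d/2},
\]
and the exponent $s-d/2$ is positive. Hence $W_1(\mu_n,\mu)\to0$ forces $\mathrm T_{s,2}(\mu_n,\mu)\to0$. This route is needed because Theorem~\ref{thm:upper-holder} itself is stated on $\mathcal P_2$, while $\gamma<2$ is allowed here.

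For ``$\Rightarrow$'', apply Theorem~\ref{thm:reverse-holder}: $W_1(\mu_n,\mu)\le C\,\mathrm T_{s,2}(\mu_n,\mu)^{\rho}$ with $\rho>0$. Hence $\mathrm T_{s,2}(\mu_n,\mu)\to0$ forces $W_1(\mu_n,\mu)\to0$.

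\emph{Part 2.} Let $A$ have diameter at most $D$. After translating by a point $a\in A$, all supports lie in the closed ball $B(a,D)$.

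Both metrics are translation invariant. For $W_r$ this is clear. For $\mathrm T_{s,2}$, translation multiplies both characteristic functions by the same unimodular factor $e^{i\langle u,a\rangle}$, which cancels in $|\Bchi_\mu-\Bchi_\nu|$. So we may assume all measures are supported in $B(0,D)$, which places them in $\mathcal P_{2,D}\subseteq\mathcal P_r$ for every $r$.

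For the forward implication, Corollary~\ref{cor:bounded-support} gives
\[
W_r\le D^{1-1/r}W_1^{1/r}\le \dbtilde C\,\mathrm T_{s,2}^{\dbtilde\rho},
\]
so $\mathrm T_{s,2}\to0$ implies $W_r\to0$. Alternatively, Theorem~\ref{thm:explicit-reverse-bounded} with $p=2$ gives the same conclusion.

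For the converse, first note $W_1\le W_r$ for $r\ge1$, by Jensen's inequality. Then Remark~\ref{rem:upper-holder-W1} gives $\mathrm T_{s,2}\le C\,W_1^{\,s-d/2}\le C\,W_r^{\,s-d/2}\to0$. Alternatively, invoke Theorem~\ref{thm:Wp-Tsp-equivalence} together with the fact that all $W_r$ metrize weak convergence on a compact support.

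\emph{Main obstacle.} The only delicate point is hypothesis bookkeeping. Part~1 must accept $\gamma\in(1,2)$, where measures need not lie in $\mathcal P_2$; this is why the $W_1$ version of the upper bound (Remark~\ref{rem:upper-holder-W1}) is used instead of Theorem~\ref{thm:upper-holder}. Part~2 needs its support condition converted to a centered ball; translation invariance does this. Everything else is composing inequalities whose exponents are strictly positive.
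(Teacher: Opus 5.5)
Your proposal is correct. The converse implications ($\mathrm T_{s,2}\to0\Rightarrow W\to0$) match the paper's. On $\mathcal M_{\gamma,S}$ both use Theorem~\ref{thm:reverse-holder}. On the bounded-diameter class both use Corollary~\ref{cor:bounded-support} followed by the interpolation $W_r\le \mathrm{diam}^{1-1/r}W_1^{1/r}$.

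The forward implications are where you genuinely differ. The paper argues qualitatively. $W_1(\mu_n,\mu)\to0$ gives weak convergence, hence pointwise convergence $\Bchi_{\mu_n}\to\Bchi_\mu$. Dominated convergence then applies with the fixed majorant $\min\{4,4S^{2/\gamma}\|u\|^2\}\|u\|^{-2s}$, which comes from Lemma~\ref{lem:chi-lip-moment} (respectively Lemma~\ref{lem:chi-lip} with radius $K_A=\|x_0\|+\mathrm{diam}$) and is integrable exactly on the window.

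You instead invoke the global bound $\mathrm T_{s,2}\le C(d,2,s)\,W_1^{\,s-d/2}$ of Remark~\ref{rem:upper-holder-W1}. This bound holds on all of $\mathcal P_1(\R^d)$, so it correctly sidesteps the $\mathcal P_2$ hypothesis of Theorem~\ref{thm:upper-holder} when $\gamma<2$. It is also safe to rely on, since the proof of Theorem~\ref{thm:upper-holder} already establishes the pointwise estimate $|\Bchi_\mu-\Bchi_\nu|\le\|u\|\,W_1$.

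The two routes buy different things. Yours gives an explicit H\"older modulus and shows the forward direction needs no class restriction at all. The paper's route uses only the elementary Lipschitz estimate plus dominated convergence. It also needs only weak convergence together with the uniform first-moment bound, not $W_1$-convergence itself.

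Finally, the paper encloses $A$ in $B(0,\|x_0\|+\mathrm{diam})$ instead of translating. Your translation step is equally valid, because $|\Bchi_\mu-\Bchi_\nu|$ and $W_r$ are both unchanged when both measures are shifted by the same vector.
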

These $p=2$ consequences fit the broader pattern in the kernel literature. Global one-sided control under weak assumptions, reverse bounds only on normalized or regular model classes.
\begin{remark}[Relation to the MMD Literature]\label{rem:mmd-literature}
Theorem~\ref{thm:upper-holder} and Lemma~\ref{lem:scaling} match the usual kernel/Wasserstein
pattern. Global upper control of an IPM or MMD by Wasserstein under minimal assumptions, and
reverse bounds only on regular model sets.
For translation-invariant PSD kernels, Vayer--Gribonval \citep[Theorem~15]{vayer2023controlling} provide reverse H\"older-type controls of $W_p$ by MMD on Sobolev--moment classes.
For the energy-kernel MMD, Modeste--Dombry \citep[Propositions~16 and~17]{modeste2024translation} provide quantitative reverse inequalities on bounded-support and moment classes, which yield Theorem~\ref{thm:reverse-holder} and Corollary~\ref{cor:bounded-support}. As they note, their Proposition~16 is similar in spirit to the Fourier-analytic bounded-support comparison of \citet{auricchio2020equivalence}.
\end{remark}

\subsection{An Exact Finite-Sample Identity}\label{sec:empirical-mean}
The kernel identification yields one statistical statement essentially for free. Let
$X_1,\dots,X_n$ be i.i.d.\ with law $\mu$ and let
$\hat\mu_n:=n^{-1}\sum_{i=1}^n\delta_{X_i}$ denote the empirical measure.

\begin{proposition}[Exact mean of the empirical discrepancy at $p=2$]\label{prop:empirical-mean}
Fix $d\ge1$ and $s\in(\frac d2,\frac d2+1)$, and set $H:=s-\frac d2\in(0,1)$. If
$\mu\in\mathcal P_{2H}(\R^d)$ and $X,X'\sim\mu$ are independent, then for every $n\ge1$,
\[
\E\,\mathrm T_{s,2}^2(\hat\mu_n,\mu)
= \frac1n\int_{\R^d}\frac{1-|\Bchi_\mu(u)|^2}{\|u\|^{2s}}\,du
= \frac{c(d,H)}{n}\,\E\|X-X'\|^{2H}
\]
with $c(d,H)$ the constant of Lemma~\ref{lem:energy-constant}. In particular
$\mathrm T_{s,2}(\hat\mu_n,\mu)=O_P(n^{-1/2})$, with an explicit constant. Moreover, the two
identities hold in $[0,\infty]$ for arbitrary $\mu\in\mathcal P(\R^d)$, and the common value is
finite if and only if $\mu\in\mathcal P_{2H}(\R^d)$.
\end{proposition}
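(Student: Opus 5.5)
The plan is to compute the expectation pointwise in frequency and then integrate with Tonelli. For each fixed $u$, the empirical characteristic function $\Bchi_{\hat\mu_n}(u)=n^{-1}\sum_{i=1}^n e^{i\langle u,X_i\rangle}$ is an average of i.i.d.\ bounded complex variables with mean $\Bchi_\mu(u)$. Hence
\[
\E|\Bchi_{\hat\mu_n}(u)-\Bchi_\mu(u)|^2=\frac1n\,\mathrm{Var}\big(e^{i\langle u,X\rangle}\big)=\frac1n\big(1-|\Bchi_\mu(u)|^2\big).
\]
The cross terms vanish by independence and centering, and $\E|e^{i\langle u,X\rangle}|^2=1$. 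The integrand $|\Bchi_{\hat\mu_n}(u)-\Bchi_\mu(u)|^2\|u\|^{-2s}$ is nonnegative and jointly measurable in $(u,\omega)$. Tonelli therefore lets us exchange $\E$ and $\int du$, which gives the first identity in $[0,\infty]$ for arbitrary $\mu\in\mathcal P(\R^d)$, with no moment hypothesis.

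For the second identity, write $1-|\Bchi_\mu(u)|^2=\E[1-\cos\langle u,X-X'\rangle]$, which is real because $X-X'$ is symmetric. Apply Tonelli again, the integrand being nonnegative, to get
\[
\int\frac{1-|\Bchi_\mu(u)|^2}{\|u\|^{2s}}\,du=\E\int\frac{1-\cos\langle u,X-X'\rangle}{\|u\|^{d+2H}}\,du .
\]
The inner integral is the classical Sz\'ekely--Rizzo integral. By rotation invariance and the dilation $u\mapsto u/\|z\|$ it equals $c(d,H)\|z\|^{2H}$ at $z=X-X'$, where $c(d,H)=\int(1-\cos u_1)\|u\|^{-d-2H}du$. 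This constant is exactly the one in Lemma~\ref{lem:energy-constant}. To avoid recomputing it, I would take $\mu=\delta_0$ and $\nu=\delta_z$ in \eqref{eq:energy-const}: there $|\Bchi_\mu-\Bchi_\nu|^2=2(1-\cos\langle u,z\rangle)$, and
\[
\mathrm{MMD}^2_{k_H}(\delta_0,\delta_z)=k_H(0,0)+k_H(z,z)-2k_H(0,z)=2\|z\|^{2H},
\]
since $k_H(0,0)=0$, $k_H(z,z)=2\|z\|^{2H}$ and $k_H(0,z)=0$. This gives $\int(1-\cos\langle u,z\rangle)\|u\|^{-2s}du=c(d,H)\|z\|^{2H}$ directly. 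Combining the two steps gives the second identity, again in $[0,\infty]$.

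Finiteness follows from the second expression. Since $\|X-X'\|^{2H}\le \max(1,2^{2H-1})(\|X\|^{2H}+\|X'\|^{2H})$, the value $\E\|X-X'\|^{2H}$ is finite when $\mu\in\mathcal P_{2H}$. Conversely, if $\E\|X-X'\|^{2H}<\infty$, then by Fubini there is some $x'$ with $\E\|X-x'\|^{2H}<\infty$, and hence $\E\|X\|^{2H}<\infty$. So the common value is finite iff $\mu\in\mathcal P_{2H}(\R^d)$.

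The $O_P(n^{-1/2})$ claim follows from Markov's inequality applied to $\mathrm T_{s,2}^2$: $P(\mathrm T_{s,2}(\hat\mu_n,\mu)>t)\le c(d,H)\E\|X-X'\|^{2H}/(nt^2)$.

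The only delicate point is the measurability and Tonelli justification together with the evaluation of the constant. The Dirac reduction to Lemma~\ref{lem:energy-constant} is the step I expect to need the most care: that lemma applies since $\delta_0,\delta_z\in\mathcal P_\alpha$, and the kernel normalization must be checked with no factor $\tfrac12$. Everything else is routine variance bookkeeping.
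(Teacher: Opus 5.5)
Your proposal is correct and follows essentially the same route as the paper's proof: the pointwise variance computation, Tonelli, the symmetrization $1-|\Bchi_\mu(u)|^2=\E[1-\cos\langle u,X-X'\rangle]$, the Fubini argument for the finiteness equivalence, and Markov's inequality. The one small difference is that the paper quotes the moment-free identity \eqref{eq:riesz-alpha} from inside the proof of Lemma~\ref{lem:energy-constant}. You instead recover the same identity by applying the lemma's statement to the Dirac pair $(\delta_0,\delta_z)$, and your kernel computation $\mathrm{MMD}^2_{k_H}(\delta_0,\delta_z)=2\|z\|^{2H}$ is correct, so that route is equally valid.
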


Beyond the mean, the full two-sample apparatus of the energy distance --- permutation tests,
V-/U-statistic limit theory, consistency against fixed alternatives --- transfers to
$\mathrm T_{s,2}$ verbatim through \eqref{eq:Ts2-is-MMD}
\citep{szekely2013energy,gretton2012kernel,sejdinovic2013equivalence}. What is genuinely open is
the corresponding theory for $p\neq2$. See Section~\ref{sec:further}.

\section{Numerical Experiments}\label{sec:num_exp}
We provide two sets of numerical diagnostics. First, we examine the sensitivity of an importance-sampling approximation of $\mathrm{T}_{s,p}$ to frequency truncation and compare it with the special quadratic representation when $p=2$. Second, we use the DOTmark image benchmark as a small, illustrative comparison with sliced Wasserstein. After normalization, the induced measures are compactly supported on $[0,1]^2$, placing them in the scale-controlled regime considered by the theory. The computations probe numerical behavior. They neither establish the population results nor supply finite-sample guarantees for the empirical estimator.

\subsection{Synthetic Validation}\label{subsec:synth_Val}

Throughout, we approximate $\mathrm{T}_{s,p}$ by importance sampling in the Fourier domain with truncation. Frequencies are sampled on an annulus $\{r_{\min}\le \|u\|\le r_{\max}\}$ with density proportional to $\|u\|^{-ps}$, and we use
\[
\mathrm{T}_{s,p}(\mu,\nu)^p
\ \approx\ 
Z \cdot \frac{1}{M}\sum_{j=1}^M \big|\Bchi_\mu(u_j)-\Bchi_\nu(u_j)\big|^p
\]
where $Z$ is the analytic normalizing constant of the truncated weight. Since $|\Bchi_\mu(u)-\Bchi_\nu(u)|\le 2$ pointwise, each summand lies in $[0,2^p]$, so for a fixed pair $(\mu,\nu)$ the Monte Carlo estimator of the truncated functional has variance at most $Z^2\,4^{p}/M$. The frequency-sampling error is therefore controlled at the parametric rate $M^{-1/2}$ for fixed truncation. (The prefactor $Z^2\,4^p$ --- essentially the truncated weight mass --- can be large, especially in high dimension, so the parametric rate does not by itself guarantee small error at moderate $M$. The diagnostics below probe the actual accuracy.) When dilation families $(\mu^\lambda,\nu^\lambda)$ are considered, the annulus is co-dilated to $(r_{\min}/\lambda,\,r_{\max}/\lambda)$, so the \emph{truncated} functional obeys the homogeneity of Lemma~\ref{lem:scaling} exactly. Such runs validate the implementation rather than the untruncated identity. In one dimension, Wasserstein distances are computed exactly by sorting. For $d\ge 2$ we use sliced Wasserstein \citep{rabin2012wasserstein,peyre2019computational} as the primary transport baseline and Sinkhorn where feasible. All \(d\ge2\) comparisons in this section are therefore with sliced \(W_p\), whereas the theorems concern \(W_p\) itself. Dimension-dependent comparison results between sliced and ordinary Wasserstein distances do not turn the plots below into numerical verification of our bounds.

\paragraph{Configurations.}
For reproducibility we record the exact settings of the three synthetic studies. The same values
appear in the task metadata shipped with the code supplement.
\emph{Cutoff sensitivity} (Figure~\ref{fig:exp1}). Here $\mu$ and $\nu$ are unit-covariance
Gaussians with means $0$ and $0.8\,e_1$, each represented by $n=20{,}000$ samples drawn with seed
$0$, and the estimator uses $M=8{,}192$ frequencies. The fixed cutoffs are $r_{\min}=10^{-3}$ and
$r_{\max}=80$, while the varied cutoff runs over the grids shown on the horizontal axes.

\emph{Translation family} (Table~\ref{tab:exp3_translation}). The base measure is a fixed draw of
a $20$-component Gaussian mixture, with means at scale $1$ and covariances with floor $0.15$ and
spread $0.6$, sampled once with $n=30{,}000$ points in $d=2$ and $n=50{,}000$ in $d=16$. The
second sample is the same point set translated in the first coordinate by a shift drawn uniformly
from $[0.02,3]$, and we average over $K=300$ such shifts in $d=2$ and $K=250$ in $d=16$. The
estimator uses $M=8{,}192$ frequencies in $d=2$ and between $16{,}384$ and $32{,}768$ in $d=16$,
and sliced Wasserstein uses $8{,}192$ projections in $d=2$ and $16{,}384$ in $d=16$. Frequencies
are sampled on the annulus $[10^{-3},8]$ for $p=2$ and $[10^{-3},6]$ for $p=5$, with the lower
endpoint reduced to $5\cdot10^{-4}$ in $d=16$.

\emph{Quadratic identity} (Table~\ref{tab:exp5_identity}, Figure~\ref{fig:exp5}). We draw $K=300$
independent pairs of Gaussians in $d=2$ and $K=250$ in $d=16$, with a common covariance scale
drawn uniformly from $[0.4,1.3]$ and a mean shift drawn uniformly from $[0,2]$ in the first
coordinate. Each measure is represented by $n=30{,}000$ samples, the estimator uses $M=8{,}192$
frequencies in $d=2$ and $16{,}384$ in $d=16$, and the sampling annuli are as in the translation
study.

\subsubsection{Estimator Stability and a Quadratic Consistency Check}\label{subsec:stability}
We use cutoff sensitivity, a structured translation family, and the \(p=2\) identity as three separate implementation checks.

\subsubsection{Cutoff Sensitivity}
Because our Monte Carlo approximation evaluates the Toscani functional only over the annulus $\{u: r_{\min}\le \|u\|\le r_{\max}\}$, the experiment in Figure~\ref{fig:exp1} should be interpreted as a \emph{truncation-sensitivity diagnostic} for the two endpoint regions of the defining integral, rather than as a direct numerical verification of the full finiteness theorem. 

The theoretical threshold $\tfrac{d}{p}<s<1+\tfrac{d}{p}$ distinguishes which endpoint is potentially nonintegrable. When $s<d/p$, instability is expected to arise from the high-frequency tail and hence should be reflected primarily in sensitivity to the upper cutoff $r_{\max}$. When $s>1+d/p$, the critical contribution comes from low frequencies near the origin and should therefore appear primarily through sensitivity to the lower cutoff $r_{\min}$. When $\tfrac{d}{p}<s<1+\tfrac{d}{p}$, both endpoints are integrable for the population quantity. 

Accordingly, the relevant numerical signature outside the admissible window is not simultaneous instability in both cutoff directions, but rather instability at the endpoint predicted by the theorem. At the same time, since Figure~\ref{fig:exp1} is computed from a truncated empirical estimator on a fixed sampled pair, residual variation for interior values of $s$ should not be over-interpreted. Such variation may reflect the fixed opposite cutoff, truncation bias, and finite-sample Monte Carlo error, rather than genuine endpoint divergence. For this reason, we use these plots only to confirm that the observed sensitivity is concentrated at the theoretically relevant endpoint, and rely on the translation-family and DOTmark comparisons below for sharper quantitative diagnostics.

\begin{figure}[tbp]
\centering
\subfloat[Origin cutoff, $d=2$, $p=2$, $r_{\max}=80$]{
    \includegraphics[width=0.48\textwidth]{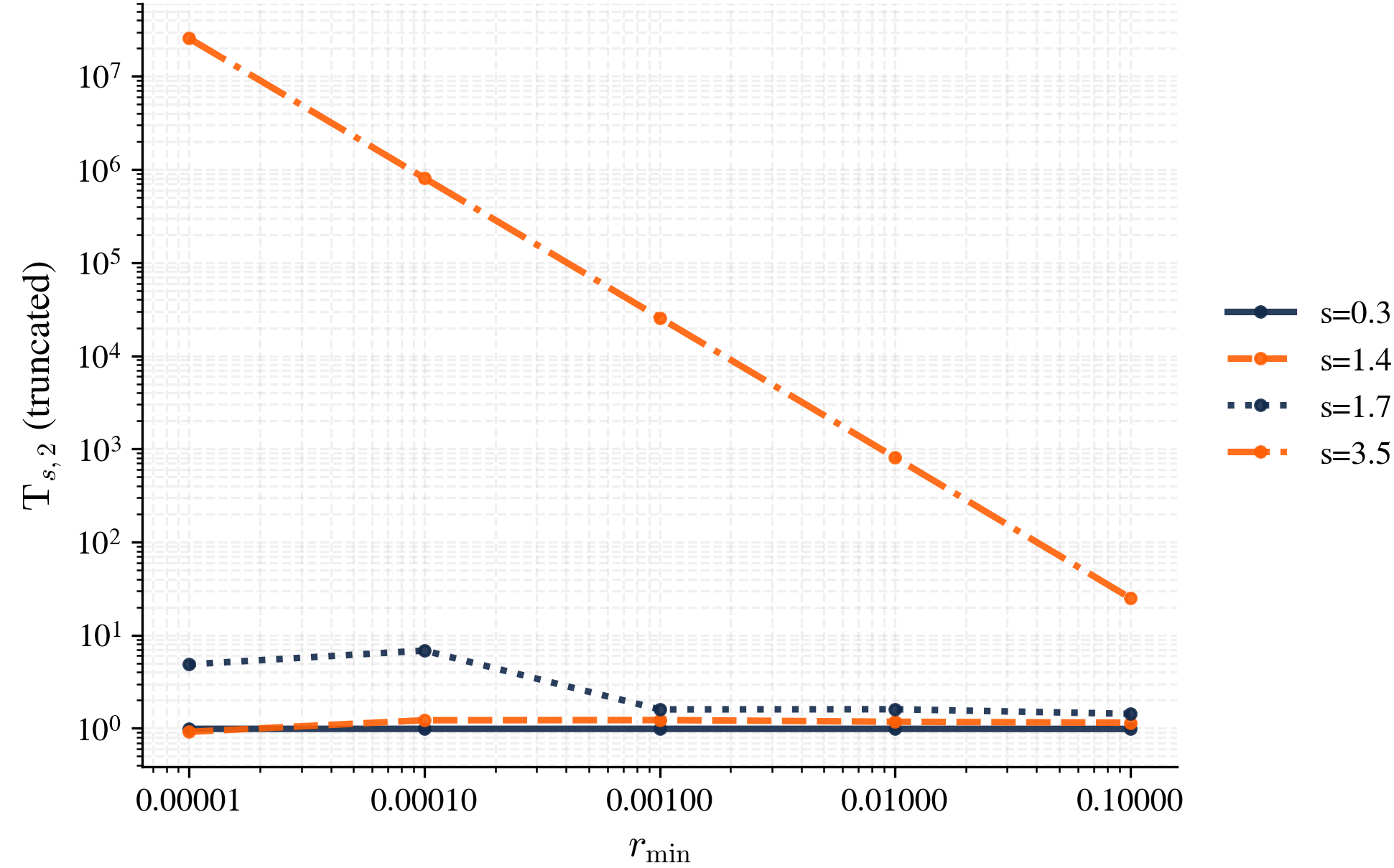}
}
\hfill
\subfloat[Infinity cutoff, $d=2$, $p=2$, $r_{\min}=0.001$]{
    \includegraphics[width=0.48\textwidth]{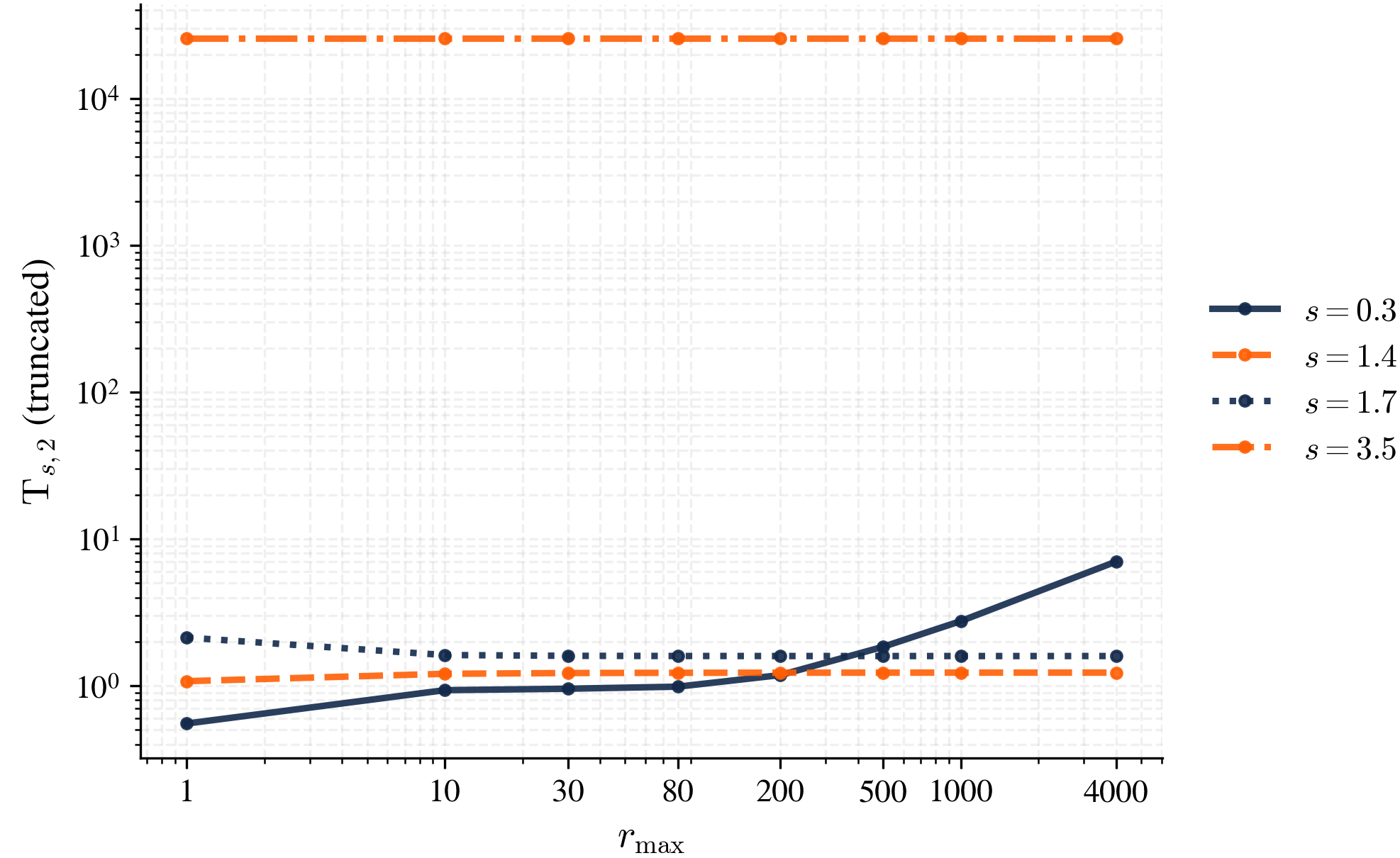}
}\\[0.5ex]

\subfloat[Origin cutoff, $d=2$, $p=5$, $r_{\max}=80$]{
    \includegraphics[width=0.48\textwidth]{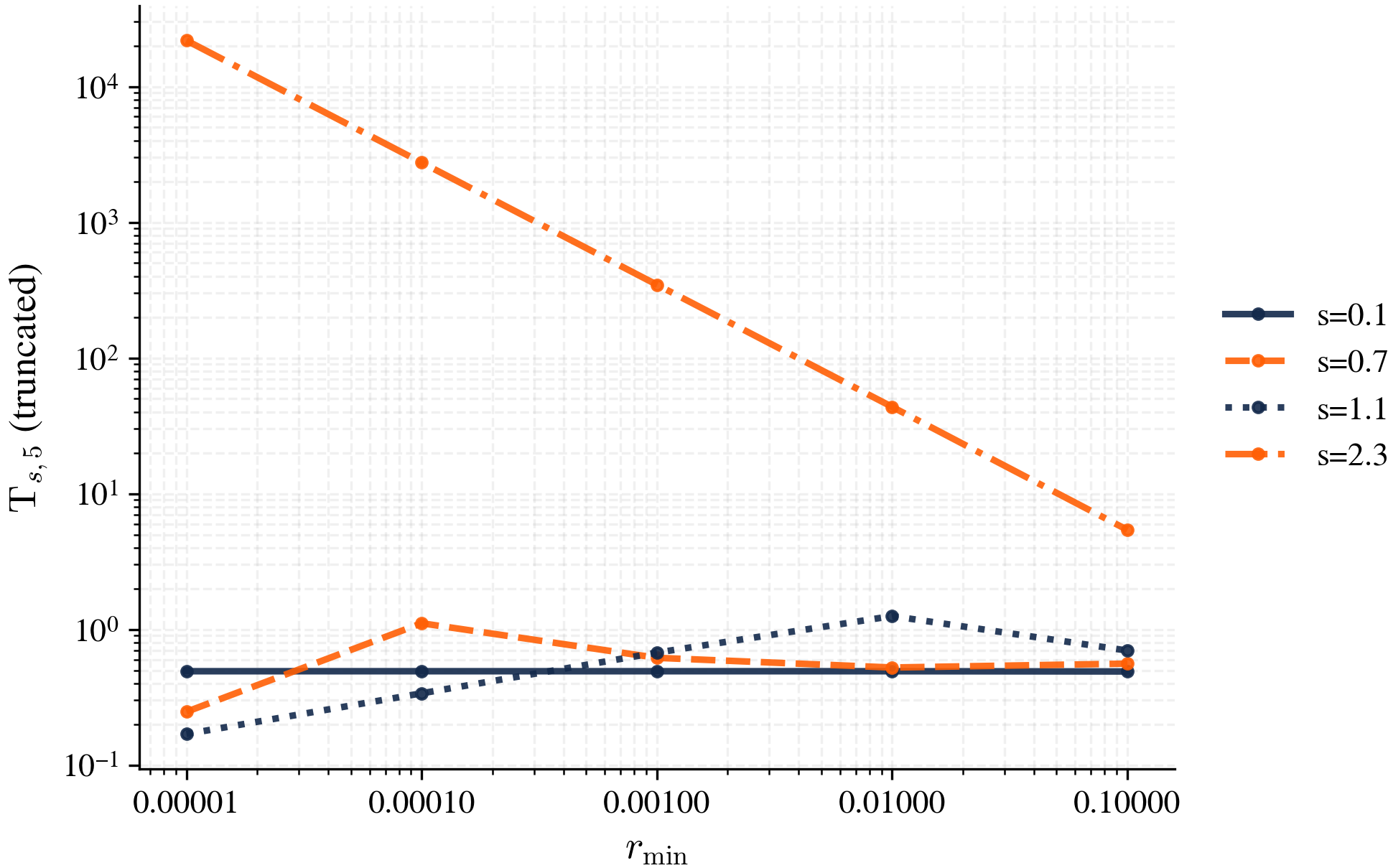}
}
\hfill
\subfloat[Infinity cutoff, $d=2$, $p=5$, $r_{\min}=0.001$]{
    \includegraphics[width=0.48\textwidth]{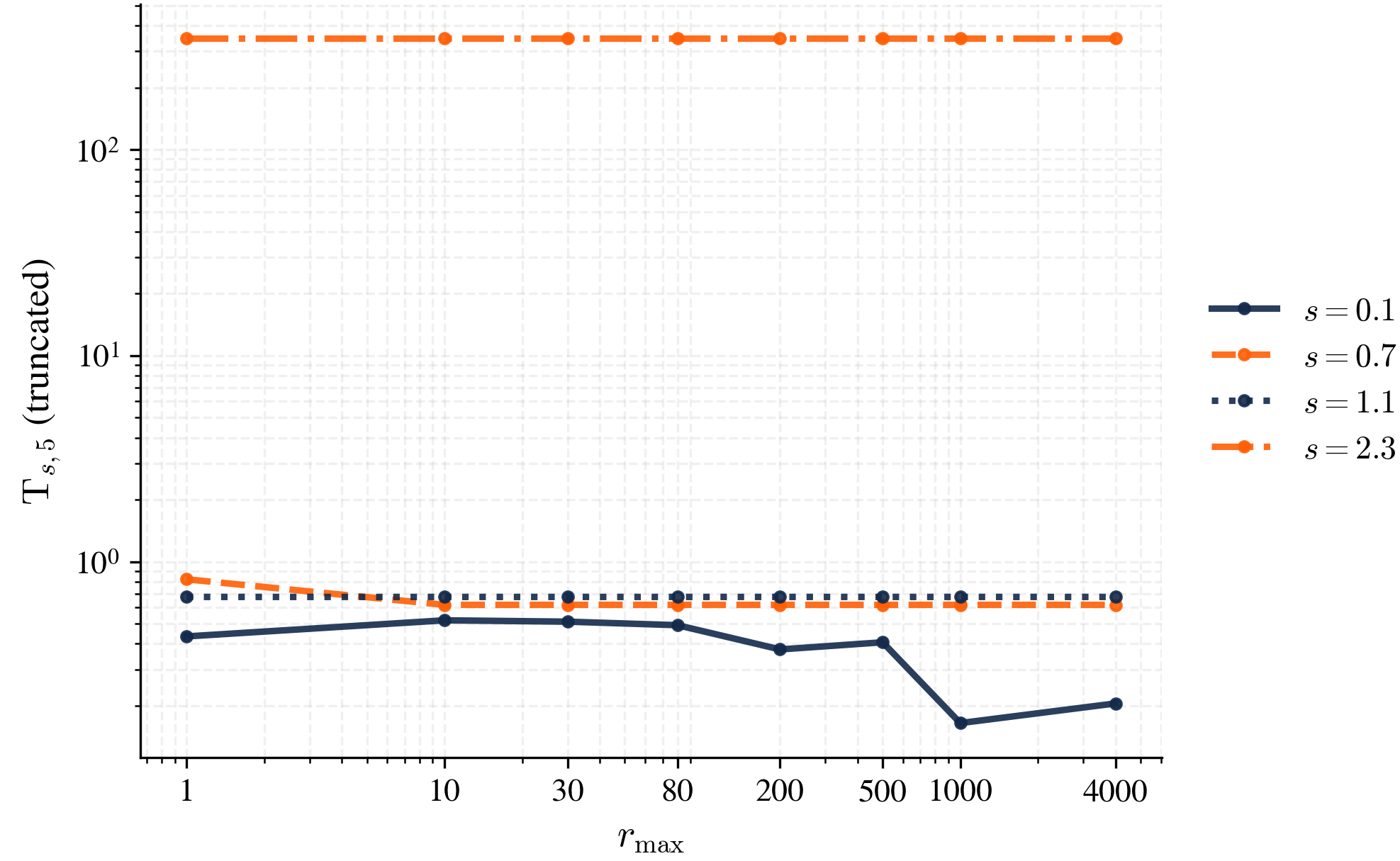}
}\\[0.5ex]

\subfloat[Origin cutoff, $d=16$, $p=2$, $r_{\max}=80$]{
    \includegraphics[width=0.48\textwidth]{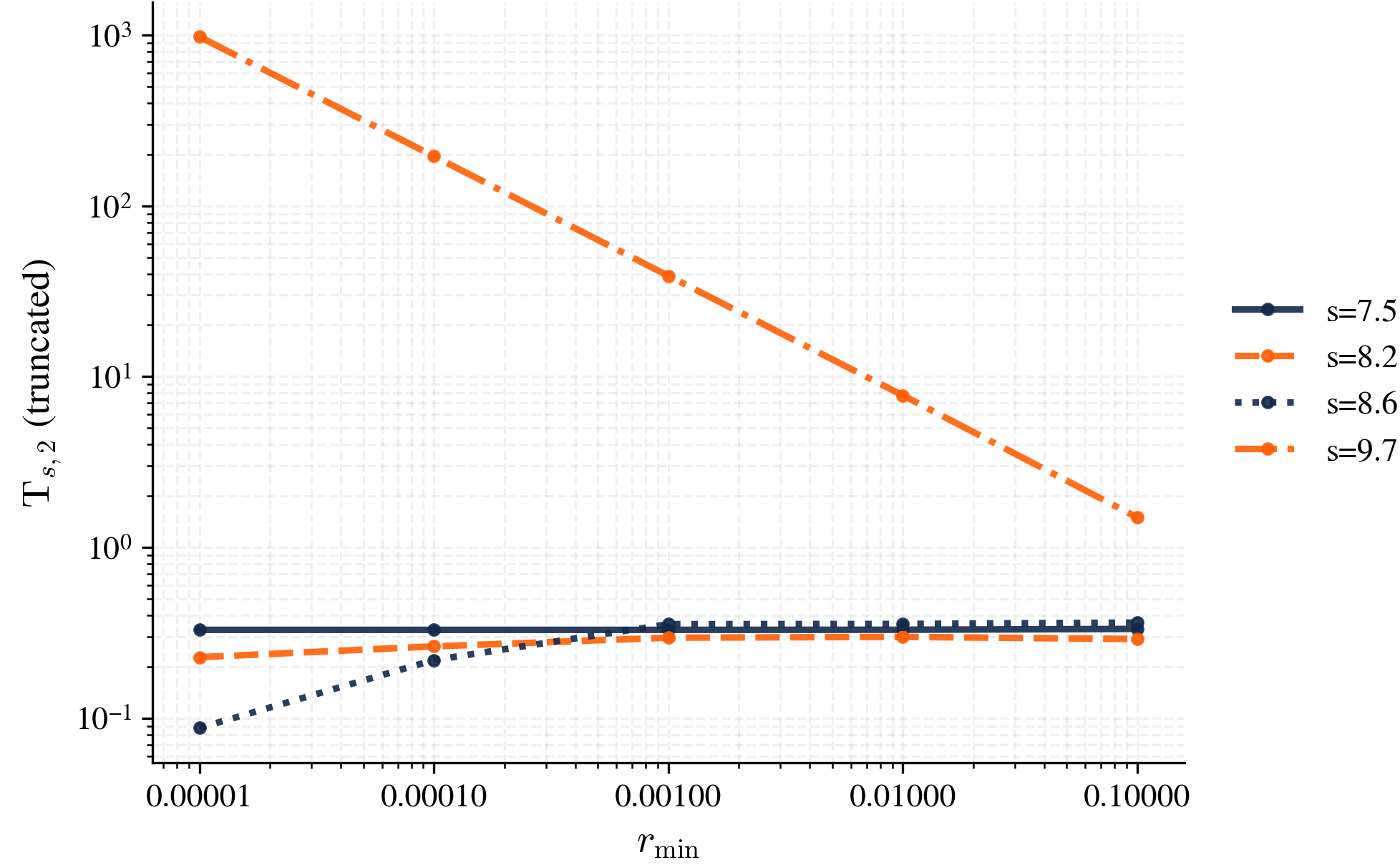}
}
\hfill
\subfloat[Infinity cutoff, $d=16$, $p=2$, $r_{\min}=0.001$]{
    \includegraphics[width=0.48\textwidth]{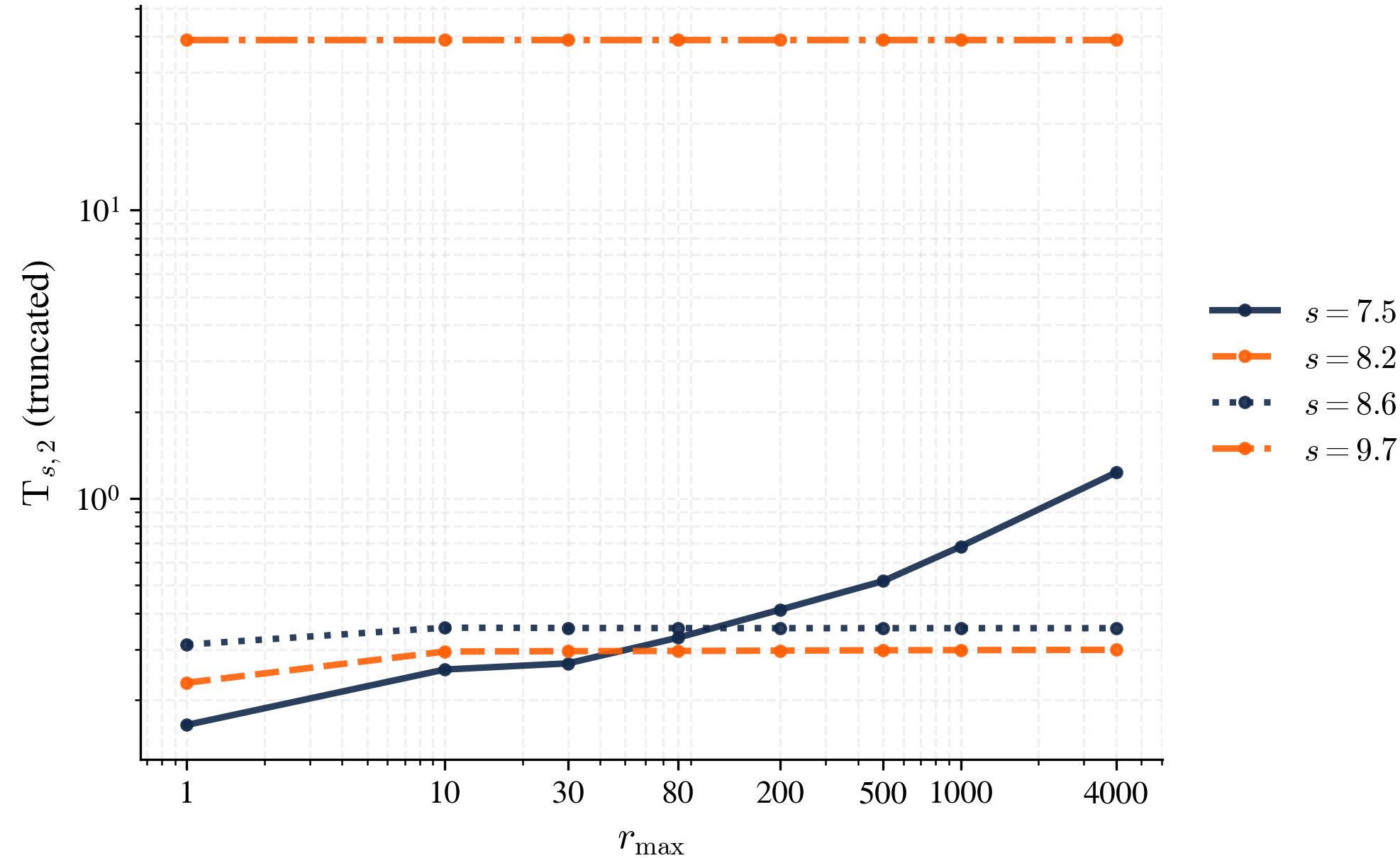}
}
\caption{Sensitivity of $\widehat{\mathrm{T}}_{s,p}$ to the truncation cutoffs. The origin
cutoff $r_{\min}$ (left column) and the infinity cutoff $r_{\max}$ (right column), for several
values of $s$ relative to the admissible window. Instability concentrates at the endpoint
predicted by Proposition~\ref{prop:window-sharp}. The lower cutoff matters for $s$ at or above
$1+d/p$, the upper cutoff for $s$ at or below $d/p$, while interior values of $s$ are stable
against both.}
\label{fig:exp1}
\end{figure}

\subsubsection{Translation-Family Comparison with Sliced Wasserstein}
We next study a structured one-parameter family of distribution pairs obtained by fixing a base sample and translating one coordinate of the second distribution. In this setting, $\widehat{\mathrm{T}}_{s,p}$ exhibits an almost linear log-log relation with sliced Wasserstein in both cases considered. For $(d,p,s)=(2,2,1.6)$, the fitted log-log slope of $\widehat{\mathrm{T}}_{s,p}$ versus ${}^{\text{(sliced)}}\mathrm{W}_2$ is $0.994$, while for $(2,5,1.0)$ the corresponding slope versus ${}^{\text{(sliced)}}\mathrm{W}_5$  is $0.993$. By contrast, when plotted against the quantity $\max\{{}^{\text{(sliced)}}\mathrm{W}_p^\alpha,{}^{\text{(sliced)}}\mathrm{W}_p\}$ with $\alpha=s-d/p=0.6$, the fitted slopes are $1.443$ and $1.390$, respectively, and the normalized ratio $\widehat{\mathrm{T}}_{s,p}/\max\{{}^{\text{(sliced)}}\mathrm{W}_p^\alpha,{}^{\text{(sliced)}}\mathrm{W}_p\}$ is not flat. Thus, for this translation family, the numerics are best interpreted as showing that $\widehat{\mathrm{T}}_{s,p}$ tracks sliced Wasserstein approximately linearly, rather than as providing near-constant calibration under the normalization of Corollary~\ref{cor:upper-holder-max}. The concentration is tighter for $p=2$ than for $p=5$.

The observed unit slope is expected rather than anomalous, and it is worth saying why, since it may otherwise appear to contradict the H\"older exponent $s-\frac dp<1$ of Theorem~\ref{thm:upper-holder}. Translations are the one family on which the exponent in \eqref{eq:upper-holder} is far from attained. If $\nu_t:=\mu*\delta_{te_1}$, then $\Bchi_{\nu_t}(u)=e^{itu_1}\Bchi_\mu(u)$, so
\[
\big|\Bchi_{\nu_t}(u)-\Bchi_\mu(u)\big|
=
\big|\Bchi_\mu(u)\big|\ \big|e^{itu_1}-1\big|
\le
t\,|u_1|\,\big|\Bchi_\mu(u)\big|
\]
and the extra factor $|\Bchi_\mu(u)|$ suppresses exactly the high frequencies whose contribution produces the H\"older loss in Theorem~\ref{thm:upper-holder}. Whenever $u\mapsto\|u\|^{1-s}|\Bchi_\mu(u)|$ lies in $L^p$, dominated convergence gives $\mathrm{T}_{s,p}(\nu_t,\mu)=t\,\kappa_\mu(1+o(1))$ as $t\downarrow0$, with $\kappa_\mu=\big\|\,|u_1|\,\|u\|^{-s}\Bchi_\mu\big\|_{L^p}$, while $W_p(\nu_t,\mu)=t$ exactly (the translation coupling gives ``$\le$'', and testing $W_1$ against $\varphi(x)=x_1$ gives ``$\ge$''. The sliced baseline actually plotted equals $t$ times a fixed constant depending only on $d$ and $p$, so it is linear in $t$ as well). The slope-one behavior in Table~\ref{tab:exp3_translation} is therefore the correct prediction for smooth base measures, and Theorem~\ref{thm:upper-holder} remains a genuine upper bound whose exponent is saturated by Dirac dilations rather than by translations of a fixed smooth measure. (Strictly speaking, the plotted quantities are truncated empirical estimators evaluated on a fixed base sample rather than the population functionals in this display. The observed slopes indicate that the estimator inherits the population behavior at the configurations recorded above.)

\begin{table}[tbp]
\centering
\caption{Translation-family comparison of $\widehat{\mathrm{T}}_{s,p}$ with sliced Wasserstein.}
\label{tab:exp3_translation}
\smallskip
\renewcommand{\arraystretch}{1.12}
\begin{tabular}{cccccccc}
\toprule\toprule
$d$ & $p$ & $s$ & $\alpha$ & slope vs.\ ${}^{\text{(sliced)}}\mathrm{W}_p$ & RMSE & slope vs.\ control & RMSE \\
\midrule
2  & 2 & 1.6 & 0.6 & 0.994 & 0.124 & 1.443 & 0.164 \\
2  & 5 & 1.0 & 0.6 & 0.993 & 0.347 & 1.390 & 0.374 \\
16 & 2 & 8.6 & 0.6 & 0.987 & 0.153 & 1.645 & 0.153 \\
16 & 5 & 3.8 & 0.6 & 1.013 & 0.327 & 1.687 & 0.327 \\
\bottomrule\bottomrule
\end{tabular}
\end{table}
\ifincludeproofs
Figure~\ref{fig:exp3_translation} shows the corresponding multi-panel plots.
\begin{figure}[!htb]
\centering
\setlength{\tabcolsep}{2pt}
\renewcommand{\arraystretch}{1.0}
\begin{tabular}{ccc}
\textbf{\(\widehat{\mathrm{T}}_{s,p}\) vs.\ control} &
\textbf{\(\widehat{\mathrm{T}}_{s,p}\) vs.\ sliced \(W_p\)} &
\textbf{Ratio check} \\
\includegraphics[width=0.285\textwidth]{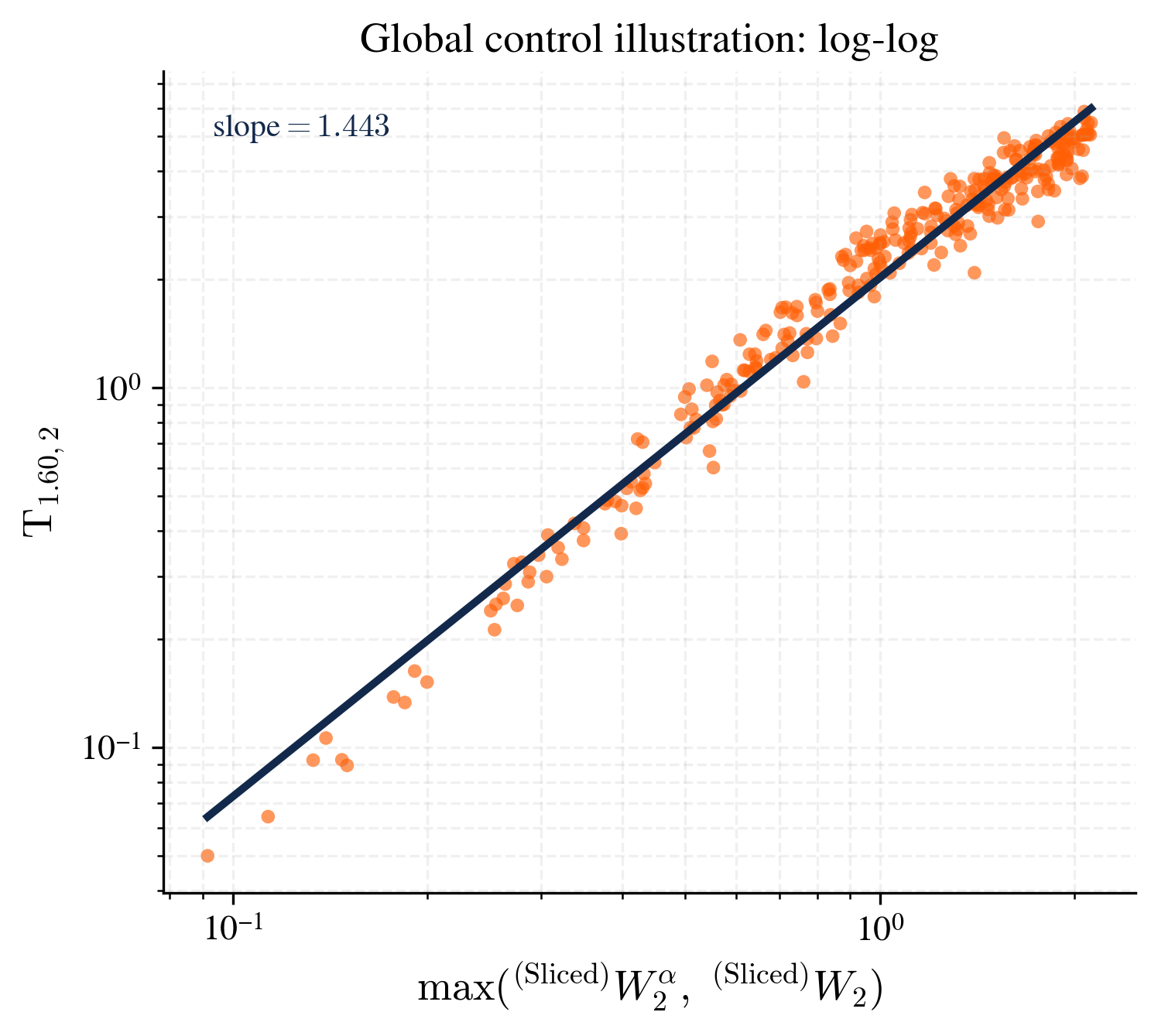} &
\includegraphics[width=0.285\textwidth]{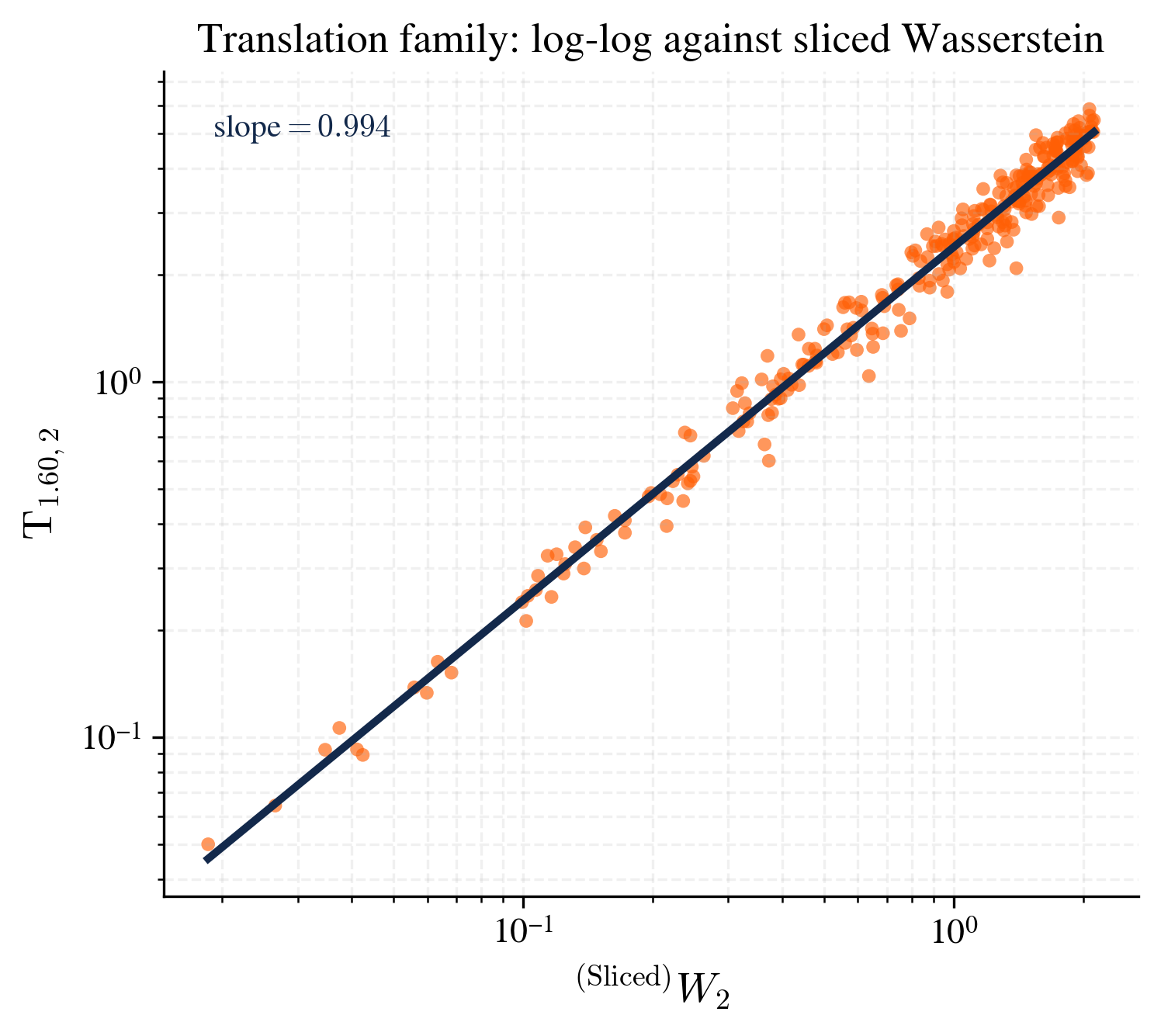} &
\includegraphics[width=0.285\textwidth]{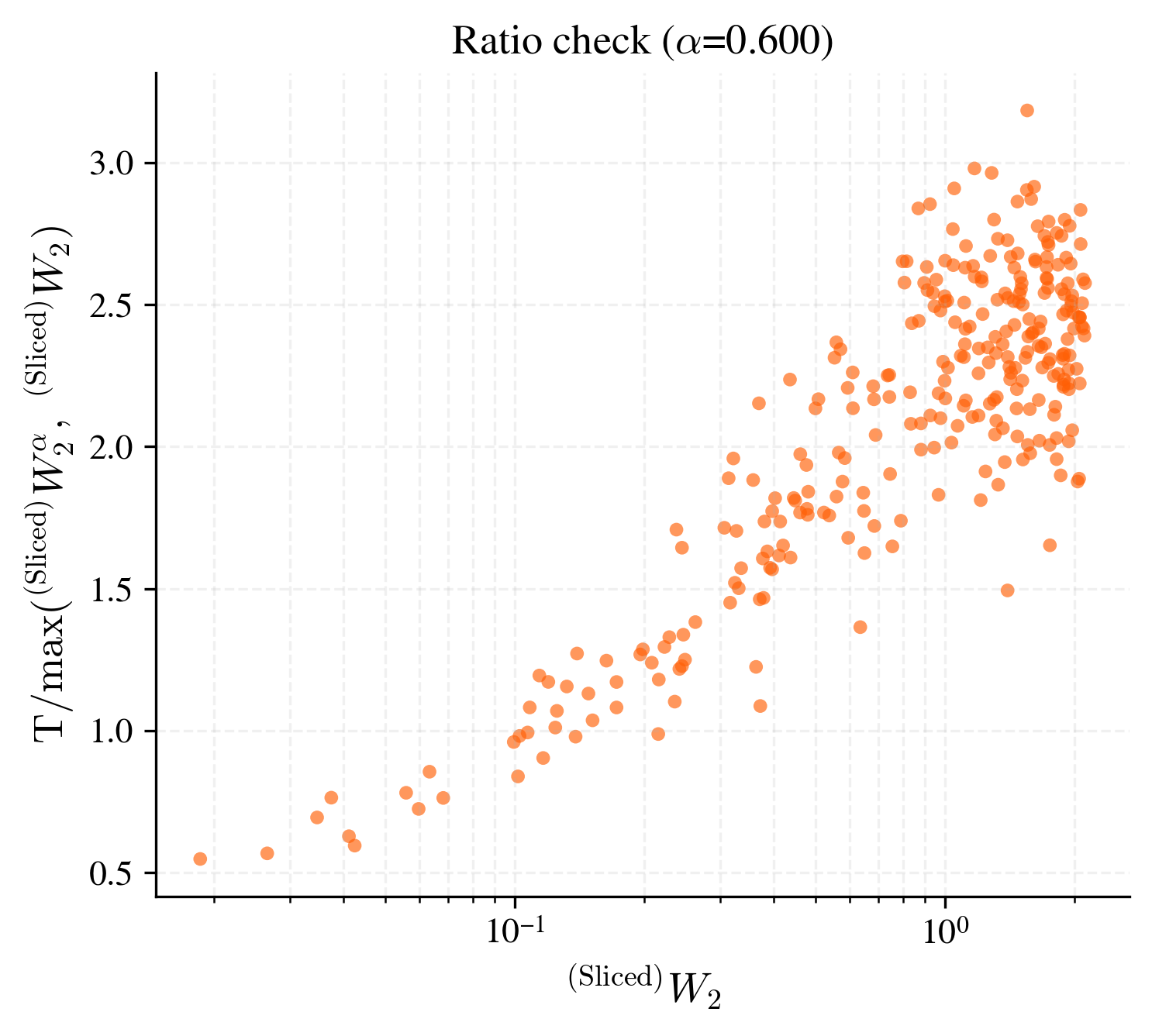} \\
\multicolumn{3}{c}{(a)\quad \(s=1.6,\ d=2,\ p=2\)}\\
\includegraphics[width=0.285\textwidth]{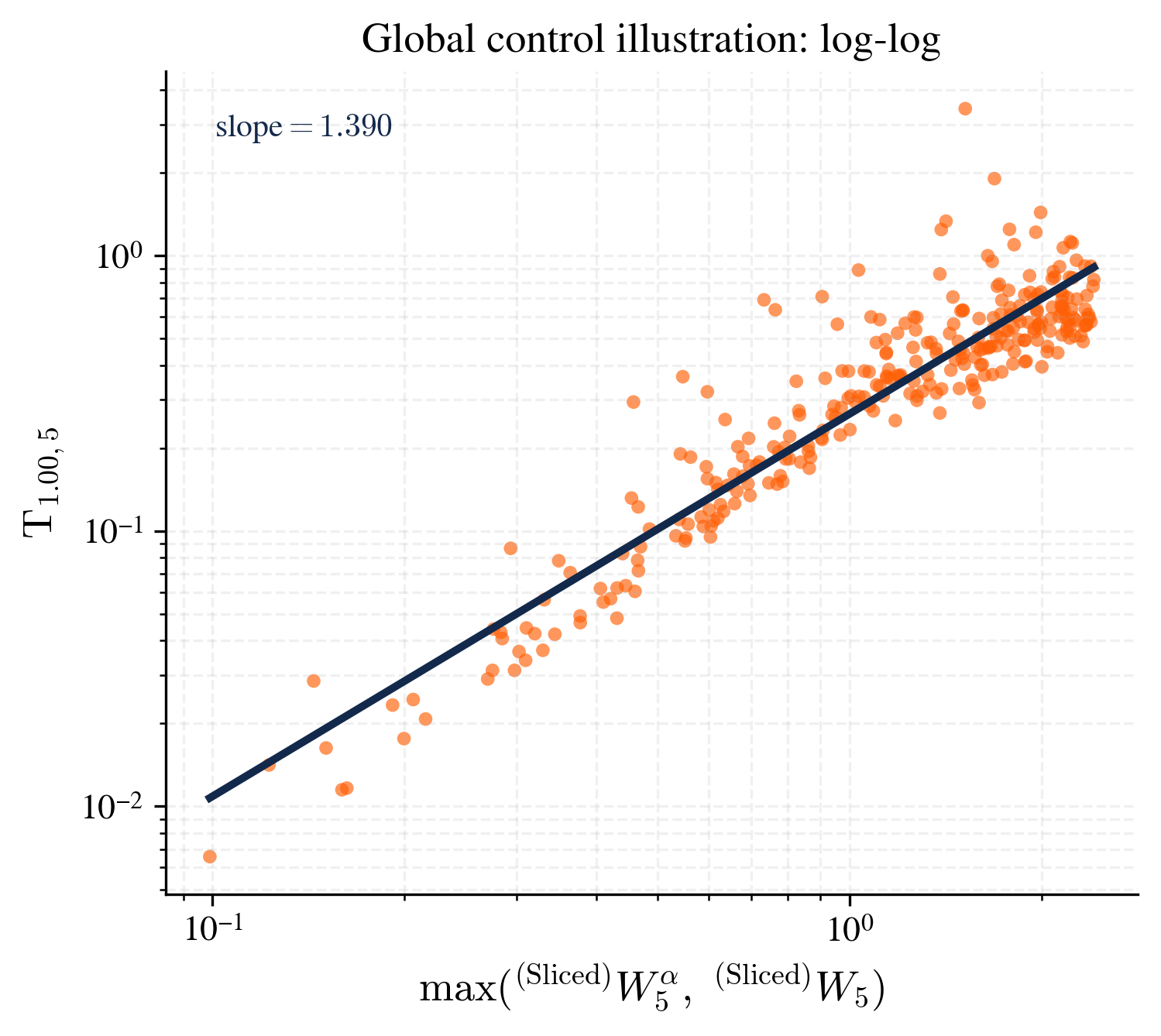} &
\includegraphics[width=0.285\textwidth]{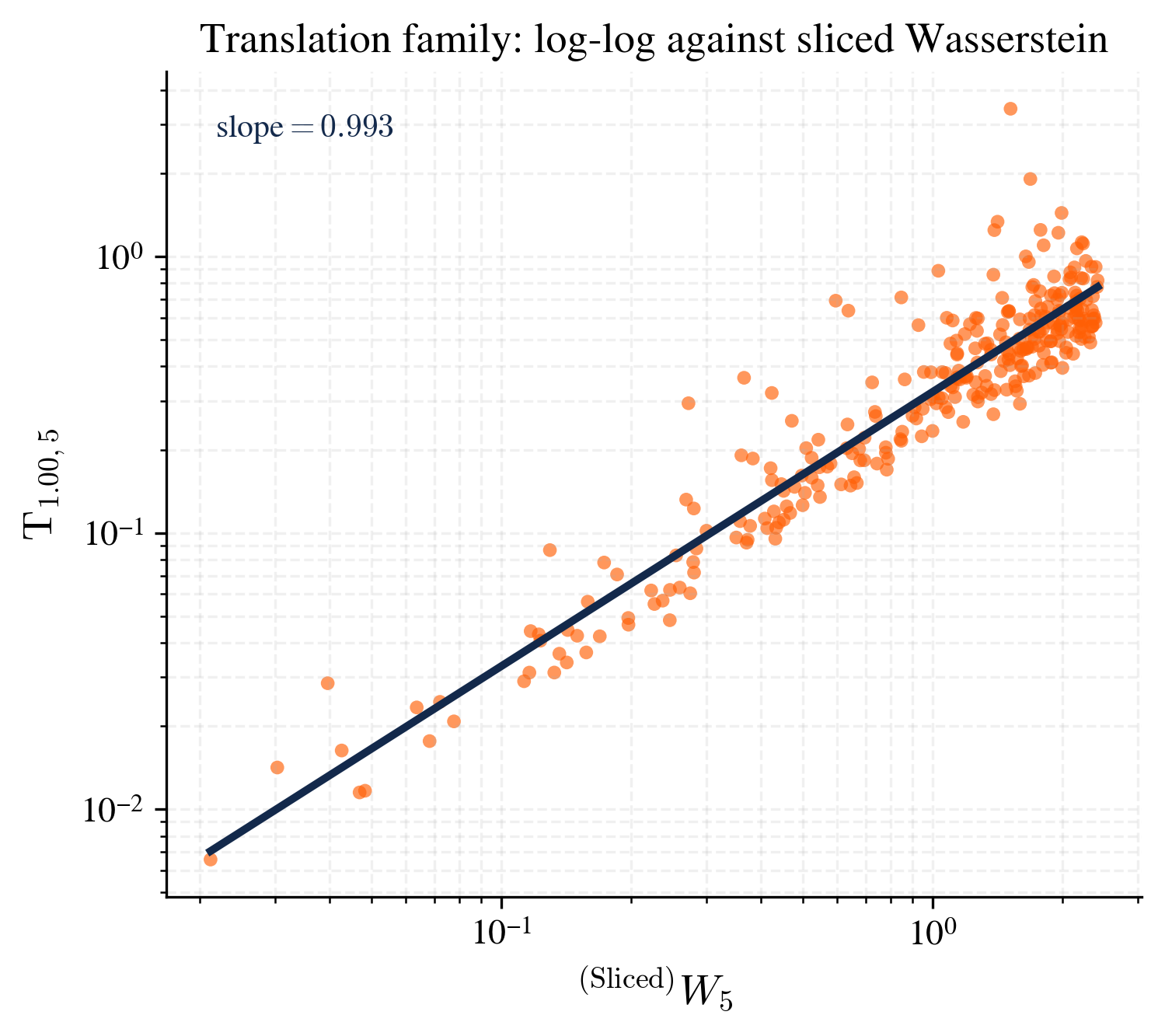} &
\includegraphics[width=0.285\textwidth]{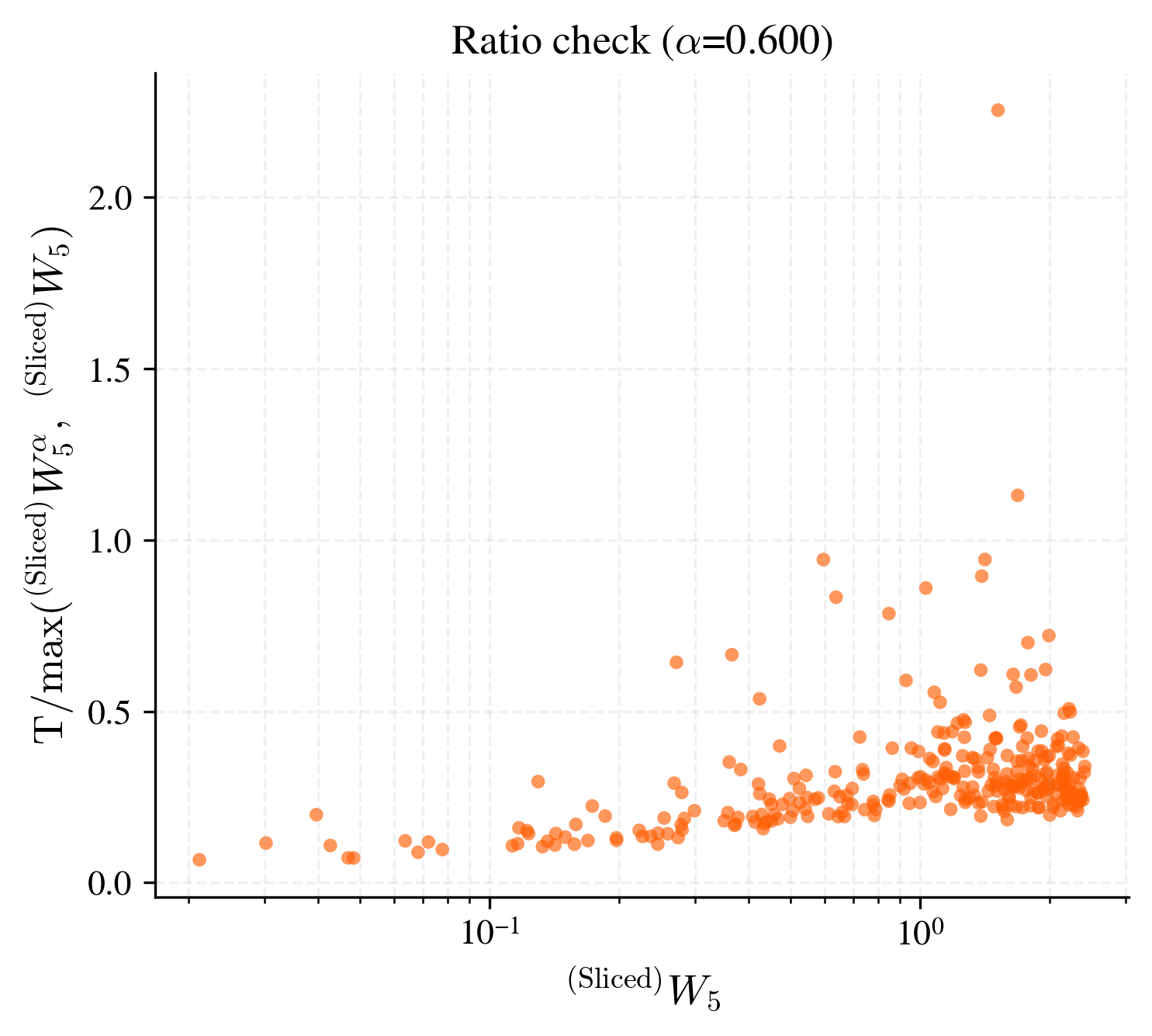} \\
\multicolumn{3}{c}{(b)\quad \(s=1,\ d=2,\ p=5\)}\\
\includegraphics[width=0.285\textwidth]{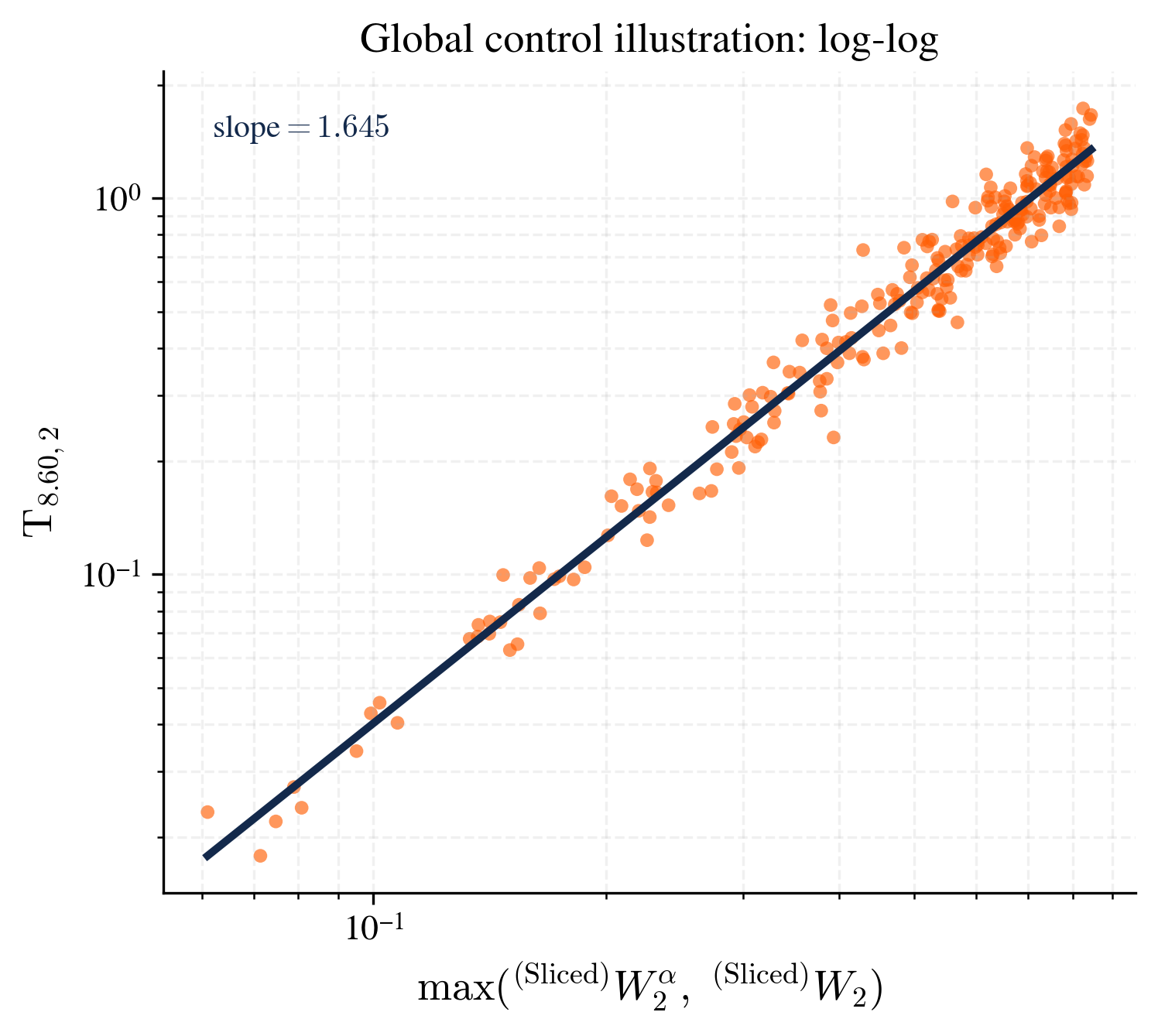} &
\includegraphics[width=0.285\textwidth]{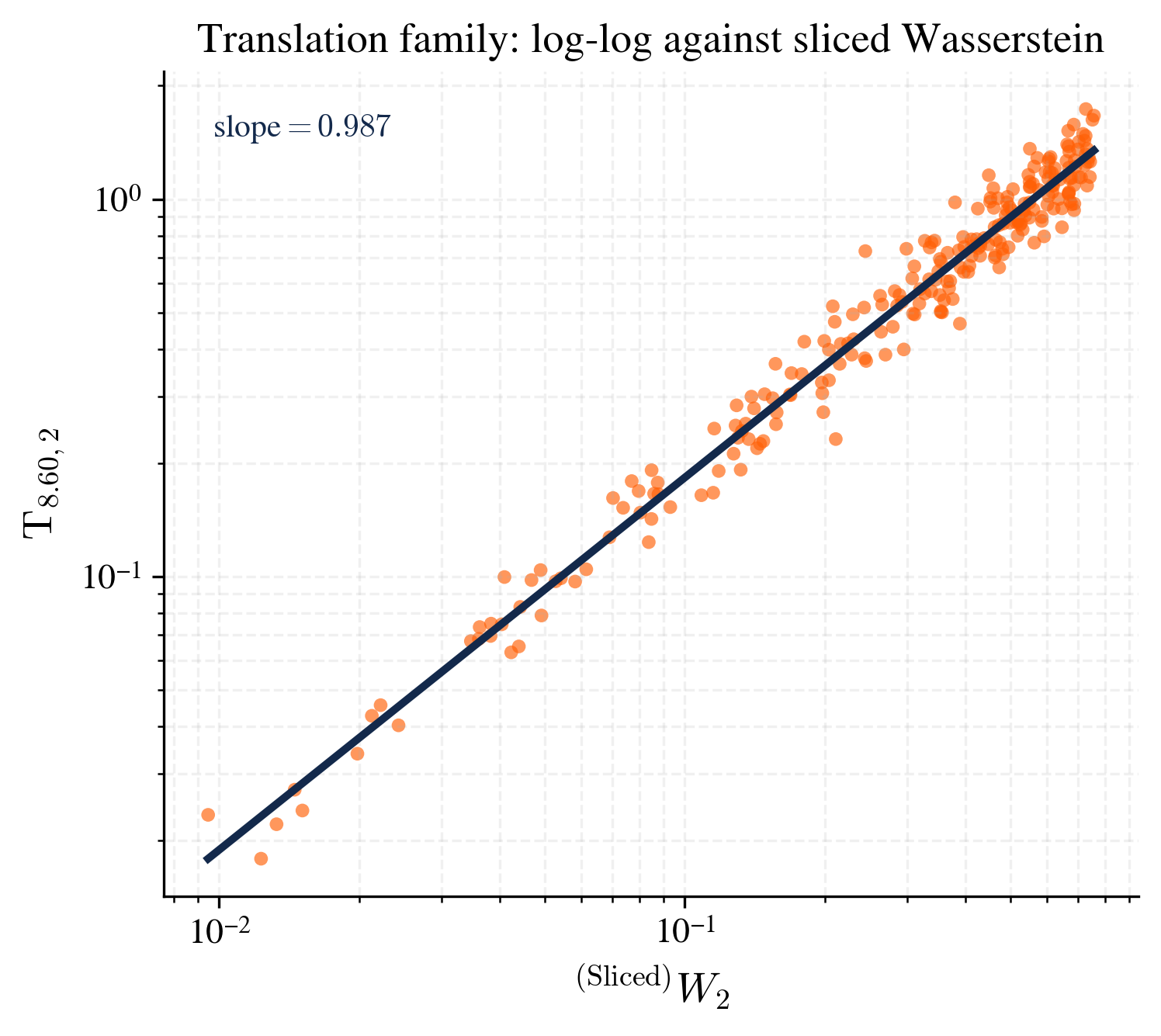} &
\includegraphics[width=0.285\textwidth]{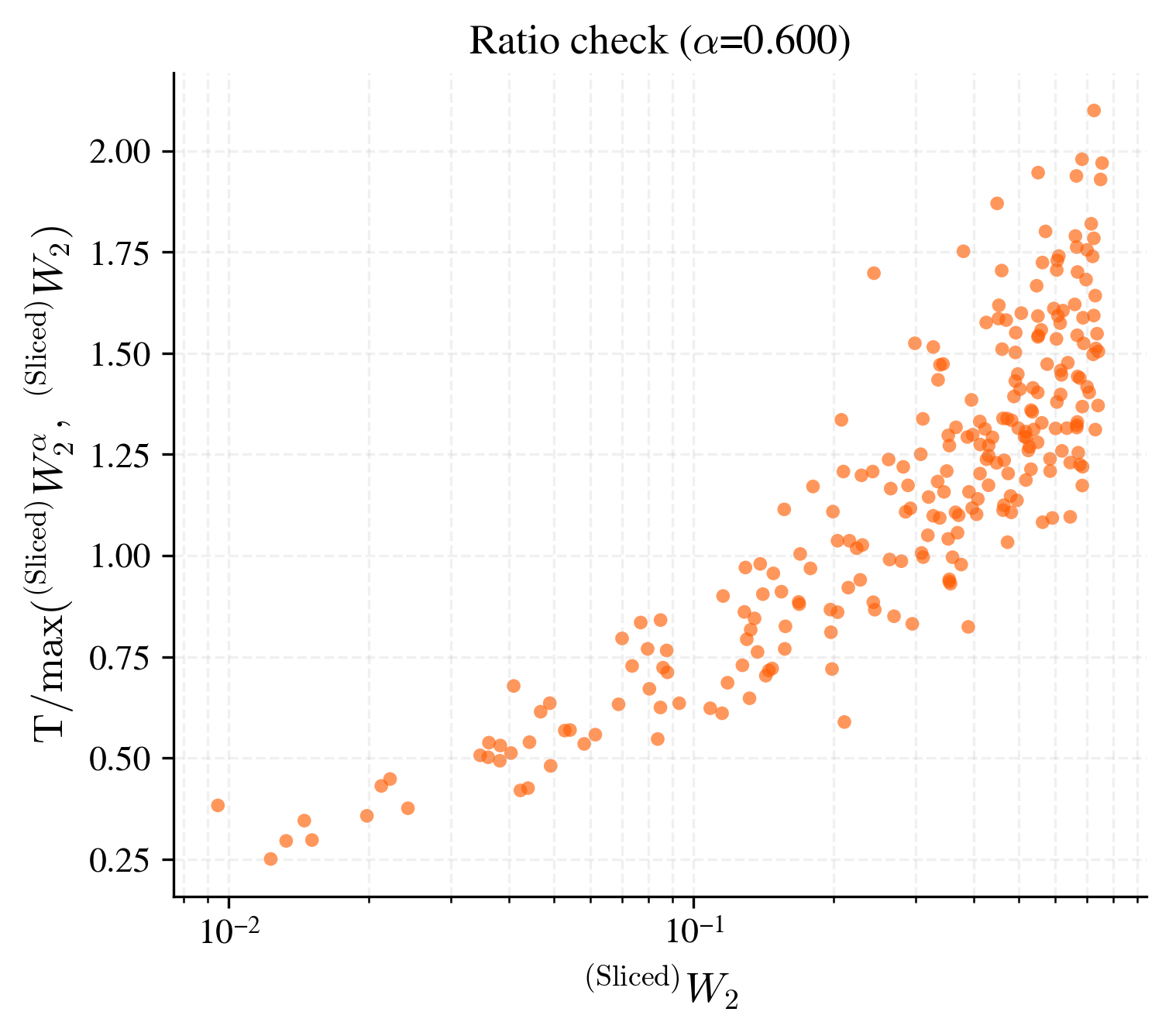} \\
\multicolumn{3}{c}{(c)\quad \(s=8.6,\ d=16,\ p=2\)}\\
\includegraphics[width=0.285\textwidth]{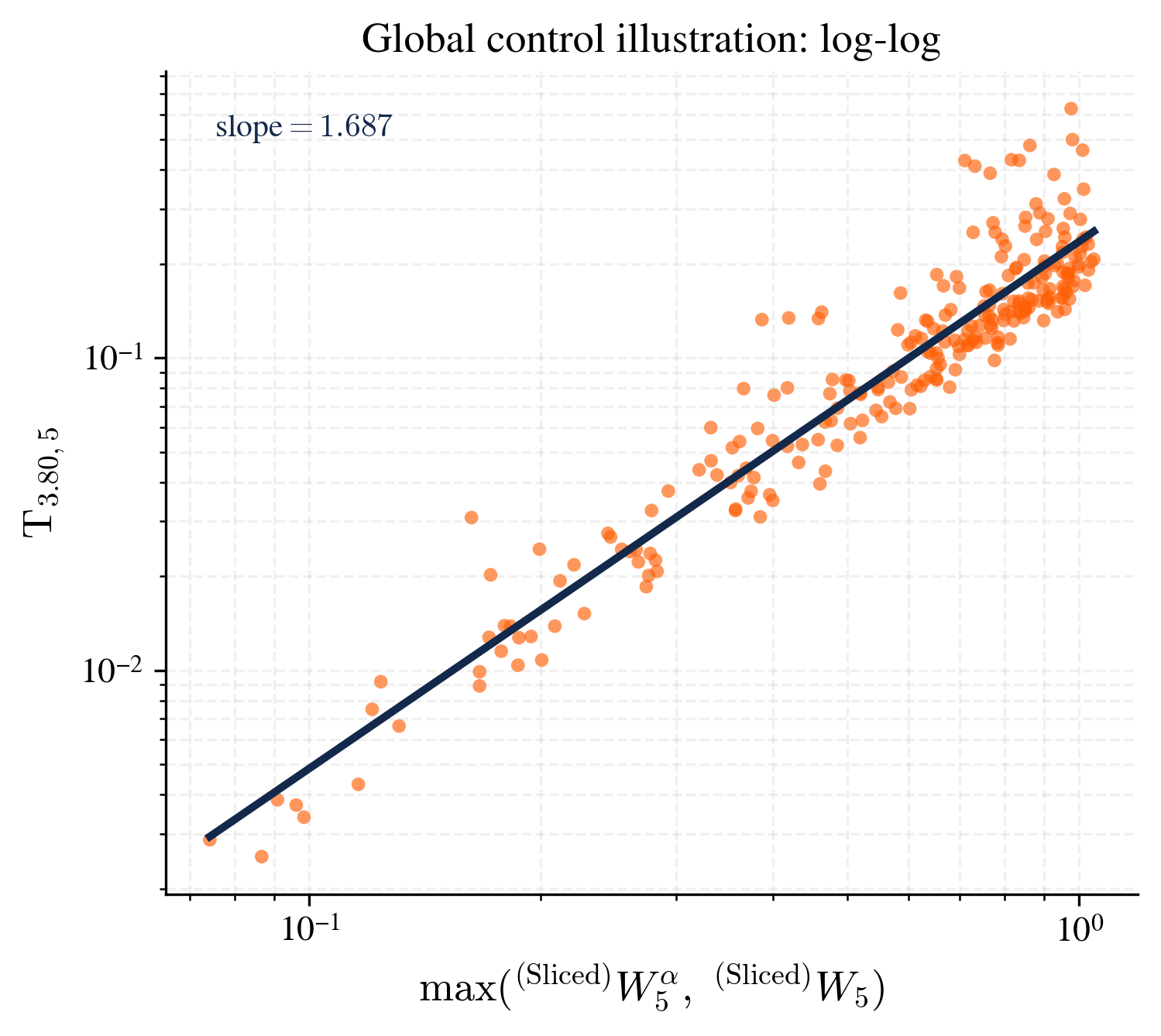} &
\includegraphics[width=0.285\textwidth]{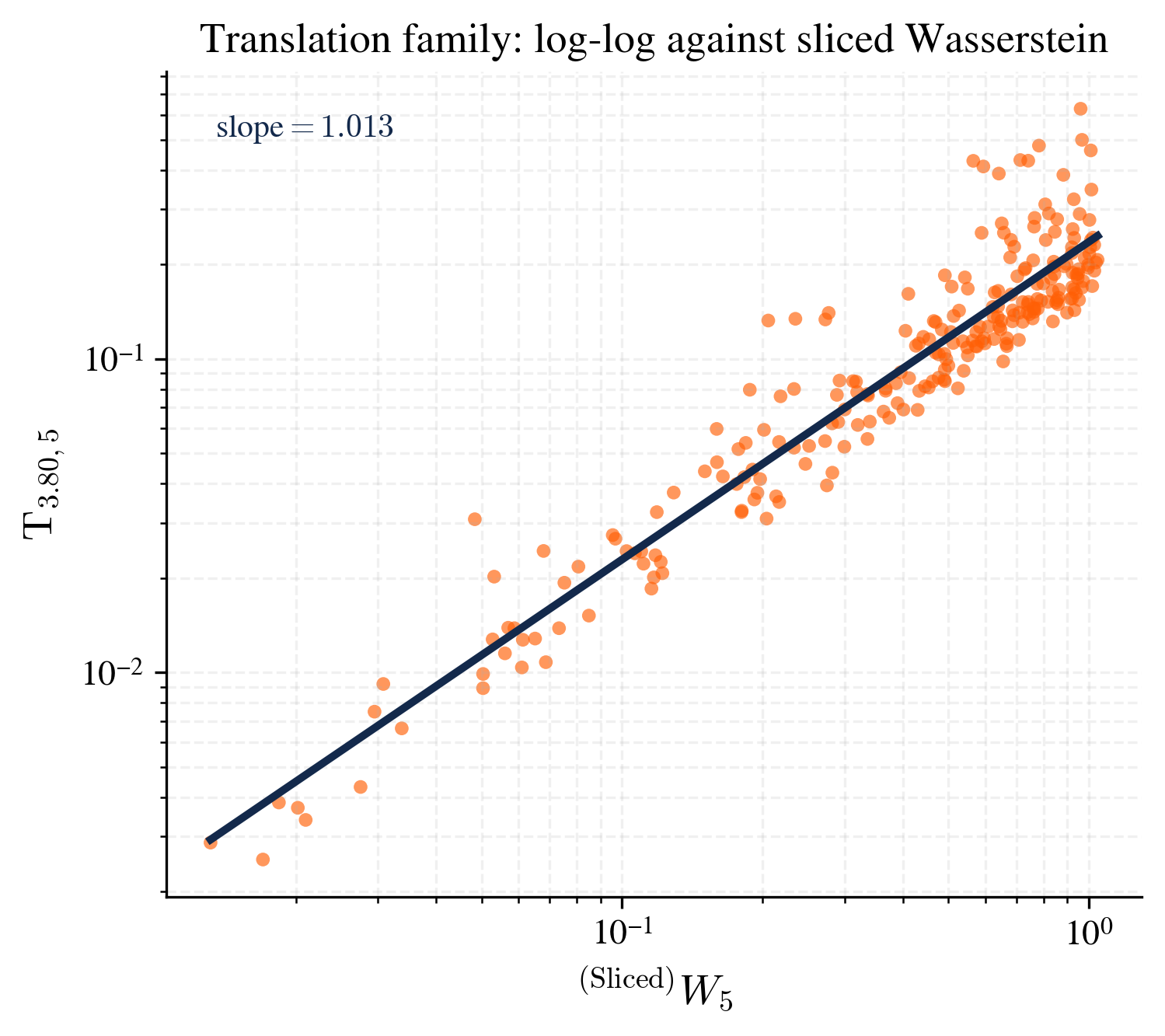} &
\includegraphics[width=0.285\textwidth]{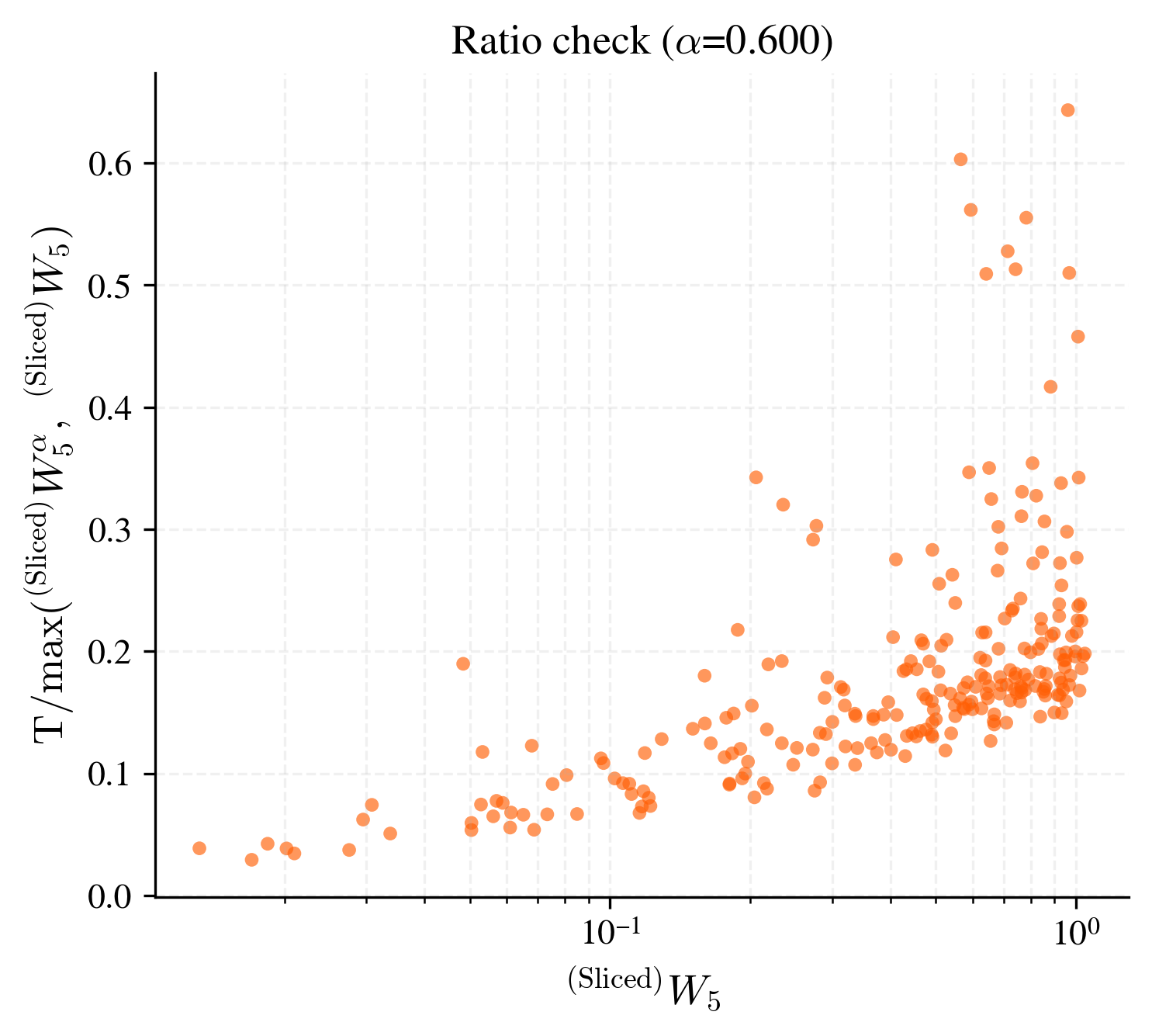} \\
\multicolumn{3}{c}{(d)\quad \(s=3.8,\ d=16,\ p=5\)}\\
\end{tabular}
\caption{Translation-family comparison of \(\widehat{\mathrm{T}}_{s,p}\) with sliced
Wasserstein. Each row corresponds to one \((d,p)\) setting. The first column plots
\(\widehat{\mathrm{T}}_{s,p}\) against
\(\max\{({}^{\mathrm{(sliced)}}W_p)^\alpha,{}^{\mathrm{(sliced)}}W_p\}\), the second
plots it directly against \({}^{\mathrm{(sliced)}}W_p\), and the third shows the
corresponding normalized ratio.}
\label{fig:exp3_translation}
\end{figure}

\else
The corresponding multi-panel plots are provided in the online appendix
(Figure~\ref{OA-fig:exp3_translation}).
\fi
\subsubsection{Quadratic Identity Check for \texorpdfstring{\(p=2\)}{p=2}}
For \(p=2\), the theory identifies \(\mathrm{T}_{s,2}^2\) with an energy/MMD discrepancy up to an explicit multiplicative constant. We use this identity as a numerical consistency check by comparing \(\widehat{\mathrm{T}}_{s,2}^2(\mu,\nu)\) with \(\mathrm{MMD}_{k_\alpha}^2(\mu,\nu)\) over random pairs \((\mu,\nu)\), where \(\alpha=2s-d\). For \((d,s)=(2,1.6)\), the fitted proportionality constant is \(5.6145\), compared with the theoretical value \(5.6579\), a relative gap of about \(0.77\%\). For \((d,s)=(16,8.6)\), the fitted constant is \(0.9291\), compared with the theoretical value \(0.8831\), a relative gap of about \(5.21\%\). The pointwise relative errors in Table~\ref{tab:exp5_identity} are appreciably larger than these aggregate gaps, especially in \(d=16\). This is expected. The pointwise errors are dominated by the Monte Carlo and truncation error of the importance-sampling estimator of \(\widehat{\mathrm T}_{s,2}^2\), which grows with dimension at fixed \(M\), while the fitted constant is a least-squares slope through the origin and is therefore weighted toward the pairs with the largest discrepancies. The scatter plots in Figure~\ref{fig:exp5} make both effects visible. The experiment therefore checks the implementation and the scale of the constant. It should not be read as an independent empirical proof of the identity.

\begin{table}[tbp]
\centering
\caption{Numerical check of the \(p=2\) identity
\(\widehat{\mathrm{T}}_{s,2}^2 \approx c_\alpha\,\mathrm{MMD}_{k_\alpha}^2\).}
\label{tab:exp5_identity}
\smallskip
\renewcommand{\arraystretch}{1.12}
\begin{tabular}{ccccccc}
\toprule\toprule
$d$ & $s$ & $\alpha=2s-d$ & theory \(c_\alpha\) & fitted \(c\) & rel.\ gap in \(c\) & median / 95\% rel.\ err. \\
\midrule
2  & 1.6 & 1.2 & 5.6579 & 5.6145 & 0.0077 & 0.187 / 0.739 \\
16 & 8.6 & 1.2 & 0.8831 & 0.9291 & 0.0521 & 0.342 / 1.081 \\
\bottomrule\bottomrule
\end{tabular}
\end{table}

\begin{figure}[tbp]
\centering
\subfloat[$d=16,s=8.6$]{
    \includegraphics[width=0.48\textwidth]{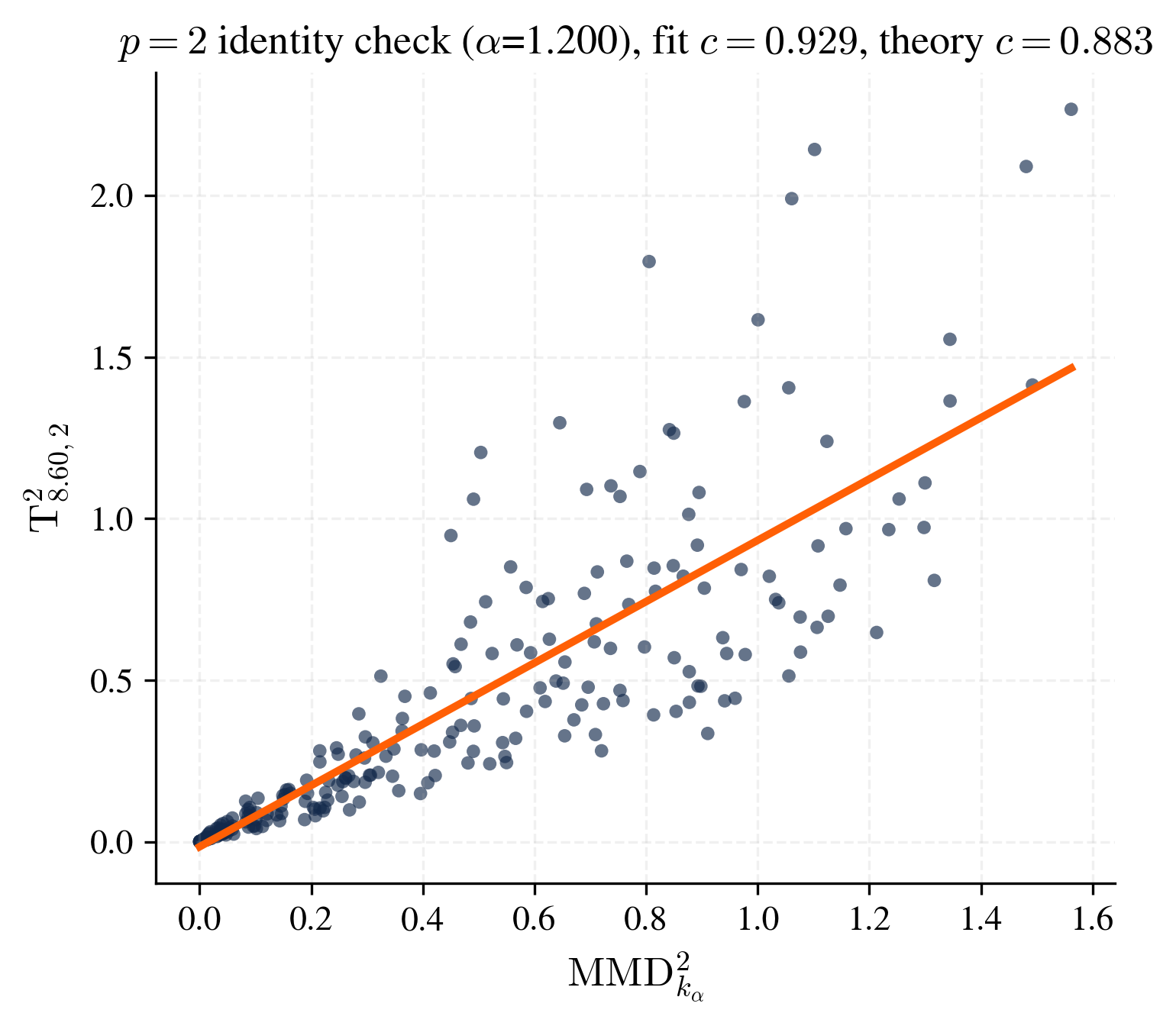}
}
\hfill
\subfloat[$d=2,s=1.6$]{
    \includegraphics[width=0.48\textwidth]{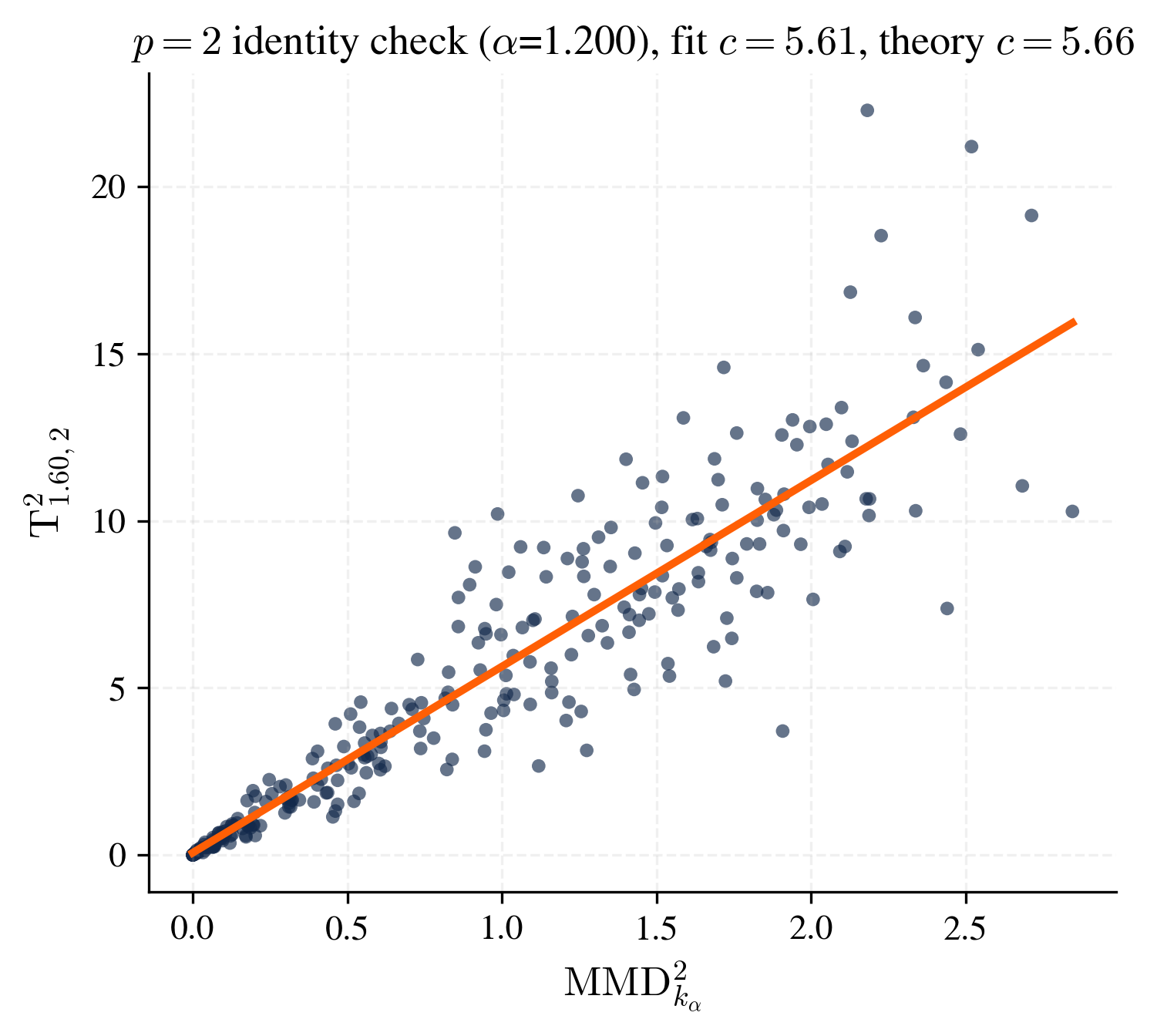}
}\\[0.5ex]
\caption{Scatter of \(\widehat{\mathrm{T}}_{s,2}^2\) versus \(\mathrm{MMD}_{k_\alpha}^2\) for \(\alpha=1.2\), together with the fitted linear relation through the origin. The fitted and theoretical constants are closer in \(d=2\) than in \(d=16\), and the point cloud is also substantially tighter in \(d=2\).}\label{fig:exp5}
\end{figure}

\subsection{Application to DOTmark Image Benchmark}\label{subsec:dotmark}
We evaluate the practical behavior of $\mathrm{T}_{s,p}$ on the DOTmark benchmark~\citep{schrieber2016dotmark}, which contains classes of gray-scale images at resolutions $32\times 32$, $64\times 64$, $128\times 128$, and $256\times 256$. Each image is converted into a probability measure on $[0,1]^2$ by normalizing pixel intensities to unit mass and rescaling pixel locations. This places all images on a common bounded domain, which is exactly the scale-controlled setting in which our theory predicts meaningful comparison between $\mathrm{T}_{s,p}$ and Wasserstein distances.\footnote{A remark on coordinates. For computational convenience, the FFT-based implementation evaluates $\widehat{\mathrm{T}}_{s,p}$ on \emph{pixel-index} coordinates (frequency box $[-\pi,\pi]^2$, with a low-frequency clamp at $2\pi/\max(H,W)$ for an $H\times W$ image), and the transport baselines likewise use pixel coordinates, so the two are internally consistent. By the exact dilation law of Lemma~\ref{lem:scaling}, passing from pixel coordinates to $[0,1]^2$ is a fixed monotone rescaling at each resolution ($\lambda=$ the resolution). Consequently all rank-based and calibration-based summaries reported below (Spearman correlation, ordinal agreement, ROC curves, and N-RMSE after affine calibration) are unaffected, while raw distance values and scatter-plot axes are on the pixel scale and are not directly comparable across resolutions. Two further processing details. For lattice-supported measures the characteristic function is $2\pi$-periodic in pixel coordinates, so the box $[-\pi,\pi]^2$ retains exactly one period and discards the (finite) contribution of higher periods to the population integral --- a truncation applied identically to every image at a fixed resolution, which therefore does not affect the comparisons. The CSV loader used throughout drops the first pixel row of each image, so the effective grids are $(R-1)\times R$ --- again identically for both metrics, preserving internal consistency.}

\ifincludeproofs
\begin{figure}[!htb]
\centering
\includegraphics[width=0.94\textwidth]{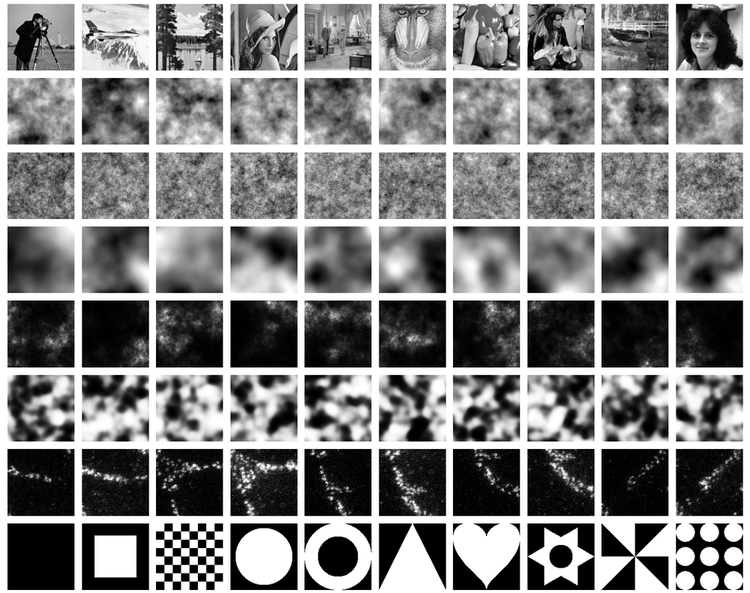}
\caption{DOTmark 1.0 data set at resolution \(128\times128\). Each row corresponds to
a class. The eight classes shown are those retained in the experiments.}
\label{fig:DOTmark_images}
\end{figure}

\begin{figure}[!htb]
\centering
\includegraphics[width=0.86\textwidth]{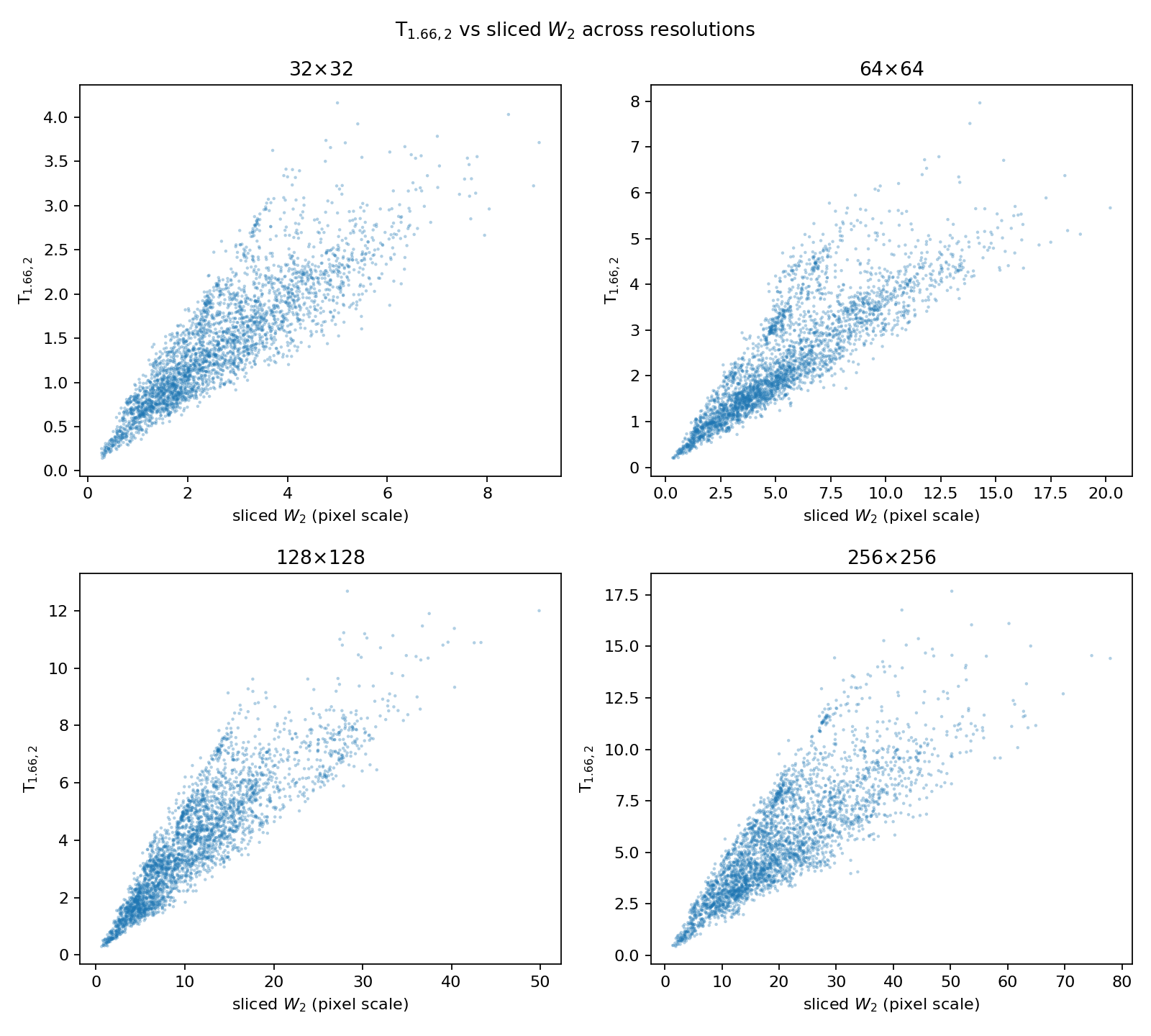}
\caption{DOTmark scatter comparison of \(\mathrm{T}_{s,2}\) (with \(s=1.66\), a value from
the tabulated grid of Table~\ref{tab:optimal_s_vertical_res}) against sliced \(W_2\), one
panel per resolution. Axes are on the pixel-coordinate scale of the footnote in
Section~\ref{subsec:dotmark} and are not comparable across resolutions.}
\label{fig:p2_scatter}
\end{figure}

\begin{figure}[!htb]
\centering
\includegraphics[width=0.86\textwidth]{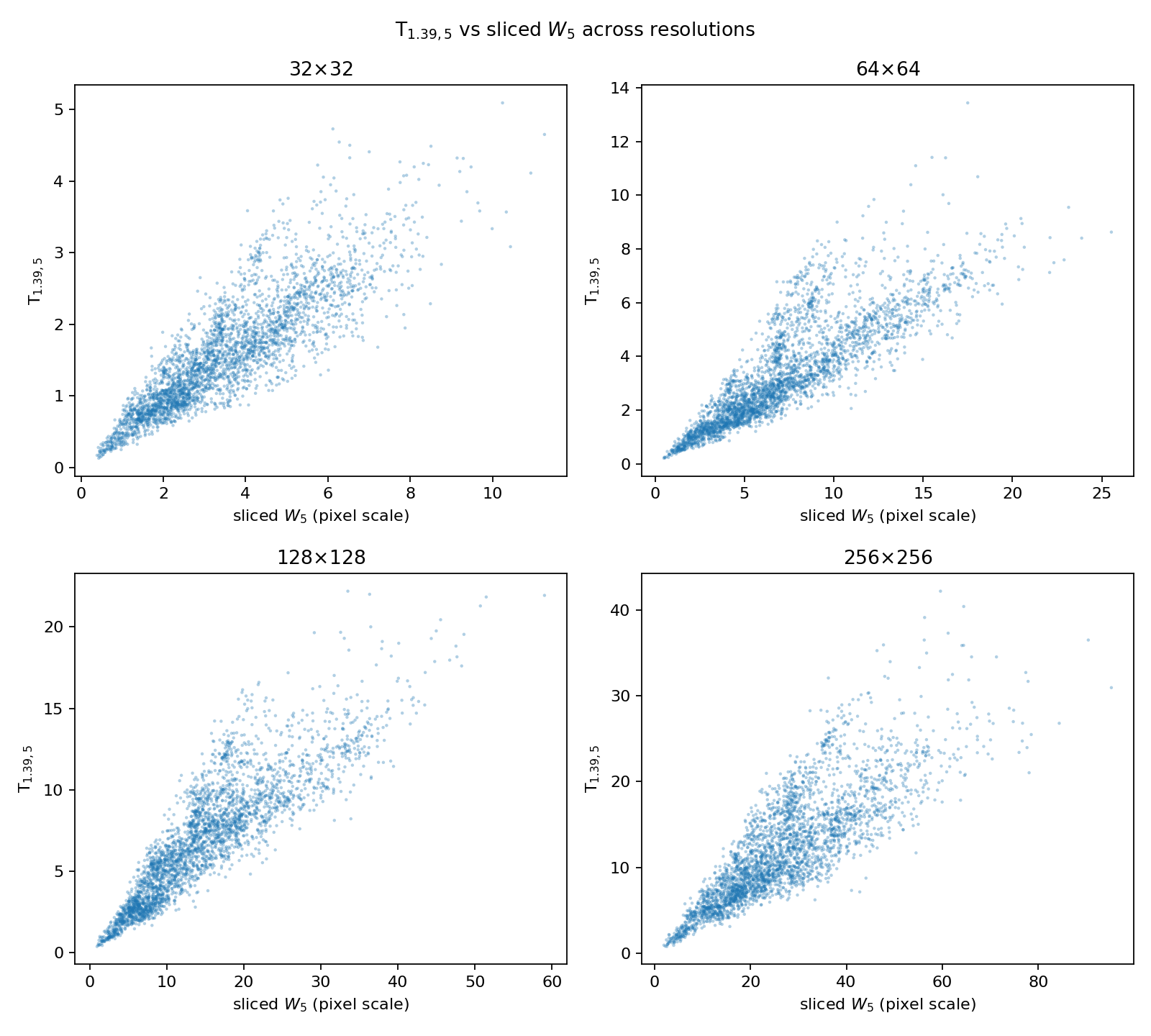}
\caption{As in Figure~\ref{fig:p2_scatter}, for \(\mathrm{T}_{s,5}\) with \(s=1.39\)
versus sliced \(W_5\).}
\label{fig:p5_scatter}
\end{figure}

\begin{figure}[!htb]
\centering
\includegraphics[width=0.86\textwidth]{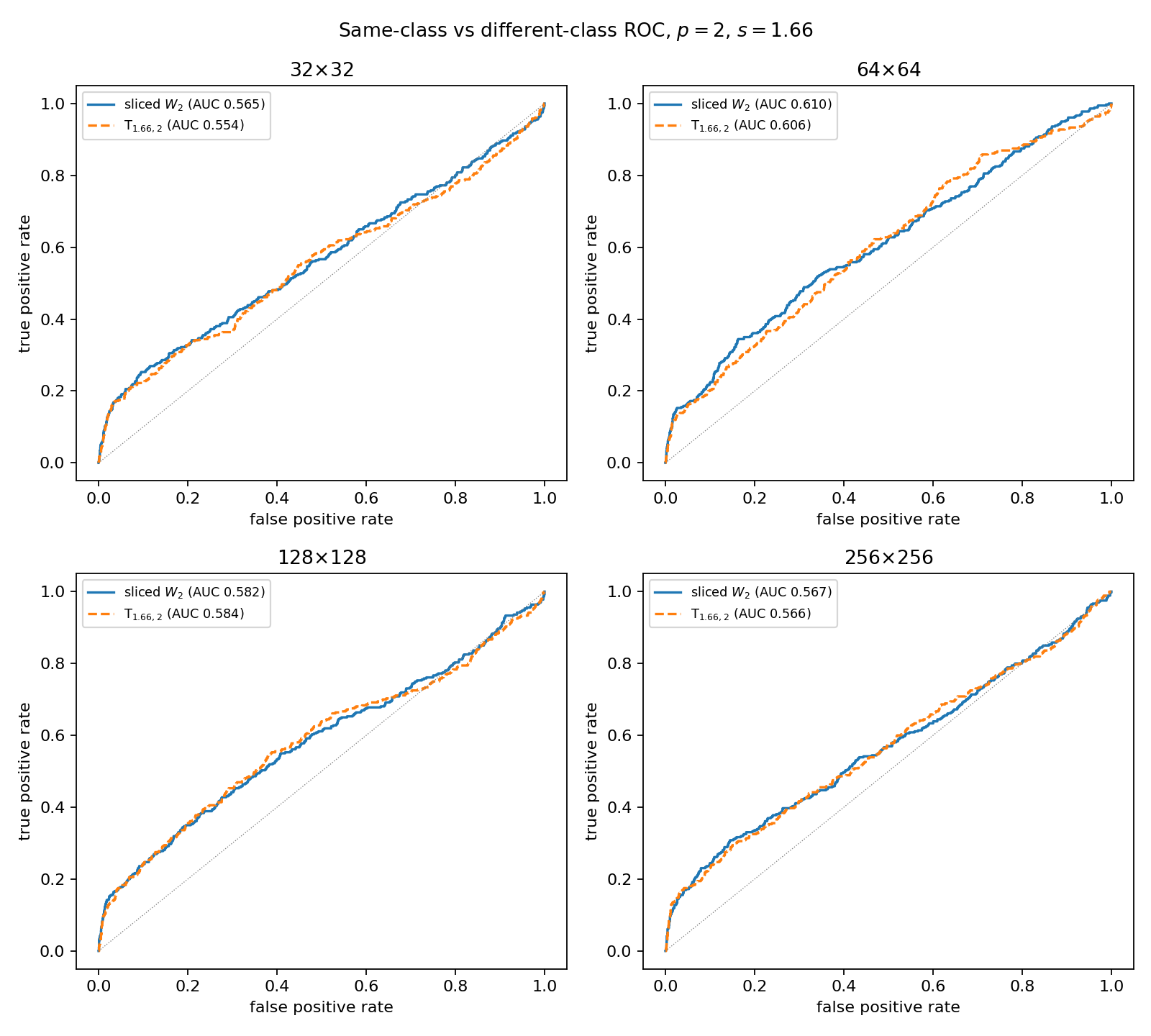}
\caption{DOTmark ROC curves for distinguishing same-class from different-class image
pairs by thresholding the distance (smaller distance indicates same class), for
\(\mathrm{T}_{s,2}\) with \(s=1.66\) and sliced \(W_2\), one panel per resolution, with
the area under each curve in the legends.}
\label{fig:p2_roc}
\end{figure}

\begin{figure}[!htb]
\centering
\includegraphics[width=0.86\textwidth]{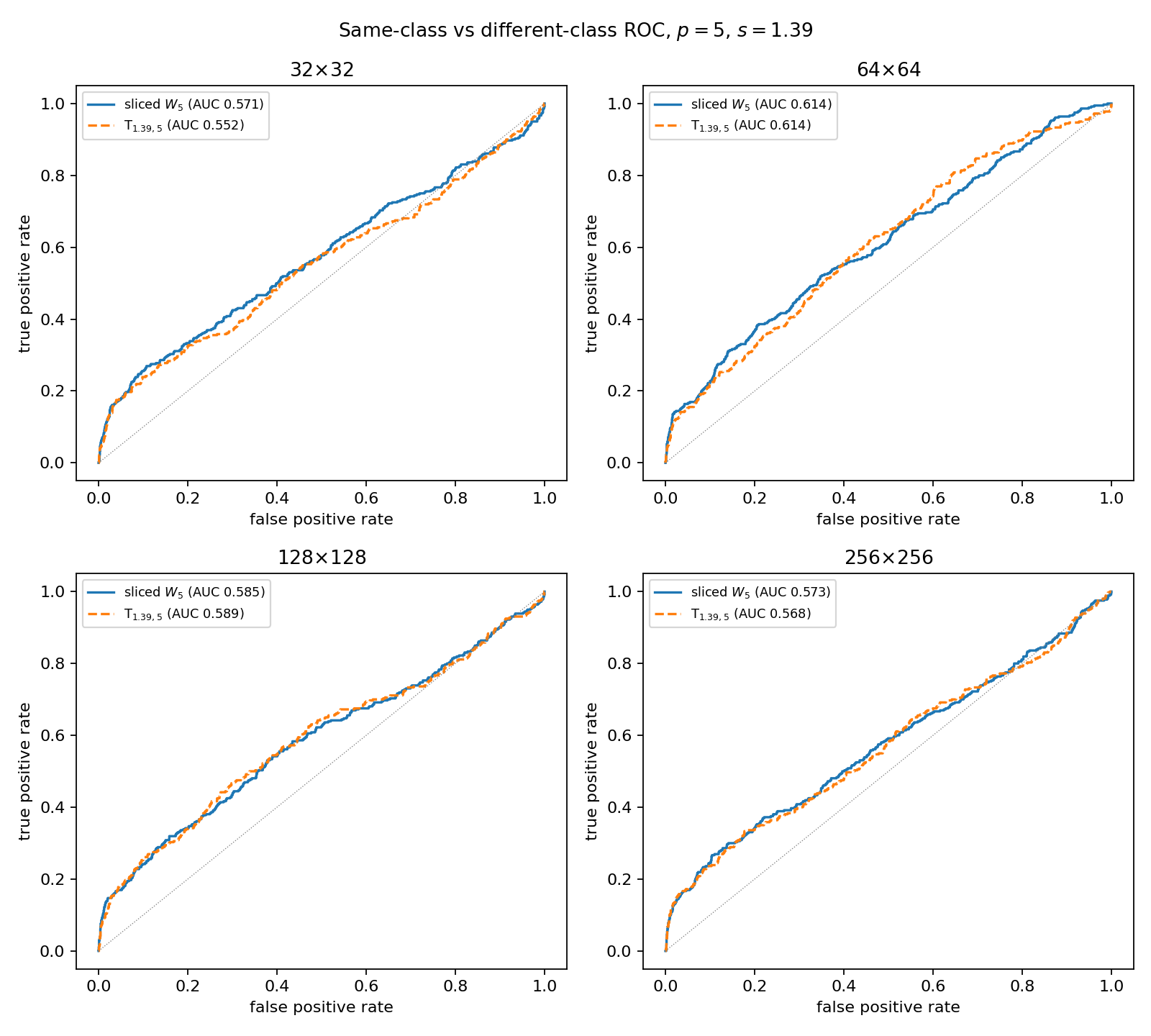}
\caption{As in Figure~\ref{fig:p2_roc}, for \(\mathrm{T}_{s,5}\) with \(s=1.39\) and
sliced \(W_5\).}
\label{fig:p5_roc}
\end{figure}

\fi

The purpose of this experiment is not to prove metric equivalence, but to ask whether $\mathrm{T}_{s,p}$ behaves similarly to the chosen sliced-\(W_p\) baseline on this image collection. We examine pairwise ordering, same-class versus different-class discrimination, and measured computation time. The DOTmark task uses eight classes with ten images per class --- ClassicImages, GRFmoderate, GRFrough, GRFsmooth, LogGRF, LogitGRF, MicroscopyImages, and Shapes, with CauchyDensity and WhiteNoise excluded --- so that all $\binom{80}{2}=3{,}160$ image pairs enter the pairwise comparisons. Sliced Wasserstein uses 64 random projections with random seed \(0\), and the supplementary Sinkhorn comparison uses entropic regularization \(0.01\).

\subsubsection{Pairwise Comparison with Sliced Wasserstein}
For $p\in\{2,5\}$ and several values of $s$ in the admissible range $d/p < s < 1+d/p$, we compute all pairwise $\mathrm{T}_{s,p}$ distances and compare them with sliced \(W_p\). We summarize agreement using three criteria. \emph{N-RMSE}. The $\mathrm{T}_{s,p}$ distances are affinely calibrated to the sliced-Wasserstein scale by ordinary least squares ($\mathrm{T}\approx a\,W+b$. Calibrated value $(\mathrm{T}-b)/a$), and we report the root-mean-square difference between the calibrated values and $W$ over all pairs, normalized by the mean of $W$. \emph{Spearman}. The rank correlation between the two distance vectors over all $3{,}160$ pairs. \emph{Ordinal (triplet) agreement}. Over all ordered triples $(a;b,c)$ of distinct images, the fraction for which the two metrics order the anchored comparison the same way, i.e.\ $\operatorname{sign}\big(D(a,b)-D(a,c)\big)$ agrees between $D=\mathrm{T}_{s,p}$ and $D={}^{\text{(sliced)}}W_p$. The first criterion measures agreement of calibrated values, the second global rank agreement, and the third local comparison agreement. Class labels enter only the ROC and two-sample analyses.

Table~\ref{tab:optimal_s_vertical_res} reports Spearman correlations from \(0.8165\) to \(0.9136\) and ordinal agreement from \(0.8071\) to \(0.8585\). Within each resolution the agreement metrics are lowest at \(s=0.41\) for \(p=5\) --- the grid value closest to the lower endpoint \(d/p=0.4\) of the admissible window --- consistent with the greater high-frequency and discretization sensitivity predicted near that endpoint. The differences are modest, however, and do not by themselves support a general tuning rule.

\ifincludeproofs
The scatter plots in Figures~\ref{fig:p2_scatter}--\ref{fig:p5_scatter}, computed at the
grid values $s=1.66$ ($p=2$) and $s=1.39$ ($p=5$), show an increasing relationship between
$\mathrm{T}_{s,p}$ and sliced \(W_p\), with greater dispersion among some large-distance pairs.
Figures~\ref{fig:p2_roc}--\ref{fig:p5_roc} compare same-class versus different-class
discrimination at the same $s$ values.
\else
The corresponding scatter and ROC plots, computed at the grid values $s=1.66$ ($p=2$) and
$s=1.39$ ($p=5$), are provided in the online appendix
(Figures~\ref{OA-fig:p2_scatter}--\ref{OA-fig:p5_roc}).
\fi
The ROC curves are similar across the tested resolutions, but separation is moderate for both metrics and the experiment contains only ten images per class.

On this bounded, normalized benchmark, $\mathrm{T}_{s,p}$ broadly preserves the rankings and anchored ordinal comparisons of the sliced-Wasserstein baseline while avoiding a transport solve. This is evidence about one implementation and one small image benchmark, not a claim of universal fidelity to \(W_p\).

\begin{table}[!tb]
\centering
\caption{DOTmark. Agreement of $\mathrm{T}_{s,p}$ with sliced $W_p$ across resolutions and
$s$ grids for $p\in\{2,5\}$. N-RMSE after affine calibration, Spearman rank correlation, and
ordinal (anchored triplet) agreement, as defined in the text, over all $3{,}160$ image pairs.}
\label{tab:optimal_s_vertical_res}
\smallskip
\footnotesize
\setlength{\tabcolsep}{3pt}
\begin{tabular}{@{}c@{\hspace{1em}}c@{}}
\begin{tabular}{@{}ccccc@{}}
\toprule\toprule
 & \textbf{$s$} & \textbf{N-RMSE} & \textbf{Spearman} & \textbf{Ordinal agr.} \\
\midrule
\multicolumn{5}{c}{32 $\times$ 32 Pixels} \\
\midrule
\multirow{4}{*}{\textbf{$p = 2$}}
  & 1.01 & 0.3176 & 0.8782 & 0.8403 \\
  & 1.34 & 0.3043 & 0.8886 & 0.8461 \\
  & 1.66 & 0.3000 & 0.8919 & 0.8476 \\
  & 1.99 & 0.2995 & 0.8918 & 0.8467 \\
\midrule
\multirow{4}{*}{\textbf{$p = 5$}}
  & 0.41 & 0.3057 & 0.8668 & 0.8229 \\
  & 0.74 & 0.2870 & 0.8861 & 0.8335 \\
  & 1.06 & 0.2838 & 0.8892 & 0.8358 \\
  & 1.39 & 0.2847 & 0.8879 & 0.8343 \\
\bottomrule\bottomrule
\end{tabular}
&
\begin{tabular}{@{}ccccc@{}}
\toprule\toprule
 & \textbf{$s$} & \textbf{N-RMSE} & \textbf{Spearman} & \textbf{Ordinal agr.} \\
\midrule
\multicolumn{5}{c}{128 $\times$ 128 Pixels} \\
\midrule
\multirow{4}{*}{\textbf{$p = 2$}}
  & 1.01 & 0.3359 & 0.8930 & 0.8469 \\
  & 1.34 & 0.3172 & 0.9055 & 0.8546 \\
  & 1.66 & 0.3099 & 0.9116 & 0.8580 \\
  & 1.99 & 0.3077 & 0.9136 & 0.8585 \\
\midrule
\multirow{4}{*}{\textbf{$p = 5$}}
  & 0.41 & 0.3325 & 0.8830 & 0.8291 \\
  & 0.74 & 0.3092 & 0.9014 & 0.8394 \\
  & 1.06 & 0.3026 & 0.9073 & 0.8429 \\
  & 1.39 & 0.3019 & 0.9086 & 0.8435 \\
\bottomrule\bottomrule
\end{tabular}
\\[1em]
\begin{tabular}{@{}ccccc@{}}
\toprule\toprule
 & \textbf{$s$} & \textbf{N-RMSE} & \textbf{Spearman} & \textbf{Ordinal agr.} \\
\midrule
\multicolumn{5}{c}{64 $\times$ 64 Pixels} \\
\midrule
\multirow{4}{*}{\textbf{$p = 2$}}
  & 1.01 & 0.3495 & 0.8797 & 0.8365 \\
  & 1.34 & 0.3403 & 0.8901 & 0.8437 \\
  & 1.66 & 0.3377 & 0.8938 & 0.8460 \\
  & 1.99 & 0.3376 & 0.8948 & 0.8461 \\
\midrule
\multirow{4}{*}{\textbf{$p = 5$}}
  & 0.41 & 0.3565 & 0.8681 & 0.8212 \\
  & 0.74 & 0.3378 & 0.8901 & 0.8341 \\
  & 1.06 & 0.3323 & 0.8955 & 0.8379 \\
  & 1.39 & 0.3309 & 0.8966 & 0.8383 \\
\bottomrule\bottomrule
\end{tabular}
&
\begin{tabular}{@{}ccccc@{}}
\toprule\toprule
 & \textbf{$s$} & \textbf{N-RMSE} & \textbf{Spearman} & \textbf{Ordinal agr.} \\
\midrule
\multicolumn{5}{c}{256 $\times$ 256 Pixels} \\
\midrule
\multirow{4}{*}{\textbf{$p = 2$}}
  & 1.01 & 0.3661 & 0.8413 & 0.8252 \\
  & 1.34 & 0.3551 & 0.8529 & 0.8320 \\
  & 1.66 & 0.3511 & 0.8573 & 0.8346 \\
  & 1.99 & 0.3498 & 0.8590 & 0.8351 \\
\midrule
\multirow{4}{*}{\textbf{$p = 5$}}
  & 0.41 & 0.3604 & 0.8165 & 0.8071 \\
  & 0.74 & 0.3421 & 0.8421 & 0.8189 \\
  & 1.06 & 0.3360 & 0.8491 & 0.8232 \\
  & 1.39 & 0.3337 & 0.8513 & 0.8236 \\
\bottomrule\bottomrule
\end{tabular}
\\
\end{tabular}
\end{table}
\subsubsection{Gaussian-Kernel MMD Two-Sample Testing}
Beyond pairwise distance comparisons, we also study whether different DOTmark classes can be distinguished at the \emph{distribution-of-images} level via a kernel two-sample test. For two classes $C_1$ and $C_2$, we treat each normalized image as one observation in the sample space of probability measures on $[0,1]^2$. Writing $\{\mu_i\}_{i=1}^{n_1}\subset C_1$ and $\{\nu_j\}_{j=1}^{n_2}\subset C_2$ for the corresponding image measures, we test
\[
H_0:\ \mathcal{L}(\mu)=\mathcal{L}(\nu)
\qquad \text{versus} \qquad
H_1:\ \mathcal{L}(\mu)\neq \mathcal{L}(\nu)
\]
where $\mathcal{L}(\mu)$ and $\mathcal{L}(\nu)$ denote the class-level distributions over images.

We use the unbiased quadratic-time estimator of the squared maximum mean discrepancy \citep{gretton2012kernel},
\[
\widehat{\mathrm{MMD}}_u^2
=
\frac{1}{n_1(n_1-1)}\sum_{i\neq i'} k_\sigma(\mu_i,\mu_{i'})
+
\frac{1}{n_2(n_2-1)}\sum_{j\neq j'} k_\sigma(\nu_j,\nu_{j'})
-
\frac{2}{n_1n_2}\sum_{i=1}^{n_1}\sum_{j=1}^{n_2} k_\sigma(\mu_i,\nu_j)
\]
with Gaussian kernel
\[
k_\sigma(\mu,\nu)
=
\exp\!\left(-\frac{D(\mu,\nu)^2}{2\sigma^2}\right)
\]
where $D$ is a distance between images, taken here to be either $\mathrm{T}_{s,p}$ or sliced \(W_p\). The bandwidth $\sigma$ is selected by the median heuristic applied to the pairwise distances $\{D(\mu,\nu)\}$ over the pooled sample. Statistical significance is assessed by a permutation test that repeatedly shuffles class labels across the pooled set $\{\mu_i\}\cup\{\nu_j\}$, following the standard methodology of \citet{gretton2012kernel}.

This experiment complements the pairwise analyses above. The earlier metrics evaluate agreement between $\mathrm{T}_{s,p}$ and sliced Wasserstein on individual image pairs, whereas the MMD test asks whether these distances induce comparable \emph{class-level} separation. This is a second-layer Gaussian kernel on the space of image-valued probability measures. It is distinct from the energy-kernel MMD on \(\mathbb R^d\) that is identified exactly with \(\mathrm{T}_{s,2}\) in Section~\ref{sec:mmd}.

We report results for both $p=2$ and $p=5$ to match the pairwise study
(Table~\ref{tab:mmd_dotmark_summary}). Across the 28 class pairs, the reject/non-reject
decisions agree with the sliced-Wasserstein version in \(21\) to \(25\) cases,
depending on \(p\) and \(s\). Every summary count is recomputable from the pair-level test
records, which come from the same run.
\ifincludeproofs
Tables~\ref{tab:mmd_dotmark_disagreements} and \ref{tab:mmd_dotmark_p5_disagreements} list the disagreements.
\else
The online appendix lists the pair-level disagreements
(Tables~\ref{OA-tab:mmd_dotmark_disagreements}
and~\ref{OA-tab:mmd_dotmark_p5_disagreements}) as well as the complete per-pair test record
(Tables~\ref{OA-tab:appendix_mmd_p2_res256} and~\ref{OA-tab:appendix_mmd_p5_res256}).
\fi
With only ten images per class and one benchmark, these percentages describe this run rather than the power, level, or general operating characteristics of either test.

\begin{table}[tbp]
\centering
\caption{Gaussian-kernel MMD two-sample testing on DOTmark at resolution $256\times256$.
There are $28$ class pairs, each with $n_1=n_2=10$ images and $2000$ permutations. Decisions are
at level $\alpha=0.05$. The counts are derived from the pair-level records in
Appendix~B of the online appendix.}
\label{tab:mmd_dotmark_summary}
\smallskip
\small
\setlength{\tabcolsep}{4pt}
\begin{tabular}{@{}llccc@{}}
\toprule\toprule
& \textbf{Method} & \textbf{Rejected pairs} & \textbf{Reject rate} & \textbf{Agreement} \\
\midrule
\multirow{5}{*}{$\mathbf{p=2}$}
& Sliced Wasserstein & 13/28 & 46.4\% & -- \\
& Toscani ($s=1.01$) & 18/28 & 64.3\% & 21/28 \ (75.0\%) \\
& Toscani ($s=1.34$) & 18/28 & 64.3\% & 21/28 \ (75.0\%) \\
& Toscani ($s=1.66$) & 17/28 & 60.7\% & 22/28 \ (78.6\%) \\
& Toscani ($s=1.99$) & 17/28 & 60.7\% & 22/28 \ (78.6\%) \\
\midrule
\multirow{5}{*}{$\mathbf{p=5}$}
& Sliced Wasserstein & 19/28 & 67.9\% & -- \\
& Toscani ($s=0.41$) & 16/28 & 57.1\% & 25/28 \ (89.3\%) \\
& Toscani ($s=0.74$) & 15/28 & 53.6\% & 24/28 \ (85.7\%) \\
& Toscani ($s=1.06$) & 15/28 & 53.6\% & 24/28 \ (85.7\%) \\
& Toscani ($s=1.39$) & 16/28 & 57.1\% & 23/28 \ (82.1\%) \\
\bottomrule\bottomrule
\end{tabular}
\end{table}

\ifincludeproofs
\begin{table}[!htb]
\centering
\caption{Class pairs on which Gaussian-kernel MMD tests based on Toscani distances and sliced
Wasserstein give different decisions at level \(\alpha=0.05\), for \(p=2\) at resolution
\(256\times256\). Pairs on which all five tests agree are omitted.}
\label{tab:mmd_dotmark_disagreements}
\smallskip
\small
\setlength{\tabcolsep}{3pt}
\begin{tabular}{@{}p{4.1cm}ccccc@{}}
\toprule\toprule
\textbf{Class pair} & \textbf{Sliced \(W_2\)} & \(\mathbf{s=1.01}\) & \(\mathbf{s=1.34}\) & \(\mathbf{s=1.66}\) & \(\mathbf{s=1.99}\) \\
\midrule
ClassicImages vs GRFsmooth & Reject & Non-reject & Non-reject & Non-reject & Non-reject \\
GRFmoderate vs Shapes & Non-reject & Reject & Reject & Reject & Reject \\
GRFsmooth vs MicroscopyImages & Non-reject & Reject & Reject & Reject & Reject \\
GRFsmooth vs Shapes & Non-reject & Reject & Reject & Non-reject & Non-reject \\
LogGRF vs MicroscopyImages & Non-reject & Reject & Reject & Reject & Reject \\
LogGRF vs Shapes & Non-reject & Reject & Reject & Reject & Reject \\
LogitGRF vs MicroscopyImages & Non-reject & Reject & Reject & Reject & Reject \\
\bottomrule\bottomrule
\end{tabular}
\end{table}

\begin{table}[!htb]
\centering
\caption{Class pairs on which Gaussian-kernel MMD tests based on Toscani distances and sliced
Wasserstein give different decisions at level \(\alpha=0.05\), for \(p=5\) at resolution
\(256\times256\). Pairs on which all five tests agree are omitted.}
\label{tab:mmd_dotmark_p5_disagreements}
\smallskip
\small
\setlength{\tabcolsep}{3pt}
\begin{tabular}{@{}p{4.1cm}ccccc@{}}
\toprule\toprule
\textbf{Class pair} & \textbf{Sliced \(W_5\)} & \(\mathbf{s=0.41}\) & \(\mathbf{s=0.74}\) & \(\mathbf{s=1.06}\) & \(\mathbf{s=1.39}\) \\
\midrule
ClassicImages vs GRFsmooth & Reject & Non-reject & Non-reject & Non-reject & Non-reject \\
ClassicImages vs LogitGRF & Reject & Non-reject & Non-reject & Non-reject & Non-reject \\
GRFsmooth vs Shapes & Reject & Reject & Non-reject & Non-reject & Non-reject \\
LogitGRF vs MicroscopyImages & Non-reject & Non-reject & Non-reject & Non-reject & Reject \\
MicroscopyImages vs Shapes & Reject & Non-reject & Non-reject & Non-reject & Non-reject \\
\bottomrule\bottomrule
\end{tabular}
\end{table}

\fi

\subsection{Computational Cost}\label{subsec:comp_cost}

Computational efficiency is crucial for large-scale and high-dimensional applications. 
The Toscani-Fourier distance $\mathrm{T}_{s,p}$ compares probability measures through weighted integrals of differences of characteristic functions, and it avoids the optimization step that is intrinsic to optimal-transport distances such as the Wasserstein metric $W_p$.
For an empirical measure $\mu_n=\frac1n\sum_{i=1}^n\delta_{x_i}$, the characteristic function is
\[
\Bchi_{\mu_n}(u)=\frac1n\sum_{i=1}^n e^{i u^\top x_i}
\]
Evaluating $\Bchi_{\mu_n}(u)$ at one frequency $u\in\R^d$ requires computing $u\cdot x_i$ for all $i$, hence
$\mathcal O(nd)$ arithmetic operations (plus $n$ complex exponentials). 
To compute $\mathrm{T}_{s,p}(\mu_n,\nu_m)$ in practice, one approximates the defining integral by quadrature or Monte Carlo sampling over frequencies $\{u_j\}_{j=1}^M$ (typically with truncation and importance sampling guided by the weight $\|u\|^{-ps}$). 
This yields an overall complexity $\mathcal O(Md(n+m))$, with memory that can be kept $\mathcal O(d(n+m))$ if frequencies are streamed in batches.

Crucially, the computation uses only dot products and exponentials and is embarrassingly parallel over $j$, making it well-suited for GPU acceleration and distributed evaluation.

In one dimension, $W_p$ between empirical measures can be computed exactly in $\mathcal O(n\log n)$ time via sorting and matching quantiles. 
In dimensions $d\ge2$, exact computation for discrete measures amounts to a linear program over couplings with $\Theta(n^2)$ variables, which is typically prohibitively expensive beyond moderate $n$ and requires $\Theta(n^2)$ memory (to store costs/couplings). \citep[see, e.g.,][]{peyre2019computational}. 
Entropy-regularized approximations (Sinkhorn-type methods) replace the LP by an iterative matrix-scaling procedure \citep{cuturi2013sinkhorn}. Each iteration costs $\mathcal O(n^2)$ in general (and requires access to an $n\times n$ cost/kernel matrix), while the number of iterations depends on the regularization level and the target accuracy. Although fast implementations exist and structure can sometimes be exploited (e.g.\ convolutional costs on grids), the quadratic scaling and the need to build or apply a dense pairwise cost operator remain important bottlenecks for large $n$. Sliced Wasserstein instead sorts projected samples and has a different accuracy-cost trade-off governed by the number of projections. The DOTmark implementation of \(\mathrm{T}_{s,p}\) uses a two-dimensional FFT on each image grid rather than the generic Monte Carlo estimator described above, so its measured cost is governed by grid FFTs and weighted array operations. Tables~\ref{tab:complexity}--\ref{tab:runtime} separate these algorithmic regimes and report within-run DOTmark timings. All timings were collected within a single run on one compute node of the Delta system at NCSA (allocated through ACCESS). Absolute values are hardware- and implementation-dependent, and only within-run comparisons are meaningful. The Sinkhorn baseline uses entropic regularization $0.01$ on pixel-coordinate costs with at most $2{,}000$ iterations and stopping threshold $10^{-9}$. Per-pair convergence status was not separately logged, so its timings should be read as indicative. Structured alternatives --- e.g.\ convolutional Sinkhorn iterations on grids with per-iteration cost $O(N\log N)$ \citep{peyre2019computational} --- would narrow the measured gap on gridded data.

\begin{table}[tbp]
\centering
\caption{Comparison of computational scaling and algorithmic structure. The frequency budget
$M$ controls accuracy in a distribution-dependent way (see the discussion below). The table
compares algorithmic structure, not error-matched cost. On grids, convolutional Sinkhorn variants
reduce the per-iteration cost to $O(N\log N)$ \citep{peyre2019computational}.}
\label{tab:complexity}
\smallskip
\small
\renewcommand{\arraystretch}{1.15}
\begin{tabular}{@{}p{3.0cm}p{2.5cm}p{4.0cm}p{2.5cm}@{}}
\toprule\toprule
\textbf{Metric} & \textbf{1D cost} & \textbf{Cost for $d\ge2$} & \textbf{Optimization} \\
\midrule
$W_p$ (exact, discrete) 
& $\mathcal O(n\log n)$ 
& LP on $\Theta(n^2)$ variables
& Yes 
\\

Sinkhorn / entropic OT 
& $\mathcal O(n^2)$ per iter 
& $\mathcal O(n^2)$ per iter
& Yes (iterative) 
 \\
Sliced \(W_p\) with \(L\) projections
& $\mathcal O(Ln\log n)$
& $\mathcal O(Ldn+Ln\log n)$
& No (projection sorting)
\\
$\mathrm{T}_{s,p}$ (sample-based approx.) 
& $\mathcal O(M(n+m))$ 
& $\mathcal O(M d (n+m))$ 
& No 
\\
\bottomrule\bottomrule
\end{tabular}
\end{table}

\begin{table}[tbp]
\centering
\begin{threeparttable}
\caption{Per-pair empirical mean runtime (seconds) of $\mathrm{T}_{s,p}$ versus Wasserstein-type baselines.}
\label{tab:runtime}
\smallskip
\renewcommand{\arraystretch}{1.1}
\begin{tabular}{@{}cccccc@{}}
\toprule\toprule
\textbf{$p$} & \textbf{Resolution} & \textbf{$s$} & \textbf{Sliced W} & \textbf{Sinkhorn} & \textbf{$\mathrm{T}_{s,p}$} \\
\midrule
\multirow{4}{*}{2}
& 32  & 1.99 & 0.0118 & 0.0593  & 0.0002 \\
& 64  & 1.99 & 0.0593 & 1.5615  & 0.0003 \\
& 128 & 1.99 & 0.2525 & 21.2620 & 0.0012 \\
& 256 & 1.99 & 1.5013
    & {\footnotesize$\spadesuit$}     & 0.0037
 \\
\midrule
\multirow{4}{*}{5}
& 32  & 1.39 & 0.0126 & 0.0466  & 0.0001 \\
& 64  & 1.39 & 0.0561 & 0.7048  & 0.0002 \\
& 128 & 1.39 & 0.2677 & 10.0410 & 0.0010 \\
& 256 & 1.39 & 1.5532
     & {\footnotesize$\spadesuit$}       & 0.0046\\
\bottomrule\bottomrule
\end{tabular}
\begin{tablenotes}[flushleft]
\item \footnotesize $\spadesuit$. Sinkhorn was not run at \(256\times256\). The grid has
\(65{,}536\) support points, above the implementation's \(16{,}384\)-point safety threshold.
\end{tablenotes}
\end{threeparttable}
\end{table}

Finally, decay of $\Bchi_\mu(u)$ and the weighting $\|u\|^{-ps}$ may permit useful approximations of $\mathrm{T}_{s,p}$ with moderate \(M\), but the required truncation and sample size depend on the distributions and parameters and must be diagnosed rather than assumed.
Standard tools from numerical harmonic analysis (e.g.\ importance sampling, quasi-Monte Carlo, sparse grids) may improve the accuracy-cost trade-off \citep{grafakos2008classical}.
Moreover, if one works with gridded densities (rather than point masses) and structured frequency sets, FFT-based methods can accelerate the evaluation of Fourier transforms, yielding additional speedups.

\subsection{Limitations and Reproducibility}\label{subsec:limitations}
The numerical evidence has four deliberate limits. First, the Fourier integrals are truncated or
discretized, and no finite-sample error bound for the resulting empirical discrepancy is proved
here. Second, the \(d\ge2\) transport baseline is sliced \(W_p\), whereas the comparison theorems
concern \(W_p\). Third, DOTmark supplies ten images per class, so the ROC, triplet, and permutation
summaries are descriptive for a small benchmark. Fourth, the reported times depend on the
implementation and hardware (a single Delta compute node, as noted above). The retained result
files do not contain sufficiently detailed metadata to support cross-machine timing claims, and
the Sinkhorn timings in particular should be read as indicative.

For reproducibility, the synthetic-experiment configurations are recorded in
Section~\ref{subsec:synth_Val}, and the DOTmark task configuration comprises the eight retained
classes, four resolutions, the \(s\)-grids of Table~\ref{tab:optimal_s_vertical_res}, 64
sliced-Wasserstein projections, regularization \(0.01\), random seed \(0\), and the ten-pair
runtime subset. DOTmark itself is not redistributed. It is available from the source cited above.
The Monte Carlo and FFT formulations are different numerical approximations of the same population
functional, and their raw values should not be interchanged without accounting for truncation,
discretization, and coordinate scaling.

\paragraph{Code availability.}
The complete implementation --- the Monte Carlo estimator of $\mathrm T_{s,p}$, the FFT
implementation for gridded measures, all synthetic-experiment drivers with their task
configurations, and the DOTmark pairwise and two-sample pipelines --- is provided as a
supplementary code archive accompanying this submission, together with an environment
specification and entry-point commands.

\section{Further Directions}\label{sec:further}

The most consequential gap is statistical, for $p\neq2$. At $p=2$, the identification with the
energy distance settles the basic questions. Proposition~\ref{prop:empirical-mean} gives the exact
mean of the squared empirical discrepancy, and the classical two-sample theory of energy
statistics and MMDs --- permutation tests, asymptotic null distributions, consistency --- applies
verbatim \citep{szekely2013energy,gretton2012kernel,sejdinovic2013equivalence}. For general $p$,
however, sharp finite-sample bounds, concentration \citep{ledoux2001concentration}, and limit
theorems for $\mathrm{T}_{s,p}(\hat\mu_n,\mu)$ are open, and are what would place the full family
on inferential footing for two-sample testing, goodness-of-fit assessment, and learning objectives
built on empirical characteristic functions. The natural
benchmark is the rate analysis for the empirical Wasserstein distance
\citep{FournierGuillin2015}, whose dimension-dependent rates quantify exactly the curse of
dimensionality that a Fourier-side discrepancy might partially avoid.

A second direction is high-dimensional geometry \citep{vershynin2018high}. The dependence on $d$ of
the comparison constants, of the effective regularity imposed by the spectral weight
$\|u\|^{-ps}$, and of the sample complexity is open, as is whether dimension-robust statements can
be recovered on structured families. Product models, low intrinsic dimension, anisotropic classes,
or scale-normalized families with controlled tails. Relatedly, the dual representation invites a
geometric program parallel to transport theory, asking which features of a measure the
harmonic-analytic test class emphasizes and how these differ from the Lipschitz geometry behind
$W_p$ --- a distinction that should be most visible in high dimension, where low-frequency
structure may dominate fine transport detail.

Computationally, scalable approximations based on structured or randomized frequency sampling,
quasi-Monte Carlo, or FFT implementations on gridded domains deserve systematic comparison with
sliced, projected, and entropic transport. Normalized or sliced variants of $\mathrm{T}_{s,p}$ are a
natural candidate for dimension-robust alternatives to projection-based OT. Finally, the
construction should extend beyond $\R^d$. Replacing Fourier modes by Laplace--Beltrami or graph
spectral bases would give spectral Toscani-type discrepancies on manifolds and graphs.

\section{Conclusion}

We studied the weighted integral Toscani--Fourier distance $\mathrm{T}_{s,p}$, a Fourier-domain discrepancy between probability measures defined through weighted $L^p$ differences of characteristic functions. We identified conditions under which $\mathrm{T}_{s,p}$ is finite and metrizes a natural class of probability measures, and we described the resulting metric structure through a weighted function-space embedding whose image is closed, so that the resulting space is complete. For all $1\le p<\infty$, we further showed that $\mathrm{T}_{s,p}$ admits a dual integral probability metric representation over homogeneous Fourier--Lebesgue test functions, with an explicit extremizer whenever $1<p<\infty$.

We then established a global comparison showing that Wasserstein closeness implies Toscani--Fourier closeness through a H\"older-type bound. Since dilation and mass splitting rule out any global reverse inequality, we instead identified scale-controlled model classes---including bounded-support and uniform moment/tail families---on which $\mathrm{T}_{s,p}$ and $W_p$ induce equivalent notions of convergence, and we proved explicit reverse moduli on bounded-support classes for every $1\le p<\infty$ (Theorem~\ref{thm:explicit-reverse-bounded}, Proposition~\ref{prop:reverse-bounded-HY}). In the quadratic case, we recorded the classical Sz\'ekely--Rizzo normalization identifying $\mathrm{T}_{s,2}$ with an energy-kernel MMD and a negative-Sobolev integral probability metric, imported complementary reverse bounds from the kernel literature, and derived an exact finite-sample identity for the mean of the empirical discrepancy (Proposition~\ref{prop:empirical-mean}).

Taken together, the results identify when $\mathrm{T}_{s,p}$ can serve as a mathematically tractable surrogate for transport on scale-controlled classes. The numerical section supplies implementation checks and a small image benchmark in which the Fourier method broadly agrees with sliced Wasserstein at lower measured computation time. Beyond the exact $p=2$ identity of Proposition~\ref{prop:empirical-mean}, it does not establish estimator-level statistical guarantees, nor universal computational superiority. High-dimensional constants, finite-sample rates, and controlled comparisons with sliced, projected, and entropic transport remain open and determine how broadly the method can be used in machine learning.
\acks{\textbf{Funding.} This work used the Delta computing resource at the University of Illinois
Urbana-Champaign through the Advanced Cyberinfrastructure Coordination Ecosystem: Services and
Support (ACCESS) program, supported by U.S. National Science Foundation grants 2138259, 2138286,
2138307, 2137603, and 2138296. \textbf{Competing interests.} The author declares no competing
interests.}
\fi

\ifincludeproofs
\appendix
\counterwithin{equation}{section}
\counterwithin{table}{section}
\counterwithin{figure}{section}
\section{Proofs}\label{app:proofs}
This appendix contains complete proofs of every numbered statement in the paper (the short proof
of Corollary~\ref{cor:upper-holder-max} appears inline in the main text), in the order in which
the results are used. No argument has been abbreviated, and no step has been left to the reader
beyond standard textbook facts, which are cited explicitly where invoked.

\subsection{Proof of Proposition~\ref{prop:integrability}}\label{proof:integrability}
\begin{proof}
(i) \textit{Non-negativity and Definiteness.}
For any $u\neq 0$
$$
\frac{|\Bchi_\mu(u)-\Bchi_\nu(u)|^p}{\|u\|^{ps}}\ge0
$$
so $\mathrm{T}_{s,p}(\mu,\nu)\ge 0$, and $\mathrm{T}_{s,p}(\mu,\mu)=0$ is immediate since the integrand vanishes identically.

Suppose conversely that $\mathrm{T}_{s,p}(\mu,\nu)=0$. Since the integrand is nonnegative and its integral vanishes, we conclude that
$$
|\Bchi_\mu(u)-\Bchi_\nu(u)|=0\qquad\text{for Lebesgue-almost every } u\in\R^d\setminus\{0\}
$$
This is an almost-everywhere statement only, and it must be upgraded to an everywhere statement before uniqueness theorems can be applied. To do so, recall that characteristic functions of probability measures are uniformly continuous on $\R^d$. For any probability measure $\eta$ and $u,v\in\R^d$
$$
|\Bchi_\eta(u)-\Bchi_\eta(v)|\le \E\big|e^{i\langle u,X\rangle}-e^{i\langle v,X\rangle}\big|\xrightarrow[\ \|u-v\|\to0\ ]{}0
$$
by dominated convergence (the integrand is bounded by $2$ and tends to $0$ pointwise). Hence $\Bchi_\mu-\Bchi_\nu$ is a continuous function on $\R^d$ vanishing on a set of full Lebesgue measure. A set of full Lebesgue measure is dense in $\R^d$, and a continuous function vanishing on a dense set vanishes identically. Therefore $\Bchi_\mu(u)=\Bchi_\nu(u)$ for \emph{every} $u\in\R^d$ (at $u=0$ both equal $1$ in any case). By the uniqueness theorem for characteristic functions (L\'evy's uniqueness theorem, see \citealp[Theorem~26.2]{billingsley2017probability}), $\mu=\nu$. Hence the distance is definite.

(ii) \textit{Symmetry.}
Since $|\Bchi_\mu(u)-\Bchi_\nu(u)|=|\Bchi_\nu(u)-\Bchi_\mu(u)|$ for every $u$, it follows immediately that
$$
\mathrm{T}_{s,p}(\mu,\nu)=\mathrm{T}_{s,p}(\nu,\mu)
$$

(iii) \textit{Triangle Inequality.}
For $1\le p<\infty$ and any $\mu,\nu,\lambda\in \mathcal{P}_p(\R^d)$, write the pointwise triangle inequality
\[
\begin{aligned}
|\Bchi_\mu(u)-\Bchi_\lambda(u)|
&\le|\Bchi_\mu(u)-\Bchi_\nu(u)|\\
&\quad+|\Bchi_\nu(u)-\Bchi_\lambda(u)|
\end{aligned}
\]
divide by $\|u\|^{s}$, and apply Minkowski's inequality in $L^p(\R^d\setminus\{0\})$.
\[
\begin{aligned}
&\left(\int_{\R^d\setminus\{0\}}
\frac{|\Bchi_\mu(u)-\Bchi_\lambda(u)|^p}{\|u\|^{ps}}\,du\right)^{1/p}\\
&\quad\le
\left(\int_{\R^d\setminus\{0\}}
\frac{|\Bchi_\mu(u)-\Bchi_\nu(u)|^p}{\|u\|^{ps}}\,du\right)^{1/p}\\
&\qquad+
\left(\int_{\R^d\setminus\{0\}}
\frac{|\Bchi_\nu(u)-\Bchi_\lambda(u)|^p}{\|u\|^{ps}}\,du\right)^{1/p}
\end{aligned}
\]

(iv) \textit{Finiteness.} We split the domain into two regions and use moment conditions. Note that only \emph{first} moments enter, and $\mathcal P_p(\R^d)\subseteq\mathcal P_1(\R^d)$ for $p\ge 1$ by Jensen's inequality.\\

- \textit{Near the Origin ($\|u\|<1$).}
Since $\mu,\nu\in\mathcal P_p(\R^d)$ with $p\ge 1$, they have finite first moments.
Using $|e^{it}-1|\le |t|$, for $X\sim\mu$ we have
\[
|\Bchi_\mu(u)-1|
=\big|\E(e^{i\langle u,X\rangle}-1)\big|
\le \E|e^{i\langle u,X\rangle}-1|
\le \E|\langle u,X\rangle|
\le \|u\| \E\|X\|
\]
Similarly, $|\Bchi_\nu(u)-1|\le \|u\| \E\|Y\|$ for $Y\sim\nu$, hence
\[
|\Bchi_\mu(u)-\Bchi_\nu(u)|
\le |\Bchi_\mu(u)-1|+|\Bchi_\nu(u)-1|
\le C\|u\|,
\qquad C:=\E\|X\|+\E\|Y\|<\infty
\]
Therefore on $\{\|u\|<1\}$ the integrand is bounded by $C^p\|u\|^{p-ps}$, and in polar coordinates, with $v_d:=|\mathbb S^{d-1}|$,
\[
\int_{\|u\|<1}\|u\|^{p-ps}\ du
=
v_d\int_0^1 r^{\,p-ps+d-1}\ dr
\]
which converges if and only if $p-ps+d>0$, that is
\begin{equation}\label{eq:origin_fin_cond}
    s<1+\frac{d}{p}
\end{equation}

- \textit{At Infinity ($\|u\|\ge1$).} the characteristic functions are uniformly bounded by $1$, so $|\Bchi_{\mu}(u)-\Bchi_{\nu}(u)|\le 2$. Thus for $\|u\|\ge 1$ we have $\frac{|\Bchi_{\mu}(u)-\Bchi_{\nu}(u)|^p}{\|u\|^{ps}} \le \frac{2^p}{\|u\|^{ps}}$, and in polar coordinates
$$
\int_{\|u\|\ge 1} \frac{du}{\|u\|^{ps}}
=
v_d\int_1^\infty r^{\,d-1-ps}\ dr
$$
converges if and only if $d-ps<0$, that is
\begin{equation}\label{eq:tail_fin_cond}
    s > \frac{d}{p}
\end{equation}

Under conditions \eqref{eq:origin_fin_cond} and \eqref{eq:tail_fin_cond} together, i.e.\ precisely when
$$\frac{d}{p}<s<1+\frac{d}{p}$$
the Toscani--Fourier distance $\text{T}_{s,p}(\mu,\nu)$ is finite for all $\mu,\nu\in\mathcal P_p(\R^d)$. (Proposition~\ref{prop:window-sharp} shows that neither endpoint restriction can be removed.)
\end{proof}

\subsection{Proof of Theorem~\ref{thm:seminorm}}\label{proof:norm_proof}
\begin{proof}
Throughout the proof, $\mu,\nu\in\mathcal M(\R^d)$ are arbitrary finite \emph{signed} measures, so we work with the general expression $\widehat\mu(u)-\widehat\mu(0)$ rather than with the probability-measure normalization $\Bchi_\mu(u)-1$ (the latter is the special case $\widehat\mu(0)=1$). The key structural observation is that the map
$$
\Phi:\mathcal M(\R^d)\longrightarrow \{\text{functions on }\R^d\setminus\{0\}\},
\qquad
(\Phi\mu)(u):=\frac{\widehat\mu(u)-\widehat\mu(0)}{\|u\|^{s}}
$$
is \emph{linear} in $\mu$. For $a,b\in\R$ and $\mu,\nu\in\mathcal M(\R^d)$, the Fourier transform satisfies $\widehat{a\mu+b\nu}=a\widehat\mu+b\widehat\nu$ pointwise (by linearity of the integral defining $\widehat{\,\cdot\,}$\,), hence
$$
\Phi(a\mu+b\nu)(u)
=\frac{a\big(\widehat\mu(u)-\widehat\mu(0)\big)+b\big(\widehat\nu(u)-\widehat\nu(0)\big)}{\|u\|^{s}}
=a(\Phi\mu)(u)+b(\Phi\nu)(u)
$$
By definition, $|\mu|_{\mathrm{T}_{s,p}}=\|\Phi\mu\|_{L^p(\R^d\setminus\{0\})}$ for $1\le p<\infty$ and $|\mu|_{\mathrm{T}_{s,\infty}}=\sup_{u\neq0}|(\Phi\mu)(u)|$. The seminorm properties now follow from the corresponding properties of the $L^p$ and supremum norms composed with a linear map.

\textbf{(i) Non-negativity.}
For every $u\neq0$
$$
\left|\frac{\widehat\mu(u)-\widehat\mu(0)}{\|u\|^{s}}\right|\ge0
$$
so $|\mu|_{\mathrm{T}_{s,p}}\ge 0$ for all $p\in[1,\infty]$ (the value $+\infty$ is allowed for the extended seminorm on all of $\mathcal M(\R^d)$. On the finite domain, all assertions below hold verbatim).
Definiteness is \emph{not} asserted. The kernel of the seminorm is computed exactly in Remark~\ref{rem:kernel}, where it is shown that $|\mu|_{\mathrm{T}_{s,p}}=0$ if and only if $\mu=\mu(\R^d)\,\delta_0$, using the uniqueness theorem for Fourier--Stieltjes transforms of finite signed measures.

\textbf{(ii) Absolute homogeneity.}
For any scalar $ a \in \mathbb{R}$, linearity of $\Phi$ gives $\Phi(a\mu)=a\,\Phi\mu$, hence for $1\le p<\infty$,
\[
\begin{aligned}
|a\mu|_{\mathrm{T}_{s,p}}
&=\left(\int_{\mathbb{R}^d\setminus\{0\}}
\left|\frac{a\big(\widehat\mu(u)-\widehat\mu(0)\big)}{\|u\|^s}\right|^p\,du\right)^{1/p}\\
&=|a|\left(\int_{\mathbb{R}^d\setminus\{0\}}
\left|\frac{\widehat\mu(u)-\widehat\mu(0)}{\|u\|^s}\right|^p\,du\right)^{1/p}\\
&=|a|\,|\mu|_{\mathrm{T}_{s,p}}
\end{aligned}
\]
and for $p=\infty$
$$
| a \mu |_{\mathrm{T}_{s,\infty}}
=\sup_{u\ne0}\left| \frac{a \big(\widehat\mu(u)-\widehat\mu(0)\big)}{\|u\|^{s}} \right|
= |a|\,\sup_{u\ne0}\left| \frac{\widehat\mu(u)-\widehat\mu(0)}{\|u\|^{s}} \right|
= |a|\, | \mu |_{\mathrm{T}_{s,\infty}}
$$

\textbf{(iii) Subadditivity.}
By linearity of $\Phi$, $\Phi(\mu+\nu)=\Phi\mu+\Phi\nu$ pointwise on $\R^d\setminus\{0\}$. For $1\le p<\infty$, Minkowski's inequality in $L^p(\R^d\setminus\{0\})$ yields
\begin{align*}
| \mu + \nu |_{\mathrm{T}_{s,p}} &= \left( \int_{\mathbb{R}^d\setminus\{0\}} \left| \frac{\big(\widehat\mu(u)-\widehat\mu(0)\big) + \big(\widehat\nu(u)-\widehat\nu(0)\big)}{\|u\|^{s}} \right|^p   du \right)^{\frac{1}{p}}\\
& \leq \left( \int_{\mathbb{R}^d\setminus\{0\}} \left| \frac{\widehat\mu(u)-\widehat\mu(0)}{\|u\|^{s}} \right|^p   du \right)^{\frac{1}{p}}
+ \left( \int_{\mathbb{R}^d\setminus\{0\}} \left| \frac{\widehat\nu(u)-\widehat\nu(0)}{\|u\|^{s}} \right|^p   du \right)^{\frac{1}{p}}
\end{align*}
For $p=\infty$, the pointwise triangle inequality followed by taking suprema gives
$$
\sup_{u\neq0}\left|\frac{\big(\widehat\mu(u)-\widehat\mu(0)\big)+\big(\widehat\nu(u)-\widehat\nu(0)\big)}{\|u\|^{s}}\right|
\le
\sup_{u\neq0}\left|\frac{\widehat\mu(u)-\widehat\mu(0)}{\|u\|^{s}}\right|
+
\sup_{u\neq0}\left|\frac{\widehat\nu(u)-\widehat\nu(0)}{\|u\|^{s}}\right|
$$
Hence, for all $p\in[1,\infty]$
$$
| \mu + \nu  |_{\mathrm{T}_{s,p}} \leq | \mu |_{\mathrm{T}_{s,p}} + | \nu |_{\mathrm{T}_{s,p}}
$$
with the convention that the right-hand side may be \(+\infty\). In particular, if \(\mu,\nu\in\mathcal D_{s,p}\), then \(\mu+\nu\in\mathcal D_{s,p}\). Absolute homogeneity gives closure under scalar multiplication. Thus \(\mathcal D_{s,p}\) is a linear subspace and the restriction of \(|\cdot|_{\mathrm{T}_{s,p}}\) to \(\mathcal D_{s,p}\) is a seminorm. This completes the proof.
\end{proof}

\subsection{Proof of Proposition~\ref{prop:window-sharp}}\label{proof:window-sharp}
\begin{proof}
Fix $x\neq0$. Since $\Bchi_{\delta_0}(u)\equiv1$ and $\Bchi_{\delta_x}(u)=e^{i\langle u,x\rangle}$,
\[
\mathrm{T}_{s,p}(\delta_0,\delta_x)^p
=
\int_{\R^d\setminus\{0\}}\frac{\big|1-e^{i\langle u,x\rangle}\big|^p}{\|u\|^{ps}}\ du
\]
Throughout we use the elementary identity
\begin{equation}\label{eq:one-minus-exp}
\big|1-e^{it}\big|^2=(1-\cos t)^2+\sin^2t=2(1-\cos t)=4\sin^2(t/2),
\qquad t\in\R
\end{equation}
so that $|1-e^{it}|=2|\sin(t/2)|$.

\smallskip
\noindent\textbf{Step 1. Sufficiency inside the window.}
If $\frac dp<s<1+\frac dp$, then $\delta_0,\delta_x\in\mathcal P_p(\R^d)$ and Proposition~\ref{prop:integrability} gives $\mathrm{T}_{s,p}(\delta_0,\delta_x)<\infty$.

\smallskip
\noindent\textbf{Step 2. Divergence at the origin when $s\ge 1+\frac dp$.}
On the region $\{u:\|u\|\le \pi/\|x\|\}$ we have, by Cauchy--Schwarz, $|\langle u,x\rangle|\le\|u\|\|x\|\le\pi$, hence $|\langle u,x\rangle|/2\in[0,\pi/2]$.
On $[0,\pi/2]$ the sine function is concave with $\sin0=0$ and $\sin(\pi/2)=1$, so $\sin t\ge \frac{2}{\pi}t$ there. Applying this with $t=|\langle u,x\rangle|/2$ in \eqref{eq:one-minus-exp} gives
\[
\big|1-e^{i\langle u,x\rangle}\big|
=2\big|\sin(\langle u,x\rangle/2)\big|
\ge \frac{2}{\pi}\,\big|\langle u,x\rangle\big|,
\qquad \|u\|\le \frac{\pi}{\|x\|}
\]
Therefore, writing $\theta=u/\|u\|$ and passing to polar coordinates $u=r\theta$,
\[
\int_{\|u\|\le \pi/\|x\|}\frac{|1-e^{i\langle u,x\rangle}|^p}{\|u\|^{ps}}\ du
\ \ge\
\Big(\frac{2}{\pi}\Big)^{p}\|x\|^{p}
\int_{\mathbb S^{d-1}}\big|\langle\theta,e\rangle\big|^p\ d\sigma(\theta)
\int_0^{\pi/\|x\|} r^{\,p-ps+d-1}\ dr
\]
where $e:=x/\|x\|$ and $\sigma$ is the surface measure on $\mathbb S^{d-1}$. The angular factor
\[
\kappa_{d,p}:=\int_{\mathbb S^{d-1}}|\langle\theta,e\rangle|^p\ d\sigma(\theta)
\]
is a strictly positive finite constant (the integrand is continuous, nonnegative, and not identically zero. By rotational invariance $\kappa_{d,p}$ does not depend on $e$). The radial factor $\int_0^{\pi/\|x\|}r^{\,p-ps+d-1}\,dr$ diverges precisely when $p-ps+d\le0$, i.e.\ when $s\ge1+\frac dp$. Hence $\mathrm{T}_{s,p}(\delta_0,\delta_x)=\infty$ whenever $s\ge1+\frac dp$.

\smallskip
\noindent\textbf{Step 3. Divergence at infinity when $s\le \frac dp$.}
We show that the oscillation $|1-e^{i\langle u,x\rangle}|$ does not decay on average over large annuli, so the tail behaves exactly like $\int^\infty r^{\,d-1-ps}\,dr$.
For $R>0$ let $B_R:=\{u:\|u\|\le R\}$ and recall the classical Bessel-function formula for the Fourier transform of a ball indicator \citep[Appendix~B]{grafakos2008classical}.
\[
\int_{B_R}e^{i\langle u,x\rangle}\ du
=(2\pi)^{d/2}\,\frac{R^{d/2}}{\|x\|^{d/2}}\,J_{d/2}\big(R\|x\|\big)
\]
Together with the standard decay estimate $|J_{d/2}(t)|\le C_d\,t^{-1/2}$ for $t\ge1$, this yields
\begin{equation}\label{eq:ball-osc}
\Big|\int_{B_R}e^{i\langle u,x\rangle}\ du\Big|
\le C_d'\,\|x\|^{-\frac{d+1}{2}}\,R^{\frac{d-1}{2}},
\qquad R\|x\|\ge1
\end{equation}
Consider the dyadic annuli $A_k:=\{u:2^k\le\|u\|\le 2^{k+1}\}$, whose volume is $|A_k|=c_d\,2^{kd}$ with $c_d:=(2^d-1)|B_1|>0$. Writing $y(u):=2\big(1-\cos\langle u,x\rangle\big)=|1-e^{i\langle u,x\rangle}|^2\in[0,4]$ and using \eqref{eq:ball-osc} for the two balls $B_{2^{k+1}}$ and $B_{2^{k}}$,
\[
\int_{A_k}y(u)\ du
=2|A_k|-2\,\mathrm{Re}\int_{A_k}e^{i\langle u,x\rangle}\ du
\ \ge\
2|A_k|-4C_d'\,\|x\|^{-\frac{d+1}{2}}\,2^{(k+1)\frac{d-1}{2}}
\]
Since $2^{k(d-1)/2}=o(2^{kd})$ as $k\to\infty$, there exists $k_*=k_*(d,x)$ such that for all $k\ge k_*$,
\begin{equation}\label{eq:annulus-mass}
\int_{A_k}y(u)\ du\ \ge\ \tfrac32\,|A_k|
\end{equation}
We now convert \eqref{eq:annulus-mass} into a lower bound for $\int_{A_k}y^{p/2}$, distinguishing two cases.
If $p\ge2$, the map $t\mapsto t^{p/2}$ is convex on $[0,\infty)$, so Jensen's inequality with respect to the normalized measure $|A_k|^{-1}du$ on $A_k$ gives
\[
\frac{1}{|A_k|}\int_{A_k}y(u)^{p/2}\ du
\ \ge\
\Big(\frac{1}{|A_k|}\int_{A_k}y(u)\ du\Big)^{p/2}
\ \ge\ \Big(\tfrac32\Big)^{p/2}\ \ge\ 1
\]
If $1\le p<2$, then since $0\le y\le4$ and $\frac p2-1\le0$ we have the pointwise bound $y^{p/2}=y\cdot y^{\frac p2-1}\ge 4^{\frac p2-1}\,y$, hence
\[
\int_{A_k}y(u)^{p/2}\ du\ \ge\ 4^{\frac p2-1}\int_{A_k}y(u)\ du\ \ge\ \tfrac32\cdot4^{\frac p2-1}\,|A_k|
\]
In both cases there is a constant $c_p>0$ with
\[
\int_{A_k}\big|1-e^{i\langle u,x\rangle}\big|^p\ du
=\int_{A_k}y(u)^{p/2}\ du\ \ge\ c_p\,|A_k|,
\qquad k\ge k_*
\]
On $A_k$ we have $\|u\|^{-ps}\ge 2^{-(k+1)ps}$, so
\[
\int_{A_k}\frac{|1-e^{i\langle u,x\rangle}|^p}{\|u\|^{ps}}\ du
\ \ge\ c_p\,c_d\,2^{-ps}\ 2^{k(d-ps)}
\]
Summing over $k\ge k_*$, the geometric-type series $\sum_k 2^{k(d-ps)}$ diverges precisely when $d-ps\ge0$, i.e.\ when $s\le\frac dp$. Hence $\mathrm{T}_{s,p}(\delta_0,\delta_x)=\infty$ whenever $s\le\frac dp$.

\smallskip
Combining Steps 1--3 proves the claimed equivalence, and the final assertion follows because $\delta_0,\delta_x$ possess finite moments of all orders, hence lie in $\mathcal P_p(\R^d)$ for every $p\in[1,\infty)$.
\end{proof}
\subsection{Proof of Proposition \ref{prop:embeddings}}\label{proof:embeddings}
\begin{proof}
The proof has two parts. First we establish the inclusion together with the quantitative estimate \eqref{eq:embedding_estimate}. Then we prove that the inclusion map is continuous, which does not follow from \eqref{eq:embedding_estimate} alone because of the additive constant on its right-hand side.

\medskip
\noindent\textbf{Part 1. Inclusion and the estimate \eqref{eq:embedding_estimate}.}
Fix $\mu\in\mathcal P^{\mathrm T}_{s,p}(\R^d)$ and split $\R^d\setminus\{0\}=\{\|u\|<1\}\cup\{\|u\|\ge1\}$.
For $\|u\|<1$ and $r\le s$,
\[
\frac{|\Bchi_\mu(u)-1|^q}{\|u\|^{qr}}
=
\Big(\frac{|\Bchi_\mu(u)-1|^p}{\|u\|^{ps}}\Big)^{q/p}
\|u\|^{q(s-r)}
 \le
\Big(\frac{|\Bchi_\mu(u)-1|^p}{\|u\|^{ps}}\Big)^{q/p}
\]
since $\|u\|^{q(s-r)}\le 1$ on $\{\|u\|<1\}$.
Hence, writing $A(u):=\frac{|\Bchi_\mu(u)-1|^p}{\|u\|^{ps}}$, we have
\[
\int_{\|u\|<1}\frac{|\Bchi_\mu(u)-1|^q}{\|u\|^{qr}} du
 \le
\int_{\|u\|<1} A(u)^{q/p} du
\]
Because $\{\|u\|<1\}$ has finite Lebesgue measure and $q\le p$, H\"older's inequality with exponents $p/q$ and $(1-q/p)^{-1}$ gives
\[
\int_{\|u\|<1} A(u)^{q/p} du
\le
|B_1|^{ 1-\frac{q}{p}}
\Big(\int_{\|u\|<1}A(u) du\Big)^{q/p}
\le
|B_1|^{ 1-\frac{q}{p}}
\|\mu\|_{\mathrm{T}_{s,p}}^{ q}
\]
Therefore
\begin{equation}\label{eq:local_bound}
\Big(\int_{\|u\|<1}\frac{|\Bchi_\mu(u)-1|^q}{\|u\|^{qr}} du\Big)^{1/q}
\le
|B_1|^{ \frac1q-\frac1p} \|\mu\|_{\mathrm{T}_{s,p}}
\end{equation}

\smallskip
On the tail part $\{\|u\|\ge1\}$, since $|\Bchi_\mu(u)-1|\le 2$ for all $u$, we have
\[
\int_{\|u\|\ge1}\frac{|\Bchi_\mu(u)-1|^q}{\|u\|^{qr}} du
\le
2^q\int_{\|u\|\ge1}\|u\|^{-qr} du
\]
The last integral is finite exactly under $qr > d$. Thus
\begin{equation}\label{eq:tail_bound_embed}
\Big(\int_{\|u\|\ge1}\frac{|\Bchi_\mu(u)-1|^q}{\|u\|^{qr}} du\Big)^{1/q}
\le
2\Big(\int_{\|u\|\ge1}\|u\|^{-qr} du\Big)^{1/q}
\end{equation}

\smallskip
Using $(a+b)^{1/q}\le a^{1/q}+b^{1/q}$ for $a,b\ge0$ and \eqref{eq:local_bound}--\eqref{eq:tail_bound_embed}
yields \eqref{eq:embedding_estimate}. In particular $\|\mu\|_{\mathrm{T}_{r,q}}<\infty$, so $\mu\in\mathcal P^{\mathrm T}_{r,q}(\R^d)$, proving the inclusion.

\medskip
\noindent\textbf{Part 2. Continuity of the inclusion.}
Estimate \eqref{eq:embedding_estimate} contains the additive constant $2\big(\int_{\|u\|\ge1}\|u\|^{-qr}du\big)^{1/q}$, which does not shrink with the distance. It therefore proves boundedness of the embedded set but not continuity. We now show directly that
\[
\mathrm{T}_{s,p}(\mu_n,\mu)\to 0
\quad\Longrightarrow\quad
\mathrm{T}_{r,q}(\mu_n,\mu)\to 0
\qquad (\mu_n,\mu\in\mathcal P^{\mathrm T}_{s,p}(\R^d))
\]
which is the assertion that the inclusion map is continuous (indeed uniformly continuous on the class, as the modulus below does not depend on $\mu_n,\mu$).
Set $f_n:=\Bchi_{\mu_n}-\Bchi_\mu$ and, for $R\ge 1$ to be chosen, split
\[
\mathrm{T}_{r,q}(\mu_n,\mu)
\le
\underbrace{\Big(\int_{\|u\|<1}\frac{|f_n|^q}{\|u\|^{qr}}\,du\Big)^{1/q}}_{=:I_n}
+\underbrace{\Big(\int_{1\le\|u\|\le R}\frac{|f_n|^q}{\|u\|^{qr}}\,du\Big)^{1/q}}_{=:II_n(R)}
+\underbrace{\Big(\int_{\|u\|>R}\frac{|f_n|^q}{\|u\|^{qr}}\,du\Big)^{1/q}}_{=:III_n(R)}
\]
using Minkowski's inequality for the decomposition of the domain.

\emph{The low-frequency piece $I_n$.} Exactly as in Part 1, with $A_n(u):=|f_n(u)|^p\|u\|^{-ps}$ in place of $A(u)$ (the argument used only $r\le s$, $q\le p$, and the finite measure of the unit ball),
\[
I_n\ \le\ |B_1|^{\frac1q-\frac1p}\,\Big(\int_{\|u\|<1}A_n(u)\,du\Big)^{1/p}
\ \le\ |B_1|^{\frac1q-\frac1p}\ \mathrm{T}_{s,p}(\mu_n,\mu)
\]

\emph{The intermediate piece $II_n(R)$.} On $\{1\le\|u\|\le R\}$ we write, as in Part 1,
\[
\frac{|f_n(u)|^q}{\|u\|^{qr}}
=
A_n(u)^{q/p}\,\|u\|^{q(s-r)}
\le
R^{\,q(s-r)}\,A_n(u)^{q/p}
\]
since $s\ge r$ and $\|u\|\le R$ there. H\"older's inequality on the finite-measure set $\{1\le\|u\|\le R\}$ (volume $|B_R|-|B_1|\le|B_1|R^d$) then gives
\[
II_n(R)
\le
R^{\,s-r}\,\big(|B_1|R^d\big)^{\frac1q-\frac1p}\,
\Big(\int_{1\le\|u\|\le R}A_n(u)\,du\Big)^{1/p}
\le
C_{d,p,q}\ R^{\,s-r+\frac dq-\frac dp}\ \mathrm{T}_{s,p}(\mu_n,\mu)
\]

\emph{The tail piece $III_n(R)$.} Using only $|f_n|\le2$,
\[
III_n(R)\ \le\ 2\Big(\int_{\|u\|>R}\|u\|^{-qr}\,du\Big)^{1/q}=:\tau(R),
\qquad
\tau(R)=2\Big(\frac{v_d}{qr-d}\Big)^{1/q}R^{\frac dq-r}\xrightarrow[R\to\infty]{}0
\]
where the explicit polar-coordinate evaluation $\int_{\|u\|>R}\|u\|^{-qr}du=\frac{v_d}{qr-d}R^{d-qr}$ is finite because $qr>d$.

\emph{Conclusion.} Let $\varepsilon>0$. First choose $R=R(\varepsilon)\ge1$ so large that $\tau(R)<\varepsilon/2$. With this $R$ fixed, the first two pieces are bounded by
\[
\Big(|B_1|^{\frac1q-\frac1p}+C_{d,p,q}R^{\,s-r+\frac dq-\frac dp}\Big)\,\mathrm{T}_{s,p}(\mu_n,\mu)
\]
which tends to $0$ as $n\to\infty$. Choose $N$ so that it is $<\varepsilon/2$ for all $n\ge N$. Then $\mathrm{T}_{r,q}(\mu_n,\mu)<\varepsilon$ for all $n\ge N$. Since $\varepsilon>0$ was arbitrary, $\mathrm{T}_{r,q}(\mu_n,\mu)\to0$, and the inclusion map is continuous.
\end{proof}
\subsection{Proof of Proposition~\ref{prop:asymptotics}}\label{proof:asymptotics}
\begin{proof}
Let $X\sim\mu$ and $Y\sim\nu$. Using $|e^{it}-1|\le |t|$,
\[
|\Bchi_\mu(u)-\Bchi_\nu(u)|
\le |\Bchi_\mu(u)-1|+|\Bchi_\nu(u)-1|
\le \|u\|(\E\|X\|+\E\|Y\|)
\]
Hence for $\|u\|<1$,
\[
f(u)=
\frac{|\Bchi_{\mu}(u)-\Bchi_{\nu}(u)|}{\|u\|^{s}}
\le (\E\|X\|+\E\|Y\|) \|u\|^{1-s}
\le \E\|X\|+\E\|Y\|<\infty
\]
Here the last inequality uses \(s\le1\), \(\|u\|<1\), and
\(\mu,\nu\in\mathcal P_1(\R^d)\).
For $\|u\|\ge1$, $|\Bchi_\mu(u)-\Bchi_\nu(u)|\le2$, so $f(u)\le 2$.
Thus $f\in L^\infty(\R^d)$.

\smallskip
Thus, on $\{\|u\|<1\}$ we have $f\in L^\infty$ and the set has finite Lebesgue measure, so
$\int_{\|u\|<1} |f(u)|^p du<\infty$.
On $\{\|u\|\ge1\}$,
\[
|f(u)|^p \le 2^p \|u\|^{-ps}
\]
and $\int_{\|u\|\ge1}\|u\|^{-ps}du<\infty$ iff $ps>d$. Hence $f\in L^p(\R^d)$ for all $p>d/s$.

\smallskip
Since $f\in L^\infty$ and $f\in L^{q_0}$ for every fixed $q_0>d/s$, we can compute the limit of $\|f\|_{L^p}$ as $p\to\infty$. Note first that on the \emph{infinite} measure space $(\R^d,du)$ the naive inequality $\|f\|_{L^p}\le\|f\|_\infty$ is false in general (constants are a counterexample), so the upper bound requires an interpolation argument against a fixed integrable power.

\emph{Upper bound via interpolation.} If $\|f\|_\infty=0$ then $f=0$ a.e.\ and both sides of the claimed limit vanish, so assume $\|f\|_\infty>0$. Fix $q_0>d/s$, so that $\|f\|_{L^{q_0}}<\infty$ by the previous paragraph. For any $p>q_0$, split the exponent as $p=(p-q_0)+q_0$ and bound
\[
\|f\|_{L^p}^{\,p}
=\int_{\R^d}|f(u)|^{\,p-q_0}\,|f(u)|^{\,q_0}\ du
\le \|f\|_\infty^{\,p-q_0}\int_{\R^d}|f(u)|^{\,q_0}\ du
= \|f\|_\infty^{\,p-q_0}\,\|f\|_{L^{q_0}}^{\,q_0}
\]
Taking $p$-th roots,
\[
\|f\|_{L^p}
\le
\|f\|_\infty^{\,1-\frac{q_0}{p}}\ \|f\|_{L^{q_0}}^{\,\frac{q_0}{p}}
\]
As $p\to\infty$, the right-hand side converges to $\|f\|_\infty\cdot 1=\|f\|_\infty$ (both exponents converge, and the bases are fixed positive numbers. If $\|f\|_{L^{q_0}}=0$ then $f=0$ a.e.\ and the claim is trivial). Hence
\[
\limsup_{p\to\infty}\|f\|_{L^p}\ \le\ \|f\|_\infty
\]

\emph{Lower bound via level sets.} For any $\varepsilon\in(0,\|f\|_\infty)$, the set
$A_\varepsilon:=\{u: |f(u)|>\|f\|_\infty-\varepsilon\}$ has positive Lebesgue measure by the definition of the essential supremum (which coincides with the supremum here since $f$ is continuous on $\R^d\setminus\{0\}$), and it has \emph{finite} measure. On $A_\varepsilon$ we have $|f|^{q_0}>(\|f\|_\infty-\varepsilon)^{q_0}$, so $|A_\varepsilon|\le(\|f\|_\infty-\varepsilon)^{-q_0}\|f\|_{L^{q_0}}^{q_0}<\infty$. Thus
\[
\|f\|_{L^p}\ge \Big(\int_{A_\varepsilon}|f|^p\Big)^{1/p}\ge(\|f\|_\infty-\varepsilon)\, |A_\varepsilon|^{1/p}\xrightarrow[p\to\infty]{}\|f\|_\infty-\varepsilon
\]
because $|A_\varepsilon|^{1/p}\to1$ for any fixed $0<|A_\varepsilon|<\infty$. Hence $\liminf_{p\to\infty}\|f\|_{L^p}\ge\|f\|_\infty-\varepsilon$, and letting $\varepsilon\downarrow0$,
\[
\liminf_{p\to\infty}\|f\|_{L^p}\ \ge\ \|f\|_\infty
\]

Combining the two bounds gives $\lim_{p\to\infty}\|f\|_{L^p}=\|f\|_\infty$.

Finally, $\mathrm{T}_{s,p}(\mu,\nu)=\|f\|_{L^p}$ for all $p$ large enough that $d/p<s$ (so that the norm is finite), and $\mathrm{T}_{s,\infty}(\mu,\nu)=\|f\|_{L^\infty}$ (the supremum equals the essential supremum by continuity of $f$ on $\R^d\setminus\{0\}$), proving the claim.
\end{proof}
\subsection{Proof of Proposition \ref{prop:Compactness}}\label{proof:Compactness}
\begin{proof}
We use the classical Kolmogorov--Riesz--Fr\'echet compactness criterion in $L^p(\R^d)$, $1\le p<\infty$ \citep[Theorem~4.26 and Corollary~4.27]{brezis2011functional}. A bounded family $\mathcal F\subset L^p(\R^d)$ is relatively compact if and only if it is $L^p$--translation equicontinuous, $\sup_{f\in\mathcal F}\|f(\cdot+h)-f\|_{L^p}\to0$ as $h\to0$, and has uniformly small tails, $\sup_{f\in\mathcal F}\int_{\|u\|>R}|f|^p\,du\to0$ as $R\to\infty$. We identify each measure $\mu\in\mathbf S$ with the function $f_\mu(u)=\frac{\Bchi_\mu(u)-1}{\|u\|^s}$, $u\in\R^d\setminus\{0\}$ (extended by any fixed value at $u=0$, a Lebesgue-null point), so that the Toscani norm satisfies $\|\mu\|_{\mathrm{T}_{s,p}}=\| f_\mu\|_{L^p(\R^d)}$, and verify the three conditions for the family $\{f_\mu:\mu\in\mathbf S\}$.
Let $M_1:=\sup_{\mu\in\mathbf S}\int\|x\| d\mu(x)<\infty$.

\smallskip
\noindent\textbf{(i) }
For all $\mu$ and all $u$, $|\Bchi_\mu(u)-1|\le 2$, hence for any $R\ge1$,
\[
\sup_{\mu\in\mathbf S}\int_{\|u\|>R}|f_\mu(u)|^p du
\le 2^p\int_{\|u\|>R}\|u\|^{-ps} du \xrightarrow[R\to\infty]{} 0
\]
since $ps>d$.

\smallskip
\noindent\textbf{(ii)}
Using $|e^{it}-1|\le |t|$,
\[
|\Bchi_\mu(u)-1|
=\Big|\int (e^{i\langle u,x\rangle}-1) d\mu(x)\Big|
\le \int |\langle u,x\rangle| d\mu(x)
\le \|u\| \int \|x\| d\mu(x)
\le M_1\|u\|
\]
Thus for $\|u\|<1$,
\[
|f_\mu(u)|^p \le M_1^p \|u\|^{p-ps}
\]
and since $s<1+\frac{d}{p}$ we have $p-ps>-d$, so
\[
\sup_{\mu\in\mathbf S}\int_{\|u\|<\delta}|f_\mu(u)|^p du
\le M_1^p\int_{\|u\|<\delta}\|u\|^{p-ps} du \xrightarrow[\delta\downarrow0]{} 0
\]

\smallskip
\noindent\textbf{(iii)}
Fix $\varepsilon>0$. Choose $\delta\in(0,1)$ so that
\[
\sup_{\mu\in\mathbf S}\int_{\|u\|<2\delta}|f_\mu(u)|^p du < (\varepsilon/4)^p
\]
Then for any $h$ with $\|h\|<\delta$,
\[
\int_{\|u\|<\delta}|f_\mu(u+h)-f_\mu(u)|^p du
\le 2^{p-1}\int_{\|u\|<\delta}|f_\mu(u+h)|^p du + 2^{p-1}\int_{\|u\|<\delta}|f_\mu(u)|^p du
\le (\varepsilon/2)^p
\]
uniformly in $\mu$, because $\{ \|u\|<\delta\}+h\subset\{\|u\|<2\delta\}$.

Now consider $\|u\|\ge \delta$ and $\|h\|<\delta/2$. Using again $|e^{it}-e^{is}|\le |t-s|$,
\[
|\Bchi_\mu(u+h)-\Bchi_\mu(u)|
\le \int |e^{i\langle u+h,x\rangle}-e^{i\langle u,x\rangle}| d\mu(x)
\le \int |\langle h,x\rangle| d\mu(x)
\le M_1\|h\|
\]
Also, since $\|h\|<\delta/2\le\|u\|/2$ we have $\|u+h\|\ge\|u\|-\|h\|\ge\|u\|/2$, so the mean
value theorem applied to $r\mapsto r^{-s}$ on the segment between $\|u\|$ and $\|u+h\|$ (both
$\ge\|u\|/2$) gives
\[
\Big|\|u+h\|^{-s}-\|u\|^{-s}\Big|
\le s\,\big(\|u\|/2\big)^{-s-1}\,\big|\,\|u+h\|-\|u\|\,\big|
\le s\,2^{s+1}\, \|h\|\, \|u\|^{-s-1}
\]
Therefore, using $|\Bchi_\mu(u)-1|\le 2$ and $\|u+h\|^{-s}\le 2^{s}\|u\|^{-s}$,
\[
|f_\mu(u+h)-f_\mu(u)|
\le \frac{M_1\|h\|}{\|u+h\|^{s}} + 2\,s\,2^{s+1}\, \|h\|\, \|u\|^{-s-1}
\le 2^{s}M_1\, \|h\|\, \|u\|^{-s} + s\,2^{s+2}\, \|h\|\, \|u\|^{-s-1}
\]
Raising to the $p$th power and integrating over $\{\|u\|\ge\delta\}$ gives
\[
\sup_{\mu\in\mathbf S}\int_{\|u\|\ge\delta}|f_\mu(u+h)-f_\mu(u)|^p du
\le C \|h\|^p
\]
since $\int_{\|u\|\ge\delta}\|u\|^{-ps} du<\infty$ (as $ps>d$) and
$\int_{\|u\|\ge\delta}\|u\|^{-p(s+1)} du<\infty$ as well.
Thus for $\|h\|$ small enough the latter is $<(\varepsilon/2)^p$.

Combining the $\|u\|<\delta$ and $\|u\|\ge\delta$ pieces yields
\[
\lim_{h\to0}\ \sup_{\mu\in\mathbf S}\|f_\mu(\cdot+h)-f_\mu\|_{L^p}=0
\]

\smallskip
\noindent\textbf{(iv)}
The family is bounded in $L^p$, as a consequence of the first-moment bound alone. The estimates
used in (i) and (ii) give $|\Bchi_\mu(u)-1|\le\min\{2,\,M_1\|u\|\}$ for every $\mu\in\mathbf S$,
hence
\[
\|f_\mu\|_{L^p(\R^d)}^p
\le M_1^p\int_{\|u\|<1}\|u\|^{\,p-ps}\,du+2^p\int_{\|u\|\ge1}\|u\|^{-ps}\,du
= \frac{M_1^p\,v_d}{d+p-ps}+\frac{2^p\,v_d}{ps-d}<\infty
\]
uniformly on $\mathbf S$. In particular $\mathbf S\subseteq\mathcal P^{\mathrm T}_{s,p}(\R^d)$
with uniformly bounded Toscani norms, proving the first two assertions of the statement. By (i)
the family has uniformly small tails, by (ii) it has uniformly small mass near the origin, and by
(iii) it is $L^p$--translation equicontinuous.
Hence, by the Kolmogorov--Riesz--Fréchet theorem, $\{f_\mu:\mu\in\mathbf S\}$ is relatively compact in $L^p(\R^d)$.

It remains only to note that the $L^p$ limits obtained this way are still represented by probability measures.
Let \((\mu_n)\subset\mathbf S\). The uniform first-moment bound implies tightness, so after passing to a
subsequence we may assume \(\mu_n\Rightarrow\mu\) for some probability measure \(\mu\). At the same time,
relative compactness in \(L^p\) gives a further subsequence such that
\[
f_{\mu_n}(u)=\frac{\Bchi_{\mu_n}(u)-1}{\|u\|^s}\longrightarrow f(u)
\quad\text{in }L^p
\]
The \(L^p\) convergence admits a further subsequence converging almost everywhere. Weak
convergence also gives \(\Bchi_{\mu_n}(u)\to\Bchi_\mu(u)\) for every \(u\), hence
\[
f(u)=\frac{\Bchi_\mu(u)-1}{\|u\|^s}\quad\text{for a.e. }u
\]
Thus \(J\mu=f\in L^p\), so \(\mu\in\mathcal P^{\mathrm T}_{s,p}\), and the chosen subsequence converges to
\(\mu\) in \(\mathrm T_{s,p}\). Therefore \(\mathbf S\) is relatively compact in the Toscani metric.
\end{proof}

\subsection{Proof of Theorem \ref{thm:completeness}}\label{proof:completeness}
\begin{proof}
Throughout, write $L^p:=L^p(\R^d\setminus\{0\})$ with respect to Lebesgue measure. Since $\{0\}$ is Lebesgue-null, $L^p(\R^d\setminus\{0\})$ and $L^p(\R^d)$ are canonically identified. Recall that $L^p$ is a Banach space for every $1\le p\le\infty$ (Riesz--Fischer).

\smallskip
\noindent\emph{Step 1. $J$ is well defined.}
For $\mu\in\mathcal P^{\mathrm T}_{s,p}(\R^d)$ and $1\le p<\infty$,
\[
\|J\mu\|_{L^p}
=\Big(\int_{\R^d\setminus\{0\}}\Big|\frac{\Bchi_\mu(u)-1}{\|u\|^s}\Big|^p\,du\Big)^{1/p}
=\mathrm{T}_{s,p}(\mu,\delta_0)<\infty
\]
because $\Bchi_{\delta_0}\equiv1$, so membership in the Toscani class is exactly the statement $J\mu\in L^p$. For $p=\infty$ define $(J\mu)(u):=(\Bchi_\mu(u)-1)/\|u\|^s$ as well. Then $J\mu$ is continuous on the open set $\R^d\setminus\{0\}$, and for continuous functions on an open set the supremum coincides with the essential supremum (any point where the function is close to its supremum has a neighborhood on which it remains so, and neighborhoods have positive Lebesgue measure). Hence
\[
\|J\mu\|_{L^\infty}=\sup_{u\neq0}\Big|\frac{\Bchi_\mu(u)-1}{\|u\|^s}\Big|=\mathrm{T}_{s,\infty}(\mu,\delta_0)<\infty
\]

\smallskip
\noindent\emph{Step 2. $J$ is an isometry.}
For $\mu,\nu\in\mathcal P^{\mathrm T}_{s,p}(\R^d)$ and $1\le p<\infty$, pointwise on $\R^d\setminus\{0\}$,
\[
(J\mu)(u)-(J\nu)(u)=\frac{\big(\Bchi_\mu(u)-1\big)-\big(\Bchi_\nu(u)-1\big)}{\|u\|^s}=\frac{\Bchi_\mu(u)-\Bchi_\nu(u)}{\|u\|^s}
\]
hence
\[
\|J\mu-J\nu\|_{L^p}
=\Big(\int_{\R^d\setminus\{0\}} \Big|\frac{\Bchi_\mu(u)-\Bchi_\nu(u)}{\|u\|^s}\Big|^p du\Big)^{1/p}
=\mathrm{T}_{s,p}(\mu,\nu)
\]
and the same computation with suprema (again using continuity of $J\mu-J\nu$ on $\R^d\setminus\{0\}$ to identify sup and ess-sup) gives $\|J\mu-J\nu\|_{L^\infty}=\mathrm{T}_{s,\infty}(\mu,\nu)$. In particular $J$ is injective on $\mathcal P^{\mathrm T}_{s,p}(\R^d)$. If $J\mu=J\nu$ in $L^p$ then $\mathrm{T}_{s,p}(\mu,\nu)=0$, and the definiteness argument of Proposition~\ref{prop:integrability}(i) (continuity of characteristic functions plus L\'evy uniqueness) yields $\mu=\nu$.

\smallskip
\noindent\emph{Step 3. The image is closed when $1\le p<\infty$ and $ps\ge d$.}
Let $\mu_n\in\mathcal P^{\mathrm T}_{s,p}(\R^d)$ and $f\in L^p$ with $\|J\mu_n-f\|_{L^p}\to0$.
By the Riesz representation theorem, the space of finite signed Borel measures on $\R^d$ is the
dual of $C_0(\R^d)$ \citep[Theorem~6.19]{rudin1987real}, and $C_0(\R^d)$ is separable. Hence
bounded sequences in the dual admit weak-* convergent subsequences
\citep[Corollary~3.30]{brezis2011functional}. Since $\|\mu_n\|_{C_0(\R^d)^*}=\mu_n(\R^d)=1$, the
sequence is bounded, and we obtain a subsequence
$(\mu_{n_k})$ and a finite measure $\nu$ with
$\int g\,d\mu_{n_k}\to\int g\,d\nu$ for every $g\in C_0(\R^d)$. Testing against nonnegative $g$
shows $\nu\ge0$, and testing against $0\le g\le1$ shows $\nu(\R^d)\le1$.

\emph{Identification of $f$.} Fix $\varphi\in C_c^\infty(\R^d\setminus\{0\})$ and set
$\psi(u):=\|u\|^{-s}\varphi(u)$. Since $\supp(\varphi)$ is a compact subset of
$\R^d\setminus\{0\}$, the factor $\|u\|^{-s}$ is smooth there, so
$\psi\in C_c^\infty(\R^d\setminus\{0\})\subset\mathcal S(\R^d)$. For any finite nonnegative
measure $\eta$, Fubini's theorem gives
\[
\int_{\R^d}\widehat\eta(u)\,\psi(u)\,du=\int_{\R^d}\widehat\psi(x)\,d\eta(x),
\qquad
\widehat\psi(x):=\int_{\R^d}e^{i\langle u,x\rangle}\psi(u)\,du\in\mathcal S(\R^d)\subset C_0(\R^d)
\]
Hence
\[
\begin{aligned}
\int (J\mu_{n_k})(u)\,\varphi(u)\,du
&=\int_{\R^d}\widehat\psi\,d\mu_{n_k}-\int_{\R^d}\psi(u)\,du\\
&\ \longrightarrow\
\int_{\R^d}\widehat\psi\,d\nu-\int_{\R^d}\psi(u)\,du
=\int_{\R^d}\frac{\Bchi_\nu(u)-1}{\|u\|^{s}}\,\varphi(u)\,du
\end{aligned}
\]
where $\Bchi_\nu(u):=\int e^{i\langle u,x\rangle}\,d\nu(x)$ and Fubini was used once more in the
last equality. On the other hand, $J\mu_{n_k}\to f$ in $L^p$ and $\varphi\in L^{p'}$, so
$\int(J\mu_{n_k})\varphi\to\int f\varphi$. Since $C_c^\infty(\R^d\setminus\{0\})$ separates
locally integrable functions on the open set $\R^d\setminus\{0\}$ (fundamental lemma of the
calculus of variations), we conclude
\[
f(u)=\frac{\Bchi_\nu(u)-1}{\|u\|^{s}}\qquad\text{for a.e.\ }u\in\R^d
\]

\emph{No loss of mass.} $\Bchi_\nu$ is continuous on $\R^d$ with
$\Bchi_\nu(0)=m:=\nu(\R^d)\in[0,1]$. If $m<1$, continuity provides $\eta>0$ with
$|\Bchi_\nu(u)-1|\ge\frac{1-m}{2}$ for $\|u\|\le\eta$, whence
\[
\int_{\|u\|\le\eta}|f(u)|^p\,du
\ \ge\ \Big(\frac{1-m}{2}\Big)^p\int_{\|u\|\le\eta}\|u\|^{-ps}\,du
= \Big(\frac{1-m}{2}\Big)^p\,v_d\int_0^{\eta}r^{\,d-1-ps}\,dr=\infty
\]
because $ps\ge d$. This contradicts $f\in L^p$. Hence $m=1$ and $\nu\in\mathcal P(\R^d)$. Then
$J\nu=f\in L^p$, so $\nu\in\mathcal P^{\mathrm T}_{s,p}(\R^d)$ and
$f\in J\big(\mathcal P^{\mathrm T}_{s,p}(\R^d)\big)$. The image is closed.

\smallskip
\noindent\emph{Step 4. The case $p=\infty$, any $s>0$.}
If $\|J\mu_n-f\|_{L^\infty}\to0$, pass to a weak-* subsequence and identify
$f=(\Bchi_\nu-1)\|u\|^{-s}$ a.e.\ exactly as in Step~3 (the pairing argument only uses
$\varphi\in C_c^\infty(\R^d\setminus\{0\})\subset L^1$). Both sides are continuous on
$\R^d\setminus\{0\}$, hence agree everywhere off the origin. If $m:=\nu(\R^d)<1$, then
$|\Bchi_\nu(u)-1|\to1-m>0$ as $u\to0$, while
$|\Bchi_\nu(u)-1|=\|u\|^{s}|f(u)|\le\|f\|_{L^\infty}\|u\|^{s}\to0$, a contradiction. Hence
$\nu\in\mathcal P(\R^d)$, $J\nu=f$, and the image is closed for every $s>0$.

\smallskip
\noindent\emph{Step 5. Completeness.}
If $(\mu_n)$ is $\mathrm T_{s,p}$-Cauchy, then $(J\mu_n)$ is Cauchy in the Banach space $L^p$,
hence converges to some $f\in L^p$. By Steps~3--4, $f=J\nu$ for some
$\nu\in\mathcal P^{\mathrm T}_{s,p}(\R^d)$, and
$\mathrm T_{s,p}(\mu_n,\nu)=\|J\mu_n-J\nu\|_{L^p}\to0$. Thus every Cauchy sequence converges in
$\big(\mathcal P^{\mathrm T}_{s,p}(\R^d),\mathrm{T}_{s,p}\big)$, which is therefore complete.
\end{proof}
\subsection{Proof of Proposition~\ref{prop:convergence}}\label{proof:convergence}
\begin{proof}
Write $f:=J\mu$, so that $\|J\mu_n-f\|_{L^p}=\mathrm T_{s,p}(\mu_n,\mu)\to0$.

\smallskip
\noindent\emph{Step 1. Every weak-* subsequential limit equals $\mu$.}
Set $g_n:=(\Bchi_{\mu_n}-\Bchi_\mu)\,\|\cdot\|^{-s}$, so that
$\|g_n\|_{L^p}=\mathrm T_{s,p}(\mu_n,\mu)\to0$. Note that only the \emph{difference} is assumed
integrable --- neither measure need lie in $\mathcal P^{\mathrm T}_{s,p}(\R^d)$. Let
$(\mu_{n_k})$ be an arbitrary subsequence. As in Step~3 of the proof of
Theorem~\ref{thm:completeness}, weak-* compactness provides a further subsequence (not relabeled)
converging in the weak-* sense against $C_0(\R^d)$ to a nonnegative measure $\nu$ with
$\nu(\R^d)\le1$. Fix $\varphi\in C_c^\infty(\R^d\setminus\{0\})$ and set
$\psi:=\|\cdot\|^{-s}\varphi\in\mathcal S(\R^d)$ as there. The pairing identity of that step,
applied to $\mu_{n_k}$ and to $\mu$, gives
\[
\begin{aligned}
\int_{\R^d} g_{n_k}(u)\,\varphi(u)\,du
&=\int_{\R^d}\widehat\psi\,d\mu_{n_k}-\int_{\R^d}\widehat\psi\,d\mu\\
&\ \longrightarrow\
\int_{\R^d}\widehat\psi\,d\nu-\int_{\R^d}\widehat\psi\,d\mu
=\int_{\R^d}\big(\Bchi_\nu(u)-\Bchi_\mu(u)\big)\,\|u\|^{-s}\varphi(u)\,du
\end{aligned}
\]
while H\"older's inequality gives $|\int g_{n_k}\varphi|\le\|g_{n_k}\|_{L^p}\|\varphi\|_{L^{p'}}\to0$.
Hence $\int(\Bchi_\nu-\Bchi_\mu)\|u\|^{-s}\varphi\,du=0$ for every
$\varphi\in C_c^\infty(\R^d\setminus\{0\})$, so $\Bchi_\nu=\Bchi_\mu$ a.e.\ on
$\R^d\setminus\{0\}$, and everywhere on $\R^d$ since both are continuous. In particular
$\nu(\R^d)=\Bchi_\nu(0)=\Bchi_\mu(0)=1$, so $\nu$ is a probability measure, and $\nu=\mu$ by the
uniqueness theorem for characteristic functions. (This argument uses no relation between $s$, $p$,
and $d$.)

\smallskip
\noindent\emph{Step 2. Weak convergence of the full sequence.}
By Step~1, every subsequence of $(\mu_n)$ has a further subsequence converging weak-* to the
\emph{probability} measure $\mu$. Weak-* convergence of probability measures to a probability
limit upgrades to weak convergence. Given $\varepsilon>0$, choose $g\in C_c(\R^d)$ with
$0\le g\le1$ and $\int g\,d\mu>1-\varepsilon$. Then $\int g\,d\mu_{n_k}>1-\varepsilon$
eventually, so the subsequence is uniformly tight, and for any bounded continuous $h$ and a
cutoff $g_R\in C_c$ with $0\le g_R\le1$ and $g_R\equiv1$ on $B(0,R)$,
\[
\Big|\int h\,d(\mu_{n_k}-\mu)\Big|
\le\Big|\int hg_R\,d(\mu_{n_k}-\mu)\Big|
+\|h\|_\infty\big(\mu_{n_k}(\{g_R<1\})+\mu(\{g_R<1\})\big)
\]
where the first term vanishes as $k\to\infty$ (since $hg_R$ is continuous and compactly
supported, hence in $C_0$) and the second is small uniformly in $k$ for $R$ large, by tightness.
Thus every subsequence has a further subsequence converging weakly to $\mu$. Since weak
convergence of probability measures is metrizable (e.g.\ by the bounded-Lipschitz metric), the
full sequence converges weakly to $\mu$, proving (i).

\smallskip
\noindent\emph{Step 3. Locally uniform convergence of characteristic functions.}
Weak convergence gives $\Bchi_{\mu_n}(u)\to\Bchi_\mu(u)$ pointwise (test against
$x\mapsto e^{i\langle u,x\rangle}$). For local uniformity, note that a weakly convergent sequence
of probability measures is uniformly tight, so given $\varepsilon>0$ there is $R$ with
$\sup_n\mu_n(\{\|x\|>R\})<\varepsilon$. The elementary bound
\[
|\Bchi_{\mu_n}(u)-\Bchi_{\mu_n}(v)|
\le\int_{\{\|x\|\le R\}}\big|e^{i\langle u,x\rangle}-e^{i\langle v,x\rangle}\big|\,d\mu_n
+2\mu_n(\{\|x\|>R\})
\le R\,\|u-v\|+2\varepsilon
\]
shows that $\{\Bchi_{\mu_n}\}$ is uniformly equicontinuous, and pointwise convergence of a
uniformly equicontinuous sequence is uniform on compact sets, proving (ii).
\end{proof}

\subsection{Proof of Theorem~\ref{thm:upper-holder}}\label{proof:upper-holder}
\begin{proof}
Fix $\mu,\nu\in\mathcal P_p(\mathbb R^d)$. If \(W_p(\mu,\nu)=0\), then \(\mu=\nu\) and the claim is immediate, so assume \(W_p(\mu,\nu)>0\). For any coupling $\pi\in\Pi(\mu,\nu)$ and every
\((x,y)\), the elementary bound
\[
|e^{i\langle u,x\rangle}-e^{i\langle u,y\rangle}|
\le|\langle u,x-y\rangle|
\le\|u\|\,\|x-y\|
\]
implies
\[
\begin{aligned}
|\Bchi_\mu(u)-\Bchi_\nu(u)|
&\le\int \|u\|\,\|x-y\|\,d\pi(x,y)\\
&=\|u\|\int \|x-y\|\,d\pi(x,y)
\end{aligned}
\]
Taking the infimum over $\pi$ yields
\begin{equation}\label{eq:chi-lip}
|\Bchi_\mu(u)-\Bchi_\nu(u)|\le \|u\| W_1(\mu,\nu)\le \|u\| W_p(\mu,\nu)
\end{equation}
where the last inequality uses monotonicity $W_1\le W_p$ for $p\ge 1$.
Also $|\Bchi_\mu(u)-\Bchi_\nu(u)|\le 2$.

Let $R>0$ (chosen below) and split frequencies.
\[
\mathrm{T}_{s,p}^p(\mu,\nu)
\le
\int_{\|u\|\le R}\frac{(\|u\|W_p(\mu,\nu))^p}{\|u\|^{ps}} du
 + 
\int_{\|u\|>R}\frac{2^p}{\|u\|^{ps}} du
\]
Using polar coordinates with $v_d:=|\mathbb S^{d-1}|$,
\[
W_p^p(\mu,\nu)\int_{\|u\|\le R}\|u\|^{p-ps} du
=W_p^p(\mu,\nu)\ 
v_d\int_0^R r^{d-1+p-ps} dr
=W_p^p(\mu,\nu)\ 
\frac{v_d}{d+p-ps} R^{d+p-ps}
\]
which is finite since $d+p-ps>0$ is equivalent to $s<1+\frac{d}{p}$.
Similarly,
\[
\int_{\|u\|>R}\|u\|^{-ps} du
=
v_d\int_R^\infty r^{d-1-ps} dr
=
\frac{v_d}{ps-d} R^{d-ps}
\]
finite since $ps-d>0$ is equivalent to $s>\frac{d}{p}$.
Therefore, with
\[
C_1(d,p,s):=\frac{v_d}{d+p-ps}\quad,
\qquad
C_2(d,p,s):=\frac{2^p v_d}{ps-d}
\]
we obtain, for every $R>0$,
\begin{equation}\label{eq:upper-split}
\mathrm{T}_{s,p}^p(\mu,\nu)
\le C_1(d,p,s)\ W_p^p(\mu,\nu)\ R^{d+p-ps} + C_2(d,p,s)\ R^{d-ps}
\end{equation}

The splitting radius $R$ in \eqref{eq:upper-split} is free, and the bound is valid for \emph{every} $R\in(0,\infty)$. In particular the choice below is admissible irrespective of the size of $W_p(\mu,\nu)$. Since $W:=W_p(\mu,\nu)\in(0,\infty)$ by assumption, we may take
\[
R:=\frac{1}{W}\in(0,\infty)
\]
Both terms of \eqref{eq:upper-split} then carry the same power of $W$. Indeed, for the low-frequency term,
\[
W^p\,R^{\,d+p-ps}
=W^{p}\,W^{-(d+p-ps)}
=W^{\,p-d-p+ps}
=W^{\,ps-d}
\]
and for the high-frequency term,
\[
R^{\,d-ps}=W^{-(d-ps)}=W^{\,ps-d}
\]
Substituting these two identities into \eqref{eq:upper-split} gives
\[
\mathrm{T}_{s,p}^p(\mu,\nu)
\le
\big(C_1(d,p,s)+C_2(d,p,s)\big)\ W_p(\mu,\nu)^{\,ps-d}
\]
Note that $ps-d>0$ by $s>\frac dp$, so the right-hand side is a genuine positive power of $W_p(\mu,\nu)$. Taking $p$-th roots and recalling $\frac{ps-d}{p}=s-\frac dp$,
\[
\mathrm{T}_{s,p}(\mu,\nu)
\le
\big(C_1(d,p,s)+C_2(d,p,s)\big)^{1/p}\ W_p(\mu,\nu)^{\,s-\frac dp}
=
C(d,p,s)\ W_p(\mu,\nu)^{\,s-\frac dp}
\]
which is exactly \eqref{eq:upper-holder} with the constant announced in the statement. This argument uses no case distinction on the size of $W_p(\mu,\nu)$. The two pointwise bounds \eqref{eq:chi-lip} and $|\Bchi_\mu-\Bchi_\nu|\le 2$ hold globally, and the optimizing radius $R=W_p^{-1}$ is admissible on all of $(0,\infty)$.

It remains to verify the sharpness assertion. Fix $x\neq 0$ and $\lambda>0$, and apply Lemma~\ref{lem:scaling} to the pair $(\delta_0,\delta_x)$, noting that $(D_\lambda)_\#\delta_0=\delta_0$ and $(D_\lambda)_\#\delta_x=\delta_{\lambda x}$. This gives
\[
\mathrm{T}_{s,p}(\delta_0,\delta_{\lambda x})=\lambda^{\,s-\frac dp}\ \mathrm{T}_{s,p}(\delta_0,\delta_{ x})\quad,
\qquad
W_p(\delta_0,\delta_{\lambda x})=\lambda\ W_p(\delta_0,\delta_{x})=\lambda\|x\|
\]
By Proposition~\ref{prop:window-sharp}, $t_0:=\mathrm{T}_{s,p}(\delta_0,\delta_x)\in(0,\infty)$ inside the window. Consequently the ratio
\[
\frac{\mathrm{T}_{s,p}(\delta_0,\delta_{\lambda x})}{W_p(\delta_0,\delta_{\lambda x})^{\,s-\frac dp}}
=
\frac{\lambda^{\,s-d/p}\,t_0}{\lambda^{\,s-d/p}\,\|x\|^{\,s-d/p}}
=
\frac{t_0}{\|x\|^{\,s-d/p}}
\]
is independent of $\lambda$, so \eqref{eq:upper-holder} is saturated up to a fixed constant along the whole dilation family. If \eqref{eq:upper-holder} held with some exponent $\theta>s-\frac dp$ and some finite constant $C'$, then evaluating along the same family would force $\lambda^{\,s-d/p}t_0\le C'\lambda^{\theta}\|x\|^{\theta}$ for all $\lambda>0$, i.e.\ $\lambda^{\,s-d/p-\theta}\le C'\|x\|^{\theta}/t_0$, which fails as $\lambda\downarrow0$ because $s-\frac dp-\theta<0$. Hence no larger exponent is admissible.
\end{proof}

\subsection{Proof of Lemma~\ref{lem:scaling}}\label{proof:scaling}
\begin{proof}
Fix $\lambda>0$ and define $D_\lambda(x)=\lambda x$ and $\mu^\lambda=(D_\lambda)_\#\mu$.

Let $\pi\in\Pi(\mu,\nu)$ be any coupling. Consider the pushforward coupling
\[
\pi^\lambda := (D_\lambda\times D_\lambda)_\#\pi \in \Pi(\mu^\lambda,\nu^\lambda)
\]
since the first marginal of $\pi^\lambda$ is $(D_\lambda)_\#\mu=\mu^\lambda$ and similarly for the second.
Then
\[
\int_{\R^d\times\R^d}\|x-y\|^p d\pi^\lambda(x,y)
=
\int_{\R^d\times\R^d}\|\lambda x-\lambda y\|^p d\pi(x,y)
=
\lambda^p\int_{\R^d\times\R^d}\|x-y\|^p d\pi(x,y)
\]
Taking the infimum over $\pi\in\Pi(\mu,\nu)$ yields
\[
W_p(\mu^\lambda,\nu^\lambda)^p
\le \lambda^p W_p^p(\mu,\nu)
\]
Applying the same argument with $\lambda^{-1}$ to $(\mu^\lambda,\nu^\lambda)$ gives the reverse inequality,
hence equality.
\[
W_p(\mu^\lambda,\nu^\lambda)=\lambda W_p(\mu,\nu)
\]

For, $\mathrm{T}_{s,p}$, we compute the characteristic function of the pushforward. For any $u\in\R^d$,
\begin{align*}
\Bchi_{\mu^\lambda}(u)
&=\int_{\R^d} e^{i\langle u,x\rangle} d\mu^\lambda(x)
=\int_{\R^d} e^{i\langle u,D_\lambda(x)\rangle} d\mu(x)
=\int_{\R^d} e^{i\langle u,\lambda x\rangle} d\mu(x)
=\Bchi_\mu(\lambda u)
\end{align*}
Therefore,
\begin{align*}
\mathrm{T}_{s,p}(\mu^\lambda,\nu^\lambda)^p
&=\int_{\R^d\setminus\{0\}}\frac{|\Bchi_{\mu^\lambda}(u)-\Bchi_{\nu^\lambda}(u)|^p}{\|u\|^{ps}} du =\int_{\R^d\setminus\{0\}}\frac{|\Bchi_{\mu}(\lambda u)-\Bchi_{\nu}(\lambda u)|^p}{\|u\|^{ps}} du
\end{align*}
Make the change of variables $v=\lambda u$, so $u=v/\lambda$ and $du=\lambda^{-d}dv$. Then
\[
\|u\|^{ps}=\Big\|\frac{v}{\lambda}\Big\|^{ps}=\lambda^{-ps}\|v\|^{ps}
\]
and hence
\begin{align*}
\mathrm{T}_{s,p}(\mu^\lambda,\nu^\lambda)^p
&=\int_{\R^d\setminus\{0\}}
\frac{|\Bchi_{\mu}(v)-\Bchi_{\nu}(v)|^p}{\lambda^{-ps}\|v\|^{ps}} \lambda^{-d} dv \\
&=\lambda^{ps-d}\int_{\R^d\setminus\{0\}}
\frac{|\Bchi_{\mu}(v)-\Bchi_{\nu}(v)|^p}{\|v\|^{ps}} dv \\
&=\lambda^{ps-d} \mathrm{T}_{s,p}^p(\mu,\nu)
\end{align*}
Taking $p$-th roots gives
\[
\mathrm{T}_{s,p}(\mu^\lambda,\nu^\lambda)=\lambda^{s-\frac{d}{p}} \mathrm{T}_{s,p}(\mu,\nu)
\]

We turn to (i). Assume that for some $\rho>0$ and $C>0$,
\[
W_p(\mu,\nu)\le C\ \mathrm{T}_{s,p}^\rho(\mu,\nu)\qquad\forall \mu,\nu\in\mathcal P_p(\R^d)
\]
Apply it to $(\mu^\lambda,\nu^\lambda)$ and use the scaling relations.
\[
\lambda W_p(\mu,\nu)
\le C\ \big(\lambda^{s-\frac{d}{p}}\ \mathrm{T}_{s,p}(\mu,\nu)\big)^\rho
= C \lambda^{\rho\left(s-\frac{d}{p}\right)} \mathrm{T}_{s,p}^\rho(\mu,\nu)
\]
Equivalently,
\[
W_p(\mu,\nu)\le C \lambda^{\rho\left(s-\frac{d}{p}\right)-1} \mathrm{T}_{s,p}^\rho(\mu,\nu)
\]
Fix once and for all a pair $(\mu,\nu)$ with $W_p(\mu,\nu)>0$ and $\mathrm{T}_{s,p}(\mu,\nu)<\infty$. For instance $\mu=\delta_0$ and $\nu=\delta_{e_1}$, which lies in $\mathcal P_p(\R^d)$ for every $p$ and has $\mathrm{T}_{s,p}(\delta_0,\delta_{e_1})\in(0,\infty)$ by Proposition~\ref{prop:window-sharp}. For this pair the displayed inequality must hold for every $\lambda>0$ with the \emph{same} constant $C$. If $\rho(s-\frac dp)-1<0$, letting $\lambda\uparrow\infty$ drives the right-hand side to $0$ and forces $W_p(\mu,\nu)=0$, a contradiction. If $\rho(s-\frac dp)-1>0$, letting $\lambda\downarrow0$ gives the same contradiction. Hence the exponent of $\lambda$ must vanish, i.e.
\[
\rho\Big(s-\frac{d}{p}\Big)-1=0
\qquad\Longrightarrow\qquad
\rho=\Big(s-\frac{d}{p}\Big)^{-1}
\]
Finally, the standing window gives $0<s-\frac dp<1$. Positivity from $s>\frac dp$ and the upper bound from $s<1+\frac dp$. Hence $\rho=(s-\frac dp)^{-1}>1$, so no reverse bound with exponent $\rho\in(0,1]$ can hold globally. This proves (i).

It remains to prove (ii), for which we combine dilation with mass splitting. For
$\varepsilon\in(0,1]$ and $\lambda>0$ set
\[
\mu_{\varepsilon,\lambda}:=(1-\varepsilon)\,\delta_0+\varepsilon\,\delta_{\lambda e_1}
\in\mathcal P_p(\R^d)
\]
Since $\mu_{\varepsilon,\lambda}-\delta_0=\varepsilon\,(\delta_{\lambda e_1}-\delta_0)$, absolute
homogeneity of the zero-mass norm (Theorem~\ref{thm:seminorm}) and the dilation identity just
proved give
\[
\mathrm T_{s,p}(\mu_{\varepsilon,\lambda},\delta_0)
=\varepsilon\,\big\|\delta_{\lambda e_1}-\delta_0\big\|_{\mathrm T_{s,p}}
=\varepsilon\,\lambda^{\,s-\frac dp}\,t_0,
\qquad t_0:=\mathrm T_{s,p}(\delta_0,\delta_{e_1})\in(0,\infty)
\]
by Proposition~\ref{prop:window-sharp}. Since $\delta_0$ is a Dirac measure, the only element of
$\Pi(\mu_{\varepsilon,\lambda},\delta_0)$ is the product coupling
$\mu_{\varepsilon,\lambda}\otimes\delta_0$, whose cost is exactly $\varepsilon\,\lambda^p$. Hence
\[
W_p(\mu_{\varepsilon,\lambda},\delta_0)=\varepsilon^{1/p}\,\lambda
\]
Given $t>0$, choose $\varepsilon_\lambda:=t\,\lambda^{-(s-d/p)}/t_0$, which lies in $(0,1]$ for
all $\lambda$ large enough. Then $\mathrm T_{s,p}(\mu_{\varepsilon_\lambda,\lambda},\delta_0)=t$
for every such $\lambda$, while
\[
W_p(\mu_{\varepsilon_\lambda,\lambda},\delta_0)
=\Big(\frac{t}{t_0}\Big)^{1/p}\lambda^{\,1-\frac{s-d/p}{p}}
\ \xrightarrow[\lambda\to\infty]{}\ \infty
\]
because $s-\frac dp<1\le p$ makes the exponent $1-\frac{s-d/p}{p}$ strictly positive. Hence
$\sup\{W_p:\mathrm T_{s,p}\le t\}=\infty$ for every $t>0$, and no everywhere-finite function
$\Phi$ can satisfy $W_p\le\Phi(\mathrm T_{s,p})$ on all of $\mathcal P_p(\R^d)$.
\end{proof}

\subsection{Proof of Lemma~\ref{lem:weak-moment-wp}}\label{proof:weak-moment-wp}
\begin{proof}
Let
\[
M:=\sup_{n}\int_{\R^d}\|x\|^\gamma d\mu_n(x)<\infty,
\qquad \gamma>p
\]

First, \(\mu\) has finite \(\gamma\)-moment. Indeed, for each \(R>0\) the function
\(\phi_R(x):=\|x\|^\gamma\wedge R\) is bounded and continuous, so \(\mu_n\overset{w}{\rightharpoonup} \mu\) implies
\[
\int \phi_R d\mu_n  \longrightarrow  \int \phi_R d\mu
\]
Moreover \(\int \phi_R d\mu_n\le \int \|x\|^\gamma d\mu_n\le M\) for all \(n\), hence
\(\int \phi_R d\mu\le M\) for all \(R\). Letting \(R\uparrow\infty\) and using monotone convergence gives
\[
\int \|x\|^\gamma d\mu \le M <\infty
\]

Next, the \(\gamma\)-moment bound gives uniform control of \(p\)-tails. For any \(R>0\) and any \(n\),
on \(\{\|x\|>R\}\) we have \(\|x\|^{\gamma-p}\ge R^{\gamma-p}\), hence
\[
\|x\|^p \mathbf 1_{\{\|x\|>R\}}
=\frac{\|x\|^\gamma}{\|x\|^{\gamma-p}}\mathbf 1_{\{\|x\|>R\}}
\le R^{p-\gamma} \|x\|^\gamma \mathbf 1_{\{\|x\|>R\}}
\]
Therefore
\begin{equation}\label{eq:tail_bound}
\sup_n \int_{\{\|x\|>R\}} \|x\|^p d\mu_n(x)\le R^{p-\gamma} M,
\qquad
\int_{\{\|x\|>R\}} \|x\|^p d\mu(x)\le R^{p-\gamma} M
\end{equation}
Since \(\gamma>p\), the right-hand side tends to \(0\) as \(R\to\infty\), so \(\{\|x\|^p\}\) is uniformly
integrable under \(\{\mu_n\}\).

Now show convergence of \(p\)-moments. Define the bounded continuous truncation
\[
\psi_R(x):=\|x\|^p\wedge R^p
\]
Then \(\mu_n\overset{w}{\rightharpoonup} \mu\) implies
\[
\int \psi_R d\mu_n  \longrightarrow  \int \psi_R d\mu \qquad \text{for each fixed }R>0
\]
Also,
\[
0\le \int \|x\|^p d\mu_n - \int \psi_R d\mu_n
= \int (\|x\|^p-R^p)_+ d\mu_n
\le \int_{\{\|x\|>R\}} \|x\|^p d\mu_n
\]
and similarly for \(\mu\). Using \eqref{eq:tail_bound}, for any \(\varepsilon>0\) choose \(R\) large enough
so that \(R^{p-\gamma}M<\varepsilon/3\). Then for this \(R\),
\[
\sup_n\Big|\int \|x\|^p d\mu_n-\int \psi_R d\mu_n\Big|\le \varepsilon/3
\qquad
\Big|\int \|x\|^p d\mu-\int \psi_R d\mu\Big|\le \varepsilon/3
\]
Finally choose \(N\) such that for all \(n\ge N\),
\[
\Big|\int \psi_R d\mu_n-\int \psi_R d\mu\Big|<\varepsilon/3
\]
Combining these three bounds yields, for \(n\ge N\),
\[
\Big|\int \|x\|^p d\mu_n-\int \|x\|^p d\mu\Big|
\le \varepsilon
\]
Hence \(\int \|x\|^p d\mu_n\to \int \|x\|^p d\mu\).

A standard characterization of \(W_p\)-convergence \citep[Theorem~6.9]{villani2008optimal} states that
\[
W_p(\mu_n,\mu)\to 0
\quad\Longleftrightarrow\quad
\mu_n\overset{w}{\rightharpoonup} \mu \ \text{ and }\ \int \|x\|^p d\mu_n \to \int \|x\|^p d\mu
\]
Since both conditions hold, \(W_p(\mu_n,\mu)\to 0\).
\end{proof}
\subsection{Proof of Theorem~\ref{thm:Wp-Tsp-equivalence}}\label{proof:Wp-Tsp-equivalence}
\begin{proof}
The direction \(W_p(\mu_n,\mu)\to 0 \Rightarrow \mathrm{T}_{s,p}(\mu_n,\mu)\to 0\) follows from
Theorem~\ref{thm:upper-holder}.

Conversely, assume \(\mathrm{T}_{s,p}(\mu_n,\mu)\to 0\) with \(\mu_n,\mu\in\mathcal P_{p,K}(\R^d)\).
Proposition~\ref{prop:convergence} directly yields \(\mu_n\overset{w}{\rightharpoonup}\mu\), with
no use of the common support. (A quantitative alternative. Lemma~\ref{lem:chi-lip} gives the
uniform Lipschitz bound \(\mathrm{Lip}(\Bchi_{\mu_n}-\Bchi_\mu)\le 2K\), so
Lemma~\ref{lem:Tp-to-pointwise} applies with \(L=2K\) and produces pointwise convergence of
characteristic functions with explicit rates, and L\'evy's continuity theorem again gives weak
convergence.)

It remains to upgrade weak convergence to \(W_p\)-convergence using the bounded support.
Set \(B:=B(0,K)\). Because \(\text{supp}(\mu_n)\cup\text{supp}(\mu)\subset B\), every coupling of \((\mu_n,\mu)\) is
supported in \(B\times B\), hence \(W_1(\mu_n,\mu)\) (and \(W_p(\mu_n,\mu)\)) can be computed using only
\((B,\|\cdot\|)\).

By Kantorovich--Rubinstein duality on the compact metric space \(B\),
\[
W_1(\mu_n,\mu)=\sup_{\mathrm{Lip}(\varphi)\le 1}\int_B \varphi d(\mu_n-\mu)
\]
Subtracting the constant \(\varphi(0)\) does not change the integral, so the supremum equals
\[
\sup_{\varphi\in\mathcal F}\int_B \varphi d(\mu_n-\mu),
\qquad
\mathcal F:=\{\varphi\in C(B): \mathrm{Lip}(\varphi)\le 1,\ \varphi(0)=0\}
\]
For \(\varphi\in\mathcal F\) and \(x\in B\), \(|\varphi(x)|\le \|x-0\|\le K\), hence \(\mathcal F\) is
uniformly bounded and equicontinuous. By Arzel\`a--Ascoli, \(\mathcal F\) is compact in
\((C(B),\|\cdot\|_\infty)\). Fix \(\varepsilon>0\) and take a finite \(\varepsilon\)-net
\(\{\varphi_1,\dots,\varphi_m\}\subset\mathcal F\) in \(\|\cdot\|_\infty\).
For any \(\varphi\in\mathcal F\), choose \(j\) with \(\|\varphi-\varphi_j\|_\infty\le\varepsilon\). Then
\[
\int_B \varphi d(\mu_n-\mu)
=
\int_B \varphi_j d(\mu_n-\mu)+\int_B (\varphi-\varphi_j) d(\mu_n-\mu)
\le
\int_B \varphi_j d(\mu_n-\mu)+2\varepsilon
\]
since \(|\int_B (\varphi-\varphi_j) d(\mu_n-\mu)|\le 2\|\varphi-\varphi_j\|_\infty\).
Taking the supremum over \(\varphi\in\mathcal F\) gives
\[
W_1(\mu_n,\mu)
\le
\max_{1\le j\le m}\Big|\int_B \varphi_j d(\mu_n-\mu)\Big| + 2\varepsilon
\]
Each \(\varphi_j\) is bounded continuous on \(B\), so \(\mu_n\overset{w}{\rightharpoonup}  \mu\) implies
\(\int_B \varphi_j d\mu_n\to\int_B \varphi_j d\mu\) for every \(j\). Hence the maximum term tends to \(0\),
and therefore \(\limsup_{n\to\infty} W_1(\mu_n,\mu)\le 2\varepsilon\).
Since \(\varepsilon>0\) is arbitrary, \(W_1(\mu_n,\mu)\to 0\).

Finally, on a common set of diameter \(\mathrm{diam}(B)=2K\), the elementary interpolation bound
\(\|x-y\|^p\le (2K)^{p-1}\|x-y\|\) under any coupling yields
\[
W_p(\mu_n,\mu)\le (2K)^{1-\frac1p} W_1^{1/p}(\mu_n,\mu)
\]
Thus \(W_1(\mu_n,\mu)\to 0\) implies \(W_p(\mu_n,\mu)\to 0\), completing the proof.
\end{proof}
\subsection{Proof of Theorem~\ref{thm:Wp-Tsp-equivalence-moment}}\label{proof:Wp-Tsp-equivalence-moment}
\begin{proof}
Let \(\{\mu_n\}\subset\mathcal M_{\gamma,S}\) and \(\mu\in\mathcal M_{\gamma,S}\).

\smallskip
\noindent\emph{\((\Rightarrow)\)} Assume \(W_p(\mu_n,\mu)\to 0\).  
By Theorem~\ref{thm:upper-holder}, \(\mathrm{T}_{s,p}(\mu_n,\mu)\to 0\).

\smallskip
\noindent\emph{\((\Leftarrow)\)} Assume \(\mathrm{T}_{s,p}(\mu_n,\mu)\to 0\).
By Proposition~\ref{prop:convergence},
\[
\mu_n \overset{w}{\rightharpoonup} \mu
\]

Moreover, because \(\mu_n\in\mathcal M_{\gamma,S}\), we have the uniform \(\gamma\)-moment bound
\[
\sup_n \int \|x\|^\gamma d\mu_n(x)\le S<\infty
\]
and by assumption \(\gamma>p\). Thus the hypotheses of Lemma~\ref{lem:weak-moment-wp} are satisfied, and it follows that
\[
W_p(\mu_n,\mu)\to 0
\]

\smallskip
Combining the two implications proves the equivalence.
\end{proof}

\subsection{Proof of Lemma~\ref{lem:chi-lip}}\label{proof:chi-lip}
\begin{proof}
Fix \(u,v\in\mathbb R^d\). Since the map \(t\mapsto e^{it}\) is \(1\)-Lipschitz on \(\mathbb R\) (because
\(|(e^{it})'|=|ie^{it}|=1\)), we have for every \(x\in\mathbb R^d\),
\[
\big|e^{i\langle u,x\rangle}-e^{i\langle v,x\rangle}\big|
\le \big|\langle u-v,x\rangle\big|
\le \|u-v\| \|x\|
\]
Therefore,
\begin{align*}
|\Bchi_\mu(u)-\Bchi_\mu(v)|
&=\Big|\int_{\mathbb R^d}\Big(e^{i\langle u,x\rangle}-e^{i\langle v,x\rangle}\Big) d\mu(x)\Big| \\
&\le \int_{\mathbb R^d}\big|e^{i\langle u,x\rangle}-e^{i\langle v,x\rangle}\big| d\mu(x) \\
&\le \|u-v\|\int_{\mathbb R^d}\|x\| d\mu(x)
\end{align*}
If \(\mu\in\mathcal P_{p,K}(\mathbb R^d)\), then \(\mathrm{supp}(\mu)\subset B(0,K)\), hence \(\|x\|\le K\)
\(\mu\)-a.s., and thus \(\int \|x\| d\mu(x)\le K\). Consequently,
\[
|\Bchi_\mu(u)-\Bchi_\mu(v)|\le K\|u-v\|,\qquad \forall u,v\in\mathbb R^d
\]
so \(\Bchi_\mu\) is globally Lipschitz with \(\mathrm{Lip}(\Bchi_\mu)\le K\).

Finally, for \(\mu,\nu\in\mathcal P_{p,K}(\mathbb R^d)\) and \(f:=\Bchi_\mu-\Bchi_\nu\),
\[
|f(u)-f(v)|
\le |\Bchi_\mu(u)-\Bchi_\mu(v)|+|\Bchi_\nu(u)-\Bchi_\nu(v)|
\le 2K\|u-v\|
\]
hence \(\mathrm{Lip}(f)\le 2K\).
\end{proof}

\subsection{Proof of Lemma~\ref{lem:chi-lip-moment}}\label{proof:chi-lip-moment}
\begin{proof}
Fix \(u,v\in\mathbb R^d\). For every \(x\in\mathbb R^d\), using that \(t\mapsto e^{it}\) is \(1\)-Lipschitz on
\(\mathbb R\),
\[
\big|e^{i\langle u,x\rangle}-e^{i\langle v,x\rangle}\big|
\le \big|\langle u-v,x\rangle\big|
\le \|u-v\| \|x\|
\]
Hence, by Jensen/triangle inequality,
\begin{align*}
|\Bchi_\mu(u)-\Bchi_\mu(v)|
&=\Big|\int_{\mathbb R^d}\big(e^{i\langle u,x\rangle}-e^{i\langle v,x\rangle}\big) d\mu(x)\Big| \\
&\le \int_{\mathbb R^d}\big|e^{i\langle u,x\rangle}-e^{i\langle v,x\rangle}\big| d\mu(x) \\
&\le \|u-v\|\int_{\mathbb R^d}\|x\| d\mu(x)
\end{align*}
Since \(\mu\in\mathcal M_{\gamma,S}\) and \(\gamma\ge 1\), H\"older's inequality (equivalently, monotonicity of
\(L^p\)-norms) yields
\[
\int_{\mathbb R^d}\|x\| d\mu(x)
\le \Big(\int_{\mathbb R^d}\|x\|^\gamma d\mu(x)\Big)^{1/\gamma}
\le S^{1/\gamma}
\]
Combining the bounds gives
\[
|\Bchi_\mu(u)-\Bchi_\mu(v)|\le \|u-v\| S^{1/\gamma},\qquad \forall u,v\in\mathbb R^d
\]
so \(\mathrm{Lip}(\Bchi_\mu)\le S^{1/\gamma}\). Finally, for \(\mu,\nu\in\mathcal M_{\gamma,S}\) and
\(f:=\Bchi_\mu-\Bchi_\nu\),
\[
|f(u)-f(v)|
\le |\Bchi_\mu(u)-\Bchi_\mu(v)|+|\Bchi_\nu(u)-\Bchi_\nu(v)|
\le 2S^{1/\gamma}\|u-v\|
\]
hence \(\mathrm{Lip}(f)\le 2S^{1/\gamma}\).
\end{proof}

\subsection{Proof of Lemma~\ref{lem:Tp-to-pointwise}}\label{proof:Tp-to-pointwise}
\begin{proof}
Fix \(u\neq 0\) and \(r\in(0,\|u\|/2)\), and set \(f:=\sigma=\Bchi_\mu-\Bchi_\nu\). Write \(B:=B(u,r)\) and denote its volume by \(|B|=|B(u,r)|\).
Let
\[
m_B := \frac{1}{|B|}\int_{B} f(v) dv
\]
be the average of \(f\) over \(B\). Then
\[
|f(u)| \le |f(u)-m_B| + |m_B|
\]

For the average term, H\"older gives
\[
|m_B|
\le
\frac{1}{|B|}\int_B |f(v)| dv
\le
\frac{1}{|B|^{1/p}}\Big(\int_B |f(v)|^p dv\Big)^{1/p}
=
|B|^{-1/p} \|f\|_{L^p(B)}
\]
Since \(|B(u,r)|=|B(0,1)| r^d\),
\[
|B|^{-1/p}=|B(0,1)|^{-1/p}r^{-d/p}
\]
Setting \(c_{d,p}:=|B(0,1)|^{-1/p}\) yields
\[
|m_B|\le c_{d,p} r^{-d/p} \|f\|_{L^p(B)}
\]

For the deviation term, use the Lipschitz property of \(f\).
\[
|f(u)-m_B|
\le
\frac{1}{|B|}\int_B |f(u)-f(v)| dv
\le
\frac{1}{|B|}\int_B \mathrm{Lip}(f) \|u-v\| dv
\le
\mathrm{Lip}(f) r
\]
By hypothesis, \(f=\sigma\) is globally \(L\)--Lipschitz, so \(\mathrm{Lip}(f)\le L\). Combining the two bounds gives the first inequality.
\[
|f(u)|
\le
c_{d,p} r^{-d/p} \|f\|_{L^p(B(u,r))} + L r
\]

To obtain the second inequality, note that \(r<\|u\|/2\) implies \(\|v\|\ge \|u\|-r\ge \|u\|/2>0\) for all
\(v\in B(u,r)\), and also \(\|v\|\le \|u\|+r\). In particular, on \(B(u,r)\),
\[
\|v\|^{ps}\le (\|u\|+r)^{ps}
\]
Therefore,
\[
\begin{aligned}
\int_{B(u,r)} |f(v)|^p\,dv
&=\int_{B(u,r)} \|v\|^{ps}
\frac{|f(v)|^p}{\|v\|^{ps}}\,dv\\
&\le(\|u\|+r)^{ps}
\int_{B(u,r)}\frac{|f(v)|^p}{\|v\|^{ps}}\,dv\\
&\le(\|u\|+r)^{ps}\mathrm{T}_{s,p}(\mu,\nu)^p
\end{aligned}
\]
hence
\[
\|f\|_{L^p(B(u,r))}
\le
(\|u\|+r)^{s} \mathrm{T}_{s,p}(\mu,\nu)
\]
Substituting into the first inequality yields
\[
|f(u)|
\le
c_{d,p} r^{-d/p}(\|u\|+r)^{s} \mathrm{T}_{s,p}(\mu,\nu) + L r
\]
as claimed.

For the pointwise convergence, fix \(u\in\R^d\) with \(u\neq 0\) and suppose
\(\mathrm{T}_{s,p}(\mu_n,\mu)\to 0\), with \(\sigma_n:=\Bchi_{\mu_n}-\Bchi_\mu\) uniformly \(L\)--Lipschitz. Let \(T_n:=\mathrm{T}_{s,p}(\mu_n,\mu)\). If \(T_n=0\) for some \(n\), then \(\mu_n=\mu\) by the definiteness established in Proposition~\ref{prop:integrability}(i) and \(f_n\equiv0\), so such indices may be discarded. We may therefore assume \(T_n>0\) for all \(n\) and choose
\[
r_n := T_n^{ p/(d+p)}\in(0,\infty)
\]
Then \(r_n\downarrow 0\) because \(T_n\to0\) and the exponent \(p/(d+p)\) is positive. For all sufficiently large \(n\), we have \(r_n<\|u\|/2\), and the bound above gives
\[
|f_n(u)|
\le
c_{d,p} r_n^{-d/p}(\|u\|+r_n)^{s} T_n + L r_n\quad ,
\qquad f_n:=\Bchi_{\mu_n}-\Bchi_\mu
\]
Now \(r_n^{-d/p}T_n = T_n^{1-d/(d+p)} = T_n^{p/(d+p)}\to 0\), and also \(r_n\to 0\).
Since \((\|u\|+r_n)^s\to \|u\|^s\), the right-hand side tends to \(0\), hence
\(|\Bchi_{\mu_n}(u)-\Bchi_\mu(u)|\to 0\).
The case \(u=0\) is trivial since \(\Bchi_\eta(0)=1\) for any probability measure \(\eta\).
\end{proof}

\subsection{Proof of Corollary~\ref{cor:pointwise-bound}}\label{proof:pointwise-bound}
\begin{proof}
Fix $u\neq 0$ and write $f:=\Bchi_\mu-\Bchi_\nu$. By the hypotheses of Lemma~\ref{lem:Tp-to-pointwise}, $f$ is globally $L$--Lipschitz and the bound of Lemma~\ref{lem:Tp-to-pointwise} applies. For every $r\in(0,\|u\|/2)$,
\begin{equation}\label{eq:tp-pointwise-r}
|f(u)|
\le c_{d,p}\, r^{-d/p}\,(\|u\|+r)^s\, \mathrm{T}_{s,p}(\mu,\nu)+Lr
\end{equation}
Choose a sequence $r_n\uparrow \|u\|/2$ with $r_n\in(0,\|u\|/2)$. For instance $r_n:=\frac{\|u\|}{2}\bigl(1-\frac1n\bigr)$. Then \eqref{eq:tp-pointwise-r} yields for each $n$,
\[
|f(u)|
\le c_{d,p}\, r_n^{-d/p}\,(\|u\|+r_n)^s\, \mathrm{T}_{s,p}(\mu,\nu)+Lr_n
\]
The map $r\mapsto r^{-d/p}(\|u\|+r)^s$ is continuous on $(0,\|u\|/2]$, hence letting $n\to\infty$ gives
\[
r_n^{-d/p}(\|u\|+r_n)^s
\longrightarrow
\Bigl(\tfrac{\|u\|}{2}\Bigr)^{-d/p}\Bigl(\tfrac{3\|u\|}{2}\Bigr)^s
=
2^{d/p}\Bigl(\tfrac32\Bigr)^s\|u\|^{\,s-d/p}\quad,
\qquad
Lr_n\longrightarrow \tfrac{L}{2}\|u\|
\]
Since the inequality holds for every $n$, passing to the limit on the right-hand side yields
\[
|f(u)|
\le
c_{d,p}\,2^{d/p}\Bigl(\tfrac32\Bigr)^s\,\|u\|^{\,s-d/p}\, \mathrm{T}_{s,p}(\mu,\nu)
+\frac{L}{2}\|u\|
\]
Recalling that $|f(u)|=|\Bchi_\mu(u)-\Bchi_\nu(u)|$, this is equation \eqref{eq:pointwise-bound} with
$C_{d,p,s}:=c_{d,p}\,2^{d/p}(3/2)^s$.
\end{proof}

\subsection{Proof of Theorem~\ref{thm:dual-ipm}}\label{proof:dual-ipm}
\begin{proof}
Let $1\le p<\infty$ and $q$ be its H\"older conjugate. Fix $\sigma\in\mathcal M_0(\R^d)$ such that
\[
f(u):=\frac{\widehat{\sigma}(-u)}{\|u\|^{s}}\in L^p(\R^d)
\]

\smallskip
\noindent\textbf{Step 1. The Fourier pairing.}
For $\phi\in\mathcal S(\R^d)$, Fourier inversion with our convention (stated before \eqref{eq:fourier-pairing}) gives
\[
\phi(x)=(2\pi)^{-d}\int_{\R^d} e^{-i\langle u,x\rangle}\,\widehat{\phi}(u)\,du
\]
hence, using $|\widehat{\sigma}(u)|\le |\sigma|(\R^d)<\infty$ and $\widehat{\phi}\in\mathcal S\subset L^1$, Fubini's theorem yields
\begin{align*}
\int_{\R^d}\phi\,d\sigma
&=(2\pi)^{-d}\int_{\R^d}\int_{\R^d} e^{-i\langle u,x\rangle}\widehat{\phi}(u)\,du\,d\sigma(x)\\
&=(2\pi)^{-d}\int_{\R^d}\widehat{\phi}(u)\left(\int_{\R^d} e^{-i\langle u,x\rangle}\,d\sigma(x)\right)\,du\\
&=(2\pi)^{-d}\int_{\R^d}\widehat{\phi}(u)\,\widehat{\sigma}(-u)\,du
\end{align*}
since $\int e^{-i\langle u,x\rangle}d\sigma(x)=\widehat\sigma(-u)$ by the definition of $\widehat\sigma$. Writing $g_\phi(u):=\|u\|^{s}\widehat{\phi}(u)$, this becomes
\begin{equation}\label{eq:pairing-fg}
\int_{\R^d}\phi\,d\sigma=(2\pi)^{-d}\int_{\R^d} f(u)\,g_\phi(u)\,du
\end{equation}

\smallskip
\noindent\textbf{Step 2. Upper bound (all $1\le p<\infty$).}
By H\"older's inequality applied to \eqref{eq:pairing-fg},
\[
\Big|\int_{\R^d}\phi\,d\sigma\Big|
\le (2\pi)^{-d}\|f\|_{L^p}\,\|g_\phi\|_{L^q}
=(2\pi)^{-d}\Big\|\frac{\widehat{\sigma}}{\|u\|^{s}}\Big\|_{L^p}\,\|\phi\|_{\dot{\mathcal F}L^q_s}
\]
(the norm of $f$ equals the norm of $\widehat\sigma(u)/\|u\|^s$ by the change of variables $u\mapsto-u$, which preserves both the Lebesgue measure and the weight $\|u\|^s$).
This shows that $\phi\mapsto\int\phi\,d\sigma$ is bounded on $\mathcal S$ with respect to the $\dot{\mathcal F}L^q_s$-seminorm. Since $\dot{\mathcal F}L^q_s$ is the completion of $\mathcal S$ under this seminorm, the functional extends uniquely to a continuous linear functional on $\dot{\mathcal F}L^q_s$, with operator norm
\begin{equation}\label{eq:upper-bound-opnorm}
\sup_{\substack{\phi\in\dot{\mathcal F}L^q_s\\ \|\phi\|_{\dot{\mathcal F}L^q_s}\le 1}}
\mathrm{Re}\int_{\R^d}\phi\,d\sigma
\ \le\
\sup_{\substack{\phi\in\dot{\mathcal F}L^q_s\\ \|\phi\|_{\dot{\mathcal F}L^q_s}\le 1}}
\Big|\int_{\R^d}\phi\,d\sigma\Big|
\le (2\pi)^{-d}\|f\|_{L^p}
=(2\pi)^{-d}\Big\|\frac{\widehat{\sigma}}{\|u\|^{s}}\Big\|_{L^p}
\end{equation}

\smallskip
\noindent\textbf{Step 3. Lower bound for $1<p<\infty$.}
If $f=0$ the identity is trivial, so assume $f\neq0$. Proposition~\ref{prop:dual-optimizer} (whose proof is independent of the present theorem. It uses only the pairing \eqref{eq:pairing-fg} for Schwartz functions, established in Step 1) produces $\phi_n\in\mathcal S(\R^d)$ with $\|\phi_n\|_{\dot{\mathcal F}L^q_s}\le1$ and
\[
\mathrm{Re}\int_{\R^d}\phi_n\,d\sigma\longrightarrow(2\pi)^{-d}\|f\|_{L^p}
\]
Hence the supremum on the left of \eqref{eq:upper-bound-opnorm} is at least $(2\pi)^{-d}\|f\|_{L^p}$, and combining with Step 2 proves \eqref{eq:dual-ipm} for $1<p<\infty$. Moreover, Proposition~\ref{prop:dual-optimizer} shows the supremum is attained in $\dot{\mathcal F}L^q_s$ in this range.

\smallskip
\noindent\textbf{Step 4. Lower bound for $p=1$ ($q=\infty$).}
Here the extremizing ``sign pattern'' $g_*:=\overline{f}/|f|\,\mathbf 1_{\{f\neq0\}}$ lies in the unit ball of $L^\infty$ and satisfies $\int f g_*=\int|f|=\|f\|_{L^1}$, but it is in general discontinuous and cannot be reached in the $L^\infty$ norm by functions of the form $\|u\|^s\widehat\phi$ with $\phi\in\mathcal S$ (whose closure in $L^\infty$ consists of continuous functions vanishing at infinity and at the origin). We therefore approximate $g_*$ \emph{pointwise almost everywhere with a uniform bound}, which suffices by dominated convergence.

Fix a mollifier $\rho\in C_c^\infty(B(0,1))$ with $\rho\ge0$ and $\int\rho=1$. Choose radii $\varepsilon_n\downarrow0$ and $R_n\uparrow\infty$, and radial cutoffs $\eta_n\in C_c^\infty(\R^d)$ with
\[
\begin{aligned}
0\le\eta_n\le1,\qquad&
\eta_n\equiv0 &&\text{on }B(0,\varepsilon_n),\\
&\eta_n\equiv1 &&\text{on }B(0,R_n)\setminus B(0,2\varepsilon_n),\\
&&&\supp(\eta_n)\subseteq B(0,2R_n)
\end{aligned}
\]
Set $g_n:=\eta_n\,g_*$ (bounded by $1$, measurable, compactly supported, vanishing on $B(0,\varepsilon_n)$) and define the smoothed truncations
\[
h_n:=g_n*\rho_{\delta_n},
\qquad \rho_{\delta}(x)=\delta^{-d}\rho(x/\delta)
\]
with $0<\delta_n<\varepsilon_n/2$ and $\delta_n\downarrow0$. These functions have four properties.
(i) $h_n\in C_c^\infty(\R^d)$, being the convolution of a compactly supported bounded function with a $C_c^\infty$ kernel.
(ii) $\|h_n\|_{L^\infty}\le\|g_n\|_{L^\infty}\|\rho\|_{L^1}\le1$, using $\rho\ge0$ and $\int\rho=1$.
(iii) $h_n\equiv0$ on $B(0,\varepsilon_n/2)$. If $\|x\|<\varepsilon_n/2$ and $\|y\|\le\delta_n<\varepsilon_n/2$ then $\|x-y\|<\varepsilon_n$, where $g_n$ vanishes.
(iv) $h_n\to g_*$ almost everywhere. For (iv), let $u_0\neq0$ be a Lebesgue point of $g_*$ (almost every point is one, since $g_*\in L^\infty\subset L^1_{\mathrm{loc}}$). For $n$ large enough that $2\varepsilon_n<\|u_0\|/2$ and $R_n>\|u_0\|+1$, we have $g_n\equiv g_*$ on the fixed ball $B(u_0,r_0)$ with $r_0:=\min\{\|u_0\|/4,\,1\}$. Hence for $\delta_n<r_0/2$,
\[
h_n(u_0)=\int g_n(u_0-y)\rho_{\delta_n}(y)\,dy=\int g_*(u_0-y)\rho_{\delta_n}(y)\,dy
\]
and
\[
\begin{aligned}
|h_n(u_0)-g_*(u_0)|
&\le\int\big|g_*(u_0-y)-g_*(u_0)\big|
\,\rho_{\delta_n}(y)\,dy\\
&\le\|\rho\|_\infty\,\delta_n^{-d}
\int_{B(0,\delta_n)}
\big|g_*(u_0-y)-g_*(u_0)\big|\,dy\\
&\longrightarrow0
\end{aligned}
\]
by the Lebesgue differentiation theorem. This proves (iv).

Now define $\phi_n\in\mathcal S(\R^d)$ through its Fourier transform, $\widehat{\phi_n}(u):=\|u\|^{-s}h_n(u)$. This is $C_c^\infty$ because $h_n$ is $C_c^\infty$ and vanishes in a neighborhood of the origin where $\|u\|^{-s}$ would be singular, so $\phi_n$ (its inverse transform) is Schwartz. By construction $\|\phi_n\|_{\dot{\mathcal F}L^\infty_s}=\|h_n\|_{L^\infty}\le1$, and by the pairing \eqref{eq:pairing-fg},
\[
\int_{\R^d}\phi_n\,d\sigma
=(2\pi)^{-d}\int_{\R^d}f(u)\,h_n(u)\,du
\]
Since $|f\,(h_n-g_*)|\le2|f|\in L^1$ and $h_n\to g_*$ a.e., dominated convergence gives
\[
\int f h_n\longrightarrow\int f g_*=\int|f|=\|f\|_{L^1}
\]
Taking real parts (the limit $\|f\|_{L^1}$ is real),
\[
\mathrm{Re}\int_{\R^d}\phi_n\,d\sigma\longrightarrow(2\pi)^{-d}\|f\|_{L^1}
\]
so the supremum in \eqref{eq:dual-ipm} is at least $(2\pi)^{-d}\|f\|_{L^1}$. With Step 2 this proves the identity at $p=1$. The supremum need not be attained at $p=1$. Any attaining test function would correspond to a
unit-ball element of the $L^\infty$-closure of $\{\|u\|^s\widehat\phi:\phi\in\mathcal S\}$ ---
whose members are continuous on $\R^d$ and vanish at the origin (since
$\|u\|^{s}|\widehat\phi(u)|\le\|u\|^{s}\|\phi\|_{L^1}\to0$ as $u\to0$, and this persists under
uniform limits) --- coinciding a.e.\ with the unimodular pattern $g_*$ on $\{f\neq0\}$. Whenever
$\{f\neq0\}$ has full measure in some neighborhood of the origin, this is impossible. Continuity
and vanishing at the origin force small modulus near $0$, while $|g_*|=1$ a.e.\ there. This is
the generic situation. For instance, it holds whenever $\mu\neq\nu$ have compact support, since
$\Bchi_\mu-\Bchi_\nu$ is then real-analytic and not identically zero, so its zero set is
Lebesgue-null.

\smallskip
\noindent\textbf{Step 5. Conclusion.}
When $\sigma=\mu-\nu$, the right-hand side of \eqref{eq:upper-bound-opnorm} equals $(2\pi)^{-d} \mathrm{T}_{s,p}(\mu,\nu)$ by the definition of $ \mathrm{T}_{s,p}$, and the equivalent reformulation follows by multiplying both sides by $(2\pi)^d$. The equality of the $\mathrm{Re}$-supremum with the modulus supremum was noted in Remark~\ref{rem:dual-real}.
\end{proof}

\subsection{Proof of Proposition~\ref{prop:dual-optimizer}}\label{proof:dual-optimizer}
\begin{proof}
Let $\sigma\in\mathcal M_0(\R^d)$ and define
\[
f(u):=\frac{\widehat{\sigma}(-u)}{\|u\|^{s}}\in L^p(\R^d),\qquad f\not\equiv 0
\]
with $1<p<\infty$ and conjugate $q\in(1,\infty)$. The $L^p$--$L^q$ duality yields
\begin{equation}\label{eq:lp-dual}
\|f\|_{L^p}=\sup_{\|g\|_{L^q}\le 1}\Re\int_{\R^d} f(u)\,g(u)\,du
\end{equation}
and the supremum is unchanged if $\Re$ is replaced by the modulus, by multiplying $g$ with a unimodular constant.

\smallskip
\noindent\emph{Step 1. Extremizer in $L^q$.}
Set
\[
g^\ast(u):=\frac{|f(u)|^{p-2}\overline{f(u)}}{\|f\|_{L^p}^{\,p-1}}
\]
(interpreted as $0$ where $f=0$. Note the complex conjugate, which is essential because $f$ is complex-valued in general). Then $g^\ast\in L^q$ and, pointwise,
\[
|g^\ast(u)|^q=\frac{\big(|f(u)|^{p-2}|f(u)|\big)^q}{\|f\|_{L^p}^{(p-1)q}}
=\frac{|f(u)|^{(p-1)q}}{\|f\|_{L^p}^{(p-1)q}}
=\frac{|f(u)|^{p}}{\|f\|_{L^p}^{p}}
\]
since $(p-1)q=p$. Integrating gives $\|g^\ast\|_{L^q}^q=\|f\|_{L^p}^{-p}\int |f|^p=1$, hence $\|g^\ast\|_{L^q}=1$.
Moreover, using $f\,\overline f=|f|^2$,
\[
\int_{\R^d} f(u)\,g^\ast(u)\,du
=\frac{1}{\|f\|_{L^p}^{p-1}}\int_{\R^d} |f(u)|^{p-2}\,f(u)\overline{f(u)}\,du
=\frac{1}{\|f\|_{L^p}^{p-1}}\int_{\R^d} |f(u)|^{p}\,du
=\|f\|_{L^p}
\]
which is real. Thus equality holds in H\"older's inequality, and $g^\ast$ attains the supremum in \eqref{eq:lp-dual}.

\smallskip
\noindent\emph{Step 2. The completion is all of $L^q$.}
The map $\Phi:\phi\mapsto\|u\|^s\widehat\phi$ is a linear isometry from $(\mathcal S(\R^d),\|\cdot\|_{\dot{\mathcal F}L^q_s})$ into $L^q(\R^d)$, by the very definition of the norm. We claim its image
\[
\Phi(\mathcal S)=\big\{\|u\|^s\widehat\phi:\ \phi\in\mathcal S(\R^d)\big\}
\]
is \emph{dense} in $L^q(\R^d)$ for $1<q<\infty$. Indeed, $C_c^\infty(\R^d\setminus\{0\})$ is dense in $L^q(\R^d)$. Since $\{0\}$ is Lebesgue-null, $L^q(\R^d\setminus\{0\})=L^q(\R^d)$, and compactly supported smooth functions on the open set $\R^d\setminus\{0\}$ are dense in its $L^q$ space for $q<\infty$ (mollify and truncate). And every $h\in C_c^\infty(\R^d\setminus\{0\})$ lies in $\Phi(\mathcal S)$. Setting $\widehat\phi(u):=\|u\|^{-s}h(u)$ defines a $C_c^\infty(\R^d)$ function (the factor $\|u\|^{-s}$ is smooth on the support of $h$, which avoids the origin), so its inverse Fourier transform $\phi$ is Schwartz and $\Phi(\phi)=h$. Consequently the extension of $\Phi$ to the abstract completion $\dot{\mathcal F}L^q_s$ is a surjective linear isometry
\[
\Phi:\dot{\mathcal F}L^q_s\ \xrightarrow{\ \cong\ }\ L^q(\R^d)
\]
(an isometry with dense image onto a complete space is onto its closure, which is everything here).

\smallskip
\noindent\emph{Step 3. Attainment in $\dot{\mathcal F}L_s^q$.}
By Step 2 there exists $\phi^\ast\in\dot{\mathcal F}L_s^q$ with
\[
\begin{aligned}
\|u\|^{s}\widehat{\phi^\ast}(u)&=g^\ast(u)
&&\text{in }L^q(\R^d),\\
\|\phi^\ast\|_{\dot{\mathcal F}L_s^q}
&=\|g^\ast\|_{L^q}=1
\end{aligned}
\]
(the equation is shorthand for $\Phi(\phi^\ast)=g^\ast$).
The pairing identity from \eqref{eq:pairing-fg} holds for $\phi\in\mathcal S$ and extends to all of
\(\dot{\mathcal F}L^q_s\) by continuity in \(\|\cdot\|_{\dot{\mathcal F}L^q_s}\). For the left
side this is the extension constructed in Step 2 of the proof of Theorem~\ref{thm:dual-ipm}.
For the right side it follows from H\"older's inequality. Evaluating at $\phi^\ast$,
\[
\int_{\R^d}\phi^\ast\,d\sigma=(2\pi)^{-d}\int_{\R^d} f(u)\,g^\ast(u)\,du
=(2\pi)^{-d}\|f\|_{L^p}
\]
which is real, so the supremum in \eqref{eq:dual-ipm} is attained at $\phi^\ast$.

\smallskip
\noindent\emph{Step 4. Near-attainment by Schwartz functions.}
Let $\varepsilon_n\downarrow 0$ and $R_n\uparrow\infty$, and choose radial cutoffs $\eta_n\in C_c^\infty(\R^d)$ satisfying
\[
\begin{aligned}
0\le\eta_n\le1,\qquad&
\eta_n\equiv0 &&\text{on }B(0,\varepsilon_n),\\
&\eta_n\equiv1 &&\text{on }B(0,R_n)\setminus B(0,2\varepsilon_n),\\
&&&\supp(\eta_n)\subseteq B(0,2R_n)
\end{aligned}
\]
Then $\eta_n g^\ast\to g^\ast$ in $L^q$. Pointwise $\eta_ng^\ast\to g^\ast$ a.e., with domination $|\eta_ng^\ast-g^\ast|^q\le|g^\ast|^q\in L^1$, so dominated convergence applies.
Each $\eta_ng^\ast$ is supported in the compact annulus $\overline{B(0,2R_n)}\setminus B(0,\varepsilon_n)$. Mollifying at a sufficiently small scale $\delta_n<\varepsilon_n/2$ produces $h_n:=(\eta_ng^\ast)*\rho_{\delta_n}\in C_c^\infty(\R^d\setminus\{0\})$ with
\[
\|h_n-\eta_n g^\ast\|_{L^q}\le 2^{-n}
\]
(possible since mollification converges in $L^q$ for $q<\infty$). Then $h_n\to g^\ast$ in $L^q$, and replacing $h_n$ by $h_n/\max\{1,\|h_n\|_{L^q}\}$ ensures $\|h_n\|_{L^q}\le 1$ while preserving $h_n\to g^\ast$ in $L^q$ (because $\|h_n\|_{L^q}\to\|g^\ast\|_{L^q}=1$).

Define $\phi_n$ by prescribing its Fourier transform
\[
\widehat{\phi_n}(u):=\|u\|^{-s}h_n(u),\qquad u\in\R^d
\]
Because $h_n\in C_c^\infty$ and vanishes in a neighborhood of $0$, the factor $\|u\|^{-s}$ is smooth on $\supp(h_n)$. Hence $\widehat{\phi_n}\in C_c^\infty(\R^d)$ and therefore $\phi_n\in\mathcal S(\R^d)$.
Moreover,
\[
\|\phi_n\|_{\dot{\mathcal F}L_s^q}=\big\|\|u\|^{s}\widehat{\phi_n}(u)\big\|_{L^q}=\|h_n\|_{L^q}\le 1
\]
Finally, using the Fourier pairing \eqref{eq:pairing-fg} (valid for $\phi_n\in\mathcal S$) and the definition of $f$,
\[
\begin{aligned}
\int_{\R^d}\phi_n\,d\sigma
&=(2\pi)^{-d}\int_{\R^d}
\widehat{\phi_n}(u)\widehat{\sigma}(-u)\,du\\
&=(2\pi)^{-d}\int_{\R^d}
\|u\|^{-s}h_n(u)\widehat{\sigma}(-u)\,du\\
&=(2\pi)^{-d}\int_{\R^d} f(u)h_n(u)\,du
\end{aligned}
\]
Since $h_n\to g^\ast$ in $L^q$ and $f\in L^p$, H\"older's inequality implies
\[
\Big|\int f h_n-\int f g^\ast\Big|\le\|f\|_{L^p}\|h_n-g^\ast\|_{L^q}\longrightarrow0
\]
hence $\int f h_n\to \int f g^\ast=\|f\|_{L^p}$ and, taking real parts,
\[
\mathrm{Re}\int_{\R^d}\phi_n\,d\sigma \longrightarrow (2\pi)^{-d}\|f\|_{L^p}
\]
When $\sigma=\mu-\nu$, $\|f\|_{L^p}=\big\|\widehat{\sigma}/\|u\|^s\big\|_{L^p}= \mathrm{T}_{s,p}(\mu,\nu)$ by definition (using again the change of variables $u\mapsto-u$), which gives the stated convergence.
\end{proof}

\subsection{Proof of Lemma~\ref{lem:mollified-FL}}\label{proof:mollified-FL}
\begin{proof}
Fix $\rho\in C_c^\infty(\R^d)$ with $\rho\ge 0$, $\int_{\R^d}\rho=1$, and define $\rho_\varepsilon(x)=\varepsilon^{-d}\rho(x/\varepsilon)$.
Let $\psi:\R^d\to\R$ be $1$--Lipschitz with $\psi(0)=0$ and $\supp(\psi)\subseteq B(0,2K)$, and set $\psi_\varepsilon:=\psi*\rho_\varepsilon$.\\

\smallskip
Since $\rho_\varepsilon\in C_c^\infty$ and $\psi\in L^1\cap L^\infty$ with compact support, $\psi_\varepsilon\in C^\infty$ and all derivatives satisfy
$\partial^\beta\psi_\varepsilon=\psi*(\partial^\beta\rho_\varepsilon)$.
Moreover $\supp(\psi_\varepsilon)\subseteq \supp(\psi)+\supp(\rho_\varepsilon)\subseteq B(0,2K+R\varepsilon)$, where $R$ is such that $\supp(\rho)\subseteq B(0,R)$.
Thus $\psi_\varepsilon$ is smooth with compact support, hence $\psi_\varepsilon\in\mathcal S(\R^d)$.\\

\smallskip
\noindent\textbf{Step 1. Two bounds on $\widehat\psi$.}
(Throughout this proof, $c_d$ and $C_d$ denote dimensional constants whose value may change from
line to line.)
Since $\psi$ is $1$--Lipschitz, Rademacher's theorem implies $\nabla\psi$ exists a.e.\ and $\|\nabla\psi\|_{L^\infty}\le 1$.
Because $\supp(\psi)\subseteq B(0,2K)$, it follows that
\[
\|\nabla\psi\|_{L^1}\le |B(0,2K)|\,\|\nabla\psi\|_{L^\infty}\le c_d\,K^d
\]
For $u\neq 0$, integration by parts (justified since $\psi$ has compact support and $\nabla\psi\in L^1$) yields, with the Fourier convention $\widehat\psi(u)=\int_{\R^d}e^{i\langle u,x\rangle}\psi(x)\,dx$ fixed in Section~\ref{sec:dual-ipm}, for any index $j$ with $u_j\neq 0$,
\[
\widehat{\psi}(u)=\int_{\R^d}e^{i\langle u,x\rangle}\psi(x)\,dx
=-\frac{1}{iu_j}\int_{\R^d}e^{i\langle u,x\rangle}\partial_j\psi(x)\,dx
\]
(only the modulus of $\widehat\psi$ is used below, so the sign is immaterial. We record it to keep the convention consistent throughout the paper)
Choosing $j$ with $|u_j|=\max_{1\le k\le d}|u_k|\ge \|u\|/\sqrt d$, we obtain
\begin{equation}\label{eq:psi-hat-decay}
|\widehat{\psi}(u)|
\le \frac{\|\partial_j\psi\|_{L^1}}{|u_j|}
\le \frac{\sqrt d\ \|\nabla\psi\|_{L^1}}{\|u\|}
\le c_d\,K^d\,\|u\|^{-1}
\end{equation}
In addition, $\psi$ is bounded by $|\psi(x)|=|\psi(x)-\psi(0)|\le \|x\|\le 2K$ on its support, so
\[
\|\psi\|_{L^1}\le (2K)\,|B(0,2K)|\le c_d\,K^{d+1},
\qquad\Rightarrow\qquad
|\widehat\psi(u)|\le \|\psi\|_{L^1}\le c_d\,K^{d+1}
\]
Combining this with \eqref{eq:psi-hat-decay} yields the uniform estimate
\begin{equation}\label{eq:psi-hat-min}
|\widehat\psi(u)|\le C_d\,K^d\,\min\{K,\|u\|^{-1}\},\qquad u\in\R^d
\end{equation}
Since $\rho\in C_c^\infty$, $\widehat\rho$ is Schwartz. Thus for any $m\ge 0$ there exists $C_{\rho,m}$ such that
\begin{equation}\label{eq:rho-decay}
|\widehat\rho(\xi)|\le C_{\rho,m}\,(1+\|\xi\|)^{-m},\qquad \xi\in\R^d
\end{equation}
and also $|\widehat\rho(\xi)|\le\|\rho\|_{L^1}=1$ for all $\xi$.
Fix once and for all an integer
\begin{equation}\label{eq:m-choice}
m > s-1+d
\ \ \ \Big(\text{so that in particular } mq>(s-1)q+d \text{ for every } q\ge 1\Big)
\end{equation}

\smallskip
\noindent\textbf{Step 2. The case $1<p<\infty$ (so $q<\infty$).}
Using $\widehat{\psi_\varepsilon}(u)=\widehat{\psi}(u)\widehat{\rho_\varepsilon}(u)=\widehat{\psi}(u)\widehat{\rho}(\varepsilon u)$, one has
\[
\|\psi_\varepsilon\|_{\dot{\mathcal F}L_s^q}^q
=
\int_{\R^d}\|u\|^{sq}\,|\widehat{\psi}(u)|^q\,|\widehat{\rho}(\varepsilon u)|^q\,du
\]
Let $A:=\varepsilon^{-1}$. Decompose $\R^d=\{ \|u\|\le A\}\cup\{\|u\|>A\}$ and write
\[
\|\psi_\varepsilon\|_{\dot{\mathcal F}L_s^q}^q \le I_{\mathrm{low}}+I_{\mathrm{high}}
\]
where
\[
I_{\mathrm{low}}:=\int_{\|u\|\le A}\|u\|^{sq}\,|\widehat{\psi}(u)|^q\,|\widehat{\rho}(\varepsilon u)|^q\,du,
\qquad
I_{\mathrm{high}}:=\int_{\|u\|> A}\|u\|^{sq}\,|\widehat{\psi}(u)|^q\,|\widehat{\rho}(\varepsilon u)|^q\,du
\]
\emph{Crucially, we use the decay bound \eqref{eq:psi-hat-decay} on \emph{both} regions}. The crude $L^1$ bound $|\widehat\psi|\le c_dK^{d+1}$ is never needed away from the origin, and avoiding it is what produces matching rates in the two regions without any normalization of $K$.

\smallskip
\noindent\emph{Low frequencies.}
Since $|\widehat\rho(\varepsilon u)|\le 1$, using \eqref{eq:psi-hat-decay} on $\{\|u\|\le A\}$,
\[
I_{\mathrm{low}}
\le \int_{\|u\|\le A}\|u\|^{sq}\,(c_dK^d)^q\,\|u\|^{-q}\,du
= C_{d,q}\,K^{dq}\int_{\|u\|\le A}\|u\|^{(s-1)q}\,du
\]
In polar coordinates, $\int_{\|u\|\le A}\|u\|^{(s-1)q}\,du
= v_d\int_0^{A} r^{(s-1)q+d-1}\,dr$, which is finite because
\[
(s-1)q+d>0
\quad\Longleftrightarrow\quad
s>1-\frac{d}{q}
\]
and the latter holds throughout the window. Indeed $s>\frac dp$ and, since $\frac1p=1-\frac1q$,
\[
\frac{d}{p}=d\Big(1-\frac1q\Big)=d-\frac{d}{q}\ \ge\ 1-\frac{d}{q}
\]
because $d\ge1$. Hence $s>\frac dp\ge 1-\frac dq$, so $(s-1)q+d>0$. (Note that the \emph{identity} $1-\frac dq=\frac dp$ holds only when $d=1$. What is true and needed for all $d\ge1$ is the displayed \emph{inequality}.) Evaluating the radial integral,
\begin{equation}\label{eq:Ilow}
I_{\mathrm{low}}\le C_{d,p,s}\,K^{dq}\,A^{(s-1)q+d}
= C_{d,p,s}\,K^{dq}\,\varepsilon^{-(s-1)q-d}
\end{equation}

\smallskip
\noindent\emph{High frequencies.}
On $\{\|u\|>A\}$ we combine the decay bound \eqref{eq:psi-hat-decay} with the Schwartz decay \eqref{eq:rho-decay}.
\[
\begin{aligned}
I_{\mathrm{high}}
&\le (c_dK^d)^qC_{\rho,m}^q
\int_{\|u\|>A}\|u\|^{(s-1)q}
\big(1+\varepsilon\|u\|\big)^{-mq}\,du\\
&=C_{d,\rho,m}K^{dq}
\int_{\|u\|>1/\varepsilon}\|u\|^{(s-1)q}
\big(1+\varepsilon\|u\|\big)^{-mq}\,du
\end{aligned}
\]
Substituting $w:=\varepsilon u$ (so $du=\varepsilon^{-d}dw$ and $\|u\|=\|w\|/\varepsilon$),
\[
I_{\mathrm{high}}
\le C_{d,\rho,m}\,K^{dq}\,\varepsilon^{-(s-1)q-d}\int_{\|w\|>1}\|w\|^{(s-1)q}\,(1+\|w\|)^{-mq}\,dw
\]
The last integral is a finite constant depending only on $(d,s,q,m)$. In polar coordinates it is
$v_d\int_1^\infty r^{(s-1)q-mq+d-1}\,dr<\infty$, because $mq>(s-1)q+d$ by \eqref{eq:m-choice}. Hence
\begin{equation}\label{eq:Ihigh}
I_{\mathrm{high}}
\le C_{d,p,s,\rho}\,K^{dq}\,\varepsilon^{-(s-1)q-d}
\end{equation}
which matches \eqref{eq:Ilow} \emph{exactly}, in both the power of $K$ and the power of $\varepsilon$. No absorption of one region into the other, and no auxiliary assumption on the size of $K$, is required.

\smallskip
\noindent\emph{Conclusion for $q<\infty$.}
Adding \eqref{eq:Ilow} and \eqref{eq:Ihigh},
\[
\|\psi_\varepsilon\|_{\dot{\mathcal F}L_s^q}^q
\le C_{d,p,s,\rho}\,K^{dq}\,\varepsilon^{-(s-1)q-d}
\]
Taking $q$-th roots gives
\[
\|\psi_\varepsilon\|_{\dot{\mathcal F}L_s^q}
\le C_{d,p,s,\rho}\,K^{d}\,\varepsilon^{-(s-1)-d/q}
= C_{d,p,s,\rho}\,K^{d}\,\varepsilon^{\,1-s-\frac{d}{q}}
\]
which is \eqref{eq:mollified-FL-bound}.

\smallskip
\noindent\textbf{Step 3. The case $p=1$ (so $q=\infty$).}
Here the window reads $d<s<1+d$, so in particular $s>1$, and the claimed rate is $\varepsilon^{1-s}$ (the term $d/q$ vanishes). We must bound
\[
\|\psi_\varepsilon\|_{\dot{\mathcal F}L_s^\infty}
=\sup_{u\neq0}\ \|u\|^{s}\,|\widehat\psi(u)|\,|\widehat\rho(\varepsilon u)|
\]
For $0<\|u\|\le 1/\varepsilon$ we use \eqref{eq:psi-hat-decay} and $|\widehat\rho|\le 1$ to obtain
\[
\|u\|^{s}\,|\widehat\psi(u)|\,|\widehat\rho(\varepsilon u)|
\le c_d K^d\,\|u\|^{s-1}
\le c_d K^d\,\varepsilon^{1-s}
\]
since $s-1>0$ makes $\|u\|^{s-1}$ increasing in $\|u\|$, with maximum at $\|u\|=1/\varepsilon$. (For $\|u\|$ near $0$ one may alternatively use \eqref{eq:psi-hat-min}, which gives $\|u\|^s\,c_dK^{d+1}\to0$. Either way the supremum over this region obeys the same bound.)
For $\|u\|>1/\varepsilon$ we combine \eqref{eq:psi-hat-decay}, \eqref{eq:rho-decay}, and $(1+\varepsilon\|u\|)^{-m}\le(\varepsilon\|u\|)^{-m}$ to obtain
\[
\|u\|^{s}\,|\widehat\psi(u)|\,|\widehat\rho(\varepsilon u)|
\le c_dK^d\,C_{\rho,m}\ \varepsilon^{-m}\,\|u\|^{\,s-1-m}
\le c_dK^d\,C_{\rho,m}\ \varepsilon^{-m}\,\varepsilon^{\,m-s+1}
= C_{d,s,\rho}\,K^d\,\varepsilon^{1-s}
\]
since $m>s-1$ makes $\|u\|^{s-1-m}$ decreasing in $\|u\|$, with maximum at $\|u\|=1/\varepsilon$. Taking the supremum over both regions,
\[
\|\psi_\varepsilon\|_{\dot{\mathcal F}L_s^\infty}
\le C_{d,s,\rho}\,K^{d}\,\varepsilon^{\,1-s}
\]
which is \eqref{eq:mollified-FL-bound} with $q=\infty$. This completes the proof in all cases $1\le p<\infty$.
\end{proof}

\subsection{Proof of Theorem~\ref{thm:explicit-reverse-bounded}}\label{proof:explicit-reverse-bounded}
\begin{proof}
Fix $K>0$ and $\mu,\nu\in\mathcal P_{p,K}(\R^d)$, so that $\supp(\mu)\cup\supp(\nu)\subseteq B(0,K)$.
Let $\sigma:=\mu-\nu\in\mathcal M_0(\R^d)$. By Kantorovich--Rubinstein duality,
\[
W_1(\mu,\nu)=\sup\Big\{\int_{\R^d}\psi\,d\sigma:\ \mathrm{Lip}(\psi)\le 1\Big\}
\]
Fix $\psi$ with $\mathrm{Lip}(\psi)\le 1$.
Replacing $\psi$ by $\psi-\psi(0)$ does not change $\int \psi\,d\sigma$ because $\sigma(\R^d)=0$, hence one may assume $\psi(0)=0$, which implies $|\psi(x)|\le \|x\|$ for all $x$.
Since $\sigma$ is supported in $B(0,K)$, only the values of $\psi$ on $B(0,K)$ affect $\int\psi\,d\sigma$.
Define the McShane extension of $\psi|_{B(0,K)}$ to $\R^d$ by
\[
\widetilde\psi(x):=\sup_{y\in B(0,K)}\big(\psi(y)-\|x-y\|\big)
\]
which is $1$--Lipschitz on $\R^d$ and satisfies $\widetilde\psi=\psi$ on $B(0,K)$.
Set $c_K(x):=(2K-\|x\|)_+$, which is $1$--Lipschitz and supported in $B(0,2K)$, and define the clamp
\[
\psi_K(x):=\max\big\{-c_K(x),\,\min\{\widetilde\psi(x),c_K(x)\}\big\}
\]
As pointwise minima and maxima preserve Lipschitz constants, $\psi_K$ is $1$--Lipschitz.
Moreover, $c_K\equiv 0$ on $\R^d\setminus B(0,2K)$, hence $\psi_K\equiv 0$ there, and $\psi_K(0)=0$.
On $B(0,K)$ one has $c_K(x)\ge K$ and $|\psi(x)|\le \|x\|\le K$, so $\psi_K(x)=\widetilde\psi(x)=\psi(x)$ for all $x\in B(0,K)$, and therefore
\[
\int_{\R^d}\psi\,d\sigma=\int_{\R^d}\psi_K\,d\sigma
\]
Consequently,
\[
W_1(\mu,\nu)=\sup\Big\{\int_{\R^d}\psi\,d\sigma:\ \mathrm{Lip}(\psi)\le 1,\ \psi(0)=0,\ \supp(\psi)\subseteq B(0,2K)\Big\}
\]
Fix such a $\psi$ for the remainder of the proof.

Let $\rho\in C_c^\infty(\R^d)$ satisfy $\rho\ge 0$ and $\int\rho=1$ (a nonnegative mollifier, whose nonnegativity is used in the error estimate below), and set $\rho_\varepsilon(x)=\varepsilon^{-d}\rho(x/\varepsilon)$.
Define $\psi_\varepsilon:=\psi*\rho_\varepsilon$.
Then
\begin{equation}\label{eq:decompose-moll}
\int_{\R^d}\psi\,d\sigma=\int_{\R^d}(\psi-\psi_\varepsilon)\,d\sigma+\int_{\R^d}\psi_\varepsilon\,d\sigma
\end{equation}
For the mollification error, using $\mathrm{Lip}(\psi)\le 1$, $\rho_\varepsilon\ge0$, and $\int\rho_\varepsilon=1$,
\[
\psi(x)-\psi_\varepsilon(x)
=\int_{\R^d}\big(\psi(x)-\psi(x-y)\big)
\rho_\varepsilon(y)\,dy
\]
Therefore,
\[
\begin{aligned}
|\psi(x)-\psi_\varepsilon(x)|
&\le \int_{\R^d}\big|\psi(x)-\psi(x-y)\big|
\rho_\varepsilon(y)\,dy\\
&\le \int_{\R^d}\|y\|\rho_\varepsilon(y)\,dy\\
&=\varepsilon \int_{\R^d}\|z\|\rho(z)\,dz
\end{aligned}
\]
Denoting $m_1(\rho):=\int_{\R^d}\|z\|\rho(z)\,dz<\infty$, it follows that
\begin{equation}\label{eq:moll-error}
\Big|\int_{\R^d}(\psi-\psi_\varepsilon)\,d\sigma\Big|
\le \int |\psi-\psi_\varepsilon|\,d\mu+\int |\psi-\psi_\varepsilon|\,d\nu
\le 2\,\|\psi-\psi_\varepsilon\|_\infty
\le 2m_1(\rho)\,\varepsilon
\end{equation}

For the second term in \eqref{eq:decompose-moll}, Theorem~\ref{thm:dual-ipm} applied to $\sigma=\mu-\nu$ gives the operator norm bound
\begin{equation}\label{eq:dual-bound-apply}
\Big|\int_{\R^d}\psi_\varepsilon\,d\sigma\Big|
\le (2\pi)^{-d}\, \mathrm{T}_{s,p}(\mu,\nu)\,\|\psi_\varepsilon\|_{\dot{\mathcal F}L_s^q}
\end{equation}
Since $\psi$ is $1$--Lipschitz, $\psi(0)=0$, and $\supp(\psi)\subseteq B(0,2K)$, Lemma~\ref{lem:mollified-FL} yields
\begin{equation}\label{eq:mollified-FL-apply}
\|\psi_\varepsilon\|_{\dot{\mathcal F}L_s^q}
\le C_{d,p,s,\rho}\,K^d\,\varepsilon^{\,1-s-\frac{d}{q}}
\end{equation}
Combining \eqref{eq:decompose-moll}--\eqref{eq:mollified-FL-apply} gives, for every $\varepsilon>0$,
\begin{equation}\label{eq:W1-eps-bound}
\int_{\R^d}\psi\,d\sigma
\le 2m_1(\rho)\,\varepsilon
+(2\pi)^{-d}C_{d,p,s,\rho}\,K^d\,\varepsilon^{\,1-s-\frac{d}{q}}\  \mathrm{T}_{s,p}(\mu,\nu)
\end{equation}
Taking the supremum over admissible $\psi$ yields the same bound for $W_1(\mu,\nu)$.

Set $\beta:=s+\frac{d}{q}$ and introduce the abbreviations
\[
\begin{aligned}
A&:=2m_1(\rho),&
B&:=(2\pi)^{-d}C_{d,p,s,\rho},\\
T&:=\mathrm{T}_{s,p}(\mu,\nu),&
Q&:=BK^dT
\end{aligned}
\]
We first verify $\beta>1$. Since $s>\frac dp$ and $\frac dp=d-\frac dq$ (from $\frac1p+\frac1q=1$),
\[
\beta=s+\frac dq>\frac dp+\frac dq=d\ge 1
\]
(For $p=1$, $q=\infty$, this reads $\beta=s>d\ge1$, again by the window.) Then \eqref{eq:W1-eps-bound} reads
\[
\begin{aligned}
W_1(\mu,\nu)
&\le A\varepsilon+Q\varepsilon^{\,1-\beta}\\
&=A\varepsilon+Q\varepsilon^{-(\beta-1)}
=:h(\varepsilon)
\end{aligned}
\]
for every $\varepsilon>0$. If $T=0$ the claim is immediate (let $\varepsilon\downarrow0$). Otherwise we minimize $h$ over $\varepsilon>0$. Since
\[
h'(\varepsilon)=A-(\beta-1)Q\varepsilon^{-\beta}
\]
is negative for small $\varepsilon$ and positive for large $\varepsilon$, $h$ attains its global minimum at the unique critical point
\[
\varepsilon_*=\Big(\frac{(\beta-1)Q}{A}\Big)^{1/\beta}
\]
Substituting $\varepsilon_*$ back and simplifying, we compute the two terms separately.
\[
\begin{aligned}
A\varepsilon_*
&=A\cdot A^{-1/\beta}\big((\beta-1)Q\big)^{1/\beta}\\
&=A^{\frac{\beta-1}{\beta}}
  (\beta-1)^{\frac1\beta}Q^{\frac1\beta}
\end{aligned}
\]
and
\[
\begin{aligned}
Q\varepsilon_*^{-(\beta-1)}
&=Q\Big(\frac{A}{(\beta-1)Q}\Big)^{\frac{\beta-1}{\beta}}\\
&=A^{\frac{\beta-1}{\beta}}
  (\beta-1)^{-\frac{\beta-1}{\beta}}
  Q^{1-\frac{\beta-1}{\beta}}\\
&=A^{\frac{\beta-1}{\beta}}
  (\beta-1)^{-\frac{\beta-1}{\beta}}Q^{\frac1\beta}
\end{aligned}
\]
Adding the two terms and factoring out the common quantity
$A^{\frac{\beta-1}{\beta}}Q^{\frac1\beta}
(\beta-1)^{-\frac{\beta-1}{\beta}}$ gives
\[
\begin{aligned}
h(\varepsilon_*)
&=A^{\frac{\beta-1}{\beta}}Q^{\frac1\beta}
  (\beta-1)^{-\frac{\beta-1}{\beta}}\\
&\quad{}\times
  \Big[(\beta-1)^{\frac1\beta+\frac{\beta-1}{\beta}}+1\Big]\\
&=A^{\frac{\beta-1}{\beta}}Q^{\frac1\beta}
  (\beta-1)^{-\frac{\beta-1}{\beta}}\big[(\beta-1)+1\big]
\end{aligned}
\]
since $\frac1\beta+\frac{\beta-1}{\beta}=1$. Hence
\[
\begin{aligned}
W_1(\mu,\nu)
&\le h(\varepsilon_*)\\
&=\beta(\beta-1)^{-(\beta-1)/\beta}
  A^{(\beta-1)/\beta}B^{1/\beta}\\
&\quad{}\times
  \Big(K^d\mathrm{T}_{s,p}(\mu,\nu)\Big)^{1/\beta}
\end{aligned}
\]
Absorbing the fixed mollifier constants into a single constant $C=C(d,p,s)<\infty$ (with $\rho$ fixed once and for all, e.g.\ a fixed nonnegative bump supported in $B(0,1)$) gives
\[
W_1(\mu,\nu)\le C\,K^{d/\beta}\, \mathrm{T}_{s,p}(\mu,\nu)^{1/\beta},
\qquad \beta=s+\frac{d}{q}
\]
which is \eqref{eq:Wp-by-Tsp-bounded}.

For $r\ge 1$, note that $\supp(\mu),\supp(\nu)\subseteq B(0,K)$ implies $\|x-y\|\le 2K$ for $(x,y)$ in the support of any coupling of $(\mu,\nu)$.
Hence $\|x-y\|^r\le (2K)^{r-1}\|x-y\|$, and for any coupling $\pi$,
\[
\int_{\R^d\times\R^d}\|x-y\|^r\,d\pi(x,y)
\le (2K)^{r-1}\int_{\R^d\times\R^d}\|x-y\|\,d\pi(x,y)
\]
Taking the infimum over $\pi$ yields $W_r(\mu,\nu)^r\le (2K)^{r-1}W_1(\mu,\nu)$, that is,
\[
W_r(\mu,\nu)\le (2K)^{1-\frac1r}\,W_1(\mu,\nu)^{1/r}
\]
Substituting \eqref{eq:Wp-by-Tsp-bounded} into this inequality gives \eqref{eq:Wrr-by-Tsp-bounded}.
\end{proof}

\subsection{Proof of Proposition~\ref{prop:reverse-bounded-HY}}\label{proof:reverse-bounded-HY}
\begin{proof}
We retain all notation from the proof of Theorem~\ref{thm:explicit-reverse-bounded}. By
Kantorovich--Rubinstein duality and the McShane/clamp construction there, it suffices to bound
$\int\psi\,d\sigma$ for every $1$-Lipschitz $\psi$ with $\psi(0)=0$ and
$\supp(\psi)\subseteq B(0,2K)$, where $\sigma=\mu-\nu$, via the decomposition
\eqref{eq:decompose-moll} and the mollification error \eqref{eq:moll-error}. The only new
ingredient is a sharper bound on $\|\psi_\varepsilon\|_{\dot{\mathcal F}L_s^q}$.

\smallskip
\noindent\emph{Step 1. A Hausdorff--Young bound.}
As in Step~1 of the proof of Lemma~\ref{lem:mollified-FL}, $\nabla\psi$ exists a.e.\ with
$\|\nabla\psi\|_{L^\infty}\le1$ (Rademacher), and integration by parts with our Fourier
convention gives $\widehat{\partial_j\psi}(u)=-iu_j\,\widehat\psi(u)$, hence
$|u_j|\,|\widehat\psi(u)|=|\widehat{\partial_j\psi}(u)|$ for every $u$ and every $j$. Since
$\|u\|\le\sqrt d\,\max_j|u_j|$ and $\widehat{\psi_\varepsilon}(u)=\widehat\psi(u)\widehat\rho(\varepsilon u)$,
\[
\|u\|^{s}\,\big|\widehat{\psi_\varepsilon}(u)\big|
\ \le\ \sqrt d\ \|u\|^{s-1}\,\big|\widehat\rho(\varepsilon u)\big|\ \sum_{j=1}^d\big|\widehat{\partial_j\psi}(u)\big|
\]
Because $s\ge1$ and $\widehat\rho$ is Schwartz, the substitution $w=\varepsilon u$ gives
\[
\sup_{u\in\R^d}\ \|u\|^{s-1}\,\big|\widehat\rho(\varepsilon u)\big|
=\varepsilon^{\,1-s}\ \sup_{w\in\R^d}\ \|w\|^{s-1}\,\big|\widehat\rho(w)\big|
=:\varepsilon^{\,1-s}\,C_{\rho,s}<\infty
\]
Taking $L^q$ norms,
\[
\|\psi_\varepsilon\|_{\dot{\mathcal F}L_s^q}
\ \le\ \sqrt d\ \varepsilon^{\,1-s}\,C_{\rho,s}\ \sum_{j=1}^d\big\|\widehat{\partial_j\psi}\big\|_{L^q}
\]
Each $\partial_j\psi$ lies in $L^{p}(\R^d)$ with
$\|\partial_j\psi\|_{L^p}\le\|\nabla\psi\|_{L^\infty}\,|B(0,2K)|^{1/p}\le c_d\,K^{d/p}$, and
since $1\le p\le2$ with conjugate $q\ge2$, the Hausdorff--Young inequality (for $p=1$, $q=\infty$,
the trivial bound $\|\widehat g\|_{L^\infty}\le\|g\|_{L^1}$) gives
$\|\widehat{\partial_j\psi}\|_{L^q}\le C_{d,p}\,\|\partial_j\psi\|_{L^p}$. Altogether
\begin{equation}\label{eq:HY-bound}
\|\psi_\varepsilon\|_{\dot{\mathcal F}L_s^q}
\ \le\ C_{d,p,s,\rho}\ K^{d/p}\ \varepsilon^{\,1-s}
\end{equation}

\smallskip
\noindent\emph{Step 2. Optimization.}
Substituting \eqref{eq:HY-bound} for \eqref{eq:mollified-FL-apply} in the chain
\eqref{eq:decompose-moll}--\eqref{eq:W1-eps-bound} yields, for every $\varepsilon>0$,
\[
W_1(\mu,\nu)\ \le\ 2m_1(\rho)\,\varepsilon
+(2\pi)^{-d}\,C_{d,p,s,\rho}\,K^{d/p}\,\varepsilon^{\,1-s}\ \mathrm T_{s,p}(\mu,\nu)
\]
If $s>1$, the right-hand side has the form $A\varepsilon+Q\varepsilon^{-(s-1)}$ with
$A:=2m_1(\rho)$ and $Q:=(2\pi)^{-d}C_{d,p,s,\rho}\,K^{d/p}\,\mathrm T_{s,p}(\mu,\nu)$. Minimizing
over $\varepsilon>0$ exactly as in the proof of Theorem~\ref{thm:explicit-reverse-bounded}, with
$\beta$ there replaced by $s>1$, gives
\[
W_1(\mu,\nu)\ \le\ s\,(s-1)^{-(s-1)/s}\,A^{(s-1)/s}\,Q^{1/s}
\ \le\ C(d,p,s)\ K^{\frac d{ps}}\ \mathrm T_{s,p}(\mu,\nu)^{1/s}
\]
after absorbing the fixed mollifier constants (with $\rho$ fixed once and for all). If $s=1$, the
second term does not depend on $\varepsilon$, and letting $\varepsilon\downarrow0$ gives
\[
W_1(\mu,\nu)\ \le\ (2\pi)^{-d}\,C_{d,p,1,\rho}\ K^{d/p}\ \mathrm T_{1,p}(\mu,\nu)
\]
Finally, the hypothesis $s\ge1$ is automatic when $d\ge2$, because the window forces
$s>\frac dp\ge\frac d2\ge1$.
\end{proof}
\subsection{Proof of Lemma~\ref{lem:energy-constant}}\label{proof:energy-constant}
\begin{proof}
The Fourier identity \eqref{eq:riesz-alpha} below, together with the resulting representation of
the energy distance, is classical \citep[Lemma~1]{szekely2007measuring}. We include the full
computation so that the constant is verified independently in our normalization.
Fix $d\ge 1$ and $H\in(0,1)$, and write $\alpha:=2H\in(0,2)$ and $s:=(d+\alpha)/2$.
Let $\mu,\nu\in\mathcal P_{\alpha}(\R^d)$, and let $X,X'\sim\mu$ i.i.d. and $Y,Y'\sim\nu$ i.i.d., all independent.
Since $\mu,\nu\in\mathcal P_{\alpha}$, one has $\E\|X\|^{\alpha}<\infty$ and $\E\|Y\|^{\alpha}<\infty$, hence also
\[
\E\|X-Y\|^{\alpha}\le \E\big(\|X\|+\|Y\|\big)^{\alpha}
\le c_\alpha\bigl(\E\|X\|^{\alpha}+\E\|Y\|^{\alpha}\bigr)<\infty,
\qquad c_\alpha:=\max\{1,2^{\alpha-1}\}
\]
(using subadditivity of $t\mapsto t^{\alpha}$ for $\alpha\le1$ and convexity for $\alpha\ge1$),
and similarly $\E\|X-X'\|^{\alpha}<\infty$ and $\E\|Y-Y'\|^{\alpha}<\infty$. For $k_H(x,y)=\|x\|^\alpha+\|y\|^\alpha-\|x-y\|^\alpha$,
\[
\mathrm{MMD}_{k_H}(\mu,\nu)^2
=\E k_H(X,X')+\E k_H(Y,Y')-2\E k_H(X,Y)
\]
Expanding each term and cancelling $\E\|X\|^\alpha$ and $\E\|Y\|^\alpha$ gives
\begin{equation}\label{eq:mmd-energy-alpha}
\mathrm{MMD}_{k_H}(\mu,\nu)^2
=
2\,\E\|X-Y\|^{\alpha}-\E\|X-X'\|^{\alpha}-\E\|Y-Y'\|^{\alpha}
\end{equation}
For $z\in\R^d$ define
\[
I_\alpha(z):=\int_{\R^d}\frac{1-\cos\langle u,z\rangle}{\|u\|^{d+\alpha}}\,du
\]
The integrand is nonnegative and integrable over $\R^d$ for $\alpha\in(0,2)$, hence $I_\alpha(z)\in(0,\infty)$ for $z\neq 0$ and $I_\alpha(0)=0$.
Rotational invariance and the change of variables $u=\|z\|^{-1}Ru'$ (with $R$ orthogonal and $Rz=\|z\|e_1$) yield the scaling identity
\begin{equation}\label{eq:I-scaling}
I_\alpha(z)=\|z\|^\alpha\, I_\alpha(e_1),\qquad z\in\R^d
\end{equation}
It remains to compute $I_\alpha(e_1)$. Writing $u=(t,v)\in\R\times\R^{d-1}$ gives
\[
I_\alpha(e_1)=\int_{\R}(1-\cos t)\left(\int_{\R^{d-1}}\frac{dv}{(t^2+\|v\|^2)^{(d+\alpha)/2}}\right)dt
\]
A polar-coordinate computation in $\R^{d-1}$ (Beta integral) yields, for $t\neq 0$,
\begin{equation}\label{eq:inner-v}
\int_{\R^{d-1}}\frac{dv}{(t^2+\|v\|^2)^{(d+\alpha)/2}}
=\pi^{\frac{d-1}{2}}\frac{\Gamma\!\left(\frac{1+\alpha}{2}\right)}{\Gamma\!\left(\frac{d+\alpha}{2}\right)}\,|t|^{-1-\alpha}
\end{equation}
Consequently,
\begin{equation}\label{eq:I-e1-reduced}
I_\alpha(e_1)
=
2\pi^{\frac{d-1}{2}}\frac{\Gamma\!\left(\frac{1+\alpha}{2}\right)}{\Gamma\!\left(\frac{d+\alpha}{2}\right)}
\int_0^\infty (1-\cos t)\,t^{-1-\alpha}\,dt
\end{equation}
To evaluate the remaining one-dimensional integral, use the Laplace representation
$t^{-1-\alpha}=\Gamma(1+\alpha)^{-1}\int_0^\infty r^{\alpha}e^{-rt}\,dr$ for $\alpha\in(0,2)$, which gives (by Tonelli)
\[
\int_0^\infty (1-\cos t)\,t^{-1-\alpha}\,dt
=
\frac{1}{\Gamma(1+\alpha)}\int_0^\infty r^\alpha
\left(\int_0^\infty (1-\cos t)e^{-rt}\,dt\right)dr
\]
The inner integral equals
\[
\int_0^\infty e^{-rt}\,dt-\int_0^\infty e^{-rt}\cos t\,dt
=\frac1r-\frac{r}{r^2+1}
=\frac{1}{r(r^2+1)}
\]
hence
\[
\int_0^\infty (1-\cos t)\,t^{-1-\alpha}\,dt
=
\frac{1}{\Gamma(1+\alpha)}\int_0^\infty \frac{r^{\alpha-1}}{1+r^2}\,dr
=
\frac{1}{\Gamma(1+\alpha)}\frac{\pi}{2\sin(\pi\alpha/2)}
\]
using the classical Beta integral $\int_0^\infty \frac{r^{a-1}}{1+r^2}\,dr=\frac{\pi}{2\sin(\pi a/2)}$ for $a\in(0,2)$.
Substituting into \eqref{eq:I-e1-reduced} yields
\[
I_\alpha(e_1)
=
\pi^{\frac{d-1}{2}}\frac{\Gamma\!\left(\frac{1+\alpha}{2}\right)}{\Gamma\!\left(\frac{d+\alpha}{2}\right)}
\cdot \frac{\pi}{\Gamma(1+\alpha)\sin(\pi\alpha/2)}
\]
Now set $\alpha=2H$. Using the Gamma duplication formula
\[
\Gamma(1+2H)=H\,2^{2H}\pi^{-1/2}\Gamma(H)\Gamma\!\left(H+\tfrac12\right)
\]
together with the reflection formula $\Gamma(H)\Gamma(1-H)=\pi/\sin(\pi H)$, and noting that
$\Gamma\!\left(\frac{1+\alpha}{2}\right)=\Gamma\!\left(H+\frac12\right)$ and $\Gamma\!\left(\frac{d+\alpha}{2}\right)=\Gamma\!\left(\frac d2+H\right)$,
one obtains
\[
I_\alpha(e_1)=\frac{\pi^{d/2}\Gamma(1-H)}{H\,2^{2H}\Gamma(\frac d2+H)}=:c(d,H)
\]
Combining this with \eqref{eq:I-scaling} gives the Fourier identity
\begin{equation}\label{eq:riesz-alpha}
\|z\|^{\alpha}=\frac{1}{c(d,H)}\int_{\R^d}\frac{1-\cos\langle u,z\rangle}{\|u\|^{d+\alpha}}\,du,
\qquad z\in\R^d
\end{equation}

Applying \eqref{eq:riesz-alpha} to each term in \eqref{eq:mmd-energy-alpha}, since the integrand in \eqref{eq:riesz-alpha} is nonnegative, Tonelli's theorem yields, for instance,
\[
\E\|X-Y\|^\alpha
=
\frac{1}{c(d,H)}\int_{\R^d}\frac{1-\E\cos\langle u,X-Y\rangle}{\|u\|^{d+\alpha}}\,du
\]
and similarly for $X-X'$ and $Y-Y'$. Substituting these representations into \eqref{eq:mmd-energy-alpha} cancels the constant terms and gives
\[
\mathrm{MMD}_{k_H}(\mu,\nu)^2
=
\frac{1}{c(d,H)}\int_{\R^d}\frac{
\E\cos\langle u,X-X'\rangle+\E\cos\langle u,Y-Y'\rangle-2\,\E\cos\langle u,X-Y\rangle
}{\|u\|^{d+\alpha}}\,du
\]
Using independence and characteristic functions,
\[
\E e^{i\langle u,X-X'\rangle}=|\Bchi_\mu(u)|^2,\qquad
\E e^{i\langle u,Y-Y'\rangle}=|\Bchi_\nu(u)|^2,\qquad
\E e^{i\langle u,X-Y\rangle}=\Bchi_\mu(u)\overline{\Bchi_\nu(u)}
\]
hence
\[
\begin{aligned}
&\E\cos\langle u,X-X'\rangle
 +\E\cos\langle u,Y-Y'\rangle
 -2\,\E\cos\langle u,X-Y\rangle\\
&\quad=
|\Bchi_\mu(u)|^2+|\Bchi_\nu(u)|^2
-2\Re\!\bigl(\Bchi_\mu(u)\overline{\Bchi_\nu(u)}\bigr)\\
&\quad=|\Bchi_\mu(u)-\Bchi_\nu(u)|^2
\end{aligned}
\]
Therefore,
\[
\mathrm{MMD}_{k_H}(\mu,\nu)^2
=
\frac{1}{c(d,H)}
\int_{\R^d}\frac{|\Bchi_\mu(u)-\Bchi_\nu(u)|^2}{\|u\|^{d+\alpha}}\,du
\]
which is \eqref{eq:energy-const}. Finally, since $2s=d+\alpha$, the definition of $\mathrm T_{s,2}$ gives
\[
\mathrm T_{s,2}(\mu,\nu)^2
=
\int_{\R^d}\frac{|\Bchi_\mu(u)-\Bchi_\nu(u)|^2}{\|u\|^{2s}}\,du
=
\int_{\R^d}\frac{|\Bchi_\mu(u)-\Bchi_\nu(u)|^2}{\|u\|^{d+\alpha}}\,du
=
c(d,H)\,\mathrm{MMD}_{k_H}(\mu,\nu)^2
\]
which is \eqref{eq:Ts2-is-MMD}.
\end{proof}

\subsection{Proof of Corollary~\ref{cor:hilbert-dual}}\label{proof:hilbert-dual}
\begin{proof}
Assume $p=q=2$. By definition,
\[
\|\phi\|_{\dot{\mathcal F}L_s^2}=\big\|\|u\|^{s}\widehat{\phi}(u)\big\|_{L^2(\R^d)}
\]
On the other hand, the homogeneous Sobolev seminorm satisfies the Plancherel identity
\[
\|\phi\|_{\dot H^{s}}^2
:=\|\ (-\Delta)^{s/2}\phi\ \|_{L^2}^2
=(2\pi)^{-d}\int_{\R^d}\|u\|^{2s}\,|\widehat{\phi}(u)|^2\,du
\]
Therefore
\[
\|\phi\|_{\dot H^{s}}=(2\pi)^{-d/2}\,\|\phi\|_{\dot{\mathcal F}L_s^2}
\]
In particular, $\dot{\mathcal F}L_s^2$ coincides with the homogeneous Sobolev space modulo constants as a vector space. Constants have $\widehat{\phi}$ supported at $u=0$ and hence vanish under the homogeneous seminorm, so the natural identification is
\[
\dot{\mathcal F}L_s^2=\dot{H}^s(\R^d)/\R
\]
with seminorms related by
\[
\|\phi\|_{\dot{\mathcal F}L_s^2}=(2\pi)^{d/2}\|\phi\|_{\dot H^s}
\]
A word on the \emph{realization} of $\dot H^s$ in the present range is in order, since for $s>d/2$ the homogeneous Sobolev seminorm does not separate points of $\mathcal S'$ without a quotient. Throughout our window $\frac d2<s<\frac d2+1$, the seminorm $\|\cdot\|_{\dot H^s}$ vanishes precisely on polynomials of degree $0$ (constants), and the completion of $\mathcal S(\R^d)$ under it is standardly realized as a space of continuous functions modulo constants. Indeed, for $\frac d2<s<\frac d2+1$ one has the Morrey-type embedding of $\dot H^s$ into the homogeneous H\"older class $\dot C^{\,s-d/2}$, so equivalence classes admit continuous representatives determined up to an additive constant. See \citet[Chapter~6]{grafakos2008classical} for the relevant Littlewood--Paley characterizations. This is exactly the quotient-by-constants convention adopted in the construction of $\dot{\mathcal F}L^q_s$ in Section~\ref{sec:dual-ipm}, so the identification above is consistent with both constructions.

Applying Theorem~\ref{thm:dual-ipm} with $p=q=2$ and $\sigma=\mu-\nu$ gives
\[
\sup_{\substack{\phi\in\dot{\mathcal F}L_s^2\\ \|\phi\|_{\dot{\mathcal F}L_s^2}\le 1}}
\mathrm{Re}\int_{\R^d}\phi\,d(\mu-\nu)
=(2\pi)^{-d}\,\mathrm T_{s,2}(\mu,\nu)
\]
By Remark~\ref{rem:dual-real}, the supremum may equivalently be taken over \emph{real-valued} test functions with $\int\phi\,d(\mu-\nu)$ in place of its real part. Moreover, at $p=2$ the extremizer of Proposition~\ref{prop:dual-optimizer} is itself real-valued, since its Fourier transform $\widehat{\phi^\ast}(u)=\|u\|^{-2s}\,\widehat\sigma(u)/\|f\|_{L^2}$ satisfies the Hermitian symmetry $\widehat{\phi^\ast}(-u)=\overline{\widehat{\phi^\ast}(u)}$ (because $\sigma$ is a real measure, so $\widehat\sigma(-u)=\overline{\widehat\sigma(u)}$).
Using $\|\phi\|_{\dot{\mathcal F}L_s^2}=(2\pi)^{d/2}\|\phi\|_{\dot H^s}$, the constraint
$\|\phi\|_{\dot{\mathcal F}L_s^2}\le 1$ is equivalent to $\|\phi\|_{\dot H^s}\le (2\pi)^{-d/2}$, hence by rescaling $\phi\mapsto(2\pi)^{-d/2}\phi$,
\[
\begin{aligned}
\sup_{\|\phi\|_{\dot H^{s}}\le 1}
\int_{\R^d}\phi\,d(\mu-\nu)
&=(2\pi)^{d/2}
\sup_{\|\phi\|_{\dot{\mathcal F}L_s^2}\le 1}
\int_{\R^d}\phi\,d(\mu-\nu)\\
&=(2\pi)^{d/2}(2\pi)^{-d}
\mathrm T_{s,2}(\mu,\nu)\\
&=(2\pi)^{-d/2}\mathrm T_{s,2}(\mu,\nu)
\end{aligned}
\]
which is the stated identity. The final sentence follows because $\mathrm T_{s,2}$ equals (up to the explicit constant from Lemma~\ref{lem:energy-constant} when $2s=d+2H$) the translation-invariant energy-kernel MMD, so the same quantity is both a negative-Sobolev IPM and a translation-invariant MMD in the Hilbert case.
\end{proof}

\subsection{Proof of Theorem~\ref{thm:reverse-holder}}\label{proof:reverse-holder}
\begin{proof}
Set \(H:=s-\frac d2\in(0,1)\) and \(\alpha:=2H=2s-d\in(0,2)\), so that \(2s=d+\alpha\). Fix \(\mu,\nu\in\mathcal M_{\gamma,S}\) with \(\gamma>1\).

\smallskip
\noindent\textbf{Step 1. Kernel dictionary.}
Our kernel from Lemma~\ref{lem:energy-constant} is
\[
k_H(x,y)=\|x\|^{\alpha}+\|y\|^{\alpha}-\|x-y\|^{\alpha}
=\|x\|^{2H}+\|y\|^{2H}-\|x-y\|^{2H}
\]
which is \emph{exactly} the Energy Kernel of order $\alpha_{\mathrm{MD}}=H$ in the sense of \citet[Section~3.3]{modeste2024translation} (recall from Section~\ref{sec:mmd} that their order parameter, which they denote $\alpha$, is half our kernel exponent). We write $d_H:=d_{k_H}$ for the associated MMD. Their standing assumption for this kernel family is that the order lie in $(0,1)$, i.e.\ $H\in(0,1)$, which is precisely our hypothesis $s\in(\frac d2,\frac d2+1)$. No further restriction on $\alpha=2H\in(0,2)$ is imposed.

\smallskip
\noindent\textbf{Step 2. Identification of $d_H$ with $\mathrm T_{s,2}$ on $\mathcal M_{\gamma,S}$.}
By \citet[Section~3.3]{modeste2024translation}, the MMD $d_H$ is defined on the class of (signed) measures with finite $H$-moment. Since $H<1<\gamma$, H\"older's inequality gives, for every $\mu\in\mathcal M_{\gamma,S}$,
\[
\int\|x\|^{H}\,d\mu(x)\le\Big(\int\|x\|^{\gamma}\,d\mu(x)\Big)^{H/\gamma}\le S^{H/\gamma}<\infty
\]
so $d_H(\mu,\nu)$ is well defined for all \(\mu,\nu\in\mathcal M_{\gamma,S}\).
Moreover, $k_H$ admits the variogram/spectral representation with spectral density $\frac{1}{c(d,H)}\|u\|^{-d-2H}$. By the deterministic Fourier identity established in the proof of Lemma~\ref{lem:energy-constant} (equation~\eqref{eq:riesz-alpha}, whose derivation is moment-free),
\[
\|z\|^{2H}=\frac{1}{c(d,H)}\int_{\R^d}\frac{1-\cos\langle u,z\rangle}{\|u\|^{d+2H}}\,du,
\qquad z\in\R^d
\]
and expanding $k_H(x,y)$ using this identity at $z\in\{x,\,y,\,x-y\}$ together with the evenness of the spectral density gives
\[
k_H(x,y)
=\frac{1}{c(d,H)}\int_{\R^d}\big(1-e^{i\langle u,x\rangle}\big)\overline{\big(1-e^{i\langle u,y\rangle}\big)}\,\frac{du}{\|u\|^{d+2H}}
\]
(the imaginary parts cancel by symmetry $u\mapsto-u$, and the real part reproduces $\|x\|^{2H}+\|y\|^{2H}-\|x-y\|^{2H}$ term by term).
Consequently, for probability measures $\mu,\nu$ in the domain of the kernel embedding, the associated MMD has the spectral form
\begin{equation}\label{eq:energy-fourier}
d_H^2(\mu,\nu)=\mathrm{MMD}_{k_H}^2(\mu,\nu)
=
\frac{1}{c(d,H)}\int_{\mathbb R^d}\frac{|\Bchi_\mu(u)-\Bchi_\nu(u)|^2}{\|u\|^{d+2H}}\,du
\end{equation}
this is the specialization to $k_H$ of the explicit Fourier formula for translation-invariant MMDs in \citet[Proposition~7]{modeste2024translation} (for probability pairs the zero-mass correction term there vanishes), and, under the additional moment assumption $\mu,\nu\in\mathcal P_\alpha(\R^d)$, it is verified independently and with the explicit constant in our Lemma~\ref{lem:energy-constant}.
Since \(2s=d+\alpha=d+2H\), the definition of \(\mathrm{T}_{s,2}\) gives
\[
\mathrm{T}_{s,2}^2(\mu,\nu)
=
\int_{\mathbb R^d\setminus\{0\}}
\frac{|\Bchi_\mu(u)-\Bchi_\nu(u)|^2}{\|u\|^{2s}} du
=
\int_{\mathbb R^d\setminus\{0\}}
\frac{|\Bchi_\mu(u)-\Bchi_\nu(u)|^2}{\|u\|^{d+2H}} du
\]
Comparing with \eqref{eq:energy-fourier} yields the exact identification
\begin{equation}\label{eq:equiv}
\mathrm{MMD}_{k_H}(\mu,\nu)
=
c(d,H)^{-1/2}\, \mathrm{T}_{s,2}(\mu,\nu),
\qquad \mu,\nu\in\mathcal M_{\gamma,S}
\end{equation}
with the explicit constant $c(d,H)$ of Lemma~\ref{lem:energy-constant}. In particular there is a single normalization constant in the paper, and the reciprocal relation between the two ways of writing the identity (constant on the MMD side versus on the Toscani side) is fixed once and for all by \eqref{eq:energy-const}.

\smallskip
\noindent\textbf{Step 3. Importing the moment-class reverse inequality.}
\citet[Proposition~17]{modeste2024translation} states. For order $H\in(0,1)$ and any $\gamma>1$, $S>0$, there is an explicit constant
\(C_0=C_0(d,H,\gamma,S)<\infty\) \citep[Equation~37]{modeste2024translation} such that, with
\[
\rho=\frac{\gamma-1}{\gamma(d+H+1)-H}
\]
one has, for all \(\mu,\nu\) in their class \(\mathcal T_{\gamma,S}(\R^d)=\{\mu\in\mathcal P(\R^d):\int\|x\|^\gamma\,d\mu(x)< S\}\),
\begin{equation}\label{eq:MD}
W_1(\mu,\nu)\le C_0\, d_H^{\,\rho}(\mu,\nu)=C_0\,\mathrm{MMD}_{k_H}^{\,\rho}(\mu,\nu)
\end{equation}
Their class is stated with a strict inequality, whereas our $\mathcal M_{\gamma,S}$ uses ``$\le S$''. This is immaterial, since $\mathcal M_{\gamma,S}\subset\mathcal T_{\gamma,S'}(\R^d)$ for every $S'>S$, so \eqref{eq:MD} applies verbatim on $\mathcal M_{\gamma,S}$ with the constant $C_0(d,H,\gamma,S')$, and one may simply take $S'=2S$. We absorb this once and for all into the constant and continue to write $C_0=C_0(d,H,\gamma,S)$.
Substituting \eqref{eq:equiv} into \eqref{eq:MD} gives
\[
W_1(\mu,\nu)
\le
C_0 \big(c(d,H)^{-1/2}\,\mathrm{T}_{s,2}(\mu,\nu)\big)^{\rho}
=
\underbrace{C_0\, c(d,H)^{-\rho/2}}_{=:C(d,H,\gamma,S)}
\ \mathrm{T}_{s,2}^\rho(\mu,\nu)
\]
which is the claimed inequality, with $\rho$ expressed in terms of $\alpha$ as $\rho=\frac{\gamma-1}{\gamma(d+\frac\alpha2+1)-\frac\alpha2}$.

\smallskip
\noindent\textbf{Step 4. The exponent lies in $(0,1)$ and both sides are finite.}
Since \(\gamma>1\), the numerator $\gamma-1$ is positive, so \(\rho>0\). For $\rho<1$, note
\[
\gamma(d+H+1)-H-(\gamma-1)
=\gamma(d+H)+(1-H)
>0
\]
because $\gamma>0$, $d+H>0$, and $H<1$. Hence the denominator strictly exceeds the numerator and \(\rho\in(0,1)\).

Finally, \(\mu\in\mathcal M_{\gamma,S}\) implies \(\int\|x\| d\mu(x)\le S^{1/\gamma}\) by H\"older (as $\gamma>1$), hence
\(W_1(\mu,\nu)\le\int\|x\|d\mu+\int\|x\|d\nu\le 2S^{1/\gamma}<\infty\). Likewise $\mathrm T_{s,2}(\mu,\nu)<\infty$ on $\mathcal M_{\gamma,S}$ throughout the window. By Lemma~\ref{lem:chi-lip-moment}, $|\Bchi_\mu(u)-\Bchi_\nu(u)|\le\min\{2,\,2S^{1/\gamma}\|u\|\}$, and
\[
\int_{\R^d\setminus\{0\}}\frac{\min\{4,\,4S^{2/\gamma}\|u\|^2\}}{\|u\|^{2s}}\,du<\infty
\]
since near the origin the integrand is $O(\|u\|^{2-2s})$ with $2-2s>-d$ (as $s<\frac d2+1$), and at infinity it is $O(\|u\|^{-2s})$ with $2s>d$. This completes the proof.
\end{proof}

\subsection{Proof of Corollary~\ref{cor:bounded-support}}\label{proof:bounded-support}
\begin{proof}
Fix \(\alpha\in(0,2)\), \(H=\alpha/2\in(0,1)\), and \(s=(d+\alpha)/2\). Assume \(\text{supp}(\mu)\cup\text{supp}(\nu)\subset B(0,K)\). In the notation of \citet{modeste2024translation}, $\mu,\nu\in\mathcal T_{\infty,K}(\R^d)=\{\eta\in\mathcal P(\R^d):\eta(\R^d\setminus B(0,K))=0\}$.

\citet[Proposition~16]{modeste2024translation} states. For order $H\in(0,1)$ and all $\mu,\nu\in\mathcal T_{\infty,K}(\R^d)$,
\begin{equation}\label{eq:MD-bounded}
W_1(\mu,\nu)\le C_{\mathrm{MD}}\,
d_H(\mu,\nu)^{ \widetilde{\rho}}
=C_{\mathrm{MD}}\,
\mathrm{MMD}_{k_H}^{ \widetilde{\rho}}(\mu,\nu),
\qquad
\widetilde{\rho}=\frac{1}{d+1+H}
\end{equation}
where $k_H$ is their Energy Kernel of order $H$ --- i.e.\ our kernel of exponent $\alpha=2H$ from Lemma~\ref{lem:energy-constant}, by the dictionary of Section~\ref{sec:mmd} --- and the constant is explicit, $C_{\mathrm{MD}}=2^{d/(d+H+1)}D\,K^{(d+1)/(d+H+1)}$ with $D=D(d,H)$ given in their Equation~(35) \citep[Equation~36]{modeste2024translation}.

Bounded support gives all moments, so \(\mu,\nu\in\mathcal P_\alpha(\R^d)\) and Lemma~\ref{lem:energy-constant} applies directly. Since \(2s=d+\alpha\),
\[
\begin{aligned}
\mathrm{T}_{s,2}^2(\mu,\nu)
&=\int_{\R^d\setminus\{0\}}
\frac{|\Bchi_\mu(u)-\Bchi_\nu(u)|^2}{\|u\|^{2s}}\,du\\
&=\int_{\R^d\setminus\{0\}}
\frac{|\Bchi_\mu(u)-\Bchi_\nu(u)|^2}
     {\|u\|^{d+\alpha}}\,du\\
&=c(d,H)\mathrm{MMD}_{k_H}^2(\mu,\nu)
\end{aligned}
\]
that is,
\begin{equation}\label{eq:MMD-T}
\mathrm{MMD}_{k_H}(\mu,\nu) = c(d,H)^{-1/2}\ \mathrm{T}_{s,2}(\mu,\nu)
\end{equation}
with the explicit constant $c(d,H)$ of Lemma~\ref{lem:energy-constant} (the same single normalization constant used throughout the paper).

Substituting \eqref{eq:MMD-T} into \eqref{eq:MD-bounded} yields
\[
\begin{aligned}
W_1(\mu,\nu)
&\le C_{\mathrm{MD}}
\big(c(d,H)^{-1/2}\mathrm{T}_{s,2}(\mu,\nu)\big)^{\widetilde{\rho}}\\
&=\underbrace{C_{\mathrm{MD}}c(d,H)^{-\widetilde{\rho}/2}}
_{=:\widetilde{C}(d,H,K)}
\mathrm{T}_{s,2}^{\widetilde{\rho}}(\mu,\nu)
\end{aligned}
\]
This proves the first claim, with $\widetilde\rho=\frac{1}{d+1+H}=\frac{1}{d+1+\frac\alpha2}$.

Assume now that \(\text{supp}(\mu)\cup\text{supp}(\nu)\) is contained in a set of diameter \(\mathrm{diam}\).
Then for any coupling \(\pi\in\Pi(\mu,\nu)\), one has \(\|x-y\|\le \mathrm{diam}\) for \(\pi\)-a.e. \((x,y)\),
and hence
\[
\int \|x-y\|^r d\pi(x,y)
=
\int \|x-y\|^{r-1}\|x-y\| d\pi(x,y)
\le
\mathrm{diam}^{r-1}\int \|x-y\| d\pi(x,y)
\]
Taking the infimum over \(\pi\in\Pi(\mu,\nu)\) gives
\[
W_r^r(\mu,\nu) \le \mathrm{diam}^{r-1}W_1(\mu,\nu),
\qquad\text{hence}\qquad
W_r(\mu,\nu)\le \mathrm{diam}^{1-\frac1r} W_1^{1/r}(\mu,\nu)
\]

Using the \(W_1\) bound from above,
\[
\begin{aligned}
W_r(\mu,\nu)
&\le \mathrm{diam}^{1-\frac1r}
\big(\widetilde{C}\mathrm{T}_{s,2}^{\widetilde{\rho}}(\mu,\nu)\big)^{1/r}\\
&=\underbrace{\mathrm{diam}^{1-\frac1r}\widetilde{C}^{1/r}}
_{=:\dbtilde{C}(d,H,K,\mathrm{diam},r)}
\mathrm{T}_{s,2}(\mu,\nu)^{\dbtilde{\rho}}
\end{aligned}
\]
where
\[
\dbtilde{\rho}=\frac{\widetilde{\rho}}{r}=\frac{1}{r\big(d+1+\frac\alpha2\big)}
\]
which completes the proof.
\end{proof}

\subsection{Proof of Corollary~\ref{cor:top_equiv}}\label{proof:top_equiv}
\begin{proof}
\textit{(1) Moment class \(\mathcal M_{\gamma,S}\).}\\
Fix \(d\ge 1\), \(s\in(\frac d2,\frac d2+1)\), and set \(\alpha:=2s-d\in(0,2)\) and \(H:=\alpha/2\in(0,1)\). Let
\(\{\mu_n\}\subset\mathcal M_{\gamma,S}\) and
\(\mu\in\mathcal M_{\gamma,S}\).

\emph{(\(\Rightarrow\))} Assume \(W_1(\mu_n,\mu)\to 0\). Then \(\mu_n\Rightarrow\mu\), hence
\(\Bchi_{\mu_n}(u)\to\Bchi_\mu(u)\) for every \(u\). Since \(\mu_n,\mu\in\mathcal M_{\gamma,S}\), H\"older's inequality gives
\[
\int\|x\|\,d\mu_n(x)+\int\|x\|\,d\mu(x)\le 2S^{1/\gamma}
\]
and therefore, by Lemma~\ref{lem:chi-lip-moment} applied at \(v=0\) together with the trivial bound \(|\Bchi_{\mu_n}-\Bchi_\mu|\le2\),
\[
\frac{|\Bchi_{\mu_n}(u)-\Bchi_\mu(u)|^2}{\|u\|^{2s}}
\le
\frac{\min\{4,\ 4S^{2/\gamma}\|u\|^2\}}{\|u\|^{2s}}
\]
The right-hand side is integrable on \(\R^d\setminus\{0\}\). At infinity the numerator is \(4\) and \(2s>d\). At the origin the numerator is \(4S^{2/\gamma}\|u\|^2\), giving exponent \(2-2s>-d\), which holds throughout the full window \(s<\frac d2+1\). Dominated
convergence (pointwise convergence of the integrand to \(0\), with the fixed integrable dominating function above) gives \(\mathrm{T}_{s,2}(\mu_n,\mu)\to 0\).

\emph{(\(\Leftarrow\))} Assume \(\mathrm{T}_{s,2}(\mu_n,\mu)\to 0\). Since \(\mu_n,\mu\in\mathcal M_{\gamma,S}\),
Theorem~\ref{thm:reverse-holder} applies throughout the full window. There exist \(C'=C'(d,H,\gamma,S)<\infty\) and
\(\rho=\frac{\gamma-1}{\gamma(d+H+1)-H}\in(0,1)\) such that
\[
W_1(\mu_n,\mu)\ \le\ C' \mathrm{T}_{s,2}(\mu_n,\mu)^{\rho}\qquad\forall n
\]
Therefore \(W_1(\mu_n,\mu)\to 0\). This proves the equivalence on \(\mathcal M_{\gamma,S}\).

\textit{(2) Bounded-diameter class and \(W_p\)\ , for \(p\ge 1\).}\\
Fix \(p\ge 1\) and assume \(\text{supp}(\mu_n)\cup\text{supp}(\mu)\subseteq A\) for some set \(A\) with
\(\mathrm{diam}(A)\le \mathrm{diam}\) (uniform in \(n\)).
Choose \(x_0\in A\) and set \(K_A:=\|x_0\|+\mathrm{diam}\), so \(A\subseteq B(0,K_A)\).

\emph{(\(\Rightarrow\))} Assume \(W_p(\mu_n,\mu)\to 0\). Then \(W_1(\mu_n,\mu)\to0\) and hence
\(\mu_n\Rightarrow\mu\). By Lemma~\ref{lem:chi-lip} (first moments bounded by \(K_A\) on the common support ball),
\[
\frac{|\Bchi_{\mu_n}(u)-\Bchi_\mu(u)|^2}{\|u\|^{2s}}
\le
\frac{\min\{4,\ 4K_A^2\|u\|^2\}}{\|u\|^{2s}}
\]
and the same dominated-convergence argument as above (the domination is integrable on the full window) yields
\(\mathrm{T}_{s,2}(\mu_n,\mu)\to 0\).

\emph{(\(\Leftarrow\))} Assume \(\mathrm{T}_{s,2}(\mu_n,\mu)\to 0\). Since \(A\) is bounded, the measures are
supported in the common ball \(B(0,K_A)\). By Corollary~\ref{cor:bounded-support}, valid for all \(\alpha\in(0,2)\), there exist \(\widetilde C<\infty\) and
\(\widetilde\rho=\frac{1}{d+1+\frac\alpha2}\in(0,1)\) such that
\[
W_1(\mu_n,\mu)\ \le\ \widetilde C \mathrm{T}_{s,2}(\mu_n,\mu)^{\widetilde\rho}\qquad\forall n
\]
hence \(W_1(\mu_n,\mu)\to 0\). On a common set of diameter \(\mathrm{diam}\), the interpolation bound
(from Corollary~\ref{cor:bounded-support}) gives for every \(n\),
\[
W_p(\mu_n,\mu)\ \le\ \mathrm{diam}^{1-\frac1p}  W_1(\mu_n,\mu)^{1/p}
\]
Therefore \(W_p(\mu_n,\mu)\to 0\). This establishes the equivalence of convergences on bounded-diameter classes.

\end{proof}
\subsection{Proof of Proposition~\ref{prop:empirical-mean}}\label{proof:empirical-mean}
\begin{proof}
Fix $u\in\R^d$ and write $Z_j(u):=e^{i\langle u,X_j\rangle}-\Bchi_\mu(u)$, so that
\[
\Bchi_{\hat\mu_n}(u)-\Bchi_\mu(u)=\frac1n\sum_{j=1}^nZ_j(u)
\]
with $Z_1(u),\dots,Z_n(u)$ i.i.d., bounded, and centered. Independence gives $\E[Z_j(u)\overline{Z_k(u)}]=0$ for $j\neq k$,
hence
\[
\E\big|\Bchi_{\hat\mu_n}(u)-\Bchi_\mu(u)\big|^2
=\frac1n\,\E|Z_1(u)|^2
=\frac1n\Big(\E\big|e^{i\langle u,X\rangle}\big|^2-|\Bchi_\mu(u)|^2\Big)
=\frac{1-|\Bchi_\mu(u)|^2}{n}
\]
The map $(\omega,u)\mapsto\big|\Bchi_{\hat\mu_n(\omega)}(u)-\Bchi_\mu(u)\big|^2\|u\|^{-2s}$ is
jointly measurable and nonnegative, so Tonelli's theorem yields
\[
\E\,\mathrm T_{s,2}^2(\hat\mu_n,\mu)
=\int_{\R^d\setminus\{0\}}\E\big|\Bchi_{\hat\mu_n}(u)-\Bchi_\mu(u)\big|^2\,\frac{du}{\|u\|^{2s}}
=\frac1n\int_{\R^d}\frac{1-|\Bchi_\mu(u)|^2}{\|u\|^{2s}}\,du
\]
which is the first identity. For the second, note that
$|\Bchi_\mu(u)|^2=\E\,e^{i\langle u,X-X'\rangle}=\E\cos\langle u,X-X'\rangle$, the last step
because $X-X'$ is symmetric. Hence, using Tonelli once more (the integrand
$1-\cos\langle u,x\rangle\ge0$) and the Fourier identity \eqref{eq:riesz-alpha} from the proof of
Lemma~\ref{lem:energy-constant} with $\alpha=2H$ and $2s=d+2H$,
\[
\int_{\R^d}\frac{1-|\Bchi_\mu(u)|^2}{\|u\|^{2s}}\,du
=\E\int_{\R^d}\frac{1-\cos\langle u,X-X'\rangle}{\|u\|^{\,d+2H}}\,du
=c(d,H)\,\E\|X-X'\|^{2H}
\]
For the finiteness claim. If $\mu\in\mathcal P_{2H}(\R^d)$, then
$\E\|X-X'\|^{2H}\le\max\{1,2^{2H-1}\}\big(\E\|X\|^{2H}+\E\|X'\|^{2H}\big)<\infty$. Conversely, if
$\E\|X-X'\|^{2H}<\infty$, then by Fubini there is $x_0\in\R^d$ with
$\E\|X-x_0\|^{2H}<\infty$, and
$\|X\|^{2H}\le\max\{1,2^{2H-1}\}\big(\|X-x_0\|^{2H}+\|x_0\|^{2H}\big)$ gives
$\mu\in\mathcal P_{2H}(\R^d)$. Finally, Markov's inequality yields, for every $\lambda>0$,
\[
\P\Big(\mathrm T_{s,2}(\hat\mu_n,\mu)>\lambda\,n^{-1/2}\Big)
\le\frac{n\,\E\,\mathrm T_{s,2}^2(\hat\mu_n,\mu)}{\lambda^{2}}
=\frac{c(d,H)\,\E\|X-X'\|^{2H}}{\lambda^{2}}
\]
which is the $O_P(n^{-1/2})$ claim with explicit constant.
\end{proof}
\section{Additional Details on Two-Sample Testing}\label{app:two_sample_detailed}
Tables~\ref{tab:appendix_mmd_p2_res256} and~\ref{tab:appendix_mmd_p5_res256} record every
Gaussian-kernel MMD two-sample test underlying the summary in the main text. One block per
class pair, with the bandwidth from the median heuristic, the unbiased statistic, the
permutation $p$-value, and the decision at level $\alpha=0.05$, for the sliced-Wasserstein
baseline and for $\mathrm{T}_{s,p}$ at every tabulated $s$.

\begingroup
\footnotesize
\setlength{\tabcolsep}{2.5pt}
\begin{longtable}{
>{\raggedright\arraybackslash}p{4.6cm}
>{\raggedright\arraybackslash}p{1.6cm}
>{\centering\arraybackslash}p{0.7cm}
>{\centering\arraybackslash}p{1.8cm}
>{\centering\arraybackslash}p{1.8cm}
>{\centering\arraybackslash}p{1.4cm}
>{\centering\arraybackslash}p{1.1cm}
}
\caption{Detailed Gaussian-kernel MMD two-sample testing results on DOTmark at resolution $256\times256$ with $p=2$, for the sliced-Wasserstein baseline and for $\mathrm{T}_{s,2}$ at every tabulated $s$. All tests use $n_1=n_2=10$ images and $2000$ permutations. For Sliced W, the $s$ column is left blank.}%
\label{tab:appendix_mmd_p2_res256} \\
\toprule\toprule
\textbf{Class pair} & \textbf{Metric} & $\mathbf{s}$ & \textbf{Bandwidth} & $\mathbf{\widehat{\mathrm{MMD}}_u^2}$ & $\mathbf{p}$\textbf{-value} & \textbf{Reject} \\
\midrule
\endfirsthead

\toprule\toprule
\textbf{Class pair} & \textbf{Metric} & $\mathbf{s}$ & \textbf{Bandwidth} & $\mathbf{\widehat{\mathrm{MMD}}_u^2}$ & $\mathbf{p}$\textbf{-value} & \textbf{Reject} \\
\midrule
\endhead

\midrule
\multicolumn{7}{r}{\emph{Continued on next page}} \\
\endfoot

\bottomrule\bottomrule
\endlastfoot
ClassicImages vs GRFmoderate & Sliced W &  & 11.259 & -0.001 & 0.442 & 0 \\
ClassicImages vs GRFmoderate & Toscani & 1.01 & 0.310 & -0.011 & 0.641 & 0 \\
ClassicImages vs GRFmoderate & Toscani & 1.34 & 0.985 & -0.013 & 0.625 & 0 \\
ClassicImages vs GRFmoderate & Toscani & 1.66 & 3.035 & -0.014 & 0.633 & 0 \\
ClassicImages vs GRFmoderate & Toscani & 1.99 & 9.883 & -0.015 & 0.628 & 0 \\

ClassicImages vs GRFrough & Sliced W &  & 6.828 & 0.243 & 4.998e-04 & 1 \\
ClassicImages vs GRFrough & Toscani & 1.01 & 0.202 & 0.163 & 4.998e-04 & 1 \\
ClassicImages vs GRFrough & Toscani & 1.34 & 0.620 & 0.173 & 4.998e-04 & 1 \\
ClassicImages vs GRFrough & Toscani & 1.66 & 1.869 & 0.182 & 4.998e-04 & 1 \\
ClassicImages vs GRFrough & Toscani & 1.99 & 6.001 & 0.185 & 4.998e-04 & 1 \\

ClassicImages vs GRFsmooth & Sliced W &  & 15.071 & 0.104 & 0.026 & 1 \\
ClassicImages vs GRFsmooth & Toscani & 1.01 & 0.386 & 0.024 & 0.208 & 0 \\
ClassicImages vs GRFsmooth & Toscani & 1.34 & 1.243 & 0.025 & 0.193 & 0 \\
ClassicImages vs GRFsmooth & Toscani & 1.66 & 3.932 & 0.024 & 0.218 & 0 \\
ClassicImages vs GRFsmooth & Toscani & 1.99 & 12.972 & 0.023 & 0.235 & 0 \\

ClassicImages vs LogGRF & Sliced W &  & 19.774 & 0.164 & 0.003 & 1 \\
ClassicImages vs LogGRF & Toscani & 1.01 & 0.522 & 0.114 & 0.002 & 1 \\
ClassicImages vs LogGRF & Toscani & 1.34 & 1.620 & 0.122 & 0.005 & 1 \\
ClassicImages vs LogGRF & Toscani & 1.66 & 5.046 & 0.125 & 0.002 & 1 \\
ClassicImages vs LogGRF & Toscani & 1.99 & 16.699 & 0.124 & 0.005 & 1 \\

ClassicImages vs LogitGRF & Sliced W &  & 15.463 & 0.058 & 0.079 & 0 \\
ClassicImages vs LogitGRF & Toscani & 1.01 & 0.448 & 0.023 & 0.163 & 0 \\
ClassicImages vs LogitGRF & Toscani & 1.34 & 1.410 & 0.018 & 0.214 & 0 \\
ClassicImages vs LogitGRF & Toscani & 1.66 & 4.324 & 0.016 & 0.251 & 0 \\
ClassicImages vs LogitGRF & Toscani & 1.99 & 13.885 & 0.014 & 0.275 & 0 \\

ClassicImages vs MicroscopyImages & Sliced W &  & 19.921 & 0.351 & 4.998e-04 & 1 \\
ClassicImages vs MicroscopyImages & Toscani & 1.01 & 0.614 & 0.262 & 4.998e-04 & 1 \\
ClassicImages vs MicroscopyImages & Toscani & 1.34 & 1.876 & 0.289 & 4.998e-04 & 1 \\
ClassicImages vs MicroscopyImages & Toscani & 1.66 & 5.782 & 0.307 & 0.001 & 1 \\
ClassicImages vs MicroscopyImages & Toscani & 1.99 & 18.941 & 0.319 & 4.998e-04 & 1 \\

ClassicImages vs Shapes & Sliced W &  & 14.819 & 0.150 & 0.014 & 1 \\
ClassicImages vs Shapes & Toscani & 1.01 & 0.555 & 0.153 & 0.005 & 1 \\
ClassicImages vs Shapes & Toscani & 1.34 & 1.678 & 0.160 & 0.023 & 1 \\
ClassicImages vs Shapes & Toscani & 1.66 & 4.971 & 0.166 & 0.011 & 1 \\
ClassicImages vs Shapes & Toscani & 1.99 & 15.744 & 0.168 & 0.019 & 1 \\

GRFmoderate vs GRFrough & Sliced W &  & 7.874 & 0.154 & 0.002 & 1 \\
GRFmoderate vs GRFrough & Toscani & 1.01 & 0.249 & 0.099 & 4.998e-04 & 1 \\
GRFmoderate vs GRFrough & Toscani & 1.34 & 0.777 & 0.106 & 4.998e-04 & 1 \\
GRFmoderate vs GRFrough & Toscani & 1.66 & 2.392 & 0.111 & 9.995e-04 & 1 \\
GRFmoderate vs GRFrough & Toscani & 1.99 & 7.671 & 0.119 & 9.995e-04 & 1 \\

GRFmoderate vs GRFsmooth & Sliced W &  & 16.266 & 0.019 & 0.259 & 0 \\
GRFmoderate vs GRFsmooth & Toscani & 1.01 & 0.386 & -0.002 & 0.443 & 0 \\
GRFmoderate vs GRFsmooth & Toscani & 1.34 & 1.247 & -0.004 & 0.454 & 0 \\
GRFmoderate vs GRFsmooth & Toscani & 1.66 & 3.909 & -0.005 & 0.469 & 0 \\
GRFmoderate vs GRFsmooth & Toscani & 1.99 & 12.971 & -0.007 & 0.462 & 0 \\

GRFmoderate vs LogGRF & Sliced W &  & 20.780 & 0.134 & 0.018 & 1 \\
GRFmoderate vs LogGRF & Toscani & 1.01 & 0.552 & 0.107 & 0.004 & 1 \\
GRFmoderate vs LogGRF & Toscani & 1.34 & 1.740 & 0.110 & 0.005 & 1 \\
GRFmoderate vs LogGRF & Toscani & 1.66 & 5.420 & 0.112 & 0.003 & 1 \\
GRFmoderate vs LogGRF & Toscani & 1.99 & 17.809 & 0.113 & 0.005 & 1 \\

GRFmoderate vs LogitGRF & Sliced W &  & 16.192 & -0.012 & 0.542 & 0 \\
GRFmoderate vs LogitGRF & Toscani & 1.01 & 0.443 & 0.008 & 0.349 & 0 \\
GRFmoderate vs LogitGRF & Toscani & 1.34 & 1.404 & 0.002 & 0.404 & 0 \\
GRFmoderate vs LogitGRF & Toscani & 1.66 & 4.375 & -0.002 & 0.442 & 0 \\
GRFmoderate vs LogitGRF & Toscani & 1.99 & 14.020 & -0.003 & 0.485 & 0 \\

GRFmoderate vs MicroscopyImages & Sliced W &  & 19.581 & 0.248 & 0.002 & 1 \\
GRFmoderate vs MicroscopyImages & Toscani & 1.01 & 0.605 & 0.228 & 0.001 & 1 \\
GRFmoderate vs MicroscopyImages & Toscani & 1.34 & 1.866 & 0.239 & 9.995e-04 & 1 \\
GRFmoderate vs MicroscopyImages & Toscani & 1.66 & 5.649 & 0.257 & 9.995e-04 & 1 \\
GRFmoderate vs MicroscopyImages & Toscani & 1.99 & 18.670 & 0.259 & 0.001 & 1 \\

GRFmoderate vs Shapes & Sliced W &  & 16.312 & 0.080 & 0.052 & 0 \\
GRFmoderate vs Shapes & Toscani & 1.01 & 0.559 & 0.125 & 0.018 & 1 \\
GRFmoderate vs Shapes & Toscani & 1.34 & 1.679 & 0.130 & 0.025 & 1 \\
GRFmoderate vs Shapes & Toscani & 1.66 & 4.942 & 0.134 & 0.023 & 1 \\
GRFmoderate vs Shapes & Toscani & 1.99 & 15.402 & 0.136 & 0.027 & 1 \\

GRFrough vs GRFsmooth & Sliced W &  & 13.236 & 0.248 & 4.998e-04 & 1 \\
GRFrough vs GRFsmooth & Toscani & 1.01 & 0.320 & 0.163 & 4.998e-04 & 1 \\
GRFrough vs GRFsmooth & Toscani & 1.34 & 1.047 & 0.166 & 4.998e-04 & 1 \\
GRFrough vs GRFsmooth & Toscani & 1.66 & 3.350 & 0.165 & 4.998e-04 & 1 \\
GRFrough vs GRFsmooth & Toscani & 1.99 & 11.169 & 0.164 & 4.998e-04 & 1 \\

GRFrough vs LogGRF & Sliced W &  & 19.669 & 0.285 & 4.998e-04 & 1 \\
GRFrough vs LogGRF & Toscani & 1.01 & 0.487 & 0.227 & 4.998e-04 & 1 \\
GRFrough vs LogGRF & Toscani & 1.34 & 1.537 & 0.234 & 4.998e-04 & 1 \\
GRFrough vs LogGRF & Toscani & 1.66 & 4.865 & 0.231 & 4.998e-04 & 1 \\
GRFrough vs LogGRF & Toscani & 1.99 & 16.174 & 0.229 & 4.998e-04 & 1 \\

GRFrough vs LogitGRF & Sliced W &  & 13.476 & 0.191 & 4.998e-04 & 1 \\
GRFrough vs LogitGRF & Toscani & 1.01 & 0.411 & 0.125 & 4.998e-04 & 1 \\
GRFrough vs LogitGRF & Toscani & 1.34 & 1.294 & 0.125 & 4.998e-04 & 1 \\
GRFrough vs LogitGRF & Toscani & 1.66 & 3.983 & 0.129 & 4.998e-04 & 1 \\
GRFrough vs LogitGRF & Toscani & 1.99 & 13.035 & 0.128 & 4.998e-04 & 1 \\

GRFrough vs MicroscopyImages & Sliced W &  & 18.671 & 0.416 & 4.998e-04 & 1 \\
GRFrough vs MicroscopyImages & Toscani & 1.01 & 0.591 & 0.306 & 4.998e-04 & 1 \\
GRFrough vs MicroscopyImages & Toscani & 1.34 & 1.843 & 0.322 & 4.998e-04 & 1 \\
GRFrough vs MicroscopyImages & Toscani & 1.66 & 5.519 & 0.360 & 4.998e-04 & 1 \\
GRFrough vs MicroscopyImages & Toscani & 1.99 & 17.867 & 0.380 & 4.998e-04 & 1 \\

GRFrough vs Shapes & Sliced W &  & 10.605 & 0.290 & 9.995e-04 & 1 \\
GRFrough vs Shapes & Toscani & 1.01 & 0.548 & 0.204 & 9.995e-04 & 1 \\
GRFrough vs Shapes & Toscani & 1.34 & 1.628 & 0.224 & 4.998e-04 & 1 \\
GRFrough vs Shapes & Toscani & 1.66 & 4.740 & 0.242 & 0.001 & 1 \\
GRFrough vs Shapes & Toscani & 1.99 & 14.398 & 0.259 & 0.001 & 1 \\

GRFsmooth vs LogGRF & Sliced W &  & 25.381 & 0.014 & 0.302 & 0 \\
GRFsmooth vs LogGRF & Toscani & 1.01 & 0.611 & 0.037 & 0.121 & 0 \\
GRFsmooth vs LogGRF & Toscani & 1.34 & 1.945 & 0.032 & 0.147 & 0 \\
GRFsmooth vs LogGRF & Toscani & 1.66 & 6.137 & 0.029 & 0.192 & 0 \\
GRFsmooth vs LogGRF & Toscani & 1.99 & 20.546 & 0.026 & 0.193 & 0 \\

GRFsmooth vs LogitGRF & Sliced W &  & 20.385 & -0.039 & 0.796 & 0 \\
GRFsmooth vs LogitGRF & Toscani & 1.01 & 0.518 & -0.035 & 0.911 & 0 \\
GRFsmooth vs LogitGRF & Toscani & 1.34 & 1.657 & -0.044 & 0.959 & 0 \\
GRFsmooth vs LogitGRF & Toscani & 1.66 & 5.186 & -0.051 & 0.980 & 0 \\
GRFsmooth vs LogitGRF & Toscani & 1.99 & 17.074 & -0.055 & 0.992 & 0 \\

GRFsmooth vs MicroscopyImages & Sliced W &  & 23.646 & 0.091 & 0.075 & 0 \\
GRFsmooth vs MicroscopyImages & Toscani & 1.01 & 0.644 & 0.168 & 0.006 & 1 \\
GRFsmooth vs MicroscopyImages & Toscani & 1.34 & 2.021 & 0.170 & 0.009 & 1 \\
GRFsmooth vs MicroscopyImages & Toscani & 1.66 & 6.410 & 0.169 & 0.012 & 1 \\
GRFsmooth vs MicroscopyImages & Toscani & 1.99 & 20.971 & 0.174 & 0.006 & 1 \\

GRFsmooth vs Shapes & Sliced W &  & 21.784 & 0.040 & 0.157 & 0 \\
GRFsmooth vs Shapes & Toscani & 1.01 & 0.633 & 0.088 & 0.034 & 1 \\
GRFsmooth vs Shapes & Toscani & 1.34 & 1.991 & 0.085 & 0.050 & 1 \\
GRFsmooth vs Shapes & Toscani & 1.66 & 6.161 & 0.083 & 0.051 & 0 \\
GRFsmooth vs Shapes & Toscani & 1.99 & 20.560 & 0.081 & 0.055 & 0 \\

LogGRF vs LogitGRF & Sliced W &  & 24.903 & 0.015 & 0.298 & 0 \\
LogGRF vs LogitGRF & Toscani & 1.01 & 0.659 & 0.015 & 0.235 & 0 \\
LogGRF vs LogitGRF & Toscani & 1.34 & 2.082 & 0.017 & 0.237 & 0 \\
LogGRF vs LogitGRF & Toscani & 1.66 & 6.533 & 0.018 & 0.239 & 0 \\
LogGRF vs LogitGRF & Toscani & 1.99 & 21.412 & 0.020 & 0.233 & 0 \\

LogGRF vs MicroscopyImages & Sliced W &  & 29.586 & 0.093 & 0.064 & 0 \\
LogGRF vs MicroscopyImages & Toscani & 1.01 & 0.801 & 0.121 & 0.016 & 1 \\
LogGRF vs MicroscopyImages & Toscani & 1.34 & 2.546 & 0.134 & 0.012 & 1 \\
LogGRF vs MicroscopyImages & Toscani & 1.66 & 8.021 & 0.143 & 0.014 & 1 \\
LogGRF vs MicroscopyImages & Toscani & 1.99 & 26.631 & 0.150 & 0.015 & 1 \\

LogGRF vs Shapes & Sliced W &  & 27.884 & 0.071 & 0.054 & 0 \\
LogGRF vs Shapes & Toscani & 1.01 & 0.795 & 0.074 & 0.042 & 1 \\
LogGRF vs Shapes & Toscani & 1.34 & 2.502 & 0.077 & 0.042 & 1 \\
LogGRF vs Shapes & Toscani & 1.66 & 7.893 & 0.080 & 0.050 & 1 \\
LogGRF vs Shapes & Toscani & 1.99 & 25.963 & 0.082 & 0.045 & 1 \\

LogitGRF vs MicroscopyImages & Sliced W &  & 23.161 & 0.094 & 0.062 & 0 \\
LogitGRF vs MicroscopyImages & Toscani & 1.01 & 0.691 & 0.108 & 0.023 & 1 \\
LogitGRF vs MicroscopyImages & Toscani & 1.34 & 2.147 & 0.114 & 0.026 & 1 \\
LogitGRF vs MicroscopyImages & Toscani & 1.66 & 6.646 & 0.120 & 0.038 & 1 \\
LogitGRF vs MicroscopyImages & Toscani & 1.99 & 21.878 & 0.125 & 0.035 & 1 \\

LogitGRF vs Shapes & Sliced W &  & 21.062 & 0.005 & 0.368 & 0 \\
LogitGRF vs Shapes & Toscani & 1.01 & 0.638 & 0.028 & 0.170 & 0 \\
LogitGRF vs Shapes & Toscani & 1.34 & 1.975 & 0.028 & 0.176 & 0 \\
LogitGRF vs Shapes & Toscani & 1.66 & 6.164 & 0.028 & 0.199 & 0 \\
LogitGRF vs Shapes & Toscani & 1.99 & 20.325 & 0.029 & 0.197 & 0 \\

MicroscopyImages vs Shapes & Sliced W &  & 24.437 & 0.089 & 0.064 & 0 \\
MicroscopyImages vs Shapes & Toscani & 1.01 & 0.769 & 0.045 & 0.111 & 0 \\
MicroscopyImages vs Shapes & Toscani & 1.34 & 2.396 & 0.039 & 0.138 & 0 \\
MicroscopyImages vs Shapes & Toscani & 1.66 & 7.438 & 0.034 & 0.184 & 0 \\
MicroscopyImages vs Shapes & Toscani & 1.99 & 24.518 & 0.031 & 0.198 & 0 \\
\end{longtable}

\footnotesize
\setlength{\tabcolsep}{2.5pt}
\begin{longtable}{
>{\raggedright\arraybackslash}p{4.6cm}
>{\raggedright\arraybackslash}p{1.6cm}
>{\centering\arraybackslash}p{0.7cm}
>{\centering\arraybackslash}p{1.8cm}
>{\centering\arraybackslash}p{1.8cm}
>{\centering\arraybackslash}p{1.4cm}
>{\centering\arraybackslash}p{1.1cm}
}
\caption{Detailed Gaussian-kernel MMD two-sample testing results on DOTmark at resolution $256\times256$ with $p=5$, for the sliced-Wasserstein baseline and for $\mathrm{T}_{s,5}$ at every tabulated $s$. All tests use $n_1=n_2=10$ images and $2000$ permutations. For Sliced W, the $s$ column is left blank.}%
\label{tab:appendix_mmd_p5_res256} \\
\toprule\toprule
\textbf{Class pair} & \textbf{Metric} & $\mathbf{s}$ & \textbf{Bandwidth} & $\mathbf{\widehat{\mathrm{MMD}}_u^2}$ & $\mathbf{p}$\textbf{-value} & \textbf{Reject} \\
\midrule
\endfirsthead

\toprule\toprule
\textbf{Class pair} & \textbf{Metric} & $\mathbf{s}$ & \textbf{Bandwidth} & $\mathbf{\widehat{\mathrm{MMD}}_u^2}$ & $\mathbf{p}$\textbf{-value} & \textbf{Reject} \\
\midrule
\endhead

\midrule
\multicolumn{7}{r}{\emph{Continued on next page}} \\
\endfoot

\bottomrule\bottomrule
\endlastfoot
ClassicImages vs GRFmoderate & Sliced W &  & 14.727 & -0.006 & 0.501 & 0 \\
ClassicImages vs GRFmoderate & Toscani & 0.41 & 0.188 & -0.027 & 0.798 & 0 \\
ClassicImages vs GRFmoderate & Toscani & 0.74 & 0.615 & -0.027 & 0.788 & 0 \\
ClassicImages vs GRFmoderate & Toscani & 1.06 & 1.940 & -0.027 & 0.776 & 0 \\
ClassicImages vs GRFmoderate & Toscani & 1.39 & 6.448 & -0.028 & 0.791 & 0 \\

ClassicImages vs GRFrough & Sliced W &  & 9.430 & 0.240 & 4.998e-04 & 1 \\
ClassicImages vs GRFrough & Toscani & 0.41 & 0.117 & 0.198 & 4.998e-04 & 1 \\
ClassicImages vs GRFrough & Toscani & 0.74 & 0.388 & 0.192 & 4.998e-04 & 1 \\
ClassicImages vs GRFrough & Toscani & 1.06 & 1.212 & 0.192 & 4.998e-04 & 1 \\
ClassicImages vs GRFrough & Toscani & 1.39 & 3.939 & 0.191 & 9.995e-04 & 1 \\

ClassicImages vs GRFsmooth & Sliced W &  & 19.961 & 0.102 & 0.016 & 1 \\
ClassicImages vs GRFsmooth & Toscani & 0.41 & 0.239 & 0.025 & 0.211 & 0 \\
ClassicImages vs GRFsmooth & Toscani & 0.74 & 0.788 & 0.027 & 0.237 & 0 \\
ClassicImages vs GRFsmooth & Toscani & 1.06 & 2.546 & 0.025 & 0.250 & 0 \\
ClassicImages vs GRFsmooth & Toscani & 1.39 & 8.608 & 0.023 & 0.251 & 0 \\

ClassicImages vs LogGRF & Sliced W &  & 25.655 & 0.158 & 0.003 & 1 \\
ClassicImages vs LogGRF & Toscani & 0.41 & 0.317 & 0.095 & 0.006 & 1 \\
ClassicImages vs LogGRF & Toscani & 0.74 & 1.043 & 0.098 & 0.010 & 1 \\
ClassicImages vs LogGRF & Toscani & 1.06 & 3.394 & 0.097 & 0.017 & 1 \\
ClassicImages vs LogGRF & Toscani & 1.39 & 11.465 & 0.096 & 0.019 & 1 \\

ClassicImages vs LogitGRF & Sliced W &  & 20.596 & 0.073 & 0.036 & 1 \\
ClassicImages vs LogitGRF & Toscani & 0.41 & 0.273 & 0.028 & 0.149 & 0 \\
ClassicImages vs LogitGRF & Toscani & 0.74 & 0.878 & 0.026 & 0.180 & 0 \\
ClassicImages vs LogitGRF & Toscani & 1.06 & 2.791 & 0.023 & 0.204 & 0 \\
ClassicImages vs LogitGRF & Toscani & 1.39 & 9.291 & 0.020 & 0.237 & 0 \\

ClassicImages vs MicroscopyImages & Sliced W &  & 27.474 & 0.362 & 4.998e-04 & 1 \\
ClassicImages vs MicroscopyImages & Toscani & 0.41 & 0.371 & 0.237 & 0.002 & 1 \\
ClassicImages vs MicroscopyImages & Toscani & 0.74 & 1.165 & 0.288 & 9.995e-04 & 1 \\
ClassicImages vs MicroscopyImages & Toscani & 1.06 & 3.729 & 0.314 & 0.002 & 1 \\
ClassicImages vs MicroscopyImages & Toscani & 1.39 & 12.541 & 0.332 & 9.995e-04 & 1 \\

ClassicImages vs Shapes & Sliced W &  & 19.459 & 0.193 & 0.010 & 1 \\
ClassicImages vs Shapes & Toscani & 0.41 & 0.361 & 0.198 & 4.998e-04 & 1 \\
ClassicImages vs Shapes & Toscani & 0.74 & 1.156 & 0.166 & 0.005 & 1 \\
ClassicImages vs Shapes & Toscani & 1.06 & 3.570 & 0.154 & 0.013 & 1 \\
ClassicImages vs Shapes & Toscani & 1.39 & 10.981 & 0.155 & 0.014 & 1 \\

GRFmoderate vs GRFrough & Sliced W &  & 11.219 & 0.130 & 0.001 & 1 \\
GRFmoderate vs GRFrough & Toscani & 0.41 & 0.145 & 0.107 & 4.998e-04 & 1 \\
GRFmoderate vs GRFrough & Toscani & 0.74 & 0.466 & 0.117 & 4.998e-04 & 1 \\
GRFmoderate vs GRFrough & Toscani & 1.06 & 1.463 & 0.126 & 9.995e-04 & 1 \\
GRFmoderate vs GRFrough & Toscani & 1.39 & 4.851 & 0.131 & 9.995e-04 & 1 \\

GRFmoderate vs GRFsmooth & Sliced W &  & 21.122 & 0.006 & 0.360 & 0 \\
GRFmoderate vs GRFsmooth & Toscani & 0.41 & 0.241 & 0.005 & 0.373 & 0 \\
GRFmoderate vs GRFsmooth & Toscani & 0.74 & 0.793 & 0.004 & 0.376 & 0 \\
GRFmoderate vs GRFsmooth & Toscani & 1.06 & 2.576 & 0.002 & 0.381 & 0 \\
GRFmoderate vs GRFsmooth & Toscani & 1.39 & 8.683 & -0.001 & 0.395 & 0 \\

GRFmoderate vs LogGRF & Sliced W &  & 27.299 & 0.119 & 0.017 & 1 \\
GRFmoderate vs LogGRF & Toscani & 0.41 & 0.333 & 0.100 & 0.007 & 1 \\
GRFmoderate vs LogGRF & Toscani & 0.74 & 1.097 & 0.100 & 0.009 & 1 \\
GRFmoderate vs LogGRF & Toscani & 1.06 & 3.528 & 0.101 & 0.015 & 1 \\
GRFmoderate vs LogGRF & Toscani & 1.39 & 11.871 & 0.100 & 0.016 & 1 \\

GRFmoderate vs LogitGRF & Sliced W &  & 21.333 & -0.009 & 0.546 & 0 \\
GRFmoderate vs LogitGRF & Toscani & 0.41 & 0.278 & 0.011 & 0.326 & 0 \\
GRFmoderate vs LogitGRF & Toscani & 0.74 & 0.897 & 0.011 & 0.323 & 0 \\
GRFmoderate vs LogitGRF & Toscani & 1.06 & 2.808 & 0.012 & 0.316 & 0 \\
GRFmoderate vs LogitGRF & Toscani & 1.39 & 9.159 & 0.012 & 0.326 & 0 \\

GRFmoderate vs MicroscopyImages & Sliced W &  & 26.922 & 0.248 & 9.995e-04 & 1 \\
GRFmoderate vs MicroscopyImages & Toscani & 0.41 & 0.366 & 0.188 & 0.005 & 1 \\
GRFmoderate vs MicroscopyImages & Toscani & 0.74 & 1.155 & 0.222 & 0.002 & 1 \\
GRFmoderate vs MicroscopyImages & Toscani & 1.06 & 3.687 & 0.243 & 0.002 & 1 \\
GRFmoderate vs MicroscopyImages & Toscani & 1.39 & 12.418 & 0.257 & 0.008 & 1 \\

GRFmoderate vs Shapes & Sliced W &  & 21.559 & 0.115 & 0.025 & 1 \\
GRFmoderate vs Shapes & Toscani & 0.41 & 0.348 & 0.181 & 9.995e-04 & 1 \\
GRFmoderate vs Shapes & Toscani & 0.74 & 1.111 & 0.144 & 0.016 & 1 \\
GRFmoderate vs Shapes & Toscani & 1.06 & 3.570 & 0.126 & 0.023 & 1 \\
GRFmoderate vs Shapes & Toscani & 1.39 & 10.947 & 0.127 & 0.031 & 1 \\

GRFrough vs GRFsmooth & Sliced W &  & 17.596 & 0.229 & 4.998e-04 & 1 \\
GRFrough vs GRFsmooth & Toscani & 0.41 & 0.198 & 0.194 & 4.998e-04 & 1 \\
GRFrough vs GRFsmooth & Toscani & 0.74 & 0.661 & 0.194 & 4.998e-04 & 1 \\
GRFrough vs GRFsmooth & Toscani & 1.06 & 2.133 & 0.193 & 4.998e-04 & 1 \\
GRFrough vs GRFsmooth & Toscani & 1.39 & 7.221 & 0.189 & 4.998e-04 & 1 \\

GRFrough vs LogGRF & Sliced W &  & 24.905 & 0.284 & 4.998e-04 & 1 \\
GRFrough vs LogGRF & Toscani & 0.41 & 0.314 & 0.188 & 4.998e-04 & 1 \\
GRFrough vs LogGRF & Toscani & 0.74 & 1.039 & 0.192 & 4.998e-04 & 1 \\
GRFrough vs LogGRF & Toscani & 1.06 & 3.366 & 0.192 & 4.998e-04 & 1 \\
GRFrough vs LogGRF & Toscani & 1.39 & 11.389 & 0.191 & 4.998e-04 & 1 \\

GRFrough vs LogitGRF & Sliced W &  & 18.776 & 0.202 & 4.998e-04 & 1 \\
GRFrough vs LogitGRF & Toscani & 0.41 & 0.253 & 0.158 & 4.998e-04 & 1 \\
GRFrough vs LogitGRF & Toscani & 0.74 & 0.822 & 0.167 & 4.998e-04 & 1 \\
GRFrough vs LogitGRF & Toscani & 1.06 & 2.600 & 0.174 & 4.998e-04 & 1 \\
GRFrough vs LogitGRF & Toscani & 1.39 & 8.765 & 0.169 & 4.998e-04 & 1 \\

GRFrough vs MicroscopyImages & Sliced W &  & 26.864 & 0.379 & 4.998e-04 & 1 \\
GRFrough vs MicroscopyImages & Toscani & 0.41 & 0.368 & 0.279 & 4.998e-04 & 1 \\
GRFrough vs MicroscopyImages & Toscani & 0.74 & 1.177 & 0.320 & 4.998e-04 & 1 \\
GRFrough vs MicroscopyImages & Toscani & 1.06 & 3.477 & 0.403 & 4.998e-04 & 1 \\
GRFrough vs MicroscopyImages & Toscani & 1.39 & 11.502 & 0.435 & 4.998e-04 & 1 \\

GRFrough vs Shapes & Sliced W &  & 16.200 & 0.307 & 9.995e-04 & 1 \\
GRFrough vs Shapes & Toscani & 0.41 & 0.377 & 0.255 & 4.998e-04 & 1 \\
GRFrough vs Shapes & Toscani & 0.74 & 1.156 & 0.239 & 4.998e-04 & 1 \\
GRFrough vs Shapes & Toscani & 1.06 & 3.386 & 0.243 & 4.998e-04 & 1 \\
GRFrough vs Shapes & Toscani & 1.39 & 10.329 & 0.253 & 0.002 & 1 \\

GRFsmooth vs LogGRF & Sliced W &  & 32.261 & 0.022 & 0.245 & 0 \\
GRFsmooth vs LogGRF & Toscani & 0.41 & 0.375 & 0.027 & 0.188 & 0 \\
GRFsmooth vs LogGRF & Toscani & 0.74 & 1.251 & 0.026 & 0.217 & 0 \\
GRFsmooth vs LogGRF & Toscani & 1.06 & 4.086 & 0.026 & 0.215 & 0 \\
GRFsmooth vs LogGRF & Toscani & 1.39 & 13.544 & 0.026 & 0.223 & 0 \\

GRFsmooth vs LogitGRF & Sliced W &  & 26.533 & -0.032 & 0.755 & 0 \\
GRFsmooth vs LogitGRF & Toscani & 0.41 & 0.320 & -0.052 & 0.967 & 0 \\
GRFsmooth vs LogitGRF & Toscani & 0.74 & 1.048 & -0.057 & 0.975 & 0 \\
GRFsmooth vs LogitGRF & Toscani & 1.06 & 3.381 & -0.059 & 0.976 & 0 \\
GRFsmooth vs LogitGRF & Toscani & 1.39 & 11.157 & -0.061 & 0.979 & 0 \\

GRFsmooth vs MicroscopyImages & Sliced W &  & 30.007 & 0.107 & 0.038 & 1 \\
GRFsmooth vs MicroscopyImages & Toscani & 0.41 & 0.395 & 0.128 & 0.016 & 1 \\
GRFsmooth vs MicroscopyImages & Toscani & 0.74 & 1.309 & 0.145 & 0.024 & 1 \\
GRFsmooth vs MicroscopyImages & Toscani & 1.06 & 4.248 & 0.158 & 0.019 & 1 \\
GRFsmooth vs MicroscopyImages & Toscani & 1.39 & 14.376 & 0.168 & 0.016 & 1 \\

GRFsmooth vs Shapes & Sliced W &  & 28.831 & 0.061 & 0.048 & 1 \\
GRFsmooth vs Shapes & Toscani & 0.41 & 0.428 & 0.100 & 0.020 & 1 \\
GRFsmooth vs Shapes & Toscani & 0.74 & 1.367 & 0.081 & 0.052 & 0 \\
GRFsmooth vs Shapes & Toscani & 1.06 & 4.247 & 0.077 & 0.070 & 0 \\
GRFsmooth vs Shapes & Toscani & 1.39 & 13.888 & 0.075 & 0.079 & 0 \\

LogGRF vs LogitGRF & Sliced W &  & 32.481 & 0.008 & 0.328 & 0 \\
LogGRF vs LogitGRF & Toscani & 0.41 & 0.406 & 0.020 & 0.218 & 0 \\
LogGRF vs LogitGRF & Toscani & 0.74 & 1.322 & 0.024 & 0.201 & 0 \\
LogGRF vs LogitGRF & Toscani & 1.06 & 4.190 & 0.028 & 0.210 & 0 \\
LogGRF vs LogitGRF & Toscani & 1.39 & 13.867 & 0.030 & 0.192 & 0 \\

LogGRF vs MicroscopyImages & Sliced W &  & 38.216 & 0.110 & 0.045 & 1 \\
LogGRF vs MicroscopyImages & Toscani & 0.41 & 0.503 & 0.124 & 0.016 & 1 \\
LogGRF vs MicroscopyImages & Toscani & 0.74 & 1.665 & 0.146 & 0.020 & 1 \\
LogGRF vs MicroscopyImages & Toscani & 1.06 & 5.401 & 0.161 & 0.016 & 1 \\
LogGRF vs MicroscopyImages & Toscani & 1.39 & 18.338 & 0.170 & 0.017 & 1 \\

LogGRF vs Shapes & Sliced W &  & 36.482 & 0.058 & 0.073 & 0 \\
LogGRF vs Shapes & Toscani & 0.41 & 0.469 & 0.068 & 0.058 & 0 \\
LogGRF vs Shapes & Toscani & 0.74 & 1.549 & 0.068 & 0.071 & 0 \\
LogGRF vs Shapes & Toscani & 1.06 & 4.923 & 0.070 & 0.062 & 0 \\
LogGRF vs Shapes & Toscani & 1.39 & 16.647 & 0.072 & 0.065 & 0 \\

LogitGRF vs MicroscopyImages & Sliced W &  & 29.538 & 0.096 & 0.050 & 0 \\
LogitGRF vs MicroscopyImages & Toscani & 0.41 & 0.419 & 0.072 & 0.072 & 0 \\
LogitGRF vs MicroscopyImages & Toscani & 0.74 & 1.358 & 0.092 & 0.055 & 0 \\
LogitGRF vs MicroscopyImages & Toscani & 1.06 & 4.319 & 0.110 & 0.052 & 0 \\
LogitGRF vs MicroscopyImages & Toscani & 1.39 & 14.415 & 0.126 & 0.040 & 1 \\

LogitGRF vs Shapes & Sliced W &  & 28.595 & 0.035 & 0.144 & 0 \\
LogitGRF vs Shapes & Toscani & 0.41 & 0.398 & 0.043 & 0.100 & 0 \\
LogitGRF vs Shapes & Toscani & 0.74 & 1.280 & 0.032 & 0.167 & 0 \\
LogitGRF vs Shapes & Toscani & 1.06 & 4.078 & 0.033 & 0.185 & 0 \\
LogitGRF vs Shapes & Toscani & 1.39 & 13.586 & 0.035 & 0.175 & 0 \\

MicroscopyImages vs Shapes & Sliced W &  & 32.000 & 0.113 & 0.023 & 1 \\
MicroscopyImages vs Shapes & Toscani & 0.41 & 0.459 & 0.016 & 0.271 & 0 \\
MicroscopyImages vs Shapes & Toscani & 0.74 & 1.487 & 0.020 & 0.254 & 0 \\
MicroscopyImages vs Shapes & Toscani & 1.06 & 4.771 & 0.028 & 0.213 & 0 \\
MicroscopyImages vs Shapes & Toscani & 1.39 & 15.917 & 0.037 & 0.187 & 0 \\
\end{longtable}
\endgroup
\ifdefined\ONLINEAPPENDIX
\section{Additional Numerical Material}
The following figures and tables are separated from the main JMLR manuscript to keep the journal
PDF concise while preserving the complete numerical record. Figure~\ref{fig:DOTmark_images}
shows the DOTmark montage. Figures~\ref{fig:p2_scatter}--\ref{fig:p5_roc} the scatter and ROC
panels. Figure~\ref{fig:exp3_translation} the translation-family panels. Tables~\ref{tab:mmd_dotmark_disagreements}--\ref{tab:mmd_dotmark_p5_disagreements} the
two-sample disagreements.
\subsection{Translation-Family Diagnostics}

\FloatBarrier
\subsection{DOTmark Images, Scatter Plots, and ROC Curves}

\FloatBarrier
\subsection{Pair-Level MMD Decisions}

\FloatBarrier
\fi
\fi
\bibliography{bib}
\end{document}